\definecolor{DarkGreen}{rgb}{0.1,0.5,0.1}
\definecolor{DarkRed}{rgb}{0.5,0.1,0.1}
\definecolor{DarkBlue}{rgb}{0.1,0.1,0.5}
\definecolor{DarkYellow}{rgb}{.79,.79,0}
\def\1{\bm{1}}
\def\eps{{\epsilon}}
\def\vtheta{{\bm{\theta}}}
\def\vpsi{{\bm{\zeta}}}
\def\vomega{{\bm{\omega}}}
\def\va{{\bm{a}}}
\def\vb{{\bm{b}}}
\def\vc{{\bm{c}}}
\def\vd{{\bm{d}}}
\def\ve{{\bm{e}}}
\def\vg{{\bm{g}}}
\def\vm{{\bm{m}}}
\def\vq{{\bm{q}}}
\def\vr{{\bm{r}}}
\def\vs{{\bm{s}}}
\def\vv{{\bm{v}}}
\def\vw{{\bm{w}}}
\def\vx{{\bm{x}}}
\def\vy{{\bm{y}}}
\def\vz{{\bm{z}}}
\def\vtheta{{\bm{\theta}}}
\def\vnu{{\bm{\nu}}}
\def\vlmd{{\bm{\lambda}}}
\def\mI{{\bm{I}}}
\def\mA{{\bm{A}}}
\def\mB{{\bm{B}}}
\def\mC{{\bm{C}}}
\def\mD{{\bm{D}}}
\def\mA{{\bm{A}}}
\def\mB{{\bm{B}}}
\def\mC{{\bm{C}}}
\def\mD{{\bm{D}}}
\def\mI{{\bm{I}}}
\def\mP{{\bm{P}}}
\DeclareMathAlphabet{\mathsfit}{\encodingdefault}{\sfdefault}{m}{sl}
\SetMathAlphabet{\mathsfit}{bold}{\encodingdefault}{\sfdefault}{bx}{n}
\definecolor{mydarkblue}{rgb}{0,0.08,0.45}
\definecolor{mygreen}{rgb}{0.032, 0.6392, 0.2039}
\definecolor{mypurple}{HTML}{B266FF}
\def\RR{{\mathbb R}}
\def\NN{{\mathbb N}}
\def\E{{\mathbb E}}
\def\vv{{\boldsymbol v}}
\def\LL{{\mathcal L}}
\def\R{{\mathbb R}}
\def\NN{{\mathbb N}}
\def\vv{{\boldsymbol v}}
\def\LL{{\mathcal L}}
\newcolumntype{Y}{>{\centering\arraybackslash}X}
\newcommand{\norm}[1]{\left\| #1 \right\|}
\newcommand{\argmin}[1]{arg\underset{#1}{min}}
\newcommand{\sol}[1]{\mathcal{S}^\star_{#1}}
\theoremstyle{plain}
\newtheorem{thm}{Theorem}[section]
\newtheorem{prop}[thm]{Proposition}
\newtheorem{lm}[thm]{Lemma}
\theoremstyle{definition}
\newtheorem{definition}[thm]{Definition}
\theoremstyle{remark}
\newtheorem{rmk}[thm]{Remark}
\definecolor{codegreen}{rgb}{0,0.6,0}
\definecolor{codegray}{rgb}{0.5,0.5,0.5}
\definecolor{backcolour}{RGB}{245,248,250}
\definecolor{emph}{RGB}{166,88,53}
\definecolor{nightblue}{RGB}{9,49,105}
\definecolor{keywords}{RGB}{207,33,46}
\definecolor{lightpurple}{RGB}{130,81,223}
\lstdefinestyle{mystyle}{
    backgroundcolor=\color{backcolour},
    commentstyle=\color{codegreen},
    keywordstyle=\color{keywords},
    stringstyle=\color{nightblue},
    basicstyle=\fontsize{7}{8}\ttfamily,
    breakatwhitespace=true,
    breaklines=true,
    captionpos=b,
    keepspaces=true,
    numberstyle=\tiny\color{codegray},
    numbersep=2pt,
    showspaces=false,
    showstringspaces=false,
    showtabs=false,
    tabsize=2,
    captionpos=t,
    emph={},
    emphstyle={\color{lightpurple}},
    linewidth=0.98\columnwidth,
    frame=tb,
    xrightmargin=0pt,
    xleftmargin=0.23cm,
    numbers=left,
    aboveskip=0.4cm,
    belowskip=0.4cm,
}
\title{A Primal-Dual Approach to Solving Variational Inequalities with General Constraints}
\author{Tatjana Chavdarova\thanks{Equal contribution. Source code: \url{https://github.com/Chavdarova/I-ACVI}.}\\
University of California, Berkeley\\
\texttt{\href{mailto:tatjana.chavdarova@berkeley.edu}{\texttt{tatjana.chavdarova@berkeley.edu}}}
\And Tong Yang$^*$\\
Carnegie Mellon University\\
\texttt{\href{mailto:tongyang@andrew.cmu.edu}{\texttt{tongyang@andrew.cmu.edu}}} \\
\AND Matteo Pagliardini \\
University of California, Berkeley \& EPFL\\
\texttt{\href{mailto:matteo.pagliardini@epfl.ch}{\texttt{matteo.pagliardini@epfl.ch}}}
\And
\hspace{6.3em}Michael I. Jordan\\
\hspace{6.3em}University of California, Berkeley\\
\hspace{6.3em}\texttt{\href{mailto:jordan@cs.berkeley.edu}{\texttt{jordan@cs.berkeley.edu}}}
}
\begin{document}

\maketitle

\begin{abstract}
\citet{yang2023acvi} recently showed how to use first-order gradient methods to solve general variational inequalities (VIs) under a limiting assumption that analytic solutions of specific subproblems are available.  In this paper, we circumvent this assumption via a warm-starting technique where we solve subproblems approximately and initialize variables with the approximate solution found at the previous iteration.
We prove the convergence of this method and show that the gap function of the last iterate of the method decreases at a rate of $\mathcal{O}(\frac{1}{\sqrt{K}})$ when the operator is $L$-Lipschitz and monotone.
In numerical experiments, we show that this technique can converge much faster than its exact counterpart.
Furthermore, for the cases when the inequality constraints are simple, we introduce an alternative variant of ACVI and establish its convergence under the same conditions.
Finally, we relax the smoothness assumptions in~\citeauthor{yang2023acvi}, yielding, to our knowledge, the first convergence result for VIs with general constraints that does not rely on the assumption that the operator is $L$-Lipschitz.
\end{abstract}

\section{Introduction}\label{sec:intro}

We study variational inequalities (VIs), a general class of problems that encompasses both equilibria and optima.  The general (constrained) VI problem involves finding a point $\vx^\star \in \mathcal{X}$ such that:
\begin{equation} \label{eq:cvi} \tag{cVI}
  \langle \vx-\vx^\star, F(\vx^\star) \rangle \geq 0, \quad \forall \vx \in \mathcal{X} \,,
\end{equation}
where $\mathcal{X}$ is a subset of the Euclidean $n$-dimensional space $\R^n$, and where $F\colon \mathcal{X}\mapsto \R^n$ is a continuous map.
VIs generalize standard constrained minimization problems, where $F$ is a gradient field $F\equiv \nabla f$, and, by allowing $F$ to be a general vector field, they also include problems such as finding equilibria in zero-sum games and general-sum games~\citep{cottle_dantzig1968complementary,rockafellar1970monotone}.
This increased expressivity underlies their practical relevance to a wide range of emerging applications in machine learning, such as
\textit{(i)} multi-agent games~\citep{goodfellow2014generative,vinyals2017starcraft},
\textit{(ii)} robustification of single-objective problems, which yields min-max formulations~\citep{szegedy2014intriguing,NEURIPS2020_02f657d5,causal2020,anchorRegression}, and
\textit{(iii)} statistical approaches to modeling complex multi-agent dynamics in stochastic and adversarial environments. We refer the reader to~\citep{facchinei2003finite,yang2023acvi} for further examples.

Such generality comes, however, at a price in that solving for equilibria is notably more challenging than solving for optima. In particular, as the Jacobian of $F$ is not necessarily symmetric, we may have rotational trajectories or \emph{limit cycles}~\citep{korpelevich1976extragradient,hsieh2021limits}.
Moreover, in sharp contrast to standard minimization, the last iterate can be quite far from the solution even though the average iterate converges to the solution~\citep{chavdarova2019}.
This has motivated recent efforts to study specifically the convergence of the \emph{last iterate} produced by gradient-based methods. Thus, herein, our focus and discussions refer to the last iterate.

Recent work has focused primarily on solving VIs in two cases of the domain $\mathcal{X}$:
\textit{(i)} the unconstrained setting where $\mathcal{X}\equiv\R^n$~\citep{golowich2020last,chavdarova2023hrdes,gorbunov2022extra,bot2022fast} and for
\textit{(ii)} the constrained setting with \emph{projection-based} methods~\citep{tseng1995,daskalakis2018training,diakonikolas2020halpern,nemirovski2004prox,mertikopoulos2018optimistic,cai2022constrVI}.
The latter approach assumes that the projection is ``simple,'' in the sense that this step does not require gradient computation. This holds, for example, for inequality constraints of the form $\vx\leq \tau$ where $\tau$ is some constant, in which case fast operations such as clipping suffice.
However, as is the case in constrained minimization,
the constraint set---denoted herein with $\mathcal{C}\subseteq \mathcal{X}$---is, in the general case, an intersection of finitely many inequalities and linear equalities:
\begin{equation}\tag{CS}
    \mathcal{C}=\left\{ \vx \in  \R^n| \varphi_i (\vx) \leq 0, i \in [m], \,\,\mC\vx=\vd \right\} \label{eqK},
\end{equation}
where each $\varphi_i\colon \R^n \mapsto \R$,
$\mC \in \R^{p \times n}$,  and
$\vd\in \R^p$.
Given a general~\ref{eqK} (without assuming additional structure), implementing the projection requires second-order methods, which quickly become computationally prohibitive as the dimension $n$ increases. If the second-order derivative computation is approximated, the derived convergence rates will yet be multiplied with an additional factor; thus, the resulting rate of convergence may not match the known lower bound~\citep{golowich2020noregret,cai2022constrVI}.
This motivates a third thread of research, focusing on  \emph{projection-free} methods for the constrained VI problem, where the update rule does not rely on the projection operator.  This is the case we focus on in this paper.

There has been significant work on developing second-order projection-free methods for the formulation in~\ref{eq:cvi}; we refer the interested reader to~\citep[Chapter $7$,][]{Nesterov1994InteriorpointPA} and~\citep[Chapter $11$,][vol. $2$]{facchinei2003finite} for example.
We remark that the seminal mirror-descent and mirror-prox methods~\citep{NemYud83,BeckT03,nemirovski2004prox} (see App.~\ref{sec:methods}) exploit a certain structure of the domain and avoid the projection operator, but cannot be applied for general~\ref{eqK}.

In recent work,~\citet{yang2023acvi} presented a first-order method, referred to as the \textit{\textbf{A}DMM-based Interior Point Method for \textbf{C}onstrained \textbf{VI}s} (ACVI), for solving the~\ref{eq:cvi} problem with general constraints.
ACVI combines path-following interior point (IP) methods and primal-dual methods. Regarding the latter, it generalizes the \emph{alternating direction method of multipliers} (ADMM) method~\citep{glowinskiMarroco1975,gabayAndMercier1976dual}, an algorithmic paradigm that is central to large-scale optimization~\citep{boyd2011admm,ryan2017admm}--see~\citep{yang2023acvi} and App.~\ref{app:admm_background}; but which has been little explored in the~\ref{eq:cvi} context.
On a high level, ACVI has two nested loops:
\textit{(i)} the outer loop smoothly decreases the weight $\mu_i$ of the inequality constraints as in IP methods,  whereas
\textit{(ii)} the inner loop performs a primal-dual update  (for a fixed $\mu_i$) as follows:
\begin{itemize}[leftmargin=*,itemsep=0em,topsep=0em]
    \item solve a subproblem whose main (primal) variable $\vx_i^j$ aims to satisfy the equality constraints,
    \item solve a subproblem whose main (primal) variable $\vy_i^j$ aims to satisfy the inequality constraints, % and
    \item update the dual variable $\vlmd_i^j$.
\end{itemize}
The first two steps solve the subproblems exactly using an analytical expression of the solution, and the variables converge to the same value, thus eventually satisfying both the inequality and equality constraints.
See Algorithm~\ref{alg:acvi} for a full description, and see Fig.~\ref{fig:acvi-logfree-2d} for illustrative examples.
The authors documented that projection-based methods may extensively zig-zag when hitting a constraint when there is a rotational component in the vector field, an observation that further motivates projection-free approaches even when the projection is simple.

\citeauthor{yang2023acvi} showed that the gap function of the last iterate of ACVI decreases at a rate of $\mathcal{O}(\frac{1}{\sqrt{K}})$ when the operator is $L$-Lipschitz, monotone, and at least one constraint is active. It is, however, an open problem to determine if the same rate on the gap function applies while assuming only that the operator is monotone (where monotonicity for VIs is analogous to convexity for standard minimization, see Def.~\ref{def:monotone}). Moreover, in some cases, the subproblems of ACVI may be cumbersome to solve analytically. Hence, a natural question is whether we can show convergence approximately when the subproblems are solved.
As a result, we raise the following questions:
\begin{itemize}[itemsep=0em,topsep=0em]
\item \emph{Does the last iterate of ACVI converge when the operator is monotone without requiring it to be $L$-Lipschitz}?
\item \emph{Does ACVI converge when the subproblems are solved approximately}?
\end{itemize}

In this paper, we answer the former question affirmatively. Specifically, we prove that the last iterate of ACVI converges at a rate of $\mathcal{O}(\frac{1}{\sqrt{K}})$ in terms of the gap function (Def.~\ref{def:gap}) even when assuming only the monotonicity of the operator.  The core of our analysis lies in identifying a relationship between the reference point of the gap function and a KKT point that ACVI targets implicitly (i.e., it does not appear explicitly in the ACVI algorithm). This shows that ACVI explicitly works to decrease the gap function at each iteration.
The argument further allows us to determine a convergence rate by making it possible to upper bound the gap function. This is in contrast to the approach of~\citet{yang2023acvi}, who upper bound the iterate distance and then the gap function, an approach that requires a Lipschitz assumption. This is the first convergence rate for the last iterate for monotone VIs with constraints that does not rely on an $L$-Lipschitz assumption on the operator.

To address the latter question, we leverage a fundamental property of the ACVI algorithm---namely, its homotopic structure as it smoothly transitions to the original problem, a homotopy that inherently arises from its origin as an interior-point method~\citep{boyd2004convex}.
Moreover, due to the alternating updates of the two sets of parameters of ACVI ($\vx$ and $\vy$; see Algorithm~\ref{alg:acvi}), the subproblems change negligibly, with the changes proportional to the step sizes.
This motivates the standard \emph{warm-start} technique where, at every iteration, instead of initializing at random, we initialize the corresponding optimization variable with the approximate solution found at the previous iteration.
We refer to the resulting algorithm as \emph{inexact ACVI}, described in Algorithm~\ref{alg:inexact_acvi}.
Furthermore, inspired by the work of~\citet{schmidt2011convergence}, which focuses on the proximal gradient method for standard minimization, we prove that inexact ACVI converges with the same rate of  $\mathcal{O}(\frac{1}{\sqrt{K}})$, under a condition on the rate of decrease of the approximation errors.
We evaluate inexact ACVI empirically on 2D and high-dimensional games and show how multiple inexact yet computationally efficient iterations can lead to faster wall-clock convergence than fewer exact ones.

Finally, we provide a detailed study of a special case of the problem class that ACVI can solve. In particular, we focus on the case when the inequality constraints are simple, in the sense that projection on those inequalities is fast to compute.
Such problems often arise in machine learning, e.g., whenever the constraint set is an $L_p$-ball, with $p \in \{1,2,\infty\}$ as in adversarial training \citep{GoodfellowSS14}.
We show that the same convergence rate holds for this variant of ACVI.
Moreover, we show empirically that when using this method to train a constrained GAN on the MNIST~\citep{mnist} dataset, it converges faster than the projected variants of the standard VI methods.

In summary, our main contributions are as follows:
\begin{itemize}[leftmargin=*,itemsep=0em,topsep=0em]
    \item We show that the gap function of the last iterate of ACVI~\citep[][Algorithm $1$ therein]{yang2023acvi} decreases at a rate of  $\mathcal{O}(\frac{1}{\sqrt{K}})$ for monotone VIs, without relying on the assumption that the operator is  $L$-Lipschitz.
    \item We combine a standard warm-start technique with ACVI and propose a precise variant with approximate solutions, named \emph{inexact ACVI}---see Algorithm~\ref{alg:inexact_acvi}. We show that inexact ACVI recovers the same convergence rate as ACVI, provided that the errors decrease at appropriate rates.
    \item We propose a variant of ACVI designed for inequality constraints that are fast to project to---see Algorithm~\ref{alg:log_free_acvi}.
    We guarantee its convergence and provide the corresponding rate;
    in this case, we omit the central path, simplifying the convergence analysis.
    \item Empirically, we:
    \textit{(i)} verify the benefits of warm-start of the inexact ACVI;
    \textit{(ii)} observe that I-ACVI can be faster than other methods by taking advantage of cheaper approximate steps;
    \textit{(iii)} train a constrained GAN on MNIST and show the projected version of ACVI is faster to converge than other methods; and
    \textit{(iv)} provide visualizations contrasting the different ACVI variants.
\end{itemize}

\subsection{Related Works}\label{sec:related_works}

\noindent\textbf{Last-iterate convergence of first-order methods on VI-related problems.}
When solving VIs, the last and average iterates can be far apart; see examples in~\citep{chavdarova2019}.
Thus, an extensive line of work has aimed at obtaining last-iterate convergence for special cases of VIs that are important in applications, including bilinear or strongly monotone games~\citep[e.g., ][]{tseng1995,malitsky2015,facchinei2003finite,daskalakis2018training,liang2018interaction,gidel19-momentum,azizian20tight,thekumparampil2022lifted}, and VIs with cocoercive operators~\citep{diakonikolas2020halpern}.
Several papers exploit continuous-time analyses as these provide direct insights on last-iterate convergence and simplify the derivation of the Lyapunov potential function~\citep{ryu2019ode,bot2020,rosca2021,chavdarova2023hrdes,bot2022fast}.
For monotone VIs,
\textit{(i)}~\citet{golowich2020last,golowich2020noregret} established that the lower bound of $\tilde{p}$-\textit{stationary canonical linear iterative}  ($\tilde{p}$-SCLI) first-order methods~\citep{arjevani2016pscli} is  $\mathcal{O}(\frac{1}{ \tilde{p} \sqrt{K}})$,
\textit{(ii)} \citet{golowich2020last} obtained a rate in terms of the gap function, relying on first- and second-order smoothness of $F$,
\textit{(iii)}~\citet{gorbunov2022extra} and~\citet{gorbunov2022}  obtained a rate of $\mathcal{O}(\frac{1}{K})$ for extragradient~\citep{korpelevich1976extragradient} and optimistic GDA~\citep{popov1980}, respectively---in terms of
reducing the squared norm of the operator, relying on first-order smoothness of $F$,
and \textit{(iv)} ~\citet{golowich2020last} and~\citet{chavdarova2023hrdes} provided the best iterate rate for OGDA while assuming first-order smoothness of $F$.
\citet{daskalakis2019constrained} focused on zero-sum convex-concave constrained problems and provided an asymptotic convergence guarantee for the last iterate of the \textit{optimistic multiplicative weights update} (OMWU) method.
For constrained and monotone VIs with $L$-Lipschitz operator,~\citet{cai2022constrVI} recently showed that the last iterate of extragradient and optimistic GDA have a rate of convergence that matches the lower bound.
~\citet{gidel2017fwolfe} consider strongly convex-concave zero-sum games with strongly convex constraint set to study the convergence of the Frank-Wolfe method~\citep{lacosteJaggi2015}.

\noindent\textbf{Interior point (IP) methods for VIs.}
IP methods are a broad class of algorithms for solving problems constrained by general inequality and equality constraints.
One of the widely adopted subclasses within IP methods utilizes log-barrier terms to handle inequality constraints.
They typically rely on Newton's method, which iteratively approaches the solution from the feasible region.
Several works extend IP methods for constrained VI problems. Among these,~\citet[Chapter $7$,][]{Nesterov1994InteriorpointPA} study extensions to VI problems while relying on Newton's method.
Further, an extensive line of work discusses specific settings~\citep[e.g.,][]{chen1998global,qi2002smoothing,qi2000new,fan2010interior}.
On the other hand,~\citet{goffin19971} described a second-order cutting-plane method for solving pseudomonotone VIs with linear inequalities.
Although these methods enjoy fast convergence regarding the number of iterations, each iteration requires computing second-order derivatives, which becomes computationally prohibitive for large-scale problems.
Recently,~\citet{yang2023acvi} derived the aforementioned ACVI method which combines
IP methods and the ADMM method, resulting in a \emph{first}-order method that can handle general constraints.

\vspace{-.1em}
\section{Preliminaries}\label{sec:preliminaries}
\vspace{-.1em}

\noindent\textbf{Notation.}
Bold small and bold capital letters denote vectors and matrices, respectively, while curly capital letters denote sets.
We let $[n]$ denote $\{1, \dots, n\}$ and let $\ve$ denote vector of all 1's.
The Euclidean norm of $\vv$ is denoted by $\norm{\vv}$, and the inner product in Euclidean space by $\langle\cdot,\cdot\rangle$.
$\odot$ denotes element-wise product.

\noindent\textbf{Problem.}
Let $rank (  \mC  ) =p$ be the rank of $\mC$ as per~\eqref{eqK}.
With abuse of notation, let $\varphi$ be the concatenated $\varphi_i(\cdot), i\in[m]$.
We assume that each of the inequality constraints is  convex and
$ \varphi_i \in C^1 (\R^n), i \in [m] $.
We define the following sets:
\vspace{-.2em}
\begin{equation*}
\begin{split}
     \mathcal{C}_\leq \triangleq \left\{ \vx\in  \R^n \left| \varphi(\vx) \leq \boldsymbol{0} \right. \right\}, \quad
     \mathcal{C}_< \triangleq \left\{ \vx\in  \R^n \left| \varphi(\vx) < \boldsymbol{0} \right. \right\},   \quad \text{ and } \quad
     \mathcal{C}_= \triangleq \{\vy \in \R^n|\mC \vy = \vd\} \,;
\end{split}
\end{equation*}
\vspace{-.2em}
\noindent
thus the relative interior of $\mathcal{C}$ is $ \textit{int}\ \mathcal{C}\triangleq \mathcal{C}_<  \cap  \mathcal{C}_=$.
We assume $ \textit{int}\ \mathcal{C} \neq \emptyset $ and that $\mathcal{C}$ is compact.

In the following, we list the necessary definitions and assumptions; see App.~\ref{app:background} for additional background.
We define these for a general domain set $\mathcal{S}$, and by setting $\mathcal{S} \equiv \R^n$ and $\mathcal{S} \equiv \mathcal{X}$, these refer to the unconstrained and constrained settings, respectively.
We will use the standard \emph{gap function} as a convergence measure, which requires $\mathcal{S}$ to be compact to define it.

\begin{definition}[monotone operators]\label{def:monotone}
An operator $F\colon\mathcal{X}\supseteq \mathcal{S}
\to \R^n $ is monotone on $\mathcal{S}$ if and only if the following inequality holds for all $\vx, \vx' \in \mathcal{S}$:
$
    \langle \vx-\vx', F(\vx)-F(\vx') \rangle \geq 0.
$
\end{definition}

\begin{definition}[gap function]\label{def:gap}
Given a candidate point $\vx'\in \mathcal{X}$
and a map $F\colon\mathcal{X}\supseteq \mathcal{S}\to \R^n$ where $\mathcal{S}$ is compact, the gap function $\mathcal{G}: \R^n\to\R$ is defined as:
$
\mathcal{G}(\vx', \mathcal{S}) \triangleq \underset{\vx\in\mathcal{S}}{\max} \langle F(\vx'), \vx' - \vx \rangle \,.
$
\end{definition}

\begin{definition}[$\sigma$-approximate solution]\label{def:approx_x}
Given a map $F\colon \mathcal{X}\to \R^n$ and a positive scalar $\sigma$, $\vx\in\mathcal{X}$ is said to be a $\sigma$-approximate solution of $F(\vx)=\mathbf{0}$ iff $\norm{F(\vx)}\leq\sigma$.
\end{definition}

\begin{definition}[$\varepsilon$-minimizer]\label{def:subopt}
Given a minimization  problem
$ \underset{\vx}{\min} \,\,h(\vx)$, s.t. $\vx\in\mathcal{S}$, and a fixed positive scalar $\varepsilon$, a point $\hat{\vx}\in\mathcal{S}$ is said to be an $\varepsilon$-minimizer of this problem
if and only if it holds that:
$
    h(\hat{\vx})\leq h(\vx)+\varepsilon,\,\, \ \
    \forall \vx\in\mathcal{S}.
$
\end{definition}

\begin{figure}[!t]
\vspace{-1em}
  \centering
  \begin{subfigure}[ACVI]{
  \includegraphics[width=0.22\linewidth,clip]{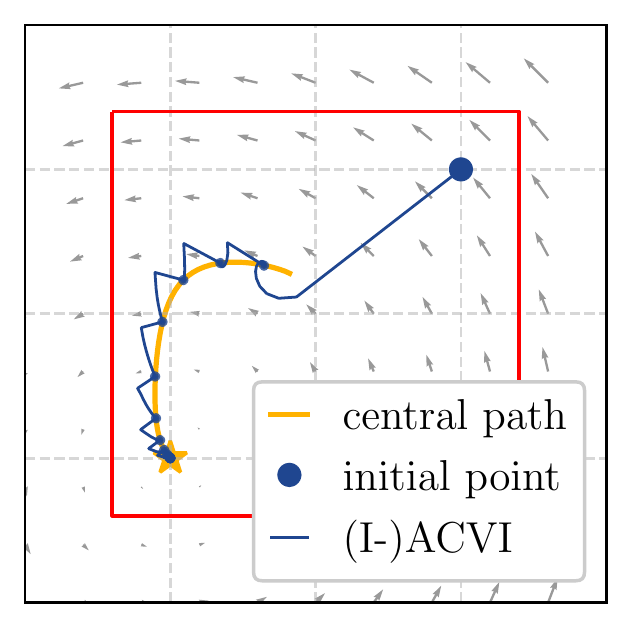}
  }
  \end{subfigure}
  \begin{subfigure}[I-ACVI, $K{=}20,\ell{=}2$]{
  \includegraphics[width=0.22\linewidth,trim={0cm .0cm 0cm .0cm},clip]{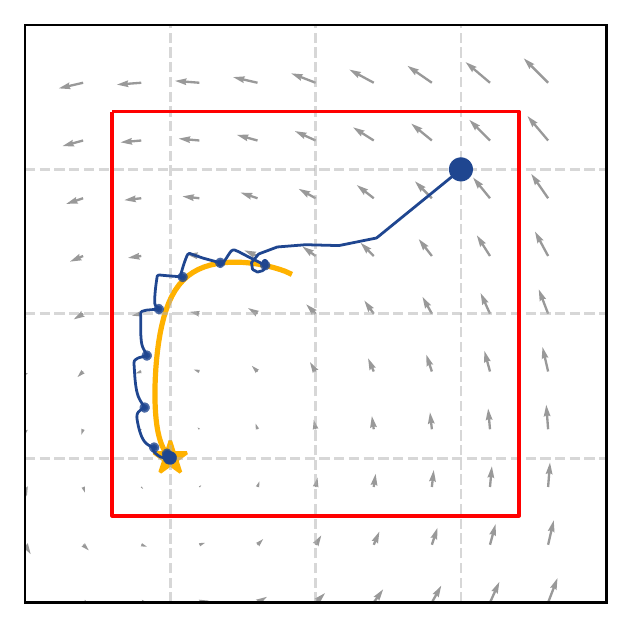}
  }
  \end{subfigure}
  \begin{subfigure}[I-ACVI, $K{=}10,\ell{=}2$]{
  \includegraphics[width=0.22\linewidth,clip]{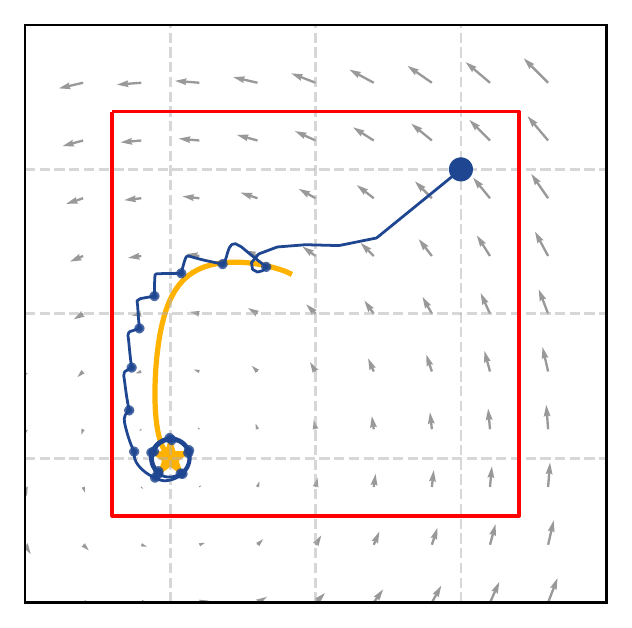}
  }
  \end{subfigure}
  \begin{subfigure}[I-ACVI, $K{=}5,\ell{=}2$]{
  \includegraphics[width=0.22\linewidth,clip]{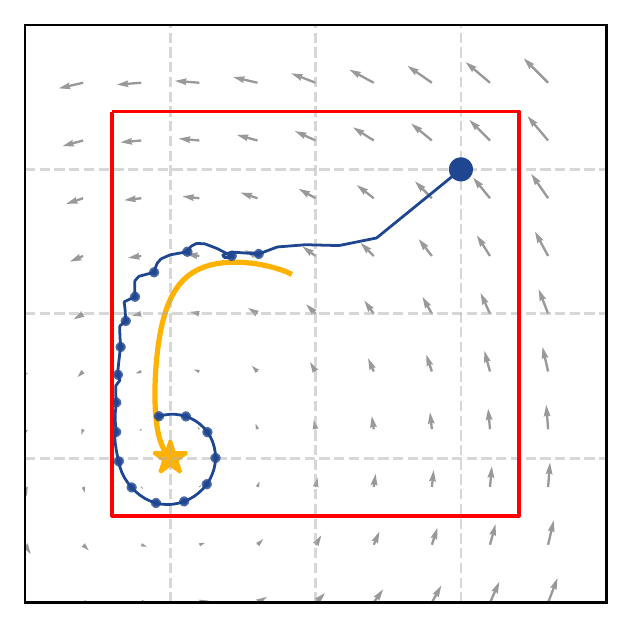}
  }
  \end{subfigure}
 \vspace{-.35em}
 \caption{
 \textbf{Convergence of  ACVI and I-ACVI on the~\eqref{eq:2d-bg} problem}. The central path is depicted in yellow. For all methods, we show the $\vy$-iterates initialized at the same point (blue circle).
 Each subsequent point on the trajectory depicts the (exact or approximate) solution at the end of the inner loop.
 A yellow star represents the game's Nash equilibrium (NE), and the constraint set is the interior of the red square. \textbf{(a)}: As we decay $\mu_t$, the solutions of the inner loop of ACVI follow the central path.
 As $\mu_t \rightarrow 0$, the solution of the inner loop of ACVI converges to the NE.
 \textbf{(b, c, d)}:
 When the $\vx$ and $\vy$ subproblems are solved approximately with a finite $K$ and $\ell$, the iterates need not converge as the approximation error increases (and $K$ decreases).
 See \S~\ref{sec:experiments} for a discussion.
 \vspace{-.35em}
 }
 \label{fig:acvi-c-bilin-y}
\end{figure}

\begin{algorithm}[!t]
    \caption{Inexact ACVI (I-ACVI) pseudocode.}
    \label{alg:inexact_acvi}
    \begin{algorithmic}[1]
        \STATE \textbf{Input:}  operator $F\colon\mathcal{X}\to \R^n$, constraints $\mC\vx = \vd$ and  $\varphi_i(\vx) \leq 0, i = [m]$,
        hyperparameters $\mu_{-1},\beta>0, \delta\in(0,1)$,
        barrier map $\wp$ (\ref{eq:standard_barrier} or \ref{eq:extended_barrier}),
        inner optimizers $\mathcal{A}_\vx$ (e.g. EG, GDA) and $\mathcal{A}_\vy$ (GD) for the $\vx$ and $\vy$ subproblems, resp.;
        outer and inner loop iterations $T$ and $K$, resp.
        \STATE \textbf{Initialize:} $\vx^{(0)}_0\in \R^n$, $\vy^{(0)}_0\in \R^n$, $\vlmd_0^{(0)}\in \R^n$
        \STATE $\mP_c \triangleq \mI - \mC^\intercal (\mC \mC^\intercal )^{-1}\mC$ \hfill \mbox{where} $\mP_c \in \R^{n\times n}$
        \STATE $\vd_c \triangleq \mC^\intercal(\mC \mC^\intercal)^{-1}\vd$  \hfill \mbox{where} $\vd_c \in \R^n$
        \FOR{$t=0, \dots, T-1$} %%%%%
                \STATE $\mu_t = \delta \mu_{t-1}$
        \FOR{$k=0, \dots, K-1$} %%%%%%
            \STATE Set $\vx^{(t)}_{k+1}$ to be a $\sigma_{k+1}$-approximate solution
            of: $\vx + \frac{1}{\beta} \mP_c F(\vx) - \mP_c \vy_k^{(t)} + \frac{1}{\beta}\mP_c\vlmd_k^{(t)} - \vd_c = \boldsymbol{0}$ (w.r.t. $\vx$),  by running $\ell_\vx^{(t)}$ steps of $\mathcal{A}_\vx$, with $\vx$ initialized to the previous solution ($\vx_{k}^{(t)}$ if $k>0$, else $\vx_{K}^{(t-1)}$) \label{alg-ln:x-sol-i}
        \STATE Set $\vy_{k+1}^{(t)}$ to be an $\varepsilon_{k+1}$-minimizer of $\min_{\vy}
        \sum_{i=1}^m \wp \big(\varphi_i(\vy), \mu\big)
        + \frac{\beta}{2} \norm{\vy-\vx_{k+1}^{(t)} - \frac{1}{\beta}\vlmd_k^{(t)}}^2$, by running $\ell_\vy^{(t)}$ steps of $\mathcal{A}_\vy$, with $\vy$ initialized to
        $\vy_{k}^{(t)}$ when $k>0$, or $\vy_{K}^{(t-1)}$ otherwise
        \STATE
        $
            \vlmd_{k+1}^{(t)}=\vlmd_k^{(t)}+\beta ( \vx_{k+1}^{(t)}-\vy_{k+1}^{(t)})
        $
        \ENDFOR
        \STATE $(\vy^{(t+1)}_0, \vlmd^{(t+1)}_0 ) \triangleq (\vy^{(t)}_{K}, \vlmd^{(t)}_{K} )$
        \ENDFOR
    \end{algorithmic}
\end{algorithm}

\section{Convergence of the Exact and Inexact ACVI Algorithms for Monotone VIs}
\vspace{-.3em}
In this section, we present our main theoretical findings:
\textit{(i)} the rate of convergence of the last iterate of ACVI (the exact ACVI algorithm is stated in App.~\ref{app:background}) while relying exclusively on the assumption that the operator $F$ is monotone; and
\textit{(ii)} the corresponding convergence when the subproblems are solved approximately---where the proposed algorithm is referred to as  \emph{inexact ACVI}---Algorithm~\ref{alg:inexact_acvi}
(\ref{eq:standard_barrier},~\ref{eq:extended_barrier} are defined below).
Note that we only assume $F$ is $L$-Lipschitz for the latter result, and if we run Algorithm~\ref{alg:inexact_acvi} with extragradient for line $8$, for example, the method only has a convergence guarantee if $F$ is $L$-Lipschitz~\citep[see][Theorem $1$]{korpelevich1976extragradient}.
For easier comparison with one loop algorithms, we will state both of these results for a fixed $\mu_{-1}$ (hence only have the $k\in[K]$ iteration count) as in~\citep{yang2023acvi}; nonetheless, the same rates hold without knowing $\mu_{-1}$---see App. B.4 in \citet{yang2023acvi} and our App.~\ref{app:statements_discussion}. Thus, both guarantees are parameter-free.

\subsection{Last iterate convergence of exact ACVI}

\begin{thm}[Last iterate convergence rate of ACVI---Algorithm $1$ in~\citep{yang2023acvi}]\label{thm:exact_acvi}
Given a continuous operator $F\colon \mathcal{X}\to \R^n$, assume:
(i) F is monotone on $\mathcal{C}_=$, as per Def.~\ref{def:monotone}; (ii) either $F$ is strictly monotone on $\mathcal{C}$ or one of $\varphi_i$ is strictly convex. Let $(\vx_K^{(t)}, \vy_K^{(t)}, \vlmd_K^{(t)})$ denote the last iterate of ACVI.
Given any fixed $K\in \NN_+$, run with sufficiently small $\mu_{-1}$, then $\forall t \in [T]$, it holds that: \vspace{-.2em}
\begin{equation} \notag
    \mathcal{G}(\vx_{K}^{(t)},\mathcal{C})\leq  \mathcal{O} (\frac{1}{\sqrt{K}}) \,, \text{ and } \\
    \norm{\vx_K^{(t)}-\vy_K^{(t)}}\leq \mathcal{O} (\frac{1}{\sqrt{K}}) \,.
\end{equation}
\end{thm}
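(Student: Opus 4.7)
The plan is to track the iterates against the unique KKT point of the log-barrier-smoothed subproblem that ACVI implicitly targets inside each outer iteration. For fixed $\mu \in (0,\mu_{-1}]$, the operator $F + \mu \sum_i \nabla\varphi_i/(-\varphi_i)$ is strictly monotone on $\mathcal{C}_=$ under assumption (ii) (either $F$ is strictly monotone directly, or some $\varphi_i$ is strictly convex so the barrier is strictly convex). Hence the smoothed VI has a unique solution $\vx^*_\mu \in \textit{int}\,\mathcal{C}$, which, combined with Lagrange multipliers for $\mC\vx=\vd$, gives a unique reference triple $(\vx^*_\mu, \vy^*_\mu, \vlmd^*_\mu)$ with $\vx^*_\mu = \vy^*_\mu$ and $\vlmd^*_\mu = \mu\sum_i \nabla\varphi_i(\vy^*_\mu)/(-\varphi_i(\vy^*_\mu))$. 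This triple is the fixed point of the ACVI inner loop.

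\textbf{Lyapunov descent; control of $\|\vx_K-\vy_K\|$.} Set $\Phi_k \triangleq \|\vy_k-\vy^*_\mu\|^2 + \beta^{-2}\|\vlmd_k-\vlmd^*_\mu\|^2$. Writing the exact first-order optimality conditions of both subproblems at the current iterate and at the reference, substituting $\vlmd_{k+1} = \vlmd_k + \beta(\vx_{k+1}-\vy_{k+1})$, and invoking monotonicity of $F$ together with convexity of each $\varphi_i$ to sign the cross terms produces the standard ADMM-style descent
$$\Phi_{k+1} \le \Phi_k - \|\vx_{k+1}-\vy_{k+1}\|^2 - \|\vy_{k+1}-\vy_k\|^2.$$
Telescoping yields $\sum_{k=0}^{K-1}\bigl(\|\vx_{k+1}-\vy_{k+1}\|^2 + \|\vy_{k+1}-\vy_k\|^2\bigr) \le \Phi_0$. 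A monotonicity-of-residuals argument (subtract consecutive first-order conditions and pair with the iterate increment, using monotonicity of $F$) then shows both $k\mapsto\|\vx_{k+1}-\vy_{k+1}\|^2$ and $k\mapsto\|\vy_{k+1}-\vy_k\|^2$ are non-increasing, so each is $O(1/K)$ at $k=K-1$. This already gives the claimed $\|\vx_K-\vy_K\|=O(1/\sqrt K)$, and also supplies $\|\vy_K-\vy_{K-1}\|=O(1/\sqrt K)$, needed below.

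\textbf{Gap bound via first-order conditions at the iterate.} The key manoeuvre is to work with the first-order conditions at the \emph{current} iterate rather than at the reference. The $\vy$-subproblem at step $K-1$ gives $\vlmd_K = \mu\sum_i \nabla\varphi_i(\vy_K)/(-\varphi_i(\vy_K))$, and the $\vx$-subproblem, combined with $\vlmd_{K-1}=\vlmd_K-\beta(\vx_K-\vy_K)$, gives
$$F(\vx_K) = \beta(\vy_{K-1}-\vy_K) - \mu\sum_{i=1}^m\frac{\nabla\varphi_i(\vy_K)}{-\varphi_i(\vy_K)} - \mC^\intercal\vnu$$
for some $\vnu\in\R^p$. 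Pair with $\vx_K-\vx$ for arbitrary $\vx\in\mathcal{C}$: since $\vx,\vx_K\in\mathcal{C}_=$ the $\mC^\intercal\vnu$ term drops; split $\vx_K-\vx=(\vx_K-\vy_K)+(\vy_K-\vx)$. Convexity of $\varphi_i$ yields $\langle\nabla\varphi_i(\vy_K),\vy_K-\vx\rangle \ge \varphi_i(\vy_K)-\varphi_i(\vx) \ge \varphi_i(\vy_K)$, so the ``$\vy_K-\vx$'' piece of the barrier sum contributes at most $m\mu$, while the ``$\vx_K-\vy_K$'' piece is absorbed into $O(\|\vx_K-\vy_K\|)$ using continuity of $\nabla\varphi_i$ on $\mathcal{C}$ (and the fact that $\vy_K$ stays away from $\partial\mathcal{C}$, since it converges to the interior-feasible $\vy^*_\mu$). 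The remaining $\beta\langle\vy_{K-1}-\vy_K,\vx_K-\vx\rangle$ contributes $O(\|\vy_{K-1}-\vy_K\|)$ by compactness of $\mathcal{C}$. Taking the maximum over $\vx\in\mathcal{C}$ and choosing $\mu_{-1}\le O(1/\sqrt K)$ gives $\mathcal{G}(\vx_K,\mathcal{C})=O(1/\sqrt K)$.

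\textbf{Main obstacle.} Without a Lipschitz assumption on $F$, one cannot translate $\|\vx_K-\vx^*_\mu\|\to 0$ into a quantitative bound on $\|F(\vx_K)-F(\vx^*_\mu)\|$; the ``iterate-distance $\to$ Lipschitz $\to$ gap'' chain used in the original ACVI analysis is therefore unavailable. The crucial design choice is to bypass the reference point entirely in the gap-bound step and instead expand $F(\vx_K)$ using the iterate's own first-order conditions, so that every estimate uses only the one-sided, continuity-friendly form of monotonicity. The delicate secondary point is keeping $-\varphi_i(\vy_K)$ bounded away from $0$ uniformly in $K$; making this quantitative (for example via a short lemma exploiting strict monotonicity/strict convexity in (ii) to show $\vy_K$ stays at a controlled distance from $\partial\mathcal{C}$) is what forces the ``sufficiently small $\mu_{-1}$'' trade-off, since shrinking $\mu_{-1}$ both pushes the reference closer to $\partial\mathcal{C}$ and shrinks the $m\mu$ residual, and these must be balanced against the prescribed $K$.
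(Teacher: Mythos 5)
Your proposal is correct in substance, and its first half coincides with the paper's own argument: the descent inequality $\Phi_{k+1}\le\Phi_k-\|\vx_{k+1}-\vy_{k+1}\|^2-\|\vy_{k+1}-\vy_k\|^2$ with the central-path point as reference, the telescoping, and the residual-monotonicity step are exactly Lemmas~\ref{lm3.6},~\ref{lm3.7} and~\ref{thm:3.2}. One small imprecision there: the paper only shows that the \emph{sum} $\frac{1}{2\beta}\|\vlmd_{k+1}-\vlmd_k\|^2+\frac{\beta}{2}\|\vy_{k+1}-\vy_k\|^2$ is non-increasing, not each summand separately as you claim; since both summands are non-negative this still yields $\|\vx_K-\vy_K\|=\mathcal{O}(1/\sqrt K)$ and $\|\vy_K-\vy_{K-1}\|=\mathcal{O}(1/\sqrt K)$, which is all you actually use. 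Where you genuinely diverge is the gap bound. The paper introduces the auxiliary linearized problem \eqref{eq_app:new_points_limit} and its barrier smoothing \eqref{eq_app:new_points_mu}, observes via \eqref{eq_app:explicit_gap} that $\mathcal{G}(\vx_{K+1},\mathcal{C})=f_K(\vx_{K+1})-f_K(\vx_K^\star)$, bounds $f_K(\vx_{K+1})-f_K(\vx_K^\mu)$ by the Lyapunov analysis, and then passes from $\vx_K^\mu$ to $\vx_K^\star$ through an asymptotic continuity result (Corollary 2.11 of Chu), which is precisely what makes ``sufficiently small $\mu_{-1}$'' non-constructive. You instead expand $F(\vx_K)=\beta(\vy_{K-1}-\vy_K)-\vlmd_K-\mC^\intercal\vnu$ from the two stationarity conditions, annihilate $\mC^\intercal\vnu$ on $\mathcal{C}_=$, and control the barrier term against an arbitrary $\vx\in\mathcal{C}$ by convexity of the $\varphi_i$, which costs only an additive $m\mu$. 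This buys an explicit quantification of the requirement on $\mu_{-1}$ (namely $\mu_{-1}\lesssim 1/\sqrt K$) and dispenses with the auxiliary KKT points and the external continuity theorem altogether. Finally, the ``delicate secondary point'' you flag is a non-issue: you never need $-\varphi_i(\vy_K)$ bounded away from zero termwise, because the vector $\mu\sum_i\nabla\varphi_i(\vy_K)/(-\varphi_i(\vy_K))$ is exactly $\vlmd_K$, whose norm is already bounded by $\|\vlmd^\mu\|+\sqrt{\beta\Delta^\mu}$ from the Lyapunov analysis, so the $\langle\cdot\,,\vx_K-\vy_K\rangle$ piece is at most $\|\vlmd_K\|\,\|\vx_K-\vy_K\|=\mathcal{O}(1/\sqrt K)$ with no boundary-distance lemma required.
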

\vspace{-.8em}
App.~\ref{app:proofs} gives the details on the constants that appear in the rates and the proof of Theorem~\ref{thm:exact_acvi}.

\subsection{Last iterate convergence rate of inexact ACVI}\label{sec:iacvi_conv}

For some problems, the equation in line 8 or the convex optimization problem in line 9 of ACVI
may not have an analytic solution, or the exact solution may be expensive to compute. Thus we consider solving these two problems approximately, using warm-starting.
At each iteration, we set the initial variable $\vx$ and $\vy$  to be the solution at the previous step when solving the $\vx$ and $\vy$ subproblems, respectively, as described in Algorithm~\ref{alg:inexact_acvi}.
%%%%
The following Theorem---inspired by~\citep{schmidt2011convergence}---establishes that when the errors in the calculation of the subproblems satisfy certain conditions, the last iterate convergence rate of inexact ACVI recovers that of (exact) ACVI.
The theorem holds for the standard barrier function used for IP methods, as well as for a new barrier function~\eqref{eq:extended_barrier} that we propose that is smooth and defined in the entire domain, as follows:\\
\begin{minipage}[c]{0.39\textwidth}
    \vspace{-.8em}
    \begin{equation}\tag{$\wp_1$}\label{eq:standard_barrier}
        \wp_1(\vz, \mu) \triangleq - \mu\ \log (-\vz)
    \end{equation}
\end{minipage}\hfill
\begin{minipage}[c]{0.6\textwidth}
    \begin{equation}\tag{$\wp_2$}\label{eq:extended_barrier}
    \wp_2(\vz, \mu) \triangleq
    \begin{cases}
         - \mu \log (-\vz) \,, & \vz \leq - e^{-\frac{c}{\mu}}\\
         \mu  e^{\frac{c}{\mu}} \vz  + \mu + c \,, & \text{otherwise}
    \end{cases}
    \end{equation}
\end{minipage}
where $c$ in \eqref{eq:extended_barrier} is fixed constant.
Choosing among these is denoted  with $\wp(\cdot)$ in Algorithm~\ref{alg:inexact_acvi}.
\begin{thm}[Last iterate convergence rate of Inexact ACVI---Algorithm~\ref{alg:inexact_acvi} with $\wp_1$ or $\wp_2$]\label{thm:inexact_acvi}
Given a continuous operator $F\colon \mathcal{X}\to \R^n$, assume:
(i) F is monotone on $\mathcal{C}_=$, as per Def.~\ref{def:monotone};
(ii) either $F$ is strictly monotone on $\mathcal{C}$ or one of $\varphi_i$ is strictly convex; and
(iii) $F$ is $L$-Lipschitz on $\mathcal{X}$, that is, $\norm{F(\vx) - F(\vx')} \leq L \norm{\vx-\vx'}$, for all $\vx, \vx' \in \mathcal{X}$ and some $L > 0$.
Let $(\vx_K^{(t)}, \vy_K^{(t)}, \vlmd_K^{(t)})$ denote the last iterate of Algorithm~\ref{alg:inexact_acvi}; and  let
$\sigma_k$ and $\varepsilon_k$ denote the approximation errors at step $k$ of lines 8 and 9 (as per Def.~\ref{def:approx_x} and~\ref{def:subopt}), respectively.
Further, suppose:
$
    \lim_{K\rightarrow\infty}\frac{1}{\sqrt{K}}\sum_{k=1}^{K+1}(k(\sqrt{\varepsilon_k}+\sigma_k))<+\infty\,.
$
Given any fixed $ K \in \NN_+$, run with sufficiently small $\mu_{-1}$, then for all $ t \in [T]$, it holds:
\begin{equation*}
    \mathcal{G}(\vx_{K}^{(t)},\mathcal{C})\leq  \mathcal{O} (\frac{1}{\sqrt{K}}) \,, \text{ and } \\
    \norm{\vx_K^{(t)}-\vy_K^{(t)}}\leq \mathcal{O} (\frac{1}{\sqrt{K}}) \,.
\end{equation*}
\end{thm}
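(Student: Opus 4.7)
The plan is to follow the blueprint of the exact ACVI proof (Theorem~\ref{thm:exact_acvi}) while carefully tracking how the two approximation errors $\sigma_k$ (for the $\vx$-update) and $\varepsilon_k$ (for the $\vy$-update) propagate through the analysis. The guiding intuition, inspired by~\citep{schmidt2011convergence}, is that if these errors decay sufficiently fast---exactly as captured by the summability hypothesis---the same $\mathcal{O}(1/\sqrt{K})$ last-iterate rate is preserved. The Lipschitz assumption on $F$ enters only at the points where an error in the $\vx$-subproblem must be transferred into the $\vy$-subproblem through $F$.

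First, I would convert the two notions of inexactness into perturbed optimality/KKT conditions. For line 8, a $\sigma_{k+1}$-approximate solution gives, by Definition~\ref{def:approx_x}, a residual $\vr^{(t)}_{\vx,k+1}$ of norm at most $\sigma_{k+1}$ in the defining equation. For line 9, the objective is $\beta$-strongly convex (since both $\wp_1$ and $\wp_2$ are convex on their effective domains, and the quadratic term has modulus $\beta$); consequently an $\varepsilon_{k+1}$-minimizer $\vy^{(t)}_{k+1}$ lies within distance $\sqrt{2\varepsilon_{k+1}/\beta}$ of the true minimizer $\vy^{(t),\star}_{k+1}$, and moreover admits a subgradient of the objective of norm $\mathcal{O}(\sqrt{\varepsilon_{k+1}})$---i.e., an approximate KKT condition with residual $\vr^{(t)}_{\vy,k+1}$ of order $\sqrt{\varepsilon_{k+1}}$.

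Next, I would replay the ACVI analysis that establishes descent of a Lyapunov potential $\Phi_k$ measuring distance to the implicit KKT point targeted by the inner loop, but now with the perturbed KKT conditions substituted into the primal-dual identities. This produces extra cross terms of the form $\langle \vu_k, \vr^{(t)}_{\cdot,k+1}\rangle$, where $\vu_k$ is an iterate-based quantity. By Cauchy--Schwarz, together with compactness of $\mathcal{C}$ (which bounds $\vu_k$) and the $L$-Lipschitz property (to control $F(\vx^{(t)}_{k+1}) - F(\vx^{(t),\star}_{k+1})$ when the $\vx$-error is pushed into the $\vy$-subproblem), one obtains a one-step recursion of the schematic form
\begin{equation*}
\Phi_{k+1} \leq \Phi_k - c\,\mathcal{G}(\vx^{(t)}_{k+1}, \mathcal{C}) + C_1 \sqrt{\Phi_k}\,\big(\sigma_{k+1} + \sqrt{\varepsilon_{k+1}}\big) + C_2 \big(\sigma_{k+1}^2 + \varepsilon_{k+1}\big).
\end{equation*}

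The main obstacle is controlling the accumulated error and lifting the resulting average-iterate bound to a last-iterate bound. I would invoke a discrete Gronwall / Schmidt-style lemma to deduce $\sqrt{\Phi_k} \leq \sqrt{\Phi_0} + C\sum_{i\leq k}(\sigma_i + \sqrt{\varepsilon_i})$, telescope the recursion, and divide by $K$. The hypothesis $\lim_{K\to\infty}\frac{1}{\sqrt{K}}\sum_{k=1}^{K+1} k(\sigma_k + \sqrt{\varepsilon_k})<+\infty$ is precisely what ensures the accumulated error terms remain $\mathcal{O}(1/\sqrt{K})$: the factor $k$ absorbs both the running growth of $\sqrt{\Phi_k}$ and the telescoping. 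The last-iterate gap bound $\mathcal{G}(\vx^{(t)}_K,\mathcal{C}) \leq \mathcal{O}(1/\sqrt{K})$ then follows by the same monotonicity-of-potential argument as in the exact proof (now up to the controlled perturbation), while $\|\vx^{(t)}_K - \vy^{(t)}_K\| \leq \mathcal{O}(1/\sqrt{K})$ is obtained from the dual update $\vlmd^{(t)}_{k+1} - \vlmd^{(t)}_k = \beta(\vx^{(t)}_{k+1} - \vy^{(t)}_{k+1})$ together with boundedness of $\{\vlmd^{(t)}_k\}_k$, exactly as in Theorem~\ref{thm:exact_acvi}. Finally, the parameter-free statement with respect to $\mu_{-1}$ is handled as in App.~\ref{app:statements_discussion} by absorbing the $\mu$-dependent constants into the leading order.
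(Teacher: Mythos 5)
Your proposal follows essentially the same route as the paper's own proof: perturbed KKT conditions with residuals of order $\sigma_k$ and $\sqrt{\varepsilon_k/\beta}$ (the paper phrases the latter via $\varepsilon$-subdifferentials), Cauchy--Schwarz plus compactness of $\mathcal{C}$ and Lipschitzness of $F$ to control the resulting cross terms, Lemma~1 of \citet{schmidt2011convergence} to bound the accumulated error growth, the factor $k$ in the summability hypothesis absorbing exactly the weighted sums $\sum_k k(\sigma_k+\sqrt{\varepsilon_k})$ that arise, and the approximate monotonicity of the per-step quantity to convert the telescoped bound into a last-iterate rate. The one schematic imprecision is that the quantity that telescopes and is approximately non-increasing is the per-step movement $\tfrac{1}{2\beta}\|\vlmd_{k+1}-\vlmd_k\|^2+\tfrac{\beta}{2}\|\vy_{k+1}-\vy_k\|^2$ rather than the gap itself, the gap being recovered only at the end through the identity $\mathcal{G}(\vx_{K+1},\mathcal{C})=f_K(\vx_{K+1})-f_K(\vx_K^\star)$ and the $\mu\to 0$ limit --- but that is precisely the mechanism you invoke when deferring to the exact-ACVI argument.
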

\vspace{-.8em}
As is the case for Theorem~\ref{thm:exact_acvi}, Theorem~\ref{thm:inexact_acvi} gives a nonasymptotic convergence guarantee.
While the condition involving the sequences $\{\eps_k\}_{k=1}^{K+1}$ and $\{\sigma_k\}_{k=1}^{K+1}$ requires the given expression to be summable, the convergence rate is nonasymptotic as it holds for any $K$.
App.~\ref{app:proofs} gives details on the constants in the rates of
Theorem~\ref{thm:inexact_acvi}, provides the proof, and also discusses the algorithms $\mathcal{A}_\vx,\mathcal{A}_\vy$ for the sub-problems that satisfy the conditions.
App.~\ref{app:implementation} discusses further details of the implementation of Algorithm~\ref{alg:inexact_acvi}; and we will analyze the effect of warm-starting in \S~\ref{sec:experiments}.

\section{Specialization of ACVI for Simple Inequality Constraints}\label{sec:acvi_simple_ineq}
\begin{algorithm}[tb]
    \caption{P-ACVI: ACVI with simple inequalities.}
    \label{alg:log_free_acvi}
    \begin{algorithmic}[1]
        \STATE \textbf{Input:}  operator $F\colon\mathcal{X}\to \R^n$, constraints $\mC\vx = \vd$ and projection operator $\Pi_\leq$ for the inequality constraints,
        hyperparameter $\beta>0$, and
        number of iterations $K$.
        \STATE \textbf{Initialize:} $\vy_0\in \R^n$, $\vlmd_0\in \R^n$
        \STATE $\mP_c \triangleq \mI - \mC^\intercal (\mC \mC^\intercal )^{-1}\mC$ \hfill \mbox{where} $\mP_c \in \R^{n\times n}$
        \STATE $\vd_c \triangleq \mC^\intercal(\mC \mC^\intercal)^{-1}\vd$  \hfill \mbox{where} $\vd_c \in \R^n$
        \FOR{$k=0, \dots, K-1$} %%%%%%
            \STATE Set $\vx_{k+1}$ to be the  solution of: $\vx + \frac{1}{\beta} \mP_c F(\vx) - \mP_c \vy_k + \frac{1}{\beta}\mP_c\vlmd_k - \vd_c = \boldsymbol{0}$ (w.r.t. $\vx$) \label{alg-ln:x-sol-lf}
        \STATE $\vy_{k+1} = \Pi_\leq(\vx_{k+1}+\frac{1}{\beta}\vlmd_k)$
        \STATE
        $
            \vlmd_{k+1}=\vlmd_k+\beta ( \vx_{k+1}-\vy_{k+1}) % \label{214c}
        $
        \ENDFOR
    \end{algorithmic}
\end{algorithm}
\vspace*{-.3em}

We now consider that the inequality constraints are simple in that the projection is fast to compute.
This scenario frequently occurs in machine learning, particularly when dealing with $L_\infty$-ball constraints, for instance.
Projections onto the $L_2$ and $L_1$-balls can also be obtained efficiently through simple normalization for $L_2$ and a $\mathcal{O}\big(n\log(n)\big)$ algorithm for $L_1$~\citep{duchiprojl1}.
In ACVI, we have the flexibility to substitute the $\vy$-subproblem with a projection onto the set defined by the inequalities.
The $\vx$-subproblem still accounts for equality constraints, and if there are none, this simplifies the $\vx$-subproblem further since $\mP_c\equiv\mI$, and $\vd_c\equiv\mathbf{0}$.
Projection-based methods cannot leverage this structural advantage of simple inequality constraints as the intersection with the equality constraints can be nontrivial.

\noindent\textbf{The P-ACVI Algorithm: omitting the $\log$ barrier}.
Assume that the provided inequality constraints can be met efficiently through a projection $\Pi_\leq(\cdot)\colon \R^n \rightarrow \mathcal{C}_\leq$.
In that case, we no longer need the $\log$ barrier, and we omit $\mu$ and the outer loop of  ACVI over $t\in[T]$.
Differentiating the remaining expression of the $\vy$ subproblem with respect to $\vy$ and setting it to zero gives:
\vspace{-.5em}
\begin{align*}
    \argmin{\vy} \ \frac{\beta}{2} \norm{ \vy - \vx_{k+1} - \frac{1}{\beta} \vlmd_k }^2
    = \vx_{k+1} + \frac{1}{\beta} \vlmd_k
    \,.
\end{align*}
This implies that line $9$ of the exact ACVI algorithm (given in App.~\ref{app:background}) can be replaced with the solution of the $\vy$ problem \emph{without} the inequality constraints, and we can cheaply project to satisfy the inequality constraints, as follows:
$
    \vy_{k+1} = \Pi_\leq(\vx_{k+1}+\frac{1}{\beta}\vlmd_k) \,,
$
where the
$\varphi_i(\cdot)$ are included in the projection.
We describe
the resulting procedure in Algorithm~\ref{alg:log_free_acvi} and refer to it as \emph{P-ACVI}.
In this scenario with simple $\varphi_i$, the $\vy$ problem is always solved exactly; nonetheless, when the $\vx$-subproblem is also solved approximately, we refer to it as \emph{PI-ACVI}.
\vspace{-.5em}

\begin{minipage}[c]{0.48\textwidth}
\begin{figure}[H]
  \centering
  \begin{subfigure}[$\vx_k$ of PI-ACVI]{
  \includegraphics[width=0.45\linewidth,clip]{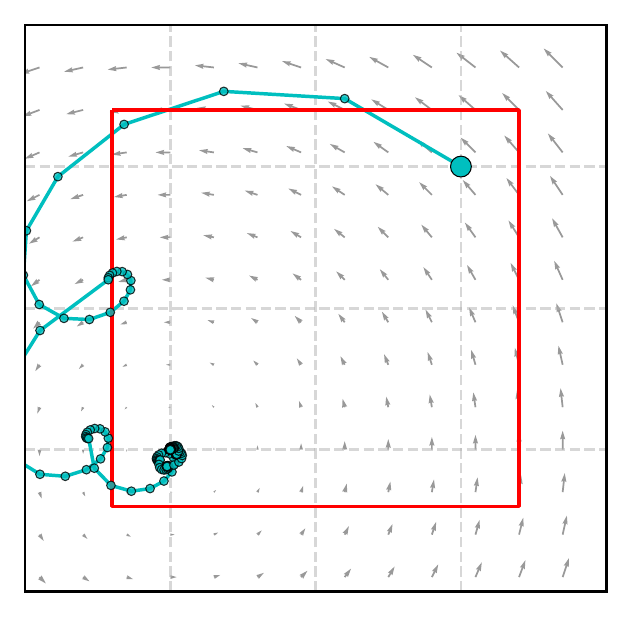}
  }
  \end{subfigure}
  \begin{subfigure}[$\vy_k$ of PI-ACVI]{
  \includegraphics[width=0.45\linewidth,trim={0cm .0cm 0cm .0cm},clip]{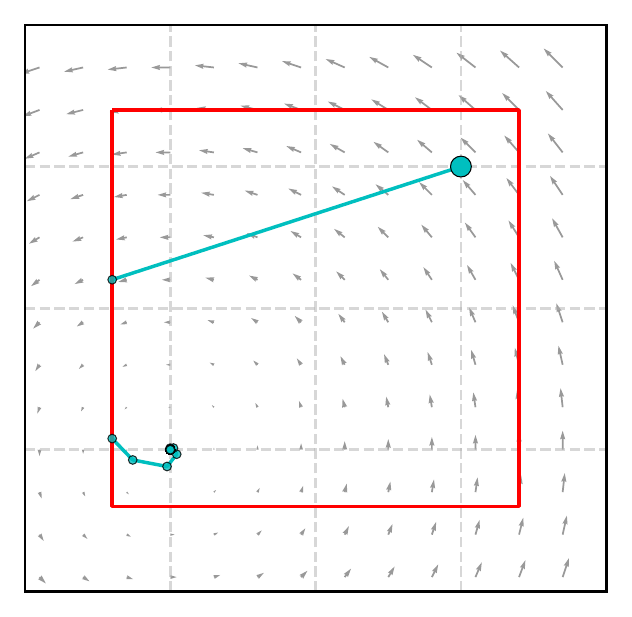}
  }
  \end{subfigure}
  \vspace{-.5em}
 \caption{\textbf{Intermediate iterates of PI-ACVI (Algorithm~\ref{alg:log_free_acvi}) on the 2D minmax game \eqref{eq:2d-bg}}. The boundary of the constraint set is shown in red. \textbf{(b)} depicts the $\vy_k$ (from line $7$ in Algorithm~\ref{alg:log_free_acvi}) which we obtain through projections. In \textbf{(a)}, each spiral corresponds to iteratively solving the $\vx_k$ subproblem for $\ell=20$ steps (line $6$ in Algorithm~\ref{alg:log_free_acvi}).
 Jointly, the trajectories of $\vx$ and $\vy$ illustrate the ACVI dynamics: $\vx$ and the constrained $\vy$ ``collaborate'' and converge to the same point.
 }
 \label{fig:acvi-logfree-2d}
\end{figure}
\end{minipage}
\hfill
\begin{minipage}[c]{0.485\textwidth}
    \vspace{-.9em}
  \begin{figure}[H]
  \centering
  \begin{subfigure}[FID, lower is better]{
  \includegraphics[width=0.47\linewidth,trim={0.25cm .25cm .29cm .25cm},clip]{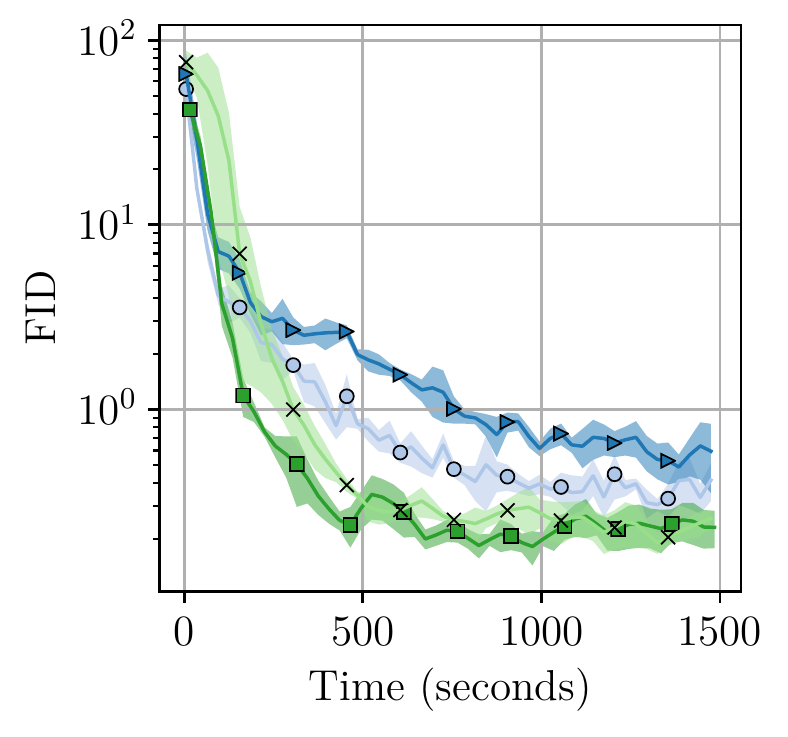}
  }
  \end{subfigure}
  \begin{subfigure}[IS, higher is better]{
  \includegraphics[width=0.45\linewidth,trim={0.25cm .25cm .29cm .25cm},clip]{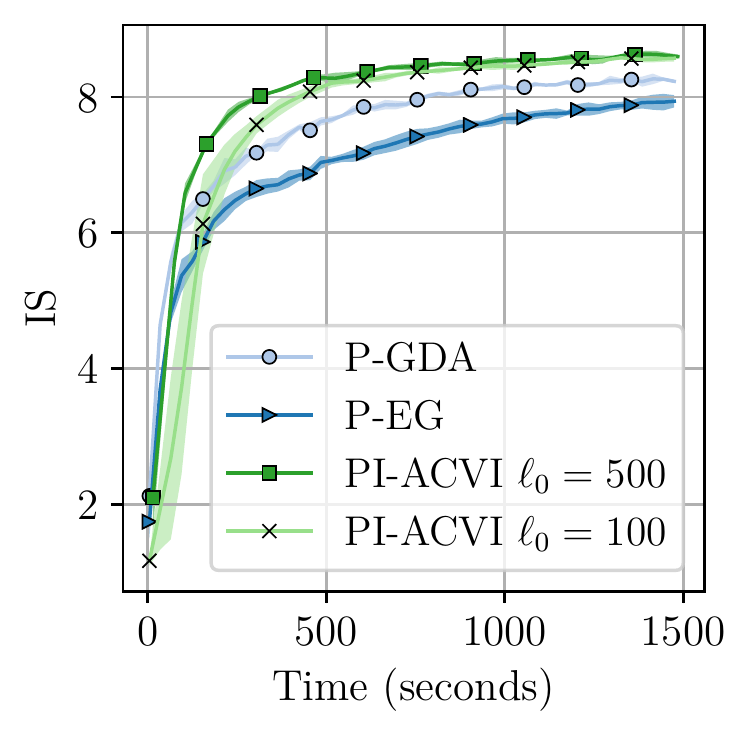}
  }
  \end{subfigure}
  \vspace{-.5em}
  \caption{
  \textbf{Experiments on the \eqref{eq:c-gan} game}, using GDA, EG, and PI-ACVI on MNIST. All curves are averaged over $4$ seeds. \textbf{(a)}: Frechet Inception Distance (FID, lower is better)
  given CPU wall-clock time. \textbf{(b)}: Inception Score (IS, higher is better) given wall-clock time. We observe that PI-ACVI converges faster than EG and GDA for both metrics. Moreover, we see that using a large $\ell$ for the first iteration ($\ell_0$) can give a significant advantage. The two PI-ACVI curves use the same $\ell_+=20$.   \vspace{-1em}
 }\label{fig:mnist_logfree-fid}
\end{figure}
\end{minipage}

\begin{figure*}[!htbp]
\vspace{-2em}
  \centering
  \begin{subfigure}[fixed error]{
  \includegraphics[width=0.25\linewidth,clip]{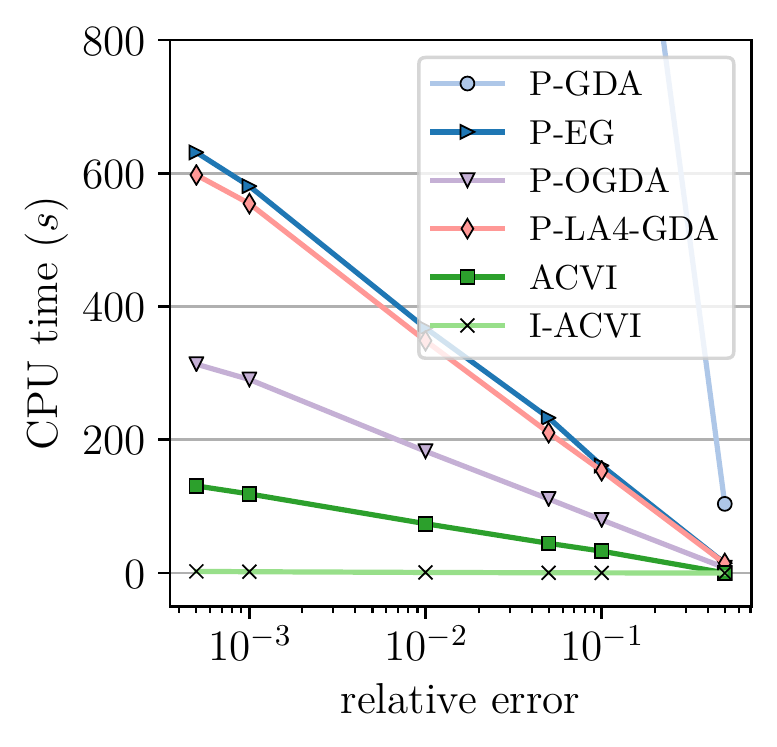}
  \label{subfig:hdbg_cpu_vs_err}
  }
  \end{subfigure}
  \begin{subfigure}[rotational intensity]{
  \includegraphics[width=0.25\linewidth,trim={0cm .0cm 0cm .0cm},clip]{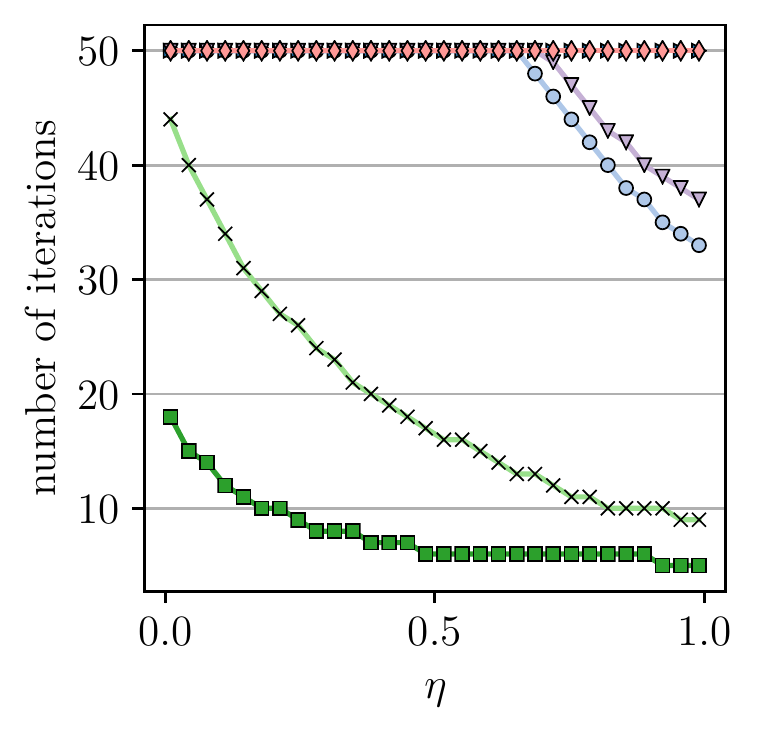}
  \label{subfig:hdbg_rotational_intensity}
  }
  \end{subfigure}
  \begin{subfigure}[I-ACVI: effect of $K_0, K_+$
  ]{
  \includegraphics[width=0.4\linewidth,trim={0cm .0cm 0cm .0cm},clip]{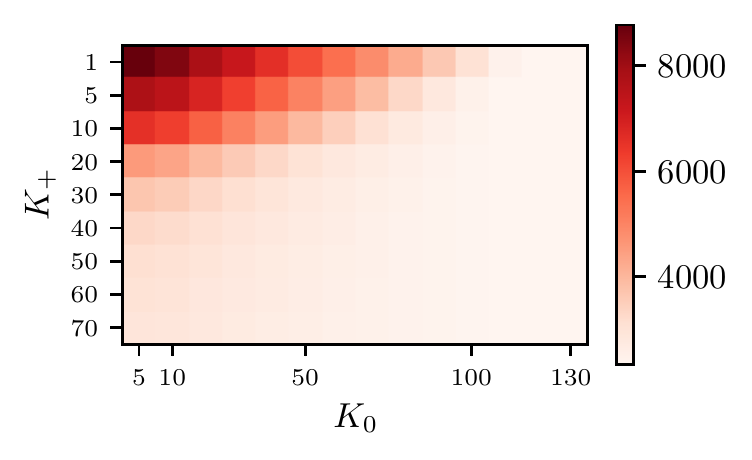}
  \label{subfig:hdbg_k0_k+_i-acvi}
  }
  \end{subfigure}
  \vspace{-.5em}
 \caption{
 \textbf{Comparison between I-ACVI, (exact) ACVI, and projection-based algorithms on the \eqref{eq:high_dim_bg} problem.}
 \textbf{(a)}:  CPU time (in seconds) to reach a given relative error ($x$-axis), where the rotational intensity is fixed to $\eta=0.05$ in \eqref{eq:high_dim_bg} for all methods.
 \textbf{(b)}: Number of iterations to reach a relative error of $0.02$ for varying values of the rotational intensity $\eta$. We fix the maximum number of iterations to $50$.
 %%%%%
 \textbf{(c)}: joint impact of the number of inner-loop iterations $K_0$ at $t=0$ and different choices of inner-loop iterations for $K_+$ at any $t>0$ on the number of iterations needed to reach a fixed relative error of $10^{-4}$.
 We see that
 irrespective of the selection of $K_+$, I-ACVI converges fast if $K_0$ is large enough.
 For instance, $(K_0=130, K_+=1)$ converges faster than $(K_0=20, K_+=20)$.
 %%%%%
 We fix $\ell=10$ for all the experiments, in all of \textbf{(a)}, \textbf{(b)}, and \textbf{(c)}.
 }
 \label{fig:acvi-time-hdb}
\end{figure*}

\noindent\textbf{Last-iterate convergence of P-ACVI}.
The following theorem shows that P-ACVI has the same last-iterate rate as ACVI.
Its proof can be derived from that of Theorem~\ref{thm:exact_acvi}, which focuses on a more general setting,
see App.~\ref{app:proofs}. We state it as a separate theorem, as it cannot be deduced directly from the statement of the former.

\begin{thm}[Last iterate convergence rate of P-ACVI---Algorithm~\ref{alg:log_free_acvi}]\label{thm:log_free_rate}
Given a continuous operator $F\colon \mathcal{X}\to \R^n$, assume F is monotone on $\mathcal{C}_=$, as per Def.~\ref{def:monotone}. Let $(\vx_K, \vy_K, \vlmd_K)$ denote the last iterate of Algorithm~\ref{alg:log_free_acvi}. Then for all $K \in \NN_+$, it holds that: \vspace{-.3em}
\begin{equation}\label{eq:lf_rate} \notag
    \mathcal{G}(\vx_{K},\mathcal{C})\leq  \mathcal{O} (\frac{1}{\sqrt{K}})\,, \text{ and } \norm{\vx^K-\vy^K}\leq \mathcal{O} (\frac{1}{\sqrt{K}})\,.
\end{equation}
\end{thm}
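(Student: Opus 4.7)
The plan is to specialize and simplify the proof of Theorem~\ref{thm:exact_acvi} (the exact ACVI analysis of \citet{yang2023acvi}), exploiting the fact that P-ACVI removes both the $\log$-barrier and the outer homotopy loop over $t$. The starting point is to record what the two inner updates give us for free: applying $\mC$ to the $\vx$-update equation annihilates every $\mP_c$ term and yields $\mC\vx_{k+1} = \mC\vd_c = \vd$, so $\vx_k\in\mathcal{C}_=$ for all $k$; on the other side the projection $\Pi_\leq$ guarantees $\vy_k\in\mathcal{C}_\leq$ for all $k$. Consequently, if $\|\vx_K-\vy_K\|$ is small, the iterates are almost primal-feasible in both directions simultaneously, so any limit point of the scheme solves~\eqref{eq:cvi}.

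Next, I would set up the KKT system of the split problem
$\min_{\vx=\vy,\ \mC\vx=\vd,\ \varphi(\vy)\le\vzero}\langle F(\vx),\cdot\rangle$
and pick a solution $(\vx^\star,\vy^\star,\vlmd^\star)$ with $\vx^\star=\vy^\star\in\mathcal{C}$, $\mP_c\bigl(F(\vx^\star)+\vlmd^\star\bigr)=\vzero$, and $-\vlmd^\star\in N_{\mathcal{C}_\leq}(\vy^\star)$; existence follows from monotonicity of $F$ and compactness of $\mathcal{C}$. With this reference point, I would introduce the ADMM-style Lyapunov function
$$
\Phi_k\;\triangleq\;\tfrac{1}{2\beta}\|\vlmd_k-\vlmd^\star\|^2+\tfrac{\beta}{2}\|\vy_k-\vy^\star\|^2,
$$
and combine (i) the $\vx$-optimality identity together with monotonicity of $F$, (ii) firm nonexpansiveness of $\Pi_\leq$ at the $\vy$-step, and (iii) the dual update $\vlmd_{k+1}=\vlmd_k+\beta(\vx_{k+1}-\vy_{k+1})$ to derive a per-step contraction of the form
$$
\Phi_k-\Phi_{k+1}\;\ge\;\tfrac{\beta}{2}\|\vx_{k+1}-\vy_{k+1}\|^2+\tfrac{\beta}{2}\|\vy_{k+1}-\vy_k\|^2.
$$
Telescoping then yields $\sum_{k=0}^{K-1}\|\vx_{k+1}-\vy_{k+1}\|^2 \le 2\Phi_0/\beta = \mathcal{O}(1)$, i.e.\ a $1/\sqrt{K}$ bound on the best iterate of the residual.

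The principal obstacle is upgrading this best-iterate bound to a genuine last-iterate rate. To do so, I would mirror the corresponding lemma in the exact ACVI proof and show that the residual $r_k\triangleq\|\vx_{k+1}-\vy_{k+1}\|$ (equivalently, $\|\vlmd_{k+1}-\vlmd_k\|/\beta$) is non-increasing along the iterations. This is obtained by subtracting two consecutive $\vx$-optimality systems, invoking monotonicity of $F$, and exploiting the firm nonexpansiveness of $\Pi_\leq$ between consecutive $\vy$-updates; the computation is notably cleaner here than in the original ACVI because no $\mu_t$-dependent central-path terms have to be tracked. Combining monotonicity of $r_k$ with the summability bound above then gives $r_K=\mathcal{O}(1/\sqrt{K})$, which is the second claim of the theorem.

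Finally, to bound the gap at $\vx_K$, for any $\vx\in\mathcal{C}$ I would rewrite $\langle F(\vx_K),\vx_K-\vx\rangle$ by inserting $\vy_K$ and $\vx^\star$: monotonicity of $F$ converts one piece into a quantity involving only $\vx^\star$; the identity $\mP_c F(\vx_K) = -\beta\mP_c(\vx_K-\vy_{K-1})-\mP_c\vlmd_{K-1}$ derived from the $\vx$-update substitutes $F(\vx_K)$ against tractable quantities; and the fact that both $\vx_K-\vx$ and $\vx_K-\vx^\star$ lie in $\ker\mC$ allows every occurrence of $\mP_c$ to be absorbed. Using compactness of $\mathcal{C}$ to bound $\|\vx_K-\vx\|$ by the diameter, the gap is majorized by a constant multiple of $\|\vx_K-\vy_K\|+\|\vy_K-\vy_{K-1}\|$, both of which are $\mathcal{O}(1/\sqrt{K})$ by the previous paragraph. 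The key reason this analysis goes through under only monotonicity of $F$---no strict monotonicity and no Lipschitz constant---is that the two obstructions present in the general ACVI argument (the interior-point barrier forcing a detour through the central path, and the Lipschitz bound needed to pass from iterate distance to the gap) are both absent in the P-ACVI setting.
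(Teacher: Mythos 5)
Your proposal follows essentially the same route as the paper's proof: the same ADMM Lyapunov function $\tfrac{1}{2\beta}\|\vlmd_k-\vlmd^\star\|^2+\tfrac{\beta}{2}\|\vy_k-\vy^\star\|^2$ with the same per-step decrease, the same residual-monotonicity lemma to upgrade the telescoped (best-iterate) bound to a last-iterate rate, and the same final step of expressing the gap at $\vx_K$ through the $\vx$-update identity so that it is controlled by $\|\vx_K-\vy_K\|$ and $\|\vy_K-\vy_{K-1}\|$ times bounded quantities (the paper packages this via the subdifferential of the indicator $g(\vy)=\mathbbm{1}(\varphi(\vy)\le\mathbf{0})$ and the linearized reference point $\vx_K^\star$ rather than via normal cones and the diameter, but the computations coincide). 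The one loose phrase is the claim that monotonicity of $F$ helps convert a piece of $\langle F(\vx_K),\vx_K-\vx\rangle$ into a quantity at $\vx^\star$ — monotonicity gives the wrong inequality direction for an upper bound there; in both your argument and the paper's, monotonicity is needed only for the Lyapunov decrease and the residual-monotonicity lemma, while the gap bound itself rests on the update identities and the normal-cone property of the projection alone.
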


\begin{rmk}
Note that Theorem~\ref{thm:log_free_rate} relies on weaker assumptions than Theorem.~\ref{thm:exact_acvi}.
This is a ramification of removing the central path in the P-ACVI Algorithm. Thus, assumption \textit{(ii)} in Theorem~\ref{thm:exact_acvi}---used earlier to guarantee the existence of the central path (see App.~\ref{app:background})---is not needed.
\end{rmk}

%%%%%%%%%%%%%%%%%%%%%%%%%%%%%%%%%%%%%%%%%%%%%%%%%
\section{Experiments}\label{sec:experiments}
%%%%%%%%%%%%%%%%%%%%%%%%%%%%%%%%%%%%%%%%%%%%%%%%%

\noindent\textbf{Methods.}
We compare ACVI, Inexact-ACVI (I-ACVI), and Projected-Inexact-ACVI (PI-ACVI) with the projected variants of Gradient Descent Ascent (P-GDA), Extragradient~\citep{korpelevich1976extragradient} (P-EG), Optimistic-GDA~\citep{popov1980} (P-OGDA), and Lookahead-Minmax~\citep{Zhang2019,chavdarova2021lamm} (P-LA).
We always use GDA as an inner optimizer for I-ACVI, PI-ACVI, and P-ACVI.
See App.~\ref{app:experiments} and~\ref{app:implementation}  for comparison with additional methods and implementation.

\noindent\textbf{Problems.}
We study the empirical performance of these methods on three different problems:
\begin{itemize}[leftmargin=*,topsep=-1ex, itemsep=0ex]
    \item \textit{2D bilinear game}: a version of the bilinear game with $L_\infty$ constraints, as follows
    \begin{align}\label{eq:2d-bg}\tag{2D-BG}
        &\min_{x_1 \in \bigtriangleup}\ \max_{x_2 \in \bigtriangleup} \ x_1 x_2 \,, \qquad \text{with} \quad
        \bigtriangleup {=} \{ x \in \R | -0.4 \leq x \leq 2.4 \} \,.
        \vspace{-.9em}
    \end{align}
    \item \textit{High-dimensional bilinear game}: where each player is a $500$-dimensional vector. The iterates are constrained to the probability simplex. A parameter $\eta \in (0,1)$ controls the rotational component of the game (when $\eta=1$ the game is a potential,  when $\eta=0$ the game is Hamiltonian): \vspace{-.3em}
    \begin{align}\tag{HBG}\label{eq:high_dim_bg}
        \underset{\vx_1\in \bigtriangleup}{\min}\underset{\vx_2\in \bigtriangleup}{\max} \ \eta \vx_1^\intercal \vx_1+\left( 1 - \eta \right) \vx_1^\intercal \vx_2- \eta \vx_2^\intercal\vx_2  \,,
        \text{ with }
        \bigtriangleup {=} \{
        \vx_i \in \R^{500}|\vx_i \geq \boldsymbol{0}, \text{ and }\ve^\intercal\vx_i=1
        \}  \,. \nonumber
        \vspace{-.6em}
    \end{align} \vspace{-.7em}
    \item \textit{MNIST}. We train GANs on the MNIST~\citep{mnist} dataset.
    We use linear inequality constraints and no equality constraints, as follows: \vspace{-.3em}
    \begin{align}\tag{C-GAN}\label{eq:c-gan}
        &\min_{G \in \bigtriangleup_{\vtheta}} \max_{D \in \bigtriangleup_{\vpsi}} \hspace{0.5em} \mathop{\mathbb{E}}_{\vs \sim p_{d}} [\log{D(\vs)}] + \mathop{\mathbb{E}}_{\vz\sim p_z}[\log(1{-}D(G(\vz)))]  \\
        &\text{where } \bigtriangleup_{\vtheta} {=} \{\vtheta | \mA_1 \vtheta \leq \vb_1 \}, \ \bigtriangleup_{\vpsi} {=} \{\vpsi | \mA_2 \vpsi \leq \vb_2 \}  \,, \nonumber
        \vspace{-.3em}
    \end{align}
    with $p_z$, $p_d$ respectively, noise and data distributions; $\vtheta$ and $\vpsi$ are the parameters of the generator and discriminator, resp. $D$ and $G$ are the Generator and Discriminator maps, parameterized with $\vtheta$ and $\vpsi$, resp.  $\mA_i\in \R^{100\times n_i}$ and $\vb_i \in \R^{n_i}$, where $n_i$ is the number of parameters of $D$ or $G$.
\end{itemize}

\vspace{-0.2em}
\subsection{Inexact ACVI}
\vspace{-.3em}

\noindent\textbf{2D bilinear game.} In Fig.~\ref{fig:acvi-c-bilin-y}, we compare exact and inexact ACVI on the 2D-Bilinear game.
Rather than solving the subproblems of I-ACVI until we reach appropriate accuracy of the solutions of the subproblems, herein, we fix the $K$ and $\ell$ number of iterations in I-ACVI.
We observe how I-ACVI can converge following the central path when the inner loop of I-ACVI over $k\in[K]$ is solved with sufficient precision. The two parameters influencing the convergence of the iterates to the central path are $K$ and $\ell$, where the latter is the number of iterations to solve the two subproblems (line $8$ and line $9$ in Algorithm~\ref{alg:inexact_acvi}). Fig.~\ref{fig:acvi-c-bilin-y} shows that small values such as $K=20$ and $\ell=2$ are sufficient for convergence for this purely rotational game.
Nonetheless, as $K$ and $\ell$ decrease further, the iterates of I-ACVI may not converge.
This accords with Theorem~\ref{thm:inexact_acvi}, which indicates that the sum of errors is bounded only if $K$ is large. Hence, larger $K$ implies a smaller error.

\noindent\textbf{HD bilinear game.} In Fig.~\ref{subfig:hdbg_cpu_vs_err} and Fig.~\ref{subfig:hdbg_rotational_intensity} we compare I-ACVI with ACVI and the projection-based algorithms on the \eqref{eq:high_dim_bg} problem.
We observe that both ACVI and I-ACVI outperform the remaining baselines significantly in terms of speed of convergence measured in both CPU time and the number of iterations.
Moreover, while I-ACVI requires more iterations than ACVI to reach a given relative error, those iterations are computationally cheaper relative to solving exactly each subproblem; hence, I-ACVI converges much faster than any other method.
Fig.~\ref{subfig:hdbg_k0_k+_i-acvi} aims to demonstrate that the subproblems of I-ACVI are suitable for warm-starting.
Interestingly, we notice that the choice of the number of iterations at the first step $t=0$ plays a crucial role.
Given that we initialize variables at each iteration with the previous solution, it aids the convergence to solve the subproblems as accurately as possible at $t=0$.
This initial accuracy reduces the initial error, subsequently decreasing the error at all subsequent iterations.
We revisit this observation in \S~\ref{subsec:warmup}.

\vspace{-0.2em}
\subsection{Projected-Inexact-ACVI}\vspace{-.3em}

\noindent\textbf{2D bilinear game.} In Fig.~\ref{fig:acvi-logfree-2d} we show the dynamics of PI-ACVI on the 2D game defined by~\eqref{eq:2d-bg}. Compared to ACVI in Fig.~\ref{fig:acvi-c-bilin-y}, the iterates converge to the solution without following the central path. A comparison with other optimizers is available in App.~\ref{app:experiments}.

\noindent\textbf{MNIST.} In Fig.~\ref{fig:mnist_logfree-fid} we compare PI-ACVI and baselines on the \eqref{eq:c-gan} game trained on the MNIST dataset.
We employ the greedy projection algorithm \citep{beck2017first} for the projections.
Since ACVI was derived primarily for handling general constraints, a question that arises is how it (and its variants) performs when the projection is fast to compute.
Although the projection is fast to compute for these experiments, PI-ACVI converges faster than the projection-based methods.
Compared to the projected EG method, which only improves upon GDA when the rotational component of $F$ is high, it gives more consistent improvements over the GDA baseline.

\vspace{-0.2em}
\subsection{Effect of Warm-up on I-ACVI and PI-ACVI}\label{subsec:warmup}\vspace{-.3em}

\noindent\textbf{I-ACVI.}
The experiments in  Fig.~\ref{fig:acvi-c-bilin-y} motivate increasing the number of iterations $K$ only at the first iteration $t=0$---denoted $K_0$, so that the early iterates are close to the central path. Recall that the $K$ steps (corresponding to line $7$ in Algorithm~\ref{alg:inexact_acvi}) bring the iterates closer to the central path as $K \rightarrow \infty$ (see App.~\ref{app:proofs}).
After those $K_0$ steps, $\mu$ is decayed, which moves the problem's solution along the central path.
For I-ACVI, from Fig.~\ref{subfig:hdbg_k0_k+_i-acvi}---where $\ell$ is fixed to $10$---we observed that regardless of the selected value of $K_+$ for $t>0$, it can be compensated by a large enough $K_0$.

\noindent\textbf{PI-ACVI.} We similarly study the impact of the warmup technique for the PI-ACVI method (Algorithm~\ref{alg:log_free_acvi}). Compared to I-ACVI, this method omits the outer loop over $t\in[T]$.
Hence, instead of varying $K_0$, we experiment with increasing the first $\ell$ at iteration $k=0$, denoted by $\ell_0$.
In Fig.~\ref{fig:mnist_logfree-fid} we solve the constrained MNIST problem with PI-ACVI using either $\ell_0=500$ or $\ell_0=100$, $\ell_+$ is set to $20$ in both cases.
Increasing the $\ell_0$ value significantly improves the convergence speed.

\noindent\textbf{Conclusion.} We observe consistently that using a large $K_0$ or I-ACVI, or large $l_0$ for PI-ACVI aids the convergence.
Conversely, factors such as $l$ and $K_+$ in I-ACVI, or $l_+$ in PI-ACVI, exert a comparatively lesser influence. Further experiments and discussions are available in App.~\ref{app:experiments}.

\section{Discussion}\label{sec:conclusion}

We contributed to an emerging line of research on the ACVI method, showing that the last iterate of ACVI converges at a rate of order $\mathcal{O}(1/\sqrt{K})$ for monotone VIs. This result is significant because it does not rely on the first-order smoothness of the operator, resolving an open problem in the VI literature.
To address subproblems that cannot always be solved in closed form, we introduced an inexact ACVI (I-ACVI) variant that uses warm-starting for its subproblems and proved last iterate convergence under certain weak assumptions.
We also proposed P-ACVI for simple inequality constraints and showed that it converges with $\mathcal{O}(1/\sqrt{K})$ rate.
Our experiments provided insights into I-ACVI's behavior when subproblems are solved approximately, emphasized the impact of warm-starting, and highlighted advantages over standard projection-based algorithms.

%%%%%%%%%%%%%
\subsubsection*{Acknowledgments}
We acknowledge support from the Swiss National Science Foundation (SNSF), grants P2ELP2\_199740 and P500PT\_214441
The work of T. Yang is supported in part by the NSF grant CCF-2007911 to Y. Chi.

%%%%%%%%%%%%%
\bibliography{main}
\bibliographystyle{iclr2024_conference}

%%%%%%%%%%%%%
\clearpage
\appendix

\section{Additional Background}\label{app:background}

In this section, we give background in addition to that presented in the main part. This includes:
\begin{enumerate}[label=(\roman*)]
\item in~\ref{app:admm_background} we describe the ADMM method, % and list some related works,
\item in App.~\ref{app:defs} we list relevant definitions,
\item details of the ACVI method, including its derivation, required for the proofs of the theorems in this paper are explained in App.~\ref{app:acvi}, and
\item the baseline methods used in \S~\ref{sec:experiments} of the main part are described in App.~\ref{sec:methods}.
\end{enumerate}
\subsection{Alternating direction method of multipliers--ADMM}\label{app:admm_background}

\paragraph{The ADMM method.}
ADMM~\citep{glowinskiMarroco1975,gabayAndMercier1976dual,lions1979splitting,Glowinski1989} was proposed for objectives separable into two or more different functions whose arguments are nondisjoint, as follows:
\begin{equation} \tag{ADMM-Pr} \label{eq:admm_cvx_problem}
    \underset{\vx,\vy}{\min} \ f(\vx)+g(\vy)
    \quad s.t. \quad
    \mA\vx+\mB\vy=\vb\,,
\end{equation}
where $f,g: \R^n\to\R$ are often assumed convex, $\vx,\vy \in \R^n,$  $\mA,\mB \in \R^{n'\times n}$, and $\vb \in \R^{n'}$.
ADMM relies on the augmented Lagrangian function $\mathcal{L}_\beta(\cdot)$:
\begin{equation} \label{eq:admm_cvx_aug_lag}\tag{AL-CVX}
    \mathcal{L}_{\beta}(\vx,\vy,\vlmd)=f(\vx)+g(\vy)+\left< \mA\vx+\mB\vy-\vb ,\vlmd\right>+\frac{\beta}{2}\lVert \mA\vx+\mB\vy-\vb \rVert ^2,
\end{equation}
where $\beta>0$.
If the augmented Lagrangian method is used to solve \eqref{eq:admm_cvx_aug_lag}, at each step $k$ we have:
\begin{align*}
    \vx_{k+1},\vy_{k+1} &= \argmin{\vx,\vy} \ \ \mathcal{L}_{\beta}(\vx,\vy,\vlmd_k) \quad  \text{and} \\
    \vlmd_{k+1} &= \vlmd_k+\beta(\mA\vx_{k+1}+\mB\vy_{k+1}-\vb) \,,
\end{align*}
where the latter step is gradient ascent on the dual.
In contrast, ADMM updates $\vx$ and $\vy$ in an alternating way as follows:
\begin{equation} \tag{ADMM}\label{eq:admm}
 \begin{aligned}
     \vx_{k+1}&=\argmin{\vx} \ \  \mathcal{L}_{\beta}(\vx,\vy_k,\vlmd_k)  \,, \\
     \vy_{k+1}&=\argmin{\vy} \ \ \mathcal{L}_{\beta}(\vx_{k+1},\vy_k,\vlmd_k)\,, \\
     \vlmd_{k+1}&=\vlmd_k+\beta(\mA\vx_{k+1}+\mB\vy_{k+1}-\vb)\,,
\end{aligned}
\end{equation}
where the key difference is that for the $\vy$ update the latest iterate of $\vx$ is used.

ADMM's popularity stems largely from its computational efficiency for large-scale machine learning problems~\citep{boyd2011admm} and its rapid convergence in certain settings~\citep[e.g.,][]{nishihara15}.
In particular, it achieves linear convergence when one of the objective terms is strongly convex~\citep{nishihara15}, and it is known in the community that it can converge faster than the proximal point method in some regression examples.
 It can be viewed as equivalent to the Douglas-Rachford operator splitting technique~\citep{Douglas1956OnTN} applied within the dual space~\citep[see e.g.][]{gabay1983chapter,eckstein1989splitting,linalternating}.

\subsection{Additional VI definitions and equivalent formulations}\label{app:defs}

Here we give the complete statement of the definition of an $L$-Lipschitz operator for completeness, which assumption was used in Theorem~\ref{thm:inexact_acvi}.

\begin{definition}[$L$-Lipschitz operator] \label{asm:firstOrderSmoothness}
    Let $F\colon  \mathcal{X}\supseteq\mathcal{S}
    \to \R^n$ be an operator, we say that $F$ satisfies
    \textit{$L$-first-order smoothness} on $\mathcal{S}$ if $F$ is an $L$-Lipschitz map; that is, there exists $L > 0$ such that:
    $$\norm{F(\vx) - F(\vx')} \leq L \norm{\vx-\vx'}, \qquad \forall \vx, \vx' \in \mathcal{S}\,.
    $$
\end{definition}

To define \emph{cocoercive} operators---mentioned in the discussions of the related work, we will first introduce the inverse of an operator.

Seeing an operator $F\colon \mathcal{X}\to \R^n$ as the graph $\textit{Gr} F = \{ (\vx, \vy) | \vx \in \mathcal{X}, \vy = F(\vx) \}$, its inverse $F^{-1}$ is defined as
$\textit{Gr} F^{-1} \triangleq \{ (\vy, \vx) | (\vx, \vy) \in  \textit{Gr} F \}$~\citep[see e.g.][]{ryu2022}.

\begin{definition}[$\frac{1}{\mu}$-cocoercive operator]\label{def:cocoercive}
An operator $F\colon\mathcal{X}\supseteq \mathcal{S}
\to \R^n $ is $\frac{1}{\mu}$-\emph{cocoercive} (or $\frac{1}{\mu}$-\emph{inverse strongly monotone}) on $\mathcal{S}$ if its inverse (graph) $F^{-1}$ is $\mu$-strongly monotone on $\mathcal{S}$, that is,
$$
\exists \mu > 0, \quad \text{s.t.} \quad
\langle \vx-\vx', F(\vx)-F(\vx') \rangle \geq \mu \norm{ F(\vx) - F(\vx') }^2, \forall \vx, \vx' \in \mathcal{S} \,.
$$
It is star $\frac{1}{\mu}$-\emph{cocoercive} if the above holds when setting $\vx' \equiv \vx^\star$ where $\vx^\star$ denotes a solution, that is:
$$
\exists \mu > 0, \quad \text{s.t.} \quad
\langle \vx-\vx^\star, F(\vx)-F(\vx^\star) \rangle \geq \mu \norm{ F(\vx) - F(\vx^\star) }^2, \forall \vx \in \mathcal{S}, \vx^\star \in \sol{\mathcal{X},F} \,.
$$
\end{definition}

Notice that cocoercivity implies monotonicity, and is thus a stronger assumption.

In the following, we will make use of the natural and normal mappings of an operator $F\colon \mathcal{X}\to \R^n$, where $\mathcal{X}\subset \R^n$.
We denote the projection to the set $\mathcal{X}$ with $\Pi_\mathcal{X}$.
Following similar notation as in~\citep{facchinei2003finite}, the natural map $F^{\textit{NAT}}_\mathcal{X}: \mathcal{X}\to \R^n$ is defined as:
\begin{equation}\tag{F-NAT}\label{eq:fnat}
    F^{\textit{NAT}}_\mathcal{X} \triangleq \vx - \Pi_\mathcal{X}
    \big(  \vx - F(\vx) \big), \qquad \forall \vx \in \mathcal{X} \,,
\end{equation}
whereas the normal map $F^{\textit{NOR}}_\mathcal{X}: \R^n \to \R^n$ is:
\begin{equation}\tag{F-NOR}\label{eq:fnor}
    F^{\textit{NOR}}_\mathcal{X} \triangleq
    F\big( \Pi_\mathcal{X} (\vx) \big)
    + \vx - \Pi_\mathcal{X} (\vx), \qquad
    \forall \vx \in \R^n\,.
\end{equation}
Moreover, we have the following solution characterizations:
\begin{enumerate}[series = tobecont, itemjoin = \quad, label=(\roman*)]
\item $\vx^\star \in \sol{\mathcal{X},F} \quad $ iff $\quad F_\mathcal{X}^{\textit{NAT}}(\vx^\star) = \boldsymbol{0}$,  and
\item $\vx^\star \in \sol{\mathcal{X},F} \quad $ iff
$\quad \exists \vx' \in R^n$ s.t. $\vx^\star = \Pi_\mathcal{X} (\vx')$ and $F_\mathcal{X}^{\textit{NOR}}(\vx') = \boldsymbol{0}$.
\end{enumerate}

\subsection{Details on ACVI}\label{app:acvi}
For completeness,
herein we state the \textit{ACVI} algorithm and  show its derivation,
see~\citep{yang2023acvi} for details.
We will use these equations also for the proofs of our main results.

\paragraph{Derivation of ACVI.}
We first restate the~\ref{eq:cvi} problem in a form that will allow us to derive an interior-point procedure.
By the definition of~\ref{eq:cvi} it follows~\citep[see \S 1.3 in ][]{facchinei2003finite} that:\looseness=-1
\vspace*{-.3em}
\noindent
\begin{equation}\tag{KKT}\label{eq:cvi_ip_eq-form1}
\vx \in  \sol{\mathcal{C},F} %  SOL (  \mathcal{C},F  )
\Leftrightarrow %%%%%%%%%%%%%
\begin{cases}
 \vw = \vx \\
 \vx =  \argmin{\vz} F (  \vw  )  ^\intercal\vz  \\
 \textit{s.t.}  \quad \varphi  (  \vz  )  \leq \boldsymbol{0} \\
 \qquad\bm{C}\vz=\vd
\end{cases}
\Leftrightarrow %%%%%%%%%%%%%
\begin{cases}
    F (  \vx  )  +\nabla \varphi ^\intercal (  \vx  )  \vlmd +\mC^\intercal\vnu=\boldsymbol{0} \\
    \mC\vx=\vd\\
    \boldsymbol{0} \leq \vlmd \bot \varphi  (  \vx  )  \leq \boldsymbol{0},
\end{cases} \hspace{-1em}\, \vspace*{-.1em}
\end{equation}\looseness=-1
where $\vlmd \in \R^m$ and $\vnu \in \R^p$ are dual variables.
Recall that we assume that $\textit{int}\ \mathcal{C}\ \neq \emptyset $, thus, by the Slater condition (using the fact that $\varphi_i(\vx), i \in [m]$ are convex) and the KKT conditions, the second equivalence holds, yielding the KKT system of~\ref{eq:cvi}.
Note that the above equivalence also guarantees the two solutions coincide; see \citet[][Prop. 1.3.4 (b)]{facchinei2003finite}.

Analogous to the method described in \S~\ref{sec:preliminaries}, we add a $\log$-barrier term to the objective to remove the inequality constraints and obtain the following modified version of \eqref{eq:cvi_ip_eq-form1}:\looseness=-1
\vspace*{-.4em}
\noindent
\begin{equation}\tag{KKT-2}\label{eq:cvi_ip_eq_form2}
\hspace{-.2em}
\begin{cases}
\vw=\vx\\
\vx=\argmin{\vz}\,\,F ( \vw  )  ^\intercal\vz-\mu \sum\limits_{i=1}^m{\log  \big( {-}\varphi _i ( \vz )  \big) }\\
\textit{s.t.} \quad\mC\vz=\vd
\end{cases}
\hspace{-.7em}
\Leftrightarrow %%%%%%%%%%%%%%
\begin{cases}
F ( \vx  )  +\nabla \varphi ^\intercal ( \vx  )  \vlmd +\mC^\intercal\vnu=\boldsymbol{0}\\
\vlmd \odot \varphi  ( \vx  )  +\mu \ve=\boldsymbol{0}\\
\mC \vx - \vd = \boldsymbol{0}\\
\varphi  ( \vx  )  <\boldsymbol{0},\vlmd >\boldsymbol{0},
\end{cases}\hspace{-4em}\,
\end{equation} \looseness=-1

\vspace*{-1em}
with $\mu>0$, $\ve \triangleq [1, \dots, 1]^\intercal\in \R^m$. The equivalence holds by the KKT and the Slater condition.

The update rule at step $k$ is derived by the following subproblem: %
\begin{align*}
&\underset{\vx}{\min}\ \ F ( \vw_k )  ^\intercal\vx-\mu \sum\limits_{i=1}^m{\log  \big(  -\varphi _i ( \vx )   \big) } \,, \\
&s.t.\ \ \mC\vx=\vd \,,
\end{align*}
where we fix $\vw=\vw_k$.
Notice that (i) $\vw_k$ is a constant vector in this subproblem,
and
(ii) the objective
is split, making ADMM a natural choice to solve the subproblem.
To apply an ADMM-type method, we introduce a new variable $\vy\in\R^n$ yielding:
\begin{equation} \tag{ACVI:subproblem}
\hspace{-.1em}
\begin{cases}
\underset{\vx,\vy}{\min} \ \ F (  \vw_k  )  ^\intercal\vx
+\mathds{1} [  \mC\vx=\vd  ]
-\mu \sum\limits_{i=1}^m{\log  \big(  -\varphi _i (  \vy  )   \big)  }\\
s.t. \qquad \vx=\vy
\end{cases} \hspace{-1.5em} \,, \ \
\label{eq:cvi_subproblem1_admm_format}
\end{equation}
where:
$$
\mathds{1} [  \mC\vx=\vd  ] \triangleq
\begin{cases}
0, & \text{if } \mC\vx=\vd \\
+\infty, & \text{if } \mC\vx\ne \vd \\
\end{cases} \,,
$$

is a generalized real-valued convex function of $\vx$.

As in Algorithm~\ref{alg:inexact_acvi}, for ACVI we also have the same projection matrix:
\begin{equation} \tag{$\mP_c$} \label{eq:matrix_pc}
    \mP_c \triangleq \mI - \mC^\intercal (\mC \mC^\intercal )^{-1}\mC \,,
\end{equation}
and:
\begin{equation}\label{eq:dc_equation} \tag{$d_c$-EQ}
    \vd_c \triangleq \mC^\intercal(\mC \mC^\intercal)^{-1}\vd
    \,,
\end{equation}
where  $\mP_c \in \R^{n\times n}$  and $\vd_c \in \R^n$.

The augmented Lagrangian of \eqref{eq:cvi_subproblem1_admm_format} is thus:
\noindent
\begin{equation} \tag{AL}\label{eq:cvi_aug_lag}
 \mathcal{L}_{\beta} (  \vx,\vy,\vlmd  )  {=} F (  \vw_k  )  ^\intercal\vx
 +\mathds{1} (  \mC\vx=\vd  )
 -\mu \sum_{i=1}^m{\log  (  -\varphi _i (  \vy  )   ) }  +\left< \left. \vlmd ,\vx-\vy \right> \right. +\frac{\beta}{2} \norm{\vx-\vy}^{2}, \hspace{-.1em}
\end{equation}
where $\beta>0$ is the penalty parameter.
Finally, using ADMM, we have the following update rule for $\vx$ at step $k$:
\begin{align}
    \vx_{k+1}=&\arg \underset{\vx}{\min} \text{ } \mathcal{L}_{\beta} (  \vx,\vy_k,\vlmd_k )\tag{Def-X}\label{eq:def_x}\\
    =&\arg \underset{\vx\in\mathcal{C}_=}{\min}\frac{\beta}{2}
\norm{\vx-\vy_k+\frac{1}{\beta} (  F (  \vw_k  )  +\vlmd_k  )}^{2}. \nonumber
\end{align}

This yields the following update for $\vx$: \looseness=-1

\begin{equation} \tag{X-EQ}\label{eq:x_analytic}
\vspace{-.1em}
\vx_{k+1}= \mP_c
\Big(
\vy_{k}-\frac{1}{\beta} \big( F(\vw_k) +  \vlmd_k \big)
\Big)
+\vd_c \,.
\end{equation}

For $\vy$ and the dual variable $\vlmd$, we have:
\begin{align}
    \vy_{k+1}&=arg\underset{\vy}{\min}  \text{ } \mathcal{L}_\beta  ( \vx_{k+1},\vy,\vlmd_k )\tag{Def-Y}\label{eq:def_y}\\
    &= arg\underset{\vy}{\min}\left(
    - \mu \sum_{i=1}^{m} \log \big(-\varphi_i(\vy)\big) +
    \frac{\beta}{2} \norm{\vy - \vx_{k+1} - \frac{1}{\beta}\vlmd_k}^2\right)\,.\label{eq:y_solution} \tag{Y-EQ}
\end{align}

To derive the update rule for $\vw$,  $\vw_{k}$ is set to be the solution of the following equation:
\vspace*{-.3em}
\begin{equation} \label{eq:w_equation} \tag{W-EQ}
\vw + \frac{1}{\beta} \mP_c F(\vw) - \mP_c \vy_k + \frac{1}{\beta} \mP_c \vlmd_k - \vd_c = \boldsymbol{0}.
\end{equation}

The following theorem ensures the solution of~\eqref{eq:w_equation} exists and is unique, %of the solution
see App.~\ref{app:proofs} in \citep{yang2023acvi} for proof.

\begin{thm}[\ref{eq:w_equation}: solution uniqueness]\label{thm:solutions_w_equation}
   If $F$ is monotone on $\mathcal{C}_=$, the following statements hold true for the solution of ~\eqref{eq:w_equation}:
   \begin{enumerate}
       \item it always exists,
       \item it is unique, and
       \item it is contained in $\mathcal{C}_=$.
   \end{enumerate}
\end{thm}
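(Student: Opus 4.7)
My plan is to first dispatch item (3) by directly applying $\mC$ to~\eqref{eq:w_equation}, then reduce the equation to a finite-dimensional problem on the affine subspace $\mathcal{C}_=$, and finally obtain existence and uniqueness from strong monotonicity of the reduced operator. The key algebraic facts I will repeatedly use are: $\mP_c$ is the orthogonal projection onto $\ker \mC$, so $\mC \mP_c = \boldsymbol{0}$ and $\mP_c$ is self-adjoint; $\mP_c \vd_c = \boldsymbol{0}$ since $\vd_c \in \mathrm{Range}(\mC^\intercal) = (\ker \mC)^\perp$; and $\mC \vd_c = \vd$. Left-multiplying~\eqref{eq:w_equation} by $\mC$ annihilates every $\mP_c$-term and leaves $\mC\vw - \vd = \boldsymbol{0}$, which immediately proves item (3): every solution necessarily lies in $\mathcal{C}_=$.

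\textbf{Reduction and uniqueness.} Since $\mathcal{C}_=$ is an affine subspace, I parametrize any candidate solution as $\vw = \vd_c + \vv$ with $\vv \in \ker \mC = \mathrm{Range}(\mP_c)$. Substituting into~\eqref{eq:w_equation} and using $\mP_c \vd_c = \boldsymbol{0}$ and $\mP_c \vv = \vv$ reduces the problem to finding $\vv \in \mathrm{Range}(\mP_c)$ satisfying
\begin{equation*}
    H(\vv) \triangleq \vv + \tfrac{1}{\beta} \mP_c F(\vd_c + \vv) - \mP_c \vy_k + \tfrac{1}{\beta} \mP_c \vlmd_k = \boldsymbol{0}.
\end{equation*}
For uniqueness, if $\vv_1, \vv_2 \in \mathrm{Range}(\mP_c)$ both satisfy $H(\vv) = \boldsymbol{0}$, then taking the inner product of $H(\vv_1) - H(\vv_2) = \boldsymbol{0}$ with $\vv_1 - \vv_2$, and using self-adjointness of $\mP_c$ together with $\mP_c (\vv_1 - \vv_2) = \vv_1 - \vv_2$, yields
\begin{equation*}
    \norm{\vv_1 - \vv_2}^2 + \tfrac{1}{\beta}\langle F(\vd_c + \vv_1) - F(\vd_c + \vv_2),\, \vv_1 - \vv_2 \rangle = 0.
\end{equation*}
Because $\vd_c + \vv_1, \vd_c + \vv_2 \in \mathcal{C}_=$, the second term is nonnegative by monotonicity of $F$ on $\mathcal{C}_=$, forcing $\vv_1 = \vv_2$ and hence uniqueness of $\vw$.

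\textbf{Existence and main obstacle.} The same computation shows that $H$ is strongly monotone with modulus $1$ on the finite-dimensional subspace $\mathrm{Range}(\mP_c)$, and $H$ is continuous because $F$ is. In finite dimensions, continuous strong monotonicity implies coercivity and therefore surjectivity onto the subspace, so the equation $H(\vv) = \boldsymbol{0}$ admits a solution $\vv^\star$; then $\vw = \vd_c + \vv^\star$ solves~\eqref{eq:w_equation} and lies in $\mathcal{C}_=$. The only delicate step is the passage from strong monotonicity to existence of a zero: I plan to make this rigorous either by invoking the standard VI existence theorem on a closed ball of $\mathrm{Range}(\mP_c)$ of radius $R$ large enough that $\langle H(\vv), \vv \rangle > 0$ on its boundary (the coercivity supplied by strong monotonicity guarantees such an $R$ exists, and the resulting VI solution must be interior, hence a zero of $H$), or equivalently by applying the finite-dimensional Minty/Browder surjectivity theorem on the subspace $\mathrm{Range}(\mP_c)$. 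Either route is short; the substantive content of the theorem is really the subspace reduction together with the monotonicity argument above.
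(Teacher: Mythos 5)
Your argument is correct, and it is essentially the approach the paper relies on: the proof is deferred to the appendix of \citet{yang2023acvi}, where the key fact is that the residual operator $G(\vw)=\vw+\frac{1}{\beta}\mP_c F(\vw)-\mP_c\vy_k+\frac{1}{\beta}\mP_c\vlmd_k-\vd_c$ is strongly monotone on $\mathcal{C}_=$ (this is even quoted verbatim in App.~\ref{sec:subproblems_algos}), with $\mC\mP_c=\boldsymbol{0}$ forcing any solution into $\mathcal{C}_=$ and the standard finite-dimensional existence/uniqueness result for continuous strongly monotone operators finishing the job. Your explicit parametrization $\mathcal{C}_==\vd_c+\ker\mC=\vd_c+\mathrm{Range}(\mP_c)$ is a cosmetic repackaging of the same reduction, and every step checks out.
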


Finally, notice that $\vw$ as it is redundant to be considered in the algorithm, since $\vw_k = \vx_{k+1}$, and it is thus removed.

\paragraph{The ACVI algorithm.}
Algorithm~\ref{alg:acvi} describes the (exact) ACVI algorithm~\citep{yang2023acvi}.

\begin{algorithm}[!htb]
    \caption{(exact) ACVI pseudocode~\citep{yang2023acvi}.}
    \label{alg:acvi}
    \begin{algorithmic}[1]
        \STATE \textbf{Input:}  operator $F\colon\mathcal{X}\to \R^n$, constraints $\mC\vx = \vd$ and  $\varphi_i(\vx) \leq 0, i = [m]$,
        hyperparameters $\mu_{-1},\beta>0, \delta\in(0,1)$,
        number of outer and inner loop iterations $T$ and $K$, resp.
        \STATE \textbf{Initialize:} $\vy^{(0)}_0\in \R^n$, $\vlmd_0^{(0)}\in \R^n$
        \STATE $\mP_c \triangleq \mI - \mC^\intercal (\mC \mC^\intercal )^{-1}\mC$ \hfill \mbox{where} $\mP_c \in \R^{n\times n}$
        \STATE $\vd_c \triangleq \mC^\intercal(\mC \mC^\intercal)^{-1}\vd$  \hfill \mbox{where} $\vd_c \in \R^n$
        \FOR{$t=0, \dots, T-1$} %%%%%
                \STATE $\mu_t = \delta \mu_{t-1}$
        \FOR{$k=0, \dots, K-1$} %%%%%%
            \STATE Set $\vx^{(t)}_{k+1}$ to be the  solution of: $\vx + \frac{1}{\beta} \mP_c F(\vx) - \mP_c \vy_k^{(t)} + \frac{1}{\beta}\mP_c\vlmd_k^{(t)} - \vd_c = \boldsymbol{0}$ (w.r.t. $\vx$) \label{alg-ln:x-sol}
        \STATE $\vy_{k+1}^{(t)} = \argmin{\vy} - \mu \sum_{i=1}^m \log \big( - \varphi_i (\vy) \big) + \frac{\beta}{2} \norm{\vy-\vx_{k+1}^{(t)} - \frac{1}{\beta}\vlmd_k^{(t)}}^2$
        \STATE
        $
            \vlmd_{k+1}^{(t)}=\vlmd_k^{(t)}+\beta ( \vx_{k+1}^{(t)}-\vy_{k+1}^{(t)}) % \label{214c}
        $
        \ENDFOR
        \STATE $(\vy^{(t+1)}_0, \vlmd^{(t+1)}_0 ) \triangleq (\vy^{(t)}_{K}, \vlmd^{(t)}_{K} )$
        \ENDFOR
    \end{algorithmic}
\end{algorithm}

\subsection{Existence of the central path}\label{app:central_path_existence}

In this section, we discuss the results that establish guarantees of the existence of the central path.
Let:
$$
L(\vx, \vlmd, \vnu)\triangleq F (\vx)  +\nabla \varphi ^\intercal (\vx)  \vlmd +\mC^\intercal\vnu \,, \qquad \text{and}
$$
$$
h(\vx)=\mC^\intercal\vx-\vd \,.
$$

For $(\vlmd, \vw, \vx, \vnu) \in \RR^{2m+n+p}$, let
\begin{equation*}
    G(\vlmd, \vw, \vx, \vnu)\triangleq
    \left( \begin{array}{c}
	\vw \circ \vlmd\\
	\vw+\varphi(\vx)\\
	L(\vx, \vlmd, \vnu)\\
	h(\vx)\\
\end{array} \right)
\in \RR^{2m+n+p},
\end{equation*}

and
\begin{equation*}
    H(\vlmd, \vw, \vx, \vnu)\triangleq
    \left( \begin{array}{c}
	\vw+\varphi(\vx)\\
	L(\vx, \vlmd, \vnu)\\
	h(\vx)\\
\end{array} \right)
\in \RR^{m+n+p}.
\end{equation*}

Let $H_{++}\triangleq H(\RR_{++}^{2m}\times\RR^n\times \RR^p)$.

By \citep[Corollary 11.4.24,][]{facchinei2003finite} we have the following proposition.

\begin{prop}[sufficient condition for the existence of the central path] If $F$ is monotone, either $F$ is strictly monotone or one of $\varphi_i$ is strictly convex, and $\mathcal{C}$ is bounded. The following four statements hold for the functions $G$ and $H$:
\begin{enumerate}
    \item $G$ maps $\RR_{++}^{2m}\times \RR^{n+p}$ homeomorphically onto $\RR_{++}^m\times H_{++}$;

    \item $\RR_{++}^m\times H_{++}\subseteq G(\RR_+^{2m}\times \RR^{n+p})$;

    \item for every vector $\va\in \RR^m_+$, the system
    \begin{equation*}
        H(\vlmd, \vw, \vx, \vnu)=\boldsymbol{0},\quad \vw\circ\vlmd=\va
    \end{equation*}
    has a solution $(\vlmd,\vw,\vx,\vnu)\in \RR^{2m}_+\times \RR^{n+p}$; and

    \item the set $H_{++}$ is convex.
\end{enumerate}
    \label{prop:CP}
\end{prop}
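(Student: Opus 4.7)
The statement is an almost verbatim specialization of Corollary 11.4.24 in \citet{facchinei2003finite}, so the plan is to verify that our hypotheses match those of that corollary, and then to invoke it. The corollary is proved in that reference by a combination of (a) the implicit-function theorem applied to the smooth map $G$, (b) a homotopy-continuation argument exploiting monotonicity, and (c) a convexity-of-image lemma that relies on the monotone structure of the underlying KKT system. I will indicate how each of our assumptions feeds into those three ingredients.

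First I would identify $G$ and $H$ as the slack-augmented KKT map of~\eqref{eq:cvi}: $\vw$ is the slack for $\varphi(\vx)\le\boldsymbol{0}$, $\vlmd$ is the multiplier for that inequality, and $\vnu$ is the multiplier for $\mC\vx=\vd$. I would then check the standing hypotheses of the cited corollary in our setting: (i) each $\varphi_i$ is $C^1$ and convex (given in the preliminaries), (ii) $F$ is continuous and monotone on $\mathcal{C}_=\supseteq\mathcal{C}$, (iii) $\mathcal{C}$ is compact and $\textit{int}\ \mathcal{C}\ne\emptyset$ (Slater), and (iv) strict monotonicity of $F$ on $\mathcal{C}$ or strict convexity of some $\varphi_i$ supplies the uniqueness needed for injectivity. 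Each of these is exactly an assumption in the proposition or an earlier standing assumption of the paper.

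Next I would spell out how the four conclusions come out of the cited corollary once the hypotheses are in place. For (1), injectivity of $G$ on $\RR_{++}^{2m}\times\RR^{n+p}$ comes from pairing the two $L(\vx_j,\vlmd_j,\vnu_j)$ equations and testing against $\vx_1-\vx_2$: monotonicity of $F$ together with convexity of $\varphi$ and the full-row-rank of $\mC$ (which kills the $\mC^\intercal\vnu$ term since $h(\vx_1)=h(\vx_2)$) gives $\vx_1=\vx_2$; strict monotonicity of $F$ or strict convexity of some $\varphi_i$ closes the remaining slack. The homeomorphism onto $\RR_{++}^m\times H_{++}$ follows from continuity of $G$ and openness of $G$ via the implicit-function theorem, with nonsingularity of the Jacobian secured by the same monotonicity. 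Statement (2) is immediate from (1) since $\RR_{++}^{2m}\subseteq\RR_{+}^{2m}$. Statement (3) is a limiting consequence of (2): for $\va\in\RR^m_{++}$ apply (2) with target $(\va,\boldsymbol{0})\in\RR_{++}^m\times H_{++}$ (noting $\boldsymbol{0}\in H_{++}$ by Slater and boundedness of $\mathcal{C}$); for $\va$ with zero components take a sequence $\va^k\downarrow\va$ and extract a convergent subsequence using compactness of $\mathcal{C}$ together with the a priori bound on $(\vlmd,\vw,\vnu)$ that monotonicity + Slater provide.

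The step I expect to be the main obstacle is statement (4), the convexity of $H_{++}$. In the reference, this is obtained by reformulating membership $(\vb_1,\vb_2,\vb_3)\in H_{++}$ as solvability (with strictly positive $(\vlmd,\vw)$) of a \emph{perturbed} monotone VI, and then showing that the solution set of the perturbed VI varies convexly in the perturbation. In our setting I would verify that the perturbed system inherits monotonicity from $F$ (which it does, since the perturbation only shifts $\varphi$, $L$, and $h$ by constants), so the cited convexity lemma applies without modification. Assembling (1)--(4) from these pieces then completes the proof.
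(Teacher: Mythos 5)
Your proposal takes essentially the same route as the paper: the paper offers no independent proof of Proposition~\ref{prop:CP} and simply states it as a direct consequence of Corollary 11.4.24 in \citet{facchinei2003finite}, which is exactly the citation you identify and invoke after checking that the standing hypotheses (convex $C^1$ constraints, monotonicity, compactness of $\mathcal{C}$, Slater, and the strictness condition) match. Your additional sketch of the corollary's internal mechanics goes beyond what the paper records but is consistent with it.
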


\subsection{Saddle-point optimization methods}\label{sec:methods}
In this section, we describe in detail the saddle point methods that we compare within the main paper in \S~\ref{sec:experiments}.
We denote the projection to the set $\mathcal{X}$ with $\Pi_\mathcal{X}$, and when the method is applied in the unconstrained setting $\Pi_\mathcal{X}\equiv \mI$.

For an example of the associated vector field and its Jacobian, consider the following constrained zero-sum game:
\begin{equation} \label{eq:zs-g}
    \tag{ZS-G}
    \min_{\vx_1 \in \mathcal{X}_1} \max_{\vx_2 \in \mathcal{X}_2} f(\vx_1, \vx_2) \,,
\end{equation}
where $f : \mathcal{X}_1 \times \mathcal{X}_2  \to \R$ is smooth and convex in $\vx_1$ and concave in $\vx_2$. As in the main paper, we write
$\vx \triangleq (\vx_1, \vx_2) \in \R^n$.
The vector field $F : \mathcal{X} \to \R^n$
and its Jacobian $J$ are defined as:
\begin{align} \notag
  F(\vx) \!=\! \begin{bmatrix}
              \nabla_{\vx_1} f(\vx) \\
              - \nabla_{\vx_2} f(\vx)
             \end{bmatrix} \,,  \qquad
  J(\vx) \!=\! \begin{bmatrix}
              \nabla_{\vx_1}^2 f(\vx)           &  \nabla_{\vx_2} \nabla_{\vx_1} f(\vx) \\
             -\nabla_{\vx_1} \nabla_{\vx_2} f(\vx)  & -\nabla_{\vx_2}^2 f(\vx)
             \end{bmatrix}.
\end{align}

In the remainder of this section, we will only refer to the joint variable $\vx$, and  (with abuse of notation) the subscript will denote the step. Let $\gamma\in [0,1]$ denote the step size.

\noindent\textbf{(Projected) Gradient Descent Ascent (GDA).}
The extension of gradient descent for the \ref{eq:cvi} problem is \textit{gradient descent ascent} (GDA). The GDA update at step $k$ is then:
\begin{equation} \tag{GDA}\label{eq:gda}
    \vx_{k+1} = \Pi_\mathcal{X} \big( \vx_k - \gamma F(\vx_k) \big) \,.
\end{equation}

\noindent\textbf{(Projected) Extragradient (EG).}
 EG~\citep{korpelevich1976extragradient} uses a ``prediction'' step to obtain an extrapolated point $\vx_{k+\frac{1}{2}}$
 using~\ref{eq:gda}: $\vx_{k+\frac{1}{2}} \!=\! \Pi_\mathcal{X}\big( \vx_k - \gamma F(\vx_k) \big)$, and the gradients at the \textit{extrapolated}  point are then applied to the \textit{current} iterate $\vx_t$:
\begin{equation} \tag{EG} \label{eq:extragradient}
\begin{aligned}
\vx_{k+1}  \!=\! \Pi_\mathcal{X} \bigg( \vx_k - \gamma F \Big(  \Pi_\mathcal{X} \big(\vx_k - \gamma F(\vx_k) \big) \Big)  \bigg)\,.
\end{aligned}
\end{equation}
In the original EG paper,~\citep{korpelevich1976extragradient} proved that the~\ref{eq:extragradient} method (with a fixed step size) converges for monotone VIs, as follows.
\begin{thm}[\citet{korpelevich1976extragradient}]
Given a map $F\colon\mathcal{X}\mapsto\R^n$, if the following is satisfied:
\begin{enumerate}
    \item the set $\mathcal{X}$ is closed and convex,
    \item $F$ is single-valued, definite, and monotone on $\mathcal{X}$--as per Def.~\ref{def:monotone},
    \item $F$ is L-Lipschitz--as per Asm.~\ref{asm:firstOrderSmoothness}.
\end{enumerate}
then there exists a solution $\vx^\star \in \mathcal{X}$, such that the iterates $\vx_k$ produced by the~\ref{eq:extragradient} update rule with a fixed step size $\gamma\in (0, \frac{1}{L})$ converge to it, that is $\vx_k \to \vx^\star$, as $k \to \infty$.
\end{thm}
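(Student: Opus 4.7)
The plan is to establish Fej\'er monotonicity of the extragradient iterates with respect to any solution $\vx^\star$, and then leverage it together with a standard subsequence-limit argument to conclude convergence of the whole sequence. Write $\vx_{k+1/2} \triangleq \Pi_\mathcal{X}(\vx_k - \gamma F(\vx_k))$ so that the update is $\vx_{k+1} = \Pi_\mathcal{X}(\vx_k - \gamma F(\vx_{k+1/2}))$. The existence of a solution $\vx^\star \in \mathcal{X}$ under monotonicity and continuity of $F$ on a closed convex $\mathcal{X}$ can be invoked as a classical consequence of the Hartman--Stampacchia theorem (compactness is already part of the paper's standing assumptions on $\mathcal{C}$).

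The technical core is a single descent inequality. First, I would apply the projection variational inequality $\langle \vz - \Pi_\mathcal{X}(\vz), \vu - \Pi_\mathcal{X}(\vz)\rangle \le 0$ twice: once with $\vz = \vx_k - \gamma F(\vx_{k+1/2})$, $\vu = \vx^\star$, and once with $\vz = \vx_k - \gamma F(\vx_k)$, $\vu = \vx_{k+1}$. Using the polarization identity $2\langle \va - \vb, \vc - \vb\rangle = \|\va-\vb\|^2 + \|\vc-\vb\|^2 - \|\va-\vc\|^2$, the first projection inequality yields an estimate of the form
\begin{equation*}
\|\vx_{k+1}-\vx^\star\|^2 \le \|\vx_k - \vx^\star\|^2 - \|\vx_k - \vx_{k+1}\|^2 + 2\gamma \langle F(\vx_{k+1/2}), \vx^\star - \vx_{k+1}\rangle.
\end{equation*}
I would then split the cross term as $\langle F(\vx_{k+1/2}), \vx^\star - \vx_{k+1/2}\rangle + \langle F(\vx_{k+1/2}), \vx_{k+1/2} - \vx_{k+1}\rangle$. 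The first summand is nonpositive by monotonicity of $F$ and the VI characterization of $\vx^\star$. To control the second, I would insert $\pm F(\vx_k)$, use the second projection inequality to absorb the $F(\vx_k)$ piece, bound the Lipschitz residual $\langle F(\vx_{k+1/2}) - F(\vx_k), \vx_{k+1/2} - \vx_{k+1}\rangle$ by $L\|\vx_{k+1/2}-\vx_k\|\|\vx_{k+1/2}-\vx_{k+1}\|$, and apply Young's inequality. After cancellation of the $\|\vx_{k+1}-\vx_{k+1/2}\|^2$ terms, the target Fej\'er-type bound emerges:
\begin{equation*}
\|\vx_{k+1}-\vx^\star\|^2 \le \|\vx_k-\vx^\star\|^2 - (1-\gamma^2 L^2)\|\vx_k-\vx_{k+1/2}\|^2.
\end{equation*}

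From this bound the remaining steps are essentially automatic. Since $\gamma \in (0, 1/L)$, the sequence $\{\|\vx_k-\vx^\star\|\}$ is nonincreasing and therefore convergent, so $\{\vx_k\}$ is bounded, and telescoping yields $\sum_k \|\vx_k-\vx_{k+1/2}\|^2 < \infty$, hence $\vx_k - \vx_{k+1/2} \to 0$. Pick any cluster point $\hat\vx$ of $\{\vx_k\}$ along a subsequence $\vx_{k_j} \to \hat\vx$; then $\vx_{k_j+1/2} \to \hat\vx$ as well. Passing to the limit in $\vx_{k_j+1/2} = \Pi_\mathcal{X}(\vx_{k_j} - \gamma F(\vx_{k_j}))$ using continuity of $F$ and of $\Pi_\mathcal{X}$ gives $\hat\vx = \Pi_\mathcal{X}(\hat\vx - \gamma F(\hat\vx))$, i.e.\ $F^{\textit{NAT}}_\mathcal{X}(\hat\vx) = \boldsymbol{0}$, so $\hat\vx$ is a solution by the natural-map characterization. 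Finally, re-apply the Fej\'er inequality with $\vx^\star = \hat\vx$: $\{\|\vx_k-\hat\vx\|\}$ converges, and since it has a subsequence tending to zero, the whole sequence converges to $\hat\vx$.

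The main obstacle is the algebraic coordination in the descent inequality: combining the two projection inequalities, the monotonicity step, and the $L$-Lipschitz bound via Young's inequality so that the $\|\vx_{k+1}-\vx_{k+1/2}\|^2$ terms cancel cleanly and the constant in front of $\|\vx_k-\vx_{k+1/2}\|^2$ comes out as exactly $(1-\gamma^2 L^2)$ -- this is what forces the step-size condition $\gamma < 1/L$ and drives the entire argument.
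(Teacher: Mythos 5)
This theorem is stated in the paper only as cited background from \citet{korpelevich1976extragradient}; the paper provides no proof of its own. Your argument is a correct and complete rendition of the standard proof: the Fej\'er-type inequality $\|\vx_{k+1}-\vx^\star\|^2 \le \|\vx_k-\vx^\star\|^2 - (1-\gamma^2L^2)\|\vx_k-\vx_{k+1/2}\|^2$ does follow from the two projection inequalities, monotonicity, the Lipschitz bound, and Young's inequality exactly as you describe, and the subsequence-plus-Fej\'er closing argument is sound; your one necessary hedge---that existence of $\vx^\star$ requires compactness (or some coercivity) rather than closedness and convexity of $\mathcal{X}$ alone---is correctly flagged and covered by the paper's standing assumption that $\mathcal{C}$ is compact.
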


~\citet{{facchinei2003finite}} also show that for any \textit{convex-concave} function $f$ and any closed convex sets $\vx_1 \in \mathcal{X}_1$ and $\vx_2 \in \mathcal{X}_2$, the \ref{eq:extragradient} method converges~\citep[Theorem 12.1.11]{facchinei2003finite}.

\noindent\textbf{(Projected) Optimistic Gradient Descent Ascent (OGDA).}
The update rule of Optimistic Gradient Descent Ascent
OGDA~\citep[(OGDA)][]{popov1980} is:
\begin{equation} \label{eq:ogda} \tag{OGDA}
    \vx_{n+1} = \Pi_\mathcal{X} \big( \vx_{n} - 2\gamma F(\vx_n) + \gamma F(\vx_{n-1}) \big) \,.
\end{equation}
\noindent\textbf{(Projected) Lookahead--Minmax (LA).}
The LA algorithm for min-max optimization~\citep{chavdarova2021lamm}, originally proposed for minimization by~\cite{Zhang2019}, is a general wrapper of a ``base'' optimizer where, at every step $t$:
\begin{enumerate*}[series = tobecont, label=(\roman*)]
\item a copy of the current iterate $\tilde \vx_n$ is made: $ \tilde \vx_n \leftarrow  \vx_n$,
\item $ \tilde \vx_n$ is updated $k \geq 1$ times, yielding $\tilde \vomega_{n+k}$, and finally
\item the actual update $\vx_{n+1}$ is obtained as a \textit{point that lies on a line between}
the current $\vx_{n}$ iterate and the predicted one $\tilde \vx_{n+k}$:
\end{enumerate*}
\begin{equation}
\tag{LA}
\vx_{n+1} \leftarrow \vx_n + \alpha (\tilde  \vx_{n+k} -  \vx_n), \quad \alpha \in [0,1]
\,. \label{eq:lookahead_mm}
\end{equation}
In this work, we use solely \ref{eq:gda} as a base optimizer for LA, and denote it with \textit{LA$k$-GDA}.

\noindent\textbf{Mirror-Descent.} The mirror-descent algorithm \citep{NemYud83,BeckT03} can be seen as a generalization of gradient descent in which the geometry of the space is controlled by a mirror map $\Psi: \mathcal{X} \mapsto \R$. We define the $\text{Prox}(\cdot)$ mapping:
\begin{equation*}
\text{Prox}(\vx_{n}, \vg) \triangleq
 \text{argmin}_{\vx \in \mathcal{X}} g^\top \vx + \frac{1}{\gamma} D_\Psi(\vx,\vx_n) \,,
\end{equation*}
where $D_\Psi$ is the Bregman divergence associated with the mirror map $\Psi: \mathcal{X} \mapsto \R$, characterizing the geometry of our space. The mirror descent algorithm uses the $\text{Prox}$ mapping to obtain the next iterate:
\begin{equation}
\tag{MD}
\vx_{n+1} \leftarrow \text{Prox}(\vx_{n}, F(\vx_n)) \,.
\label{eq:mirror-descent}
\end{equation}

\noindent\textbf{Mirror-Prox.} Similarly to Mirror Descent, Mirror Prox \citep{nemirovski2004prox} generalizes extragradient to spaces where the geometry can be controlled by a mirror map $\Psi$:
\begin{align*}
&\vx_{n+1/2} \leftarrow \text{Prox}(\vx_{n},F(\vx_n)), \\
&\vx_{n+1} \leftarrow \text{Prox}(\vx_{n},F(\vx_{n+1/2})) \,.
\tag{MP}
\label{eq:mirror-prox}
\end{align*}

\clearpage
%%%%%%%%%%%%%%%%%%%%%%%%%%%%%%%%%%%%%%%%%
\section{Missing Proofs}\label{app:proofs}
%%%%%%%%%%%%%%%%%%%%%%%%%%%%%%%%%%%%%%%%%

In this section, we provide the proofs of Theorems~\ref{thm:exact_acvi},~\ref{thm:inexact_acvi} and ~\ref{thm:log_free_rate}, stated in the main part.
In the subsections~\ref{app:statements_discussion} and~\ref{sec:subproblems_algos}, we also discuss the practical implications of Theorems~\ref{thm:exact_acvi} and~\ref{thm:inexact_acvi}, and the algorithms that can be used for the subproblems in Algorithm~\ref{alg:inexact_acvi}, respectively.

\subsection{Proof of Theorem~\ref{thm:exact_acvi}: Last-iterate convergence of ACVI for Monotone Variational Inequalities}

Recall from Theorem~\ref{thm:exact_acvi} that we have the  following assumptions:
\begin{itemize}
    \item F is monotone on $\mathcal{C}_=$, as per Def.~\ref{def:monotone}; and
    \item either $F$ is strictly monotone on $\mathcal{C}$ or one of $\varphi_i$ is strictly convex.
\end{itemize}

\subsubsection{Setting and notations}\label{app:notation}

Before we proceed with the lemmas needed for the proof of Theorem~\ref{thm:exact_acvi}, herein we introduce some definitions and notations.

\paragraph{Subproblems and definitions.}
We remark that the ACVI derivation---given in App.~\ref{app:acvi}---is helpful for following the proof herein.
Recall from it, that to derive the update rule for $\vx$, we introduced a new variable $\vw$, and the relevant subproblem that yields the update rule for $\vx$ includes a term $\langle F(\vw), \vx \rangle$, where  $F$ is evaluated at some fixed point.
As the proof relates the $\vx_k^{(t)}$ iterate of ACVI with the solution $\vx^\mu_t$ of~\eqref{eq:cvi_ip_eq_form2}, in the following we will define two different maps each with fixed $\vw\equiv\vx^{\mu_t}$ and $\vw\equiv\vx_{k+1}^{(t)}$.
That is, for convenience, we define the following maps from $\R^n$ to $\R$:
\begin{align}
     f^{(t)}(\vx) &\triangleq F(\vx^{\mu_t})^\intercal\vx+\mathbbm{1}(\mC\vx=\vd)\,, \tag{$f^{(t)}$}\label{eq_app:f_t}\\
    f^{(t)}_k(\vx)  &\triangleq F(\vx^{(t)}_{k+1})^\intercal\vx+\mathbbm{1}(\mC\vx=\vd)\,, \quad \text{and} \tag{$f_k^{(t)}$}\label{eq_app:f_k_t} \\
    g^{(t)}(\vy) &\triangleq -\mu_t\sum_{i=1}^{m}\log \big(-\varphi_i(\vy) \big)=\sum_{i=1}^{m}\wp_1(\varphi_i(\vy),\mu_t) \,, \tag{$g^{(t)}$}\label{eq_app:g_t}
\end{align}
where $\vx^{\mu_t}$ is a solution of \eqref{eq:cvi_ip_eq_form2} when $\mu=\mu_t$, and $\vx^{(t)}_{k+1}$ is the solution of the $\vx$-subproblem in ACVI at step $(t, k)$--see line 8 in Algorithm~\ref{alg:acvi}.
Note that the existence of $\vx^{\mu_t}$ is guaranteed by the existence of the central path-see~App.~\ref{app:central_path_existence}.
Also, notice that $f^{(t)},f^{(t)}_k$ and $g^{(t)}$ are all convex functions.
In the following, unless otherwise specified, we drop the superscript $(t)$ of $\vx_{k+1}^{(t)}$, $f^{(t)}$, $f_k^{(t)}$ and subscript $t$ of $\mu_t$ to simplify the notation.

In the remainder of this section, we introduce the notation of the solution points of the above KKT systems and that of the ACVI iterates.

Let $\vy^\mu=\vx^\mu$. In this case, from \eqref{eq:cvi_ip_eq_form2} we can see that $(\vx^\mu, \vy^\mu)$ is an optimal solution of:
\begin{equation}\tag{$f$-Pr}
    \left\{ \begin{array}{c}
    \underset{\vx,\vy}{\min}\  f (  \vx  )  +g (  \vy  ) \\
    s.t. \qquad \vx=\vy \\
    \end{array}  \,. \right. \label{212}
\end{equation}

There exists $\vlmd ^\mu \in \R^n$ such that $ (  \vx^\mu,\vy^\mu,\vlmd ^\mu  ) $ is a KKT point of \eqref{212}. By Prop.~\ref{prop:CP}, $ \vx^\mu=\vy^\mu$ converges to a solution of \eqref{eq:cvi_ip_eq-form1}. We denote this solution by $\vx^\star$. Then $(  \vx^\mu,\vy^\mu,\vlmd ^\mu  )$ converges to the KKT point of \eqref{eq:cvi_subproblem1_admm_format} with $\vw_k=\vx^{\star}$. Let $(  \vx^\star,\vy^\star,\vlmd ^\star  )$ denote this KKT point, where $\vx^\star=\vy^\star$.

On the other hand, let us denote with $(\vx^\mu_k, \vy^\mu_k,\vlmd^\mu_k)$ the KKT point of  the analogous problem of $f_k(\cdot)$ of:
\begin{equation}\tag{$f_k$-Pr}
    \left\{ \begin{array}{c}
    \underset{\vx,\vy}{\min}\   f_k(  \vx  )  +g (  \vy  ) \\
    s.t. \qquad \vx=\vy\\
    \end{array}  \,. \right. \label{eq_app:new_points_mu}
\end{equation}
Note that the KKT point $(\vx^\mu_k, \vy^\mu_k,\vlmd^\mu_k)$ is guaranteed to exist by Slater's condition.
Also, recall from the derivation of ACVI that~\eqref{eq_app:new_points_mu} is ``non-symmetric'' for $\vx, \vy$ when using ADMM-like approach, in the sense that: when we derive the update rule for $\vx$ we fix $\vy$ to $\vy_k$ (see \ref{eq:def_x}), but when we derive the update rule for $\vy$ we fix $\vx$ to $\vx_{k+1}$ (see~\ref{eq:def_y}).
This fact is used later in~\eqref{3.1} and~\ref{3.2} in Lemma~\ref{lm3.3} for example.

Then, for the solution point, which we denoted with $(\vx^\mu_k, \vy^\mu_k,\vlmd^\mu_k)$, we also have that  $\vx^\mu_k=\vy^\mu_k$.
By  noticing that  the objective above is equivalent to $ F(\vx_{k+1})^\intercal \vx+\mathbbm{1}(\mC\vx=\vd) -\mu_t\sum_{i=1}^{m}\log \big(-\varphi_i(\vy) \big) $,
it follows that the above problem \eqref{eq_app:new_points_mu} is an approximation of:
\begin{equation}\label{eq_app:new_points_limit} \tag{$f_k$-Pr-2}
    \left\{ \begin{array}{c}
    \underset{\vx}{\min}\  \langle F(\vx_{k+1}),  \vx  \rangle+\mathbbm{1}(\mC\vx=\vd)+\mathbbm{1}(\varphi(\vy)\leq \mathbf{0})\\
    s.t.\qquad  \vx=\vy\\
    \end{array}  \,, \right.
\end{equation}
where, as a reminder, the constraint set $\mathcal{C}\subseteq \mathcal{X}$ is defined as an intersection of finitely many inequalities and linear equalities:
\begin{equation}\tag{CS}
    \mathcal{C}=\left\{ \vx \in  \R^n| \varphi_i (\vx) \leq 0, i \in [m], \,\,\mC\vx=\vd \right\}
    \label{eqK-app},
\end{equation}
where each $\varphi_i : \R^n \mapsto \R$,
$\mC \in \R^{p \times n}$,
$\vd\in \R^p$,
and we assumed $rank (  \mC  ) =p$.

In fact, when $\mu\rightarrow 0+$, corollary 2.11 in \citep{chu1998continuity} guarantees that $(\vx^\mu_k,\vy^\mu_k,\vlmd^\mu_k)$ converges to a KKT point of problem \eqref{eq_app:new_points_limit}---which immediately follows here since \eqref{eq_app:new_points_limit} is a convex problem.
Let $(\vx^\star_k, \vy^\star_k, \vlmd^\star_k)$ denote this KKT point, where $\vx^\star_k= \vy^\star_k$.

\paragraph{Summary.}
To conclude, $(  \vx^\mu,\vy^\mu,\vlmd ^\mu  )$---the solution of \eqref{212}, converges to $(  \vx^\star,\vy^\star,\vlmd ^\star  )$, a KKT point of \eqref{eq:cvi_subproblem1_admm_format} with $\vw_k=\vx^{\star}$, where $\vx^\star=\vy^\star\in \mathcal{S}_{\mathcal{C},F}^\star$; $(\vx^\mu_k,\vy^\mu_k,\vlmd^\mu_k)$ converges to $(\vx^\star_k,\vy^\star_k, \vlmd^\star_k)$---a KKT point of problem \eqref{eq_app:new_points_limit}, where $(\vx^\mu_k,\vy^\mu_k,\vlmd^\mu_k)$ (in which $\vx^\mu_k=\vy^\mu_k$) is a KKT point of problem~\eqref{eq_app:new_points_mu}.
Table~\ref{table:notation} summarizes the notation for convenience.

\begin{table}[!htb]
\centering
{\renewcommand{\arraystretch}{2.5}
\begin{tabular}{rlc}
\textbf{Solution point} & \textbf{Description}  &  \textbf{Problem} \\\toprule
%%%%%%%%%%%%%%%%%%
$(\vx^\star, \vy^\star, \vlmd^\star )$ & ~\ref{eq:cvi} solution, more precisely $\vx^\star = \vy^\star \in \mathcal{S}_{\mathcal{C},F}^\star$ & \eqref{eq:cvi}\\
%%%%%%%%%%%%%%%%%%
$(\vx^{\mu_t}, \vy^{\mu_t}, \vlmd^{\mu_t} )$ &
\makecell{
central path point,  \\
also solution point of the subproblem with fixed $F(\vx^{\mu_t})$
}
&  \eqref{212} \\
%%%%%%%%%%%%%%%%%%
$(\vx^{\mu_t}_k, \vy^{\mu_t}_k, \vlmd^{\mu_t}_k )$  &
\makecell{
solution point of the subproblem  with fixed $F(\vx_{k+1}^{(t)})$  \ \ \ \\
where the indicator function is replaced with $\log$-barrier
}
&  \eqref{eq_app:new_points_mu}
\\
%%%%%%%%%%%%%%%%%%
$(\vx^\star_k, \vy^\star_k, \vlmd^\star_k )$  &  solution point of the subproblem  with fixed $F(\vx_{k+1}^{(t)})$  &  \eqref{eq_app:new_points_limit} \\
%%%%%%%%%%%%%%%%%%
\end{tabular}}
\caption{ Summary of the notation used for the solution points of the different problems. \eqref{eq_app:new_points_mu} is an approximation of  \eqref{eq_app:new_points_limit} which replaces the indicator function with $\log$-barrier. The $t$ emphasizes that these solution points change for different $\mu(t)$. Where clear from the context that we focus on a particular step $t$, we drop the super/sub-script $t$ to simplify the notation.
See App.~\ref{app:notation}.
}\label{table:notation}
\end{table}

\subsubsection{Intermediate results}
\label{app:intermediate_results}

We will repeatedly use the following proposition that relates the output differences of $f_k(\cdot)$ and $f(\cdot)$, defined above.

\begin{prop}[Relation between $f_k$ and $f$]
    If $F$ is monotone, then $\forall k\in \NN$, we have that:
    \begin{equation*}
        f_k (  \vx_{k+1}  )  -f_k (  \vx^\mu  )  \geq f (  \vx_{k+1}  )  -f (  \vx^\mu  ).
    \end{equation*}
    \label{prop4}
\end{prop}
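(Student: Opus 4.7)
The plan is to exploit the specific structure of the functions $f$ and $f_k$: both consist of a linear term in $\vx$ (with different coefficient vectors, namely $F(\vx^\mu)$ and $F(\vx_{k+1})$ respectively) plus the same indicator function $\mathbbm{1}(\mC\vx=\vd)$. Thus, computing the difference should collapse the indicator terms and reduce the claim to a purely linear/inner-product inequality that monotonicity of $F$ can directly handle.

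First I would verify the feasibility of the relevant points: by Theorem~\ref{thm:solutions_w_equation} the solution of \eqref{eq:w_equation} lies in $\mathcal{C}_=$, and by construction the ACVI update \eqref{eq:x_analytic} yields $\vx_{k+1}\in\mathcal{C}_=$ (since $\mC\mP_c=\boldsymbol{0}$ and $\mC\vd_c=\vd$). Moreover, $\vx^\mu\in\mathcal{C}_=$ because it is a central-path point satisfying $\mC\vx^\mu=\vd$. Consequently, the indicator functions $\mathbbm{1}(\mC\vx_{k+1}=\vd)$ and $\mathbbm{1}(\mC\vx^\mu=\vd)$ both evaluate to $0$.

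With the indicator terms eliminated, I would compute directly:
\begin{align*}
    f_k(\vx_{k+1}) - f_k(\vx^\mu) &= \langle F(\vx_{k+1}),\, \vx_{k+1}-\vx^\mu\rangle,\\
    f(\vx_{k+1}) - f(\vx^\mu) &= \langle F(\vx^\mu),\, \vx_{k+1}-\vx^\mu\rangle.
\end{align*}
Subtracting the two identities yields
\[
\bigl[f_k(\vx_{k+1}) - f_k(\vx^\mu)\bigr] - \bigl[f(\vx_{k+1}) - f(\vx^\mu)\bigr] = \langle F(\vx_{k+1}) - F(\vx^\mu),\, \vx_{k+1}-\vx^\mu\rangle.
\]
Since both $\vx_{k+1}$ and $\vx^\mu$ lie in $\mathcal{C}_=$, the monotonicity assumption on $F$ (as stated in Theorem~\ref{thm:exact_acvi}, $F$ is monotone on $\mathcal{C}_=$) implies the right-hand side is nonnegative, which gives the claimed inequality.

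I do not anticipate a major obstacle here; the only subtle point is confirming that $\vx_{k+1}\in\mathcal{C}_=$, which follows from the explicit closed-form update \eqref{eq:x_analytic} via the projector $\mP_c$ together with $\vd_c$. Everything else is a short algebraic manipulation followed by a single application of the monotonicity definition.
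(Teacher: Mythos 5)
Your proof is correct and follows essentially the same route as the paper: subtract the two differences to obtain $\langle F(\vx_{k+1})-F(\vx^\mu),\,\vx_{k+1}-\vx^\mu\rangle$ and invoke monotonicity. The only addition is your explicit check that $\vx_{k+1},\vx^\mu\in\mathcal{C}_=$ so the indicator terms cancel, a detail the paper leaves implicit.
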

\begin{proof}[Proof of Proposition \ref{prop4}]
    It suffices to notice that:
    \begin{align*}
        f_k (  \vx_{k+1}  )  -f_k (  \vx^\mu  )  - \big(  f (  \vx_{k+1}  )  -f (  \vx^\mu  )   \big)
        = \langle  F (\vx_{k+1})  - F(\vx^\mu) , \vx_{k+1}-\vx^\mu \rangle.
    \end{align*}
    The proof follows by applying the definition of monotonicity to the right-hand side.
\end{proof}

We will use the following lemmas.
\begin{lm}
For all $\vx$ and $\vy$, we have:
\begin{equation} \label{eq_app:lm3.1_1} \tag{L\ref{lm3.1}-$f$}
    f (  \vx  )  +g (  \vy  )  -f (  \vx^\mu )  -g (  \vy^\mu  )  +\langle \vlmd ^\mu,\vx-\vy \rangle \geq 0,
\end{equation}
and:
\begin{equation}\label{eq_app:lm3.1_2} \tag{L\ref{lm3.1}-$f_k$}
    f_k (  \vx  )  +g (  \vy  )  -f_k (  \vx^\mu_k )  -g (  \vy^\mu_k  )  +\langle \vlmd ^\mu_k,\vx-\vy \rangle \geq 0 \,.
\end{equation}
\label{lm3.1}
\end{lm}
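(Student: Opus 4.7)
The plan is to recognize both inequalities as the standard saddle-point property of the augmented Lagrangian specialized to the given problems, and then to derive them from first-order (subdifferential) optimality conditions plus the convexity of $f, f_k, g$. Since $f$ is the sum of a linear term and the indicator of an affine set, and $g$ is a sum of $\log$-barrier terms applied to convex $\varphi_i$, all three are proper convex functions (extended-real-valued for $f$, $f_k$), so subgradient inequalities apply.

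First I would consider the Lagrangian $L(\vx,\vy,\vlmd) = f(\vx) + g(\vy) + \langle \vlmd, \vx-\vy\rangle$ associated with problem \eqref{212}. The KKT conditions for the saddle point $(\vx^\mu,\vy^\mu,\vlmd^\mu)$ give $-\vlmd^\mu \in \partial f(\vx^\mu)$ and $\vlmd^\mu \in \partial g(\vy^\mu)$, together with the primal feasibility $\vx^\mu = \vy^\mu$. Applying the subgradient inequality for the convex function $f$ at $\vx^\mu$ with subgradient $-\vlmd^\mu$ yields $f(\vx) - f(\vx^\mu) \geq -\langle \vlmd^\mu, \vx - \vx^\mu\rangle$ for all $\vx$, and similarly $g(\vy) - g(\vy^\mu) \geq \langle \vlmd^\mu, \vy - \vy^\mu\rangle$ for all $\vy$. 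Adding these two inequalities and then invoking $\vx^\mu = \vy^\mu$ to collapse the cross terms $\langle \vlmd^\mu, \vx^\mu - \vy^\mu\rangle = 0$ gives exactly \eqref{eq_app:lm3.1_1}.

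For \eqref{eq_app:lm3.1_2} the argument is identical, replacing the triple $(\vx^\mu,\vy^\mu,\vlmd^\mu)$ by $(\vx^\mu_k,\vy^\mu_k,\vlmd^\mu_k)$ and the convex function $f$ by $f_k$. The only item one must check is that $f_k$ is also a proper convex function; but $f_k(\vx) = F(\vx_{k+1})^\intercal \vx + \mathbbm{1}(\mC\vx = \vd)$ is again the sum of a linear term (the vector $F(\vx_{k+1})$ is just a constant at this stage) and an affine indicator, so the same subdifferential calculus and Lagrangian optimality argument apply verbatim.

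No genuine obstacle arises here; the only subtle point is to avoid treating $f$ and $f_k$ as differentiable—since they include the indicator function $\mathbbm{1}(\mC\cdot = \vd)$, one must either work with subgradients as above, or equivalently restrict attention to $\vx,\vy \in \mathcal{C}_=$ and invoke the explicit KKT multipliers for the equality constraint $\mC\vx = \vd$. Either route gives the same two inequalities, and they are the crucial "one-step" bound that will later be combined with the ADMM per-iteration inequalities to control the gap function of the last iterate.
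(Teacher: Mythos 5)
Your proof is correct and follows essentially the same route as the paper: the paper invokes the saddle-point property $L(\vx^\mu,\vy^\mu,\vlmd^\mu)\leq L(\vx,\vy,\vlmd^\mu)$ of the Lagrangian of \eqref{212} at the KKT point, which—because the Lagrangian separates in $\vx$ and $\vy$—is exactly the pair of subgradient inequalities $-\vlmd^\mu\in\partial f(\vx^\mu)$, $\vlmd^\mu\in\partial g(\vy^\mu)$ combined with $\vx^\mu=\vy^\mu$ that you use. Your version just unpacks that one-line appeal into its constituent first-order conditions, and the $f_k$ case is handled identically in both.
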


\begin{proof}
    The Lagrange function of \eqref{212} is:
    \begin{equation*}
        L ( \vx,\vy,\vlmd ) =f(\vx) + g (\vy)  + \langle \vlmd,\vx-\vy \rangle.
    \end{equation*}

    And by the property of KKT point, we have:
    \begin{align*}
        L (  \vx^\mu,\vy^\mu,\vlmd  )  \leq L (  \vx^\mu,\vy^\mu,\vlmd ^\mu  )  \leq L (  \vx,\vy,\vlmd ^\mu )  , \,\, \qquad
        \forall (  \vx,\vy,\vlmd  ) \,,
    \end{align*}
    from which \eqref{eq_app:lm3.1_1} follows.

    \eqref{eq_app:lm3.1_2} can be shown analogously.
\end{proof}

The following lemma lists some simple but useful facts that we will use in the following proofs.

\begin{lm}\label{lm3.3}
For the problems \eqref{212},~\eqref{eq_app:new_points_mu} and the $\vx_{k},\vy_k, \vlmd_k$ of Algorithm~\ref{alg:acvi}, we have:
\begin{align}
    \boldsymbol{0} & \in \partial f_k(\vx_{k+1}  )  +\vlmd_k+\beta (  \vx_{k+1}-\vy_{k}  ) \,, \label{3.1} \tag{L\ref{lm3.3}-$1$}\\
    \boldsymbol{0} & \in \partial g (  \vy_{k+1}  )  -\vlmd_k-\beta (  \vx_{k+1}-\vy_{k+1})  \,, \label{3.2}\tag{L\ref{lm3.3}-2}\\
    \vlmd_{k+1}-\vlmd_k & =\beta  (\vx_{k+1}-\vy_{k+1} )  \,\,, \label{3.3}\tag{L\ref{lm3.3}-$3$}\\
    -\vlmd^\mu&\in\partial f(\vx^\mu)\,,\label{3.4}\tag{L\ref{lm3.3}-$4$}\\
    -\vlmd^\mu_k&\in\partial f_k(  \vx_{k}^\mu  ) \,,\label{3.4_new}\tag{L\ref{lm3.3}-$5$}\\
    \vlmd^\mu &\in \partial g (  \vy^\mu  )\,,\label{3.5} \tag{L\ref{lm3.3}-$6$}\\
    \vlmd^\mu_k &\in \partial g (  \vy^\mu_k  )\,,\label{3.5_new} \tag{L\ref{lm3.3}-$7$}\\
    \vx^\mu&=\vy^\mu,\label{3.6}\tag{L\ref{lm3.3}-$8$}\\
    \vx^\mu_k&=\vy^\mu_k \,. \label{3.6_new}\tag{L\ref{lm3.3}-$9$}
\end{align}
\end{lm}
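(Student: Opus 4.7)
The plan is to treat each of the nine facts individually, since each is a direct consequence of either the update rules given in Algorithm~\ref{alg:acvi}, or the first-order (KKT) optimality conditions of the convex optimization problems defining the relevant points. I do not expect a single unifying argument; rather the lemma is a bookkeeping statement that collects the ingredients that will be combined in subsequent lemmas. Accordingly, the main ``obstacle'' is less mathematical than organizational: I need to be careful about which subproblem each identity refers to, since $f$ and $f_k$ differ only in the base point at which $F$ is evaluated, and the solution points $(\vx^\mu,\vy^\mu,\vlmd^\mu)$ and $(\vx^\mu_k,\vy^\mu_k,\vlmd^\mu_k)$ arise from different (but parallel) convex problems.

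First, for \eqref{3.1} and \eqref{3.2}, I would recall from the derivation of ACVI (lines 8--9 of Algorithm~\ref{alg:acvi}, and equations~\ref{eq:def_x} and~\ref{eq:def_y}) that $\vx_{k+1}=\arg\min_\vx \mathcal{L}_\beta(\vx,\vy_k,\vlmd_k)$ and $\vy_{k+1}=\arg\min_\vy \mathcal{L}_\beta(\vx_{k+1},\vy,\vlmd_k)$, where $\mathcal{L}_\beta$ is the augmented Lagrangian of \eqref{eq:cvi_subproblem1_admm_format} with $\vw_k=\vx_{k+1}$. Writing this Lagrangian as $f_k(\vx)+g(\vy)+\langle\vlmd_k,\vx-\vy\rangle+\frac{\beta}{2}\|\vx-\vy\|^2$ (recalling the definitions \ref{eq_app:f_k_t} and \ref{eq_app:g_t}), the two identities follow by applying Fermat's rule (i.e., zero is in the subdifferential at a minimizer) to the objective in $\vx$ and in $\vy$ separately; the extra linear and quadratic terms contribute $\vlmd_k+\beta(\vx_{k+1}-\vy_k)$ and $-\vlmd_k-\beta(\vx_{k+1}-\vy_{k+1})$, respectively. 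Identity \eqref{3.3} is nothing more than the dual ascent step recorded on line 10 of Algorithm~\ref{alg:acvi}, so no work is needed.

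Next, for identities \eqref{3.4}--\eqref{3.5_new} I would invoke the KKT conditions of the convex problems \eqref{212} and \eqref{eq_app:new_points_mu}, whose Lagrangians (as written in the proof of Lemma~\ref{lm3.1}) are $L(\vx,\vy,\vlmd)=f(\vx)+g(\vy)+\langle\vlmd,\vx-\vy\rangle$ and the analogous expression with $f$ replaced by $f_k$. Differentiating the Lagrangian at the saddle point $(\vx^\mu,\vy^\mu,\vlmd^\mu)$ with respect to $\vx$ and $\vy$ and setting zero in the subdifferential yields $-\vlmd^\mu\in\partial f(\vx^\mu)$ and $\vlmd^\mu\in\partial g(\vy^\mu)$, which are exactly \eqref{3.4} and \eqref{3.5}. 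Repeating this argument for problem \eqref{eq_app:new_points_mu} gives \eqref{3.4_new} and \eqref{3.5_new}. Finally, \eqref{3.6} and \eqref{3.6_new} are simply the primal feasibility portion of the KKT conditions, since both problems \eqref{212} and \eqref{eq_app:new_points_mu} impose the equality constraint $\vx=\vy$; any saddle point must satisfy it.

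In summary, the proof will be essentially a list, with each item being either a restatement of an algorithmic update or a one-line application of the Fermat/KKT rule. The only thing to keep straight is the distinction between the two parallel problems (one with $f$, one with $f_k$) and the fact that in the ADMM framework the $\vx$-update uses $\vy_k$ while the $\vy$-update already uses the fresh iterate $\vx_{k+1}$, which explains the asymmetry between the $\vy_k$ appearing in \eqref{3.1} and the $\vy_{k+1}$ appearing in \eqref{3.2}.
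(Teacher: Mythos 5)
Your proposal is correct and follows essentially the same route as the paper's own (much terser) proof: rewrite the augmented Lagrangian as $f_k(\vx)+g(\vy)+\langle\vlmd,\vx-\vy\rangle+\frac{\beta}{2}\norm{\vx-\vy}^2$, obtain \eqref{3.1}--\eqref{3.2} from Fermat's rule applied to the updates \eqref{eq:def_x} and \eqref{eq:def_y}, read off \eqref{3.3} from line~10 of Algorithm~\ref{alg:acvi}, and get \eqref{3.4}--\eqref{3.6_new} from the KKT (stationarity and primal feasibility) conditions of the two parallel problems. Your added care about the $\vy_k$ versus $\vy_{k+1}$ asymmetry in the ADMM updates is exactly the right detail to track and matches the paper's intent.
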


\begin{rmk}
Since $g$ is differentiable, $\partial g$ could be replaced by $\nabla g$ in Lemma~\ref{lm3.3}. Here we use $\partial g$ so that the results could be easily extended to Lemma~\ref{lm3.3_lf} for the proofs of Theorem~\ref{thm:log_free_rate}, where we replace the current $g(\vy)$ by the indicator function $\mathbbm{1}(\varphi(\vy)\leq\mathbf{0})$, which is non-differentiable.
\end{rmk}

\begin{proof}[Proof of Lemma \ref{lm3.3}]
    We can rewrite \eqref{eq:cvi_aug_lag} as:
    \begin{equation}\tag{re-AL}\label{eq_app:reform_aug_lag}
        \mathcal{L}_\beta(\vx,\vy,\vlmd)=f^k(\vx)+g(\vy)+\langle \vlmd,\vx-\vy\rangle+\frac{\beta}{2}\norm{\vx-\vy}^2\,.
    \end{equation}

    \eqref{3.1} and \eqref{3.2} follow directly from \eqref{eq:def_x} and \eqref{eq:def_y}, resp.
    \eqref{3.3} follows from line 10 in Algorithm~\ref{alg:acvi}, and \eqref{3.4}-\eqref{3.6_new} follows by the property of the KKT point.
\end{proof}

We also define the following two maps (whose naming will be evident from the inclusions shown after):
 \begin{align*}
    \hat{\nabla}f_k (  \vx_{k+1}  )  &\triangleq -\vlmd_k -\beta (  \vx_{k+1}-\vy_{k}  ) \,,  \label{3.7} \tag{$\hat{\nabla}f_k$} \qquad \text{and} \\
    \hat{\nabla} g (  \vy_{k+1}  )  &\triangleq \ \ \vlmd_k +\beta(\vx_{k+1}-\vy_{k+1}) \label{3.8}  \tag{$\hat{\nabla} g $} \,. \\
\end{align*}

Then, from \eqref{3.1} and \eqref{3.2} it follows that:
\begin{equation}
    \hat{\nabla}f_k (  \vx_{k+1}  )  \in \partial f_k (  \vx_{k+1}  )  \text{\,\,and\,\,} \hat{\nabla}g (  \vy_{k+1}  )  \in \partial g (  \vy_{k+1}  ) \,. \label{3.9}
\end{equation}

We continue with two equalities for the dot products involving $\hat\nabla f_k$ and $\hat\nabla g$.
\begin{lm}
For the iterates $\vx_{k+1}$, $\vy_{k+1}$, and $\vlmd_{k+1}$ of the ACVI---Algorithm \ref{alg:acvi}---we have:
\begin{equation}
    \langle \hat{\nabla}g (  \vy_{k+1} ) ,\vy_{k+1}-\vy \rangle =-\langle \vlmd_{k+1},\vy-\vy_{k+1} \rangle,
    \label{3.10}\tag{L\ref{lm3.4}-$1$}
\end{equation}
and
\begin{equation}
\hspace{-1em}
    \begin{split}
        \langle \hat{\nabla}f_k (  \vx_{k+1}  )  ,\vx_{k+1}-\vx \rangle +\langle \hat{\nabla}g (  \vy_{k+1}  )  ,\vy_{k+1}-\vy \rangle
        =&-\langle\vlmd_{k+1},\vx_{k+1}-\vy_{k+1}-\vx+\vy\rangle \\
         &+\beta\langle-\vy_{k+1}+\vy_k,\vx_{k+1}-\vx\rangle.
    \end{split} \label{3.11}\tag{L\ref{lm3.4}-$2$}
    \hspace{3em}
\end{equation}
\label{lm3.4}
\end{lm}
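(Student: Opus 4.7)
\textbf{Proof Proposal for Lemma \ref{lm3.4}.}

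The plan is to reduce both identities to pure algebra using the three primitive facts already established: the definitions \eqref{3.7} and \eqref{3.8} of $\hat{\nabla}f_k$ and $\hat{\nabla}g$, and the dual update identity \eqref{3.3}, namely $\vlmd_{k+1}=\vlmd_k+\beta(\vx_{k+1}-\vy_{k+1})$. The key algebraic observation is that \eqref{3.3} combined with \eqref{3.8} gives the clean equality $\hat{\nabla}g(\vy_{k+1})=\vlmd_{k+1}$, which will collapse the $\vy$-side of both statements immediately.

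For \eqref{3.10}, I would simply substitute the definition \eqref{3.8} of $\hat{\nabla}g(\vy_{k+1})=\vlmd_k+\beta(\vx_{k+1}-\vy_{k+1})$ and recognize via \eqref{3.3} that this equals $\vlmd_{k+1}$. Then $\langle\vlmd_{k+1},\vy_{k+1}-\vy\rangle=-\langle\vlmd_{k+1},\vy-\vy_{k+1}\rangle$, which is the claim.

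For \eqref{3.11}, the strategy is to add \eqref{3.10} to the analogous expansion of the $\vx$-side. From \eqref{3.7}, $\hat{\nabla}f_k(\vx_{k+1})=-\vlmd_k-\beta(\vx_{k+1}-\vy_k)$, and using \eqref{3.3} to rewrite $-\vlmd_k=-\vlmd_{k+1}+\beta(\vx_{k+1}-\vy_{k+1})$, the two $\beta$-terms combine into $\beta(\vy_k-\vy_{k+1})$, yielding
\begin{equation*}
\langle\hat{\nabla}f_k(\vx_{k+1}),\vx_{k+1}-\vx\rangle=-\langle\vlmd_{k+1},\vx_{k+1}-\vx\rangle+\beta\langle\vy_k-\vy_{k+1},\vx_{k+1}-\vx\rangle.
\end{equation*}
Adding the expression from \eqref{3.10} and collecting the $\vlmd_{k+1}$ terms into $-\langle\vlmd_{k+1},\vx_{k+1}-\vy_{k+1}-\vx+\vy\rangle$ produces the right-hand side of \eqref{3.11}.

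There is no real obstacle here; the lemma is a bookkeeping identity that prepares the way for the Lyapunov-style estimates of the next lemmas. The only thing to be careful about is consistent sign management when substituting $-\vlmd_k$ via the dual update, and correctly packaging the four vector arguments $\vx_{k+1},\vy_{k+1},\vx,\vy$ inside the single $\vlmd_{k+1}$ inner product. Once \eqref{3.10} is used to replace the $\vy$-side, the rest is a one-line rearrangement.
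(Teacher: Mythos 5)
Your proposal is correct and follows essentially the same route as the paper: both identities are obtained by observing $\hat{\nabla}g(\vy_{k+1})=\vlmd_{k+1}$ via \eqref{3.3}, rewriting $\hat{\nabla}f_k(\vx_{k+1})=-\vlmd_{k+1}+\beta(\vy_k-\vy_{k+1})$, and adding the two inner products. The sign bookkeeping in your substitution of $-\vlmd_k=-\vlmd_{k+1}+\beta(\vx_{k+1}-\vy_{k+1})$ checks out exactly as in the paper's proof.
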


\begin{proof}[Proof of Lemma~\ref{lm3.4}]
The first part of the lemma~\eqref{3.10}, follows trivially by noticing that $\hat\nabla g(\vy_{k+1}) =\vlmd_{k+1}$.

    For the second part, from \eqref{3.3}, \eqref{3.7} and \eqref{3.8} we have:
    \begin{equation}
        \begin{split}
            \langle \hat{\nabla}f_k (  \vx_{k+1}  )  ,\vx_{k+1}-\vx \rangle
            =&-\langle \vlmd_k+\beta(  \vx_{k+1}-\vy_k  )  ,\vx_{k+1}-\vx \rangle \\
            =&-\langle \vlmd_{k+1},\vx_{k+1}-\vx \rangle +\beta \langle -\vy_{k+1}+\vy_k,\vx_{k+1}-\vx \rangle,
        \end{split}
    \end{equation}
and
\begin{equation}
    \langle \hat{\nabla}g (  \vy_{k+1}  )  ,\vy_{k+1}-\vy \rangle =-\langle \vlmd_{k+1},\vy-\vy_{k+1} \rangle.
\end{equation}

Adding these together yields \eqref{3.11}.
\end{proof}

%%%%%%%%%%%%%%%%%%%%%%%%%%%%%%%%%%%%%%%%%%%%%%%%%%%%%
\vspace{2em}
The following Lemma further builds on the previous Lemma~\ref{lm3.4}, and upper-bounds some dot products involving $\hat\nabla f_k$ and $\hat\nabla g$  with a sum of only squared norms.
\begin{lm}\label{lm3.5}
For the $\vx_{k+1}$, $\vy_{k+1}$, and $\vlmd_{k+1}$ iterates of the ACVI---Algorithm \ref{alg:acvi}---we have:
    \begin{equation*}
        \begin{split}
            &\langle \hat\nabla  f_k(\vx_{k+1}),
            \vx_{k+1} - \vx^\mu \rangle + \langle \hat{\nabla} g( \vy_{k+1})
            ,\vy_{k+1}-\vy^\mu \rangle
            + \langle \vlmd^\mu,\vx_{k+1}-\vy_{k+1} \rangle \\
            &\leq  \frac{1}{2\beta}\lVert \vlmd_k-\vlmd^\mu \rVert ^2-\frac{1}{2\beta}\lVert \vlmd_{k+1}-\vlmd^\mu \rVert ^2
            + \frac{\beta}{2}\lVert \vy^\mu-\vy_k \rVert ^2-\frac{\beta}{2}\lVert \vy^\mu-\vy_{k+1}\rVert ^2 \\
            &\quad - \frac{1}{2\beta}\lVert \vlmd_{k+1}-\vlmd_k \rVert ^2-\frac{\beta}{2}\lVert \vy_k-\vy_{k+1} \rVert ^2\, ,
        \end{split}
    \end{equation*}
    and
    \begin{equation*}
        \begin{split}
            &\langle \hat\nabla  f_k(\vx_{k+1}),
            \vx_{k+1} - \vx^\mu_k \rangle + \langle \hat{\nabla} g( \vy_{k+1})
            ,\vy_{k+1}-\vy^\mu_k \rangle
            + \langle \vlmd^\mu_k,\vx_{k+1}-\vy_{k+1} \rangle \\
            &\leq  \frac{1}{2\beta}\lVert \vlmd_k-\vlmd^\mu_k \rVert ^2-\frac{1}{2\beta}\lVert \vlmd_{k+1}-\vlmd^\mu_k \rVert ^2
            + \frac{\beta}{2}\lVert \vy^\mu_k-\vy_k \rVert ^2-\frac{\beta}{2}\lVert \vy^\mu_k-\vy_{k+1}\rVert ^2 \\
            &\quad - \frac{1}{2\beta}\lVert \vlmd_{k+1}-\vlmd_k \rVert ^2-\frac{\beta}{2}\lVert \vy_k-\vy_{k+1} \rVert ^2\, .
        \end{split}
    \end{equation*}
\end{lm}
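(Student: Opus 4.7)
\textbf{Proof plan for Lemma~\ref{lm3.5}.} The two bounds are structurally identical, differing only in whether we compare $(\vx_{k+1},\vy_{k+1},\vlmd_{k+1})$ against $(\vx^\mu,\vy^\mu,\vlmd^\mu)$ or against $(\vx^\mu_k,\vy^\mu_k,\vlmd^\mu_k)$. Since Lemma~\ref{lm3.3} supplies the analogous facts $\vx^\mu=\vy^\mu$, $\vx^\mu_k=\vy^\mu_k$ (equations \eqref{3.6}, \eqref{3.6_new}) and the subgradient inclusions for both limit points, I will prove the first statement and remark that the second follows by the same string of manipulations.

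\textbf{Step 1: apply Lemma~\ref{lm3.4}.} Substitute $\vx=\vx^\mu$, $\vy=\vy^\mu$ into \eqref{3.11}. Using $\vx^\mu=\vy^\mu$ from \eqref{3.6}, the first bracket on the right collapses to $-\langle \vlmd_{k+1},\vx_{k+1}-\vy_{k+1}\rangle$. Adding $\langle \vlmd^\mu,\vx_{k+1}-\vy_{k+1}\rangle$ on both sides thus rewrites the left-hand side of the lemma as
\[
\langle \vlmd^\mu-\vlmd_{k+1},\vx_{k+1}-\vy_{k+1}\rangle+\beta\langle \vy_k-\vy_{k+1},\vx_{k+1}-\vx^\mu\rangle.
\]

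\textbf{Step 2: turn primal gaps into dual increments via \eqref{3.3}.} Since $\vlmd_{k+1}-\vlmd_k=\beta(\vx_{k+1}-\vy_{k+1})$, the first inner product becomes $\frac{1}{\beta}\langle \vlmd^\mu-\vlmd_{k+1},\vlmd_{k+1}-\vlmd_k\rangle$, and the three-point identity $2\langle a-b,b-c\rangle=\|a-c\|^2-\|a-b\|^2-\|b-c\|^2$ yields
\[
\tfrac{1}{2\beta}\bigl(\|\vlmd^\mu-\vlmd_k\|^2-\|\vlmd^\mu-\vlmd_{k+1}\|^2-\|\vlmd_{k+1}-\vlmd_k\|^2\bigr).
\]
For the second inner product, split $\vx_{k+1}-\vx^\mu=(\vx_{k+1}-\vy_{k+1})+(\vy_{k+1}-\vy^\mu)$ using \eqref{3.6}. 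The $(\vy_{k+1}-\vy^\mu)$-part is handled by another three-point identity, producing $\tfrac{\beta}{2}\bigl(\|\vy^\mu-\vy_k\|^2-\|\vy^\mu-\vy_{k+1}\|^2-\|\vy_k-\vy_{k+1}\|^2\bigr)$, while the $(\vx_{k+1}-\vy_{k+1})$-part becomes $\langle \vy_k-\vy_{k+1},\vlmd_{k+1}-\vlmd_k\rangle$ after reusing \eqref{3.3}.

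\textbf{Step 3: dispose of the cross term.} Collecting everything, matching the target upper bound reduces to proving the single inequality
\[
\langle \vy_k-\vy_{k+1},\,\vlmd_{k+1}-\vlmd_k\rangle\le 0.
\]
This is the classical ADMM monotonicity step and is the only delicate part of the argument. By \eqref{3.2} and \eqref{3.3} we have $\vlmd_{k+1}=\hat\nabla g(\vy_{k+1})\in\partial g(\vy_{k+1})$, and the same relation at iteration $k-1$ gives $\vlmd_k\in\partial g(\vy_k)$ (for $k=0$ this is the standard assumption absorbed into the initialization). Monotonicity of $\partial g$---which holds because $g^{(t)}$ is convex---yields $\langle \vy_{k+1}-\vy_k,\vlmd_{k+1}-\vlmd_k\rangle\ge 0$, i.e.\ exactly the sign we need. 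Combining Steps~1--3 proves the first inequality of Lemma~\ref{lm3.5}.

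\textbf{Step 4: second inequality.} Repeat Steps~1--2 with $(\vx^\mu,\vy^\mu,\vlmd^\mu)$ replaced by $(\vx^\mu_k,\vy^\mu_k,\vlmd^\mu_k)$, invoking \eqref{3.6_new} in place of \eqref{3.6}; the cross-term bound in Step~3 does not involve the reference point and carries over verbatim. The main obstacle throughout is verifying the sign of the dual cross term, which hinges on the convexity of $g$ and the fact that the ADMM updates force both $\vlmd_k$ and $\vlmd_{k+1}$ to belong to $\partial g$ at the corresponding $\vy$-iterate; every other step is algebraic bookkeeping using \eqref{3.3} and the three-point identity.
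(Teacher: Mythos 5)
Your proposal is correct and follows essentially the same route as the paper's proof: specialize \eqref{3.11} at the reference point, use $\vx^\mu=\vy^\mu$ and \eqref{3.3} together with the three-point identity to produce the telescoping squared norms, and kill the leftover cross term $\langle \vy_k-\vy_{k+1},\vlmd_{k+1}-\vlmd_k\rangle$ via $\hat\nabla g(\vy_{k+1})=\vlmd_{k+1}$, $\hat\nabla g(\vy_k)=\vlmd_k$ and monotonicity of $\partial g$. Your explicit flag that the $k=0$ case of $\vlmd_k\in\partial g(\vy_k)$ rests on the initialization is a minor point the paper glosses over, but otherwise the two arguments coincide step for step.
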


\begin{proof}[Proof of Lemma~\ref{lm3.5}]
For the left-hand side of the first part of Lemma~\ref{lm3.5}:
$$
LHS = \langle \hat{\nabla}f_k( \vx_{k+1} ) ,\vx_{k+1}-\vx^\mu \rangle
+ \langle \hat{\nabla}g( \vy_{k+1} ) ,\vy_{k+1}-\vy^\mu \rangle \nonumber
+\langle \vlmd^\mu,\vx_{k+1}-\vy_{k+1}\rangle \,,
$$
we let $ (  \vx,\vy,\vlmd ) = (\vx^\mu,\vy^\mu,\vlmd ^\mu)  $ in~\eqref{3.11}, and using the result of that lemma, we get that:
$$
LHS = - \langle\vlmd_{k+1},\vx_{k+1}-\vy_{k+1}-\vx^\mu+\vy^\mu\rangle
      + \beta\langle-\vy_{k+1}+\vy_k,\vx_{k+1}-\vx^\mu\rangle
      +\langle \vlmd^\mu,\vx_{k+1}-\vy_{k+1}\rangle \,,
$$
and since $\vx^\mu = \vy^\mu$~\eqref{3.6}:
\begin{align}
LHS &= - \langle \vlmd_{k+1},\vx_{k+1}-\vy_{k+1} \rangle
      + \beta\langle -\vy_{k+1}+\vy_k,\vx_{k+1}-\vx^\mu \rangle
      + \langle \vlmd^\mu,\vx_{k+1}-\vy_{k+1} \rangle    \nonumber\\
    &= - \langle \vlmd_{k+1} - \vlmd^\mu , \vx_{k+1} - \vy_{k+1} \rangle
      + \beta\langle -\vy_{k+1}+\vy_k,\vx_{k+1}-\vx^\mu \rangle
     \,, \nonumber
\end{align}
where in the last equality, we combined the first and last terms together.
Using~\eqref{3.3} that $ \frac{1}{\beta} (\vlmd_{k+1}-\vlmd_k)  =(\vx_{k+1}-\vy_{k+1} )$ yields (for the second term above, we add and subtract $\vy_{k+1}$ in its second argument, and use $\vx^\mu=\vy^\mu$):
\begin{align}
LHS =& -\frac{1}{\beta} \langle \vlmd_{k+1}-\vlmd^\mu,\vlmd_{k+1}-\vlmd_k \rangle
       + \langle -\vy_{k+1}+\vy_k,\vlmd_{k+1}-\vlmd_k \rangle \nonumber\\
       &  -\beta \langle -\vy_{k+1}+\vy_k,-\vy_{k+1}+\vy^\mu\rangle \label{3.12}
\end{align}

Using the $3$-point identity---that for any vectors $\va, \vb, \vc$ it holds $\langle \vb-\va, \vb - \vc \rangle = \frac{1}{2}(\norm{\va-\vb}^2 + \norm{\vb-\vc}^2 - \norm{\va-\vc}^2 )$---for the first term above we get that:
$$
\langle \vlmd_{k+1}-\vlmd^\mu,\vlmd_{k+1}-\vlmd_k \rangle = \frac{1}{2}\big(
\norm{\vlmd_k-\vlmd^\mu}^2 + \norm{\vlmd_{k+1} - \vlmd_k}^2 - \norm{\vlmd_{k+1} - \vlmd^\mu}^2
\big) \,,
$$
and similarly,
$$
\langle -\vy_{k+1}+\vy_k,-\vy_{k+1}+\vy^\mu\rangle = \frac{1}{2}\big(
\norm{-\vy_k+\vy^\mu}^2 - \norm{-\vy_{k+1} + \vy^\mu}^2 - \norm{-\vy_{k+1} + \vy_k}^2
\big) \,,
$$
and by adding these together, we get:
\begin{align}
LHS  &= \frac{1}{2\beta} \lVert \vlmd_k-\vlmd^\mu \rVert ^2\nonumber
    -\frac{1}{2\beta}\lVert \vlmd_{k+1}-\vlmd^\mu \rVert ^2
    - \frac{1}{2\beta}\lVert \vlmd_{k+1}-\vlmd_k \rVert ^2 \nonumber\\
        &\quad +\frac{\beta}{2}\lVert -\vy_k+\vy^\mu \rVert ^2-\frac{\beta}{2}\lVert -\vy_{k+1}+\vy^\mu \rVert ^2
        -\frac{\beta}{2}\lVert -\vy_{k+1}+\vy_k \rVert ^2 \nonumber\\
        &\quad +\langle -\vy_{k+1}+\vy_k,\vlmd_{k+1}-\vlmd_k \rangle \,. \label{3.13}
\end{align}

On the other hand, \eqref{3.10} which states that
$\langle \hat{\nabla}g (  \vy_{k+1} ) ,\vy_{k+1}-\vy \rangle +\langle \vlmd_{k+1},-\vy_{k+1} + \vy \rangle = 0$, also asserts:
    \begin{equation}
        \langle \hat{\nabla}g (  \vy_k  )  ,\vy_k-\vy \rangle +\langle \vlmd_k,-\vy_k+\vy \rangle =0 \,.
        \label{3.14}
    \end{equation}

    Letting $\vy=\vy_k$ in \eqref{3.10}, and $\vy=\vy_{k+1}$ in \eqref{3.14}, and adding them together yields:
    \begin{align*}
        &\langle \hat{\nabla}g (  \vy_{k+1}  )  -\hat{\nabla}g (  \vy_k  ) ,\vy_{k+1}-\vy_k \rangle
        +\langle \vlmd_{k+1} -\vlmd_k,-\vy_{k+1}+\vy_k \rangle =0 \,.
    \end{align*}

    By the monotonicity of $\partial g$, we know that the first term of the above equality is non-negative. Thus, we have:
    \begin{equation}
        \langle \vlmd_{k+1}-\vlmd_k,-\vy_{k+1}+\vy_k \rangle \leq 0 \,.
        \label{3.15}
    \end{equation}

    Lastly, plugging it into~\eqref{3.13} gives the first inequality of Lemma~\ref{lm3.5}.

The second inequality of Lemma~\ref{lm3.5} follows similarly.
\end{proof}

%%%%%%%%%%%%%%%%%%%%%%%%%%%%%%%%%%%%%%%%%%%%%%%%%%%%%
\vspace{2em}
The following Lemma upper-bounds the sum of (i) the difference of  $f(\cdot)$ evaluated at $\vx_{k+1}$ and at $\vx^\mu$ and  (ii) the difference of $g(\cdot)$ evaluated at $\vy_{k+1}$ and at $\vy^\mu$; up to a term that depends on $\vx_{k+1}-\vy_{k+1}$ as well.
Recall that $(\vx^\mu,\vy^\mu, \vlmd^\mu)$ is a point on the central path.
\begin{lm}
For the $\vx_{k+1}$, $\vy_{k+1}$, and $\vlmd_{k+1}$ iterates of the ACVI---Algorithm \ref{alg:acvi}---we have:
    \begin{align}
f (  \vx_{k+1}  )  +g (  \vy_{k+1}  )   -f (  \vx^\mu  ) & -g (  \vy^\mu  )
        +\langle \vlmd^\mu,\vx_{k+1}-\vy_{k+1}\rangle \nonumber\\
        &\leq \frac{1}{2\beta}\lVert \vlmd_k-\vlmd^\mu \rVert ^2-\frac{1}{2\beta}\lVert \vlmd_{k+1}-\vlmd^\mu \rVert ^2\nonumber\\
        &\quad +\frac{\beta}{2}\lVert -\vy_k+\vy^\mu \rVert ^2-\frac{\beta}{2}\lVert -\vy_{k+1}+\vy^\mu \rVert ^2\nonumber\\
        &\quad  -\frac{1}{2\beta}\lVert \vlmd_{k+1}-\vlmd_k \rVert ^2-\frac{\beta}{2}\lVert -\vy_{k+1}+\vy_k \rVert ^2\label{3.16} \tag{L\ref{lm3.6}}
    \end{align}
    \label{lm3.6}
\end{lm}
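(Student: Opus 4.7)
The plan is to reduce Lemma~\ref{lm3.6} to Lemma~\ref{lm3.5} by two convexity inequalities (one for $f$ via $f_k$, one for $g$) and an application of Proposition~\ref{prop4}. The key observation is that the right-hand side of Lemma~\ref{lm3.6} coincides exactly with the right-hand side of the first inequality of Lemma~\ref{lm3.5}, so it suffices to prove:
\begin{equation*}
f(\vx_{k+1}) - f(\vx^\mu) + g(\vy_{k+1}) - g(\vy^\mu) \;\le\; \langle \hat\nabla f_k(\vx_{k+1}), \vx_{k+1}-\vx^\mu\rangle + \langle \hat\nabla g(\vy_{k+1}), \vy_{k+1}-\vy^\mu\rangle,
\end{equation*}
after which adding the common term $\langle\vlmd^\mu,\vx_{k+1}-\vy_{k+1}\rangle$ to both sides and invoking Lemma~\ref{lm3.5} closes the argument.

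First, I would handle the $g$-term directly. Since $g$ is convex and $\hat\nabla g(\vy_{k+1})\in\partial g(\vy_{k+1})$ by \eqref{3.9}, the subgradient inequality gives $g(\vy^\mu)\ge g(\vy_{k+1})+\langle \hat\nabla g(\vy_{k+1}), \vy^\mu-\vy_{k+1}\rangle$, which rearranges to the desired bound $g(\vy_{k+1})-g(\vy^\mu)\le \langle \hat\nabla g(\vy_{k+1}),\vy_{k+1}-\vy^\mu\rangle$. Second, for the $f$-term I cannot bound it directly by a subgradient of $f$, because the algorithm only gives access to a subgradient of $f_k$ (the linearized surrogate with $F$ frozen at $\vx_{k+1}$). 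So I would instead apply convexity of $f_k$ together with $\hat\nabla f_k(\vx_{k+1})\in\partial f_k(\vx_{k+1})$ from \eqref{3.9} to obtain $f_k(\vx_{k+1})-f_k(\vx^\mu)\le \langle \hat\nabla f_k(\vx_{k+1}),\vx_{k+1}-\vx^\mu\rangle$, and then bridge back to $f$ via Proposition~\ref{prop4}, which states precisely $f(\vx_{k+1})-f(\vx^\mu)\le f_k(\vx_{k+1})-f_k(\vx^\mu)$ under the monotonicity of $F$.

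Adding the two resulting inequalities gives the displayed inequality above. Then adding $\langle\vlmd^\mu,\vx_{k+1}-\vy_{k+1}\rangle$ to both sides produces, on the right, exactly the left-hand side of the first inequality in Lemma~\ref{lm3.5}, so substituting its upper bound yields the statement of Lemma~\ref{lm3.6}.

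The only nontrivial step is the one requiring Proposition~\ref{prop4}: $f_k$ depends on the iterate $\vx_{k+1}$ itself, so one cannot naively treat it as a generic convex surrogate, and it is essential that the monotonicity of $F$ forces $f_k(\vx_{k+1})-f_k(\vx^\mu)\ge f(\vx_{k+1})-f(\vx^\mu)$ in the correct direction to absorb the gap between the linearization and the true objective. The remaining steps are routine convex-analysis manipulations combined with the already-established Lemma~\ref{lm3.5}.
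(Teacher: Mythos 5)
Your proposal is correct and follows essentially the same route as the paper's proof: both use Proposition~\ref{prop4} to pass from $f$ to $f_k$, the subgradient inequalities furnished by \eqref{3.9} together with convexity of $f_k$ and $g$ to bound the function-value differences by the inner products, and then invoke the first inequality of Lemma~\ref{lm3.5}. The only (cosmetic) difference is that you split the argument into separate $f$- and $g$-steps before recombining, whereas the paper chains the two bounds in a single display.
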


\begin{proof}[Proof of Lemma~\ref{lm3.6}.]
From the convexity of $f_k ( \vx  ) $ and $g ( \vy  ) $;  from proposition \ref{prop4} on the relation between $f_k(\cdot)$ and $f(\cdot)$  which asserts that $f (  \vx_{k+1}  )  -f (  \vx^\mu  ) \leq f_k (  \vx_{k+1}  )  -f_k (  \vx^\mu  ) $;
as well as from Eq.~\eqref{3.9} which asserts that $\hat{\nabla}f_k (  \vx_{k+1}  )  \in \partial f_k (  \vx_{k+1}  )  \text{\,\,and\,\,} \hat{\nabla}g (  \vy_{k+1}  )  \in \partial g (  \vy_{k+1}  )$;
it follows for the LHS of Lemma~\ref{lm3.6} that:
    \begin{equation}
        \begin{split}
f (  \vx_{k+1}  )  + & g (  \vy_{k+1}  )  -f (  \vx^\mu  )  -g (  \vy^\mu  )
            +\langle \vlmd^\mu,\vx_{k+1}-\vy_{k+1}\rangle\\
%%%
&\leq f_k (  \vx_{k+1}  )  +g (  \vy_{k+1}  )  -f_k (  \vx^\mu  )  -g (  \vy^\mu  )
            +\langle \vlmd^\mu,\vx_{k+1}-\vy_{k+1}\rangle\\
%%%
&\leq\langle \hat{\nabla}f_k (  \vx_{k+1}  )  ,\vx_{k+1}-\vx^\mu \rangle
            +\langle \hat{\nabla}g (  \vy_{k+1}  )  ,\vy_{k+1}-\vy^\mu \rangle
            +\langle \vlmd^\mu,\vx_{k+1}-\vy_{k+1}\rangle \\
        \end{split}
    \end{equation}
Finally, by plugging in  the first part of Lemma~\ref{lm3.5}, Lemma~\ref{lm3.6} follows, that is:
\begin{equation}
        \begin{split}
f (  \vx_{k+1}  )  +g (  \vy_{k+1}  ) & -f (  \vx^\mu  )  -g (  \vy^\mu  )
            +\langle \vlmd^\mu,\vx_{k+1}-\vy_{k+1}\rangle\\
%%%
            &\leq \frac{1}{2\beta}\lVert \vlmd_k-\vlmd^\mu \rVert ^2-\frac{1}{2\beta}\lVert \vlmd_{k+1}-\vlmd^\mu\rVert ^2\\
            &\quad +\frac{\beta}{2}\lVert -\vy_k+\vy^\mu \rVert ^2-\frac{\beta}{2}\lVert -\vy_{k+1}+\vy^\mu \rVert ^2\\
            &\quad -\frac{1}{2\beta}\lVert \vlmd_{k+1}-\vlmd_k \rVert ^2-\frac{\beta}{2}\lVert -\vy_{k+1}+\vy_k \rVert ^2.
        \end{split}
    \end{equation}
\end{proof}

%%%%%%%%%%%%%%%%%%%%%%%%%%%%%%%%%%%%%%%%%%%%%%%%%%%%%
% \vspace{1em}
The following theorem upper bounds the analogous quantity but for $f_k(\cdot)$  (instead of  $f$ as does Lemma~\ref{lm3.6}), and further asserts that the difference between the $\vx_{k+1}$ and $\vy_{k+1}$ iterates of exact ACVI (Algorithm \ref{alg:acvi}) tends to $0$ asymptotically.
The inequality in Theorem~\ref{thm:3.1} plays an important role later when deriving the nonasymptotic convergence rate of ACVI.

\begin{thm}[Asymptotic convergence of $(\vx_{k+1}-\vy_{k+1})$ of ACVI]\label{thm:3.1}
For the $\vx_{k+1}$, $\vy_{k+1}$, and $\vlmd_{k+1}$ iterates of the ACVI---Algorithm \ref{alg:acvi}---we have:
    \begin{equation} \label{eq_app:upper_bound}  \tag{T\ref{thm:3.1}-$f_k$-UB}
    \begin{split}
    f_k (  \vx_{k+1}  )   &-f_k (  \vx^\mu_k )  +g (  \vy_{k+1}  )  -g (  \vy^\mu_k  ) \\
         &\leq \norm{\vlmd_{k+1}}\norm{\vx_{k+1}-\vy_{k+1}}+\beta\norm{\vy_{k+1}-\vy_k}\norm{\vx_{k+1}-\vx_k^\mu}\rightarrow0\,,
    \end{split}
    \end{equation}
    and
    \begin{equation*}
        \vx_{k+1}-\vy_{k+1}\rightarrow\mathbf{0}\,,
        \qquad
        \text{as $k\rightarrow \infty$} \,.
    \end{equation*}
\end{thm}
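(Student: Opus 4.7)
My plan is to first derive the displayed inequality and then use a summability argument (supplied by Lemma~\ref{lm3.6}) to drive both sides to zero. For the inequality, I would exploit convexity of $f_k$ and $g$ together with the subgradient inclusions~\eqref{3.9} to write
$$f_k(\vx_{k+1}) - f_k(\vx_k^\mu) + g(\vy_{k+1}) - g(\vy_k^\mu) \leq \langle \hat{\nabla}f_k(\vx_{k+1}), \vx_{k+1}-\vx_k^\mu\rangle + \langle \hat{\nabla}g(\vy_{k+1}), \vy_{k+1}-\vy_k^\mu\rangle.$$
I would then plug $(\vx,\vy)=(\vx_k^\mu,\vy_k^\mu)$ into identity~\eqref{3.11} of Lemma~\ref{lm3.4}; crucially, the equality $\vx_k^\mu=\vy_k^\mu$ from~\eqref{3.6_new} annihilates the mixed term $-\vx+\vy$, leaving the right-hand side equal to $-\langle \vlmd_{k+1}, \vx_{k+1}-\vy_{k+1}\rangle + \beta\langle \vy_k-\vy_{k+1},\vx_{k+1}-\vx_k^\mu\rangle$. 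A Cauchy--Schwarz bound on each of these two summands then yields exactly the target upper bound in~\eqref{eq_app:upper_bound}.

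For the asymptotic claim $\vx_{k+1}-\vy_{k+1}\to\boldsymbol{0}$ (and for showing the upper bound itself vanishes), I would invoke Lemma~\ref{lm3.6} with the \emph{fixed} central-path point $(\vx^\mu,\vy^\mu,\vlmd^\mu)$ whose left-hand side~\eqref{3.16} is non-negative by~\eqref{eq_app:lm3.1_1}. Summing the right-hand side of~\eqref{3.16} over $k=0,\dots,K-1$ telescopes the first four terms and leaves the per-step residuals, giving
$$\tfrac{1}{2\beta}\lVert \vlmd_K-\vlmd^\mu\rVert^2 + \tfrac{\beta}{2}\lVert \vy_K-\vy^\mu\rVert^2 + \sum_{k=0}^{K-1}\bigl[\tfrac{1}{2\beta}\lVert \vlmd_{k+1}-\vlmd_k\rVert^2 + \tfrac{\beta}{2}\lVert \vy_{k+1}-\vy_k\rVert^2\bigr] \;\leq\; \tfrac{1}{2\beta}\lVert \vlmd_0-\vlmd^\mu\rVert^2 + \tfrac{\beta}{2}\lVert \vy_0-\vy^\mu\rVert^2.$$
This single inequality delivers (i) boundedness of $\{\vlmd_k\}$ and $\{\vy_k\}$, and (ii) summability, hence convergence to zero, of $\lVert \vlmd_{k+1}-\vlmd_k\rVert$ and $\lVert \vy_{k+1}-\vy_k\rVert$. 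Combined with~\eqref{3.3}, fact (ii) yields $\vx_{k+1}-\vy_{k+1}=\tfrac{1}{\beta}(\vlmd_{k+1}-\vlmd_k)\to\boldsymbol{0}$, which is the second claim of the theorem and also forces the first Cauchy--Schwarz term in the upper bound to vanish.

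The remaining step — showing that the second Cauchy--Schwarz term $\beta\lVert \vy_{k+1}-\vy_k\rVert\,\lVert \vx_{k+1}-\vx_k^\mu\rVert$ also vanishes — reduces to uniform boundedness of $\lVert \vx_{k+1}-\vx_k^\mu\rVert$. This is the main obstacle, because $\vx_k^\mu$ is the KKT point of the $k$-dependent problem~\eqref{eq_app:new_points_mu}, so one cannot simply telescope against it. The resolution is that, because $g$ is the log-barrier, any $\vy$ with $g(\vy)<\infty$ must lie in $\mathcal{C}_<$, and thus $\vx_k^\mu=\vy_k^\mu \in \mathcal{C}_< \cap \mathcal{C}_= = \mathrm{int}\,\mathcal{C} \subseteq \mathcal{C}$, which is compact by assumption; the iterate $\vy_{k+1}$ lies in $\mathcal{C}_<$ for the same reason, and $\vx_{k+1}=\vy_{k+1}+\tfrac{1}{\beta}(\vlmd_{k+1}-\vlmd_k)$ is then bounded because $\{\vlmd_k\}$ is bounded from the telescoping above. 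Hence both factors are controlled, the upper bound tends to $0$, and the theorem follows.
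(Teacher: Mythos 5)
Your proposal is correct and follows essentially the same route as the paper: convexity plus the identity of Lemma~\ref{lm3.4} evaluated at $(\vx_k^\mu,\vy_k^\mu)$ (with $\vx_k^\mu=\vy_k^\mu$ killing the mixed term) and Cauchy--Schwarz give the displayed bound, while non-negativity of the right-hand side of Lemma~\ref{lm3.6} telescopes to give boundedness of $\{\vy_k\},\{\vlmd_k\}$, summability of the increments, and hence $\vx_{k+1}-\vy_{k+1}=\tfrac{1}{\beta}(\vlmd_{k+1}-\vlmd_k)\to\boldsymbol{0}$. The only cosmetic point is that membership of $\vy_{k+1}$ in $\mathcal{C}_<$ does not by itself give boundedness (that set need not be bounded); you in fact get boundedness of $\vy_{k+1}$ from the telescoping estimate, and of $\vx_k^\mu$ from compactness of $\mathcal{C}$, which is exactly how the paper controls $\lVert\vx_{k+1}-\vx_k^\mu\rVert$.
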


\begin{proof}[Proof of Theorem~\ref{thm:3.1}: Asymptotic convergence of $(\vx_{k+1}-\vy_{k+1})$ of ACVI]
Recall from \eqref{eq_app:lm3.1_1} of Lemma \ref{lm3.1} that by setting $\vx\equiv \vx_{k+1}, \vy\equiv\vy_{k+1}$ we asserted that:
$$
f (\vx_{k+1}) - f(\vx^\mu) + g(\vy_{k+1}) - g(\vy^\mu) + \langle \vlmd ^\mu,\vx_{k+1}-\vy_{k+1} \rangle \geq 0 \,.
$$
Further, notice that the LHS of the above inequality overlaps with that of~\eqref{3.16}.
This implies that the RHS of~\eqref{3.16} has to be non-negative.
Hence, we have that:
    \begin{equation}
        \begin{split}
            \frac{1}{2\beta}\lVert \vlmd_{k+1}-\vlmd_k \rVert ^2+\frac{\beta}{2}\lVert -\vy_{k+1}+\vy_k \rVert ^2
            \leq\ &\frac{1}{2\beta}\lVert \vlmd_k-\vlmd^\mu \rVert ^2-\frac{1}{2\beta}\lVert \vlmd_{k+1}-\vlmd^\mu \rVert ^2\\
            & +\frac{\beta}{2}\lVert -\vy_k+\vy^\mu \rVert ^2-\frac{\beta}{2}\lVert -\vy_{k+1}+\vy^\mu \rVert ^2.
        \end{split}
        \label{3.19}
    \end{equation}
Summing over $k=0,\dots,\infty$ gives:
    \begin{align*}
        \sum_{k=0}^{\infty} \Big(  \frac{1}{2\beta}\lVert \vlmd_{k+1}-\vlmd_k \rVert ^2+\frac{\beta}{2}\lVert -\vy_{k+1}+\vy_k \rVert ^2  \Big)
        %%%
        \leq \frac{1}{2\beta}\lVert \vlmd_0-\vlmd^\mu \rVert ^2+\frac{\beta}{2}\lVert -\vy_0+\vy^\mu \rVert ^2 \,,
    \end{align*}
from which we deduce that $\vlmd_{k+1}-\vlmd_k\rightarrow \boldsymbol{0}$ and $\vy_{k+1}-\vy_k\rightarrow \boldsymbol{0}$.

Also notice that by simply reorganizing~\eqref{3.19} we have:
\begin{equation}\label{3.19_sub}
\begin{split}
    \frac{1}{2\beta}\lVert \vlmd_{k+1}-\vlmd^\mu \rVert ^2+
    & \frac{\beta}{2}\lVert -\vy_{k+1}+\vy^\mu \rVert ^2\\
    \leq &\frac{1}{2\beta}\lVert \vlmd_k-\vlmd^\mu \rVert ^2+\frac{\beta}{2}\lVert -\vy_k+\vy^\mu \rVert ^2-\frac{1}{2\beta}\lVert \vlmd_{k+1}-\vlmd_k \rVert ^2-\frac{\beta}{2}\lVert -\vy_{k+1}+\vy_k \rVert ^2\\
    \leq &\frac{1}{2\beta}\lVert \vlmd_k-\vlmd^\mu \rVert ^2+\frac{\beta}{2}\lVert -\vy_k+\vy^\mu \rVert ^2\\
    \leq &\frac{1}{2\beta}\lVert \vlmd_0-\vlmd^\mu \rVert ^2+\frac{\beta}{2}\lVert -\vy_0+\vy^\mu \rVert ^2 \,,
\end{split}
\end{equation}
where the second inequality follows because the norm is non-negative.

From \eqref{3.19_sub} we can see that
$\lVert \vlmd_k-\vlmd^\mu \rVert ^2$ and $\lVert \vy_k-\vy^\mu \rVert ^2$ are bounded for all $k$, as well as $\norm{\vlmd_k}$.
Recall that:
\begin{equation*}
    \vlmd_{k+1}-\vlmd_k=\beta  (\vx_{k+1}-\vy_{k+1})  =\beta  (  \vx_{k+1}-\vx^\mu )  +\beta  (-\vy_{k+1}+\vy^\mu )\,,
\end{equation*}
where in the last equality, we add and subtract $\vx^\mu=\vy^\mu$.
Combining this with the fact that $\vlmd_{k+1} - \vlmd_k \rightarrow \boldsymbol{0}$ (see above), we deduce that $\vx_{k+1}-\vy_{k+1}\rightarrow \boldsymbol{0}$ and that $\vx_{k+1}-\vx^\mu$ is also bounded.

Using the convexity of $f_k(\cdot)$ and $g(\cdot)$ for the LHS of Theorem~\ref{thm:3.1} we have:
\begin{align*}
\text{LHS} &= f_k (\vx_{k+1}) -f_k (\vx^\mu_k) +g (  \vy_{k+1})- g(\vy^\mu_k)\\
&\leq \langle \hat{\nabla} f_k(\vx_{k+1}), \vx_{k+1}-\vx_k^\mu \rangle
+ \langle \hat{\nabla} g(\vy_{k+1}), \vy_{k+1}- \vy^\mu_k \rangle \,.
\end{align*}
Using \eqref{3.11} with $\vx\equiv\vx_k^\mu, \vy\equiv\vy^\mu_k$ we have:
$$
\text{LHS} \leq
-\langle\vlmd_{k+1},\vx_{k+1}-\vy_{k+1} \underbrace{-\vx_k^\mu+\vy_k^\mu}_{= \mathbf{0}, \text{ due to \eqref{3.6_new}}}
\rangle
+\beta\langle-\vy_{k+1}+\vy_k,\vx_{k+1}-\vx_k^\mu \rangle \,.
$$

Hence, it follows that:
\begin{align*}
    f_k (  \vx_{k+1}  )  -f_k (  \vx^\mu_k  )  & + g (  \vy_{k+1}  )  -g (  \vy^\mu_k ) \\
    \leq &-\langle \vlmd_{k+1},\vx_{k+1}-\vy_{k+1}\rangle +\beta \langle -\vy_{k+1}+\vy_k,\vx_{k+1}-\vx^\mu_k\rangle\\
    \leq & \norm{\vlmd_{k+1}}\norm{\vx_{k+1}-\vy_{k+1}}+\beta\norm{\vy_{k+1}-\vy_k}\norm{\vx_{k+1}-\vx_k^\mu} \,,
\end{align*}
where the last inequality follows from Cauchy-Schwarz.

Recall that $\mathcal{C}$ is compact and $D$ is the diameter of $\mathcal{C}$:
\begin{equation*}
    D \triangleq \underset{\vx,\vy\in\mathcal{C}}{\sup}\norm{\vx-\vy} \,.
\end{equation*}
Thus, we have:
$$
    \norm{\vy_{k+1}-\vy_{k}^\mu}=\norm{\vy_{k+1}-\vy^\mu}+\norm{\vy^\mu-\vy_{k+1}^\mu}\leq \norm{\vy_{k+1}-\vy^\mu}+D,
$$
which implies that $\lVert \vy_k-\vy_k^\mu \rVert$ are bounded for all $k$. Since:
\begin{equation*}
    \vlmd_{k+1}-\vlmd_k=\beta  (\vx_{k+1}-\vy_{k+1})  =\beta  (  \vx_{k+1}-\vx^\mu_k )  +\beta  (-\vy_{k+1}+\vy^\mu_k ),
\end{equation*}
we deduce that $\vx_{k+1}-\vx^\mu_k$ is also bounded. Thus, we have \eqref{eq_app:upper_bound}.
\end{proof}

%%%%%%%%%%%%%%%%%%%%%%%%%%%%%%%%%%%%%%%%%%%%%%%%%%%%%

\vspace{2em}
The following lemma states an important intermediate result that ensures that $\frac{1}{2\beta} \lVert \vlmd_{k+1}-\vlmd_k \rVert ^2+\frac{\beta}{2}\lVert -\vy_{k+1}+\vy_k \rVert ^2$ does not increase.
\begin{lm}[non-increment of $\frac{1}{2\beta} \lVert \vlmd_{k+1}-\vlmd_k \rVert ^2+\frac{\beta}{2}\lVert -\vy_{k+1}+\vy_k \rVert ^2$]\label{lm3.7}
For the $\vx_{k+1}$, $\vy_{k+1}$, and $\vlmd_{k+1}$ iterates of the ACVI---Algorithm \ref{alg:acvi}---we have:
    \begin{equation}\label{3.20} \tag{L\ref{lm3.7}}
        \begin{split}
            \frac{1}{2\beta} \lVert \vlmd_{k+1}-\vlmd_k \rVert ^2+\frac{\beta}{2}\lVert -\vy_{k+1}+\vy_k \rVert ^2
%%%%%%
\leq \frac{1}{2\beta}\lVert \vlmd_k-\vlmd_{k-1} \rVert ^2+\frac{\beta}{2}\lVert -\vy_k+\vy_{k-1} \rVert ^2.
        \end{split}
    \end{equation}
\end{lm}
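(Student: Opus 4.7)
My plan is to compare the first-order optimality condition of the $\vx$-subproblem at two consecutive indices $k$ and $k-1$, eliminate the normal-cone contribution using the affine structure of $\mathcal{C}_=$ together with monotonicity of $F$, and then combine with monotonicity of $\partial g$ to obtain the non-increment~\eqref{3.20}. It will be convenient to write $\Delta\vlmd_j:=\vlmd_{j+1}-\vlmd_j$ and $\Delta\vy_j:=\vy_{j+1}-\vy_j$, so that~\eqref{3.20} becomes the claim $V_k\le V_{k-1}$ with $V_j:=\tfrac{1}{2\beta}\|\Delta\vlmd_j\|^2+\tfrac{\beta}{2}\|\Delta\vy_j\|^2$.

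First I would rewrite~\eqref{3.1}, using~\eqref{3.3} to substitute $\vlmd_k+\beta(\vx_{k+1}-\vy_k)=\vlmd_{k+1}+\beta\Delta\vy_k$, as the inclusion $-\vlmd_{k+1}-\beta\Delta\vy_k\in\partial f_k(\vx_{k+1})=F(\vx_{k+1})+\text{range}(\mC^\intercal)$, and analogously at the previous index, $-\vlmd_k-\beta\Delta\vy_{k-1}\in\partial f_{k-1}(\vx_k)=F(\vx_k)+\text{range}(\mC^\intercal)$. Subtracting and taking the inner product with $\vx_{k+1}-\vx_k\in\ker\mC$, the $\text{range}(\mC^\intercal)$ parts cancel (both $\vx_{k+1}$ and $\vx_k$ satisfy $\mC\cdot=\vd$), and monotonicity of $F$ on $\mathcal{C}_=$ gives
\[\bigl\langle \Delta\vlmd_k+\beta(\Delta\vy_k-\Delta\vy_{k-1}),\ \vx_{k+1}-\vx_k\bigr\rangle\le 0.\]

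Next, I would substitute $\vx_{k+1}-\vx_k=\Delta\vy_k+\tfrac{1}{\beta}(\Delta\vlmd_k-\Delta\vlmd_{k-1})$---which is~\eqref{3.3} applied at indices $k$ and $k-1$---into the above display, and apply the three-point identity $\langle\va,\va-\vb\rangle=\tfrac{1}{2}(\|\va\|^2-\|\vb\|^2+\|\va-\vb\|^2)$ separately to the purely-$\vlmd$ and purely-$\vy$ quadratic pieces. This produces the telescoping difference $V_k-V_{k-1}$ together with three residual terms: (i) $\langle\Delta\vlmd_k,\Delta\vy_k\rangle$; (ii) $\tfrac{1}{2\beta}\|\Delta\vlmd_k-\Delta\vlmd_{k-1}\|^2+\tfrac{\beta}{2}\|\Delta\vy_k-\Delta\vy_{k-1}\|^2$; and (iii) the cross inner product $\langle\Delta\vy_k-\Delta\vy_{k-1},\Delta\vlmd_k-\Delta\vlmd_{k-1}\rangle$.

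To finish, I would show that all three residuals are non-negative. Residual (i) is non-negative because~\eqref{3.9} together with~\eqref{3.3} yield $\vlmd_{k+1}\in\partial g(\vy_{k+1})$ and $\vlmd_k\in\partial g(\vy_k)$, so monotonicity of $\partial g$ applies. Residuals (ii) and (iii) recombine into the single perfect square
\[\tfrac{1}{2}\bigl\lVert\tfrac{1}{\sqrt{\beta}}(\Delta\vlmd_k-\Delta\vlmd_{k-1})+\sqrt{\beta}(\Delta\vy_k-\Delta\vy_{k-1})\bigr\rVert^2\ge 0.\]
Rearranging then yields $V_k\le V_{k-1}$, which is precisely~\eqref{3.20}. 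The main delicate point is the bookkeeping needed to see that residuals (ii) and (iii) combine into one non-negative square; this is what lets the argument go through with nothing stronger than monotonicity of $F$ on $\mathcal{C}_=$ and of $\partial g$, and without any appeal to Lipschitz smoothness or strict monotonicity.
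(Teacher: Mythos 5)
Your proposal is correct and follows essentially the same route as the paper's proof of Lemma~\ref{lm3.7}: both compare the first-order conditions of the $\vx$- and $\vy$-subproblems at consecutive indices, invoke monotonicity of $F$ on $\mathcal{C}_=$ and of $\partial g$, and use the three-point identity to split the resulting inner product into the telescoping difference $V_k-V_{k-1}$ plus a nonnegative perfect square in $\frac{1}{\sqrt{\beta}}(\Delta\vlmd_k-\Delta\vlmd_{k-1})$ and $\sqrt{\beta}(\Delta\vy_k-\Delta\vy_{k-1})$. The only cosmetic difference is that you eliminate the equality-constraint contribution explicitly via $\partial\mathbbm{1}(\mC\vx=\vd)=\mathrm{range}(\mC^\intercal)\perp\ker\mC$, whereas the paper reaches the same bound $\langle F(\vx_{k+1})-F(\vx_k),\vx_{k+1}-\vx_k\rangle\ge 0$ through the convexity inequalities for $f_k$ and $f_{k-1}$, and you carry the $\partial g$-monotonicity term $\langle\Delta\vlmd_k,\Delta\vy_k\rangle$ as a residual rather than folding it into the left-hand side from the start.
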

\begin{proof}[Proof of Lemma \ref{lm3.7}]
    \eqref{3.11} gives:
    \begin{equation}\label{3.21}
        \begin{split}
            \langle \hat{\nabla}f_{k-1}\left( \vx_k \right), & \vx_k-\vx \rangle + \langle \hat{\nabla}g\left( \vy_k \right) ,\vy_k-\vy \rangle \\
            =&-\langle \vlmd_k,\vx_k-\vy_k-\vx+\vy \rangle +\beta \langle -\vy_k+\vy_{k-1},\vx_k-\vx \rangle.
        \end{split}
    \end{equation}

    Letting $( \vx,\vy,\vlmd)=( \vx_k,\vy_k,\vlmd_k ) $ in \eqref{3.11} and $( \vx,\vy,\vlmd)=( \vx_{k+1},\vy_{k+1},\vlmd_{k+1} )$ in \eqref{3.21}, and adding them  together, and using \eqref{3.3} yields:
    \begin{align*}
        &\langle \hat{\nabla}f_k\left( \vx_{k+1} \right) -\hat{\nabla}f_{k-1}\left( \vx_k \right),\vx_{k+1}-\vx_k \rangle +\langle \hat{\nabla}g\left( \vy_{k+1} \right)-\hat{\nabla}g\left( \vy_k \right) ,\vy_{k+1}-\vy_k \rangle\\
=&-\langle\vlmd_{k+1}-\vlmd_k,\vx_{k+1}-\vy_{k+1}-\vx_k+\vy_k \rangle
+\beta\langle-\vy_{k+1}+\vy_k-\left(-\vy_k+\vy_{k-1}\right),\vx_{k+1}-\vx_k \rangle \\
=&-\frac{1}{\beta}\langle\vlmd_{k+1}-\vlmd_k,\vlmd_{k+1}-\vlmd_k-\left(\vlmd_k-\vlmd_{k-1}\right)\rangle\\
&+\langle-\vy_{k+1}+\vy_k+\left(\vy_k-\vy_{k-1}\right),\vlmd_{k+1}-\vlmd_k+\beta\vy_{k+1}-\left(\vlmd_k-\vlmd_{k-1}+\beta\vy_k\right) \rangle\\
=&\frac{1}{2\beta}\left[  \lVert \vlmd_k-\vlmd_{k-1} \rVert ^2 -\lVert \vlmd_{k+1}-\vlmd_k \rVert ^2 - \lVert \vlmd_{k+1}-\vlmd_k-\left( \vlmd_k-\vlmd_{k-1} \right) \rVert ^2 \right]\\
&+\frac{\beta}{2} \left[  \lVert-\vy_k+\vy_{k-1}\rVert ^2  - \lVert-\vy_{k+1}+\vy_k  \rVert ^2 - \lVert-\vy_{k+1}+\vy_k-\left(-\vy_k+\vy_{k-1}\right)  \rVert ^2 \right]\\
&+\langle-\vy_{k+1}+\vy_k-\left(-\vy_k+\vy_{k-1}\right),\vlmd_{k+1}-\vlmd_k-\left( \vlmd_k-\vlmd_{k-1} \right)\rangle\\
=&\frac{1}{2\beta}\left(  \lVert \vlmd_k-\vlmd_{k-1} \rVert ^2 -\lVert \vlmd_{k+1}-\vlmd_k \rVert ^2 \right)
+\frac{\beta}{2}\left(\lVert-\vy_k+\vy_{k-1}\rVert ^2  - \lVert-\vy_{k+1}+\vy_k  \rVert ^2  \right)\\
&-\frac{1}{2\beta}  \lVert \vlmd_{k+1}-\vlmd_k -\left( \vlmd_k-\vlmd_{k-1} \right) \rVert ^2
-\frac{\beta}{2} \lVert -\vy_{k+1}+\vy_k -\left( -\vy_k+\vy_{k-1}\right)\rVert ^2\\
&+\langle -\vy_{k+1}+\vy_k -\left( -\vy_k+\vy_{k-1}\right),\vlmd_{k+1}-\vlmd_k -\left( \vlmd_k-\vlmd_{k-1} \right) \rangle \\
\leq& \frac{1}{2\beta}\left(  \lVert \vlmd_k-\vlmd_{k-1} \rVert ^2 -\lVert \vlmd_{k+1}-\vlmd_k \rVert ^2 \right)
+\frac{\beta}{2}\left(\lVert-\vy_k+\vy_{k-1}\rVert ^2  - \lVert-\vy_{k+1}+\vy_k  \rVert ^2  \right)\,.
\end{align*}

By the convexity of $f_k$ and $f_{k-1}$, we get:
\begin{align*}
    \langle \hat{\nabla}f_k\left( \vx_{k+1} \right), \vx_{k+1}-\vx_k\rangle\geq &f_k(\vx_{k+1})-f_k(\vx_{k})\,,\\
    -\langle \hat{\nabla}f_{k-1}\left( \vx_{k} \right), \vx_{k+1}-\vx_k\rangle\geq&f_{k-1}(\vx_{k})-f_{k-1}(\vx_{k+1})\,.\\
\end{align*}
    Adding them together gives that:
    \begin{equation*}
    \begin{split}
        \langle \hat{\nabla}f_k\left( \vx_{k+1} \right)-\hat{\nabla}f_{k-1}\left( \vx_{k} \right), \vx_{k+1}-\vx_k\rangle
        \geq& f_k(\vx_{k+1})-f_{k-1}(\vx_{k+1})-f_k(\vx_{k})+f_{k-1}(\vx_{k})\\
        =&
        \langle F(\vx_{k+1})-F(\vx_k),\vx_{k+1}-\vx_k\rangle\geq 0\,.
    \end{split}
    \end{equation*}
    Thus by the monotonicity of $F$ and $\hat{\nabla}g$,  \eqref{3.20} follows.
\end{proof}
%%%%%%%%%%%%%%%%%%%%%%%%%%%%%%%%%%%%%%%%%%%%%%%%%%%%%
\vspace{2em}
\begin{lm}\label{thm:3.2}
    If $F$ is monotone on $\mathcal{C}_=$, then for Algorithm \ref{alg:acvi}, we have:
    \begin{equation}\label{eq_app:upper_bound_rate_lf}\tag{L\ref{thm:3.2}-$1$}
        \begin{split}
            f_K\left( \vx_{K+1} \right) & +g\left( \vy_{K+1} \right) -f_K\left( \vx^\mu_K \right) -g\left( \vy^\mu_K \right) \\
            & \leq \frac{\Delta^\mu}{K+1}+\left( 2\sqrt{\Delta^\mu}+\frac{1}{\sqrt{\beta}}\norm{\vlmd^\mu}+\sqrt{\beta}D\right)\sqrt{\frac{\Delta^\mu}{K+1}} \,,
        \end{split}
    \end{equation}
\begin{equation}\label{eqapp:dxy}\tag{L\ref{thm:3.2}-$2$}
 \text{and} \qquad \lVert\vx_{K+1}-\vy_{K+1}\rVert \leq \sqrt{\frac{\Delta^\mu}{\beta \left( K+1 \right)}},
\end{equation}
where $\Delta^\mu \triangleq \frac{1}{\beta}\lVert \vlmd_0-\vlmd^\mu \rVert ^2+\beta \lVert \vy_0-\vy^\mu \rVert ^2$.
\end{lm}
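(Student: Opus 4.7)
\textbf{Proof proposal for Lemma~\ref{thm:3.2}.} The plan is to reduce both inequalities to a single ``per-step'' estimate on the quantity
$a_k \triangleq \frac{1}{2\beta}\lVert \vlmd_{k+1}-\vlmd_k\rVert^2 + \frac{\beta}{2}\lVert \vy_{k+1}-\vy_k\rVert^2$ at $k=K$, and then to feed this estimate into the right-hand side of~\eqref{eq_app:upper_bound} from Theorem~\ref{thm:3.1}. The whole argument is a careful bookkeeping of already-proved ingredients; I would not expect to introduce any new geometric idea.

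Step~1 (per-step bound on $a_K$). Inequality~\eqref{3.19} in the proof of Theorem~\ref{thm:3.1} telescopes to $\sum_{k=0}^{K} a_k \le \tfrac{1}{2\beta}\lVert \vlmd_0-\vlmd^\mu\rVert^2 + \tfrac{\beta}{2}\lVert \vy_0-\vy^\mu\rVert^2 = \tfrac{\Delta^\mu}{2}$, while Lemma~\ref{lm3.7} gives $a_K \le a_{K-1} \le \dots \le a_0$, hence $(K{+}1)a_K \le \sum_{k=0}^{K}a_k \le \tfrac{\Delta^\mu}{2}$, i.e.\ $a_K \le \tfrac{\Delta^\mu}{2(K+1)}$. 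In particular, $\lVert \vy_{K+1}-\vy_K\rVert^2 \le \tfrac{\Delta^\mu}{\beta(K+1)}$ and $\lVert \vlmd_{K+1}-\vlmd_K\rVert^2 \le \tfrac{\beta\Delta^\mu}{K+1}$. Using the dual update $\vlmd_{K+1}-\vlmd_K=\beta(\vx_{K+1}-\vy_{K+1})$ from~\eqref{3.3}, the latter bound immediately yields $\lVert \vx_{K+1}-\vy_{K+1}\rVert \le \sqrt{\tfrac{\Delta^\mu}{\beta(K+1)}}$, which is~\eqref{eqapp:dxy}.

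Step~2 (estimate of the auxiliary factors $\lVert\vlmd_{K+1}\rVert$ and $\lVert\vx_{K+1}-\vx_K^\mu\rVert$). From the monotonicity bound~\eqref{3.19_sub} (already derived inside the proof of Theorem~\ref{thm:3.1}) one gets $\lVert \vlmd_{K+1}-\vlmd^\mu\rVert \le \sqrt{\beta\Delta^\mu}$ and $\lVert \vy_{K+1}-\vy^\mu\rVert \le \sqrt{\Delta^\mu/\beta}$, so $\lVert \vlmd_{K+1}\rVert \le \sqrt{\beta\Delta^\mu}+\lVert \vlmd^\mu\rVert$ by the triangle inequality. For $\lVert \vx_{K+1}-\vx_K^\mu\rVert$ I would use $\vx_K^\mu=\vy_K^\mu$ from~\eqref{3.6_new} and insert $\vy_{K+1}$ and $\vy^\mu=\vx^\mu$ as pivots:
\begin{equation*}
\lVert \vx_{K+1}-\vx_K^\mu\rVert
\le \lVert \vx_{K+1}-\vy_{K+1}\rVert + \lVert \vy_{K+1}-\vy^\mu\rVert + \lVert \vy^\mu-\vy_K^\mu\rVert,
\end{equation*}
where the last summand is controlled by the compactness of $\mathcal{C}$, since both $\vy^\mu=\vx^\mu$ and $\vy_K^\mu=\vx_K^\mu$ lie in $\mathcal{C}$ (they are the KKT primal points of~\eqref{212} and~\eqref{eq_app:new_points_mu}), hence $\lVert \vy^\mu-\vy_K^\mu\rVert \le D$. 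Combining with Step~1 gives $\lVert \vx_{K+1}-\vx_K^\mu\rVert \le \sqrt{\tfrac{\Delta^\mu}{\beta(K+1)}}+\sqrt{\tfrac{\Delta^\mu}{\beta}}+D$.

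Step~3 (plug into~\eqref{eq_app:upper_bound} and simplify). Substituting the bounds of Steps~1 and~2 into
$\lVert \vlmd_{K+1}\rVert\lVert \vx_{K+1}-\vy_{K+1}\rVert + \beta\lVert \vy_{K+1}-\vy_K\rVert\lVert \vx_{K+1}-\vx_K^\mu\rVert$
and collecting the terms by their order in $1/\sqrt{K+1}$ produces exactly $\tfrac{\Delta^\mu}{K+1}+\bigl(2\sqrt{\Delta^\mu}+\tfrac{1}{\sqrt{\beta}}\lVert \vlmd^\mu\rVert+\sqrt{\beta}D\bigr)\sqrt{\tfrac{\Delta^\mu}{K+1}}$, which is~\eqref{eq_app:upper_bound_rate_lf}. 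The only mildly delicate point is the algebraic grouping that yields the ``$2\sqrt{\Delta^\mu}$'' coefficient (one $\sqrt{\Delta^\mu}$ comes from $\lVert\vlmd_{K+1}-\vlmd^\mu\rVert \cdot \lVert \vx_{K+1}-\vy_{K+1}\rVert$ and a second from $\beta\lVert \vy_{K+1}-\vy_K\rVert \cdot \lVert \vy_{K+1}-\vy^\mu\rVert$); apart from this, all of the work has been done in the preceding lemmas, so there is no real obstacle beyond careful arithmetic.
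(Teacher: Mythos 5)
Your proposal is correct and follows essentially the same route as the paper's proof: telescoping \eqref{3.19} together with the monotonicity of $\frac{1}{2\beta}\lVert\vlmd_{k+1}-\vlmd_k\rVert^2+\frac{\beta}{2}\lVert\vy_{k+1}-\vy_k\rVert^2$ from Lemma~\ref{lm3.7} to get the $O(1/(K{+}1))$ per-step bound, then bounding $\lVert\vlmd_{K+1}\rVert$ and $\lVert\vx_{K+1}-\vx_K^\mu\rVert$ via \eqref{3.19_sub}, the dual update, and the diameter $D$, and finally substituting into \eqref{eq_app:upper_bound}. Your accounting of where each term in the coefficient $2\sqrt{\Delta^\mu}+\frac{1}{\sqrt{\beta}}\lVert\vlmd^\mu\rVert+\sqrt{\beta}D$ originates matches the paper's computation exactly.
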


\begin{proof}[Proof of Lemma~\ref{thm:3.2}.]
    Summing \eqref{3.19} over $k=0,1,\dots,K$ and using the monotonicity of $\frac{1}{2\beta}\lVert \vlmd_{k+1}-\vlmd_k \rVert ^2+\frac{\beta}{2}\lVert-\vy_{k+1}+\vy_k \rVert ^2$ from Lemma \ref{lm3.7}, we have:
    \begin{align}
        \frac{1}{\beta}\lVert \vlmd_{K+1} -\vlmd_K \rVert ^2 & +\beta\lVert -\vy_{K+1}+\vy_K \rVert ^2 \nonumber\\
        \leq & \frac{1}{K+1}\sum_{k=0}^K \Big(  \frac{1}{2\beta}\lVert \vlmd_{k+1}-\vlmd_k \rVert ^2+\frac{\beta}{2}\lVert -\vy_{k+1}+\vy_k \rVert ^2  \Big) \nonumber\\
        \leq &\frac{1}{K+1}\left( \frac{1}{\beta}\lVert \vlmd_0-\vlmd^\mu \rVert ^2+\beta \lVert -\vy_0+\vy^\mu \rVert ^2 \right)\,.\label{3.22}
    \end{align}

    From this, we deduce that:
    \begin{equation*}
        \beta \lVert \vx_{K+1}-\vy_{K+1}\rVert=\lVert \vlmd_{K+1}-\vlmd_K \rVert \leq \sqrt{\frac{\beta\Delta^\mu}{K+1}},
    \end{equation*}
    \begin{equation*}
        \lVert-\vy_{K+1}+\vy_K \rVert \leq \sqrt{\frac{\Delta^\mu}{\beta \left( K+1 \right)}}.
    \end{equation*}

    On the other hand, \eqref{3.19_sub} gives:
    \begin{equation*}
        \frac{1}{2\beta}\lVert \vlmd_{K+1}-\vlmd^\mu \rVert ^2+\frac{\beta}{2}\lVert -\vy_{K+1}+\vy^\mu \rVert ^2 \leq\frac{1}{2}\Delta^\mu \,.
    \end{equation*}

    Hence, we have:
    \begin{equation}\label{3.23}
        \lVert \vlmd_{K+1}-\vlmd^\mu \rVert \leq \sqrt{\beta \Delta^\mu},
    \end{equation}
    \begin{equation*}
        \lVert -\vy_{K+1}+\vy^\mu \rVert \leq \sqrt{\frac{\Delta^\mu}{\beta}} \,.
    \end{equation*}

    Furthermore, we have:
    \begin{equation*}
        \norm{\vy_{K+1}-\vy_K^\mu}\leq\norm{\vy^{K+1}-\vy^\mu}+\norm{\vy^\mu-\vy^K_\mu}\leq\sqrt{\frac{\Delta^\mu}{\beta}}+D\,,
    \end{equation*}

    \begin{align*}
        \norm{\vx_{K+1}-\vx_K^\mu}=&\norm{\frac{1}{\beta}(\vlmd_{K+1}-\vlmd_K)-(-\vy_{K+1}+\vy_K^\mu)}\\
       \leq& \frac{1}{\beta}\norm{\vlmd_{K+1}-\vlmd_K}+\norm{\vy_{K+1}-\vy_K^\mu}\\
       \leq&\sqrt{\frac{\Delta^\mu}{\beta(K+1)}}+\sqrt{\frac{\Delta^\mu}{\beta}}+D\,,
    \end{align*}
    and
    \begin{equation*}
        \norm{\vlmd_{K+1}}\leq\norm{\vlmd^\mu}+\norm{\vlmd_{K+1}-\vlmd^\mu}\leq\norm{\vlmd^\mu}+\sqrt{\beta\Delta^\mu}\,.
    \end{equation*}
    Then using \eqref{eq_app:upper_bound} in Lemma~\ref{thm:3.1}, we have \eqref{eq_app:upper_bound_rate}.
\end{proof}

\paragraph{Discussion.}
Lemma~\ref{thm:3.2} has an analogous form to Theorem 7 in \citep{yang2023acvi}, but here we change the reference points from $\vx^\mu$ and $\vy^\mu$ in (28) of \citep{yang2023acvi} to $\vx_K^\mu$ and $\vy_K^\mu$ we newly introduce in our paper. We stress that this change, together with our observation that $\vx_k^\star$ is the reference point of the gap function at $\vx_{k+1}$ (see \eqref{eq_app:explicit_gap}), is crucial to weakening the assumptions in \citep{yang2023acvi}. We can see this from the following proof sketch of Theorem~\ref{thm:exact_acvi}:
\begin{enumerate}
    \item Lemma~\ref{thm:3.2} gives
    $$f_K\left( \vx_{K+1} \right)+g\left( \vy_{K+1} \right) -f_K\left( \vx^\mu_K \right) -g\left( \vy^\mu_K \right) = \mathcal{O}(\frac{1}{\sqrt{K}}).$$

    \item Note that $g(\vy_{K+1})-g(\vy_K^\mu)\rightarrow 0$, $\vx_K^\mu\rightarrow\vx_K^\star$ when $\mu\rightarrow 0$. Using the above inequality, we have that when $\mu$ is small,
    $$| f_K(\vx_{K+1})-f_K(\vx_K^\star)|= \mathcal{O}(\frac{1}{\sqrt{K}}).$$

    \item With observation \eqref{eq_app:explicit_gap}, we could write the gap function at $\vx_{K+1}$ explicitly as
    $$\mathcal{G}(\vx_{K+1},\mathcal{C})=f_K(\vx_{K+1})-f_K(\vx_K^\star).$$
    Combining the above two expressions, we reach our  conclusion:
    $$\mathcal{G}(\vx_{K+1},\mathcal{C})=\mathcal{O}(\frac{1}{\sqrt{K}}).$$
\end{enumerate}

In contrast, \citep{yang2023acvi} gives the upper bound w.r.t. the gap function through two indirect steps, each introducing some extra assumptions:
\begin{enumerate}
    \item Theorem 7 in \citep{yang2023acvi} gives
    $$f_K\left( \vx_{K+1} \right)+g\left( \vy_{K+1} \right) -f_K\left( \vx^\mu \right) -g\left( \vy^\mu \right) = \mathcal{O}(\frac{1}{\sqrt{K}}),$$
    which leads to
    $|F(\vx_K)^\intercal(\vx_{K+1}-\vx^\star)|= \mathcal{O}(\frac{1}{\sqrt{K}})$ and $|F(\vx^\star)^\intercal(\vx_{K+1}-\vx^\star)|= \mathcal{O}(\frac{1}{\sqrt{K}})$ (the first indirect bound) when $\mu$ is small enough.
    \item Under either the $\xi$-monotonicity assumption in Thm. 2 or assumption (iii) in Thm. 3 of \citep{yang2023acvi}, they are able to bound the iterate distance using the above results as follows:
    $$\norm{\vx_K-\vx^\star}=\mathcal{O}(\frac{1}{\sqrt{K}})$$
    (the second indirect bound).
    \item Finally, by assuming Lipschitzness of $F$, they derive from the above bound that
    $$\mathcal{G}(\vx_{K+1},\mathcal{C})=\mathcal{O}(\frac{1}{\sqrt{K}}).$$

\end{enumerate}

%%%%%%%%%%%%%%%%%%%%%%%%%%%%%%%%%%%%%%%%%%%%%%%%%%%%%
\subsubsection{Proving Theorem~\ref{thm:exact_acvi}}
%%%%%%%%%%%%%%%%%%%%%%%%%%%%%%%%%%%%%%%%%%%%%%%%%%%%%
We are now ready to prove Theorem~\ref{thm:exact_acvi}. Here, we give a nonasymptotic convergence rate of Algorithm~\ref{alg:acvi}.

\begin{thm}[Restatement of Theorem~\ref{thm:exact_acvi}]\label{thm_app:rate}
Given an continuous operator $F\colon \mathcal{X}\to \R^n$, assume that:
\begin{itemize}
    \item[\textit{(i)}]  F is monotone on $\mathcal{C}_=$, as per Def.~\ref{def:monotone};
    \item[\textit{(ii)}] $F$ is either strictly monotone on $\mathcal{C}$ or one of $\varphi_i$ is strictly convex.
\end{itemize}
Let $(\vx_K^{(t)}, \vy_K^{(t)}, \vlmd_K^{(t)})$ denote the last iterate of Algorithm~\ref{alg:acvi}.
Given any fixed $ K \in \NN_+$, run with sufficiently small $\mu_{-1}$, then for all $ t \in [T]$, we have:
\begin{equation}\label{eq_app:na-rate}\tag{na-Rate}
    \mathcal{G}(\vx_{K},\mathcal{C})\leq \frac{2\Delta}{K}+2\left( 2\sqrt{\Delta}+\frac{1}{\sqrt{\beta}}\norm{\vlmd^\star}+\sqrt{\beta}D+1+M\right)\sqrt{\frac{\Delta}{K}}
\end{equation}
and
\begin{equation}
    \norm{\vx^K-\vy^K}\leq 2\sqrt{\frac{\Delta}{\beta K}}\,,
\end{equation}
where $\Delta \triangleq \frac{1}{\beta}\lVert \vlmd_0-\vlmd^\star \rVert ^2+\beta \lVert \vy_0-\vy^\star \rVert ^2$ and $D \triangleq \underset{\vx,\vy\in\mathcal{C}}{\sup}\norm{\vx-\vy}$, and $M\triangleq\underset{\vx\in\mathcal{C}}{\sup}\norm{F(\vx)}$.
\end{thm}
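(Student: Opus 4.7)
The plan is to convert the bound of Lemma~\ref{thm:3.2} on $f_K(\vx_{K+1}) + g(\vy_{K+1}) - f_K(\vx_K^\mu) - g(\vy_K^\mu)$ into a bound on $\mathcal{G}(\vx_{K+1},\mathcal{C})$ via the explicit gap identity highlighted in the discussion after Lemma~\ref{thm:3.2}. The crucial observation is that, since $\vx_{K+1}\in\mathcal{C}_=$ makes the indicator $\mathbbm{1}(\mC\vx_{K+1}=\vd)$ vanish, and since $\vx_K^\star$ (the KKT point of problem~\eqref{eq_app:new_points_limit}) is precisely a minimizer of $\langle F(\vx_{K+1}),\cdot\rangle$ over $\mathcal{C}$, we have $\mathcal{G}(\vx_{K+1},\mathcal{C}) = \max_{\vx\in\mathcal{C}}\langle F(\vx_{K+1}),\vx_{K+1}-\vx\rangle = f_K(\vx_{K+1}) - f_K(\vx_K^\star)$. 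This rewriting is the linchpin: it shows that ACVI implicitly drives down the very quantity we wish to control, even though $\vx_K^\star$ never appears in the algorithm.

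\textbf{Key steps.} First, I would introduce $\vx_K^\star$ as reference point via the telescoping split
\[
\mathcal{G}(\vx_{K+1},\mathcal{C}) = \underbrace{\bigl[f_K(\vx_{K+1}) + g(\vy_{K+1}) - f_K(\vx_K^\mu) - g(\vy_K^\mu)\bigr]}_{\text{(A)}} + \underbrace{\bigl[f_K(\vx_K^\mu) - f_K(\vx_K^\star)\bigr]}_{\text{(B)}} + \underbrace{\bigl[g(\vy_K^\mu) - g(\vy_{K+1})\bigr]}_{\text{(C)}}.
\]
Term (A) is exactly the quantity bounded by $\frac{\Delta^\mu}{K+1}+(2\sqrt{\Delta^\mu}+\tfrac{1}{\sqrt{\beta}}\|\vlmd^\mu\|+\sqrt{\beta}D)\sqrt{\frac{\Delta^\mu}{K+1}}$ in~\eqref{eq_app:upper_bound_rate_lf} from Lemma~\ref{thm:3.2}. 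Term (B) is handled by continuity of $F$ together with $\vx_K^\mu\to\vx_K^\star$ as $\mu\to 0$, which is assured by Proposition~\ref{prop:CP} (whence the need for assumption~\textit{(ii)} of the theorem). Term (C) is handled by noting that $g(\vy)=-\mu\sum_i\log(-\varphi_i(\vy))$ is proportional to $\mu$, so both $g(\vy_K^\mu)$ and $g(\vy_{K+1})$ vanish in the limit $\mu\to 0$, as long as the arguments remain in $\mathcal{C}_<$ (which the iterates satisfy by construction of the log-barrier subproblem). Finally, I would replace $\Delta^\mu$ by $\Delta$ and $\|\vlmd^\mu\|$ by $\|\vlmd^\star\|$ using their continuity in $\mu$, choosing $\mu_{-1}$ (and hence $\mu_t$) small enough that the resulting error is absorbed into the additive $1+M$ constant in~\eqref{eq_app:na-rate}. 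The iterate-gap bound $\|\vx_K^{(t)}-\vy_K^{(t)}\|\le 2\sqrt{\Delta/(\beta K)}$ follows directly from~\eqref{eqapp:dxy} upon sending $\Delta^\mu\to\Delta$.

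\textbf{Main obstacle.} The delicate step is choosing $\mu_{-1}$ small enough, \emph{uniformly in $K$}, so that the residuals from (B) and (C) can be packaged into the $\sqrt{\Delta/K}$ rate without inflating the constants. Pointwise vanishing of these two pieces in $\mu$ is not enough a priori: one must ensure that the displacements $\vx_K^\mu-\vx_K^\star$ and the barrier slack at $\vy_K^\mu,\vy_{K+1}$ are controlled using only the problem-level quantities $D=\sup_{\vx,\vy\in\mathcal{C}}\|\vx-\vy\|$ and $M=\sup_{\vx\in\mathcal{C}}\|F(\vx)\|$ (rather than $K$-dependent quantities). This accounts for the otherwise mysterious additional $1+M$ term appearing in the final rate relative to Lemma~\ref{thm:3.2}, and is where the compactness of $\mathcal{C}$ together with the central-path existence (Proposition~\ref{prop:CP}) play an essential role.
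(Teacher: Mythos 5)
Your proposal is correct and follows essentially the same route as the paper's proof: the same explicit-gap identity $\mathcal{G}(\vx_{K+1},\mathcal{C})=f_K(\vx_{K+1})-f_K(\vx_K^\star)$, the same three-way split into the Lemma~\ref{thm:3.2} bound plus the residuals $f_K(\vx_K^\mu)-f_K(\vx_K^\star)$ and $g(\vy_K^\mu)-g(\vy_{K+1})$ (controlled via $\vx_K^\mu\to\vx_K^\star$ and $g(\vy_{K+1})-g(\vy_K^\mu)\to 0$ from the central-path/continuity results), and the same final passage $\Delta^\mu\to\Delta$, $\|\vlmd^\mu\|\to\|\vlmd^\star\|$ yielding the factor $2$. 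The only inessential difference is that you worry about choosing $\mu_{-1}$ uniformly in $K$, whereas the theorem fixes $K$ first and lets $\mu_{-1}$ depend on it, so the paper simply picks $\mu_{-1}$ small enough that both residuals are at most $\sqrt{\Delta^\mu/(K+1)}$, which is exactly where the $1+M$ enters.
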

\begin{proof}[Proof of Theorem~\ref{thm_app:rate}]
    Note that
    \begin{align*}
        \eqref{eq_app:new_points_limit}
    &\Leftrightarrow
    \underset{\vx\in\mathcal{C}}{\min}\langle F(\vx_{k+1}), \vx\rangle\\
    &\Leftrightarrow
    \underset{\vx\in\mathcal{C}}{\max} \langle F(\vx_{k+1}), \vx_{k+1}-\vx\rangle\\
    &\Leftrightarrow\mathcal{G}(\vx_{k+1}, \mathcal{C})\,,
    \end{align*}
from which we deduce
\begin{equation}\label{eq_app:explicit_gap}
    \mathcal{G}(\vx_{k+1}, \mathcal{C})=\langle F(\vx_{k+1}), \vx_{k+1}-\vx_k^\star\rangle\,, \forall k\,.
\end{equation}

For any fixed $K\in\NN$, by Corollary 2.11 in~\citep{chu1998continuity} we know that
\begin{align}
    &\vx^\mu_K\rightarrow\vx^\star_K\,,\nonumber\\
    &g(\vy_{K+1})-g(\vy_K^\mu)\rightarrow 0\,,\nonumber\\
    &\Delta^\mu\rightarrow\frac{1}{\beta}\lVert \vlmd_0-\vlmd^\star \rVert ^2+\beta \lVert \vy_0-\vy^\star \rVert ^2=\Delta\,.\label{eq_app:Delta}
\end{align}

Thus, there exists $\mu_{-1}>0,\,s.t.\forall 0<\mu<\mu_{-1}$,
\begin{align*}
    \norm{\vx^\mu_K-\vx^\star_K} &\leq \sqrt{\frac{\Delta^\mu}{K+1}}\,,\\
    |g(\vy_{K+1})-g(\vy_K^\mu)|&\leq \sqrt{\frac{\Delta^\mu}{K+1}}\,.
\end{align*}

Combining with Lemma~\ref{thm:3.2}, we have
\begin{equation}
    \begin{split}
        \langle F(\vx_{K+1}), \ & \vx_{K+1}-\vx^\mu_K\rangle \\
        & = f_K\left( \vx_{K+1} \right)  -f_K\left( \vx^\mu_K \right) \\
        & \leq\frac{\Delta^\mu}{K+1}+\left( 2\sqrt{\Delta^\mu}+\frac{1}{\sqrt{\beta}}\norm{\vlmd^\mu}+\sqrt{\beta}D\right)\sqrt{\frac{\Delta^\mu}{K+1}}+g\left( \vy^\mu_K \right)-g\left( \vy_{K+1} \right)\\
        & \leq  \frac{\Delta^\mu}{K+1}+\left( 2\sqrt{\Delta^\mu}+\frac{1}{\sqrt{\beta}}\norm{\vlmd^\mu}+\sqrt{\beta}D+1\right)\sqrt{\frac{\Delta^\mu}{K+1}}\,.
    \end{split}
\end{equation}

Using the above inequality, we have
\begin{align}
    \mathcal{G}(\vx_{K+1},\mathcal{C})=&\langle F(\vx_{K+1}), \vx_{K+1}-\vx^\star_K\rangle\\
    =&\langle F(\vx_{K+1}), \vx_{K+1}-\vx^\mu_K\rangle+\langle F(\vx_{K+1}),\vx^\mu_K-\vx^\star_K\rangle\\
    \leq &\langle F(\vx_{K+1}), \vx_{K+1}-\vx^\mu_K\rangle+\norm{F(\vx_{K+1})}\norm{\vx^\mu_K-\vx^\star_K}\\
    \leq & \frac{\Delta^\mu}{K+1}+\left( 2\sqrt{\Delta^\mu}+\frac{1}{\sqrt{\beta}}\norm{\vlmd^\mu}+\sqrt{\beta}D+1+M\right)\sqrt{\frac{\Delta^\mu}{K+1}}.
\end{align}

Moreover, by \eqref{eq_app:Delta}, we can choose small enough $\mu_{-1}$ so that
\begin{equation*}
    \mathcal{G}(\vx_{K+1},\mathcal{C})\leq \frac{2\Delta}{K+1}+2\left( 2\sqrt{\Delta}+\frac{1}{\sqrt{\beta}}\norm{\vlmd^\star}+\sqrt{\beta}D+1+M\right)\sqrt{\frac{\Delta}{K+1}}\,,
\end{equation*}
and
\begin{equation}\label{eq_app:feasible}
    \norm{\vx_{K+1}-\vy_{K+1}}\leq 2\sqrt{\frac{\Delta}{\beta (K+1)}},
\end{equation}
where \eqref{eq_app:feasible} uses \eqref{eqapp:dxy} in Lemma~\ref{thm:3.2}.
\end{proof}

\clearpage
%%%%%%%%%%%%%%%%%%%%%%%%%%%%%%%%%%%%%%%%%%%%%%%%%%%%%
\subsection{Proof of Theorem~\ref{thm:inexact_acvi}: Last-iterate convergence of inexact ACVI for Monotone Variational Inequalities}
%%%%%%%%%%%%%%%%%%%%%%%%%%%%%%%%%%%%%%%%%%%%%%%%%%%%%
\subsubsection{Useful lemmas from previous works}
The following lemma is Lemma $1$ from \citep{schmidt2011convergence}.

\begin{lm}[Lemma $1$ in~\citep{schmidt2011convergence}]\label{lm:series}
    Assume that the nonnegative sequence $\{u_k\}$ satisfies the following recursion for all $k\geq1:$
    \begin{equation*}
        u_k^2\leq S_k+\sum_{i=1}^k \lambda_i u_i\,,
    \end{equation*}
    with $\{S_k\}$ an increasing sequence, $S_0\geq u_0^2$ and $\lambda_i\geq0$ for all $i$. Then, for all $k\geq1$, it follows:
    \begin{equation*}
        u_k\leq \frac{1}{2}\sum_{i=1}^k \lambda_i +\left(S_k+\left(\frac{1}{2}\sum_{i=1}^k \lambda_i\right)^2\right)^{1/2}\,.
    \end{equation*}
\end{lm}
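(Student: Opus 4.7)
The plan is to avoid an inductive unrolling of the recursion and instead rely on a clean ``max trick.'' Fix $k \geq 1$ and let $k^\star \in \arg\max_{0 \leq i \leq k} u_i$, so that in particular $u_k \leq u_{k^\star}$. It then suffices to bound $u_{k^\star}$ by the right-hand side of the lemma. There are two cases to handle. If $k^\star = 0$, then $u_k \leq u_0 \leq \sqrt{S_0} \leq \sqrt{S_k}$ by the assumption $S_0 \geq u_0^2$ and monotonicity of $\{S_k\}$, which is dominated by the claimed bound since $\lambda_i \geq 0$. If $k^\star \geq 1$, I would apply the hypothesized recursion at index $k^\star$.

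The second case is where the argument does its work. Writing $\Lambda_k \triangleq \sum_{i=1}^k \lambda_i$, the recursion at $k^\star$ combined with $u_i \leq u_{k^\star}$ for all $i \leq k^\star$ and with the monotonicity $S_{k^\star} \leq S_k$ gives
\begin{equation*}
u_{k^\star}^2 \;\leq\; S_{k^\star} + \sum_{i=1}^{k^\star} \lambda_i u_i \;\leq\; S_k + u_{k^\star}\sum_{i=1}^{k^\star}\lambda_i \;\leq\; S_k + \Lambda_k\, u_{k^\star}.
\end{equation*}
This is a quadratic inequality in the scalar $u_{k^\star}$, namely $u_{k^\star}^2 - \Lambda_k u_{k^\star} - S_k \leq 0$. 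Its positive root is
\begin{equation*}
\tfrac{1}{2}\Lambda_k + \tfrac{1}{2}\sqrt{\Lambda_k^2 + 4 S_k} \;=\; \tfrac{1}{2}\Lambda_k + \sqrt{S_k + (\tfrac{1}{2}\Lambda_k)^2},
\end{equation*}
so $u_{k^\star}$ cannot exceed this quantity. Combining with $u_k \leq u_{k^\star}$ yields exactly the stated bound.

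The substantive step is recognizing that one should not try to bound $u_k$ directly by induction; the cross-term $\lambda_k u_k$ in the recursion is self-referential, so an inductive bound at step $k$ uses a bound at step $k$, which is circular. The max trick dissolves this self-reference: by evaluating the recursion at the \emph{index where $u$ is largest}, one gets a clean inequality $u_{k^\star}^2 \leq S_k + \Lambda_k u_{k^\star}$ whose right-hand side is affine in the unknown $u_{k^\star}$, so a single scalar quadratic finishes the proof. The only mild subtleties are (i) including $i=0$ in the argmax to handle the degenerate case where the maximum is achieved at the initial point and the recursion is never invoked, which is why the hypothesis $S_0 \geq u_0^2$ is placed in the statement, and (ii) using monotonicity of $\{S_k\}$ and nonnegativity of the $\lambda_i$ to replace $S_{k^\star}$ and $\Lambda_{k^\star}$ by $S_k$ and $\Lambda_k$ so that the final bound is expressed solely in terms of the step-$k$ quantities appearing in the statement.
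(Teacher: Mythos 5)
Your proof is correct, and it takes a genuinely different route from the one in the paper. The paper reproduces the inductive argument of \citet{schmidt2011convergence}: it tracks the running maximum $v_{k-1}=\max\{u_1,\dots,u_{k-1}\}$, derives $u_k\leq\frac{\lambda_k}{2}+\bigl(S_k+\frac{\lambda_k^2}{4}+v_{k-1}\sum_{i=1}^{k-1}\lambda_i\bigr)^{1/2}$ by completing the square in $u_k$ only (isolating the self-referential term $\lambda_k u_k$), and then closes the induction with a monotone fixed-point comparison between $v_{k-1}$ and the target bound $v_{k-1}^\star$, split into the cases $v_{k-1}\leq v_{k-1}^\star$ and $v_{k-1}\geq v_{k-1}^\star$. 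You instead apply the recursion once, at the index $k^\star$ where $u$ is maximized over $\{0,\dots,k\}$, which turns the entire right-hand side into an affine function of the single unknown $u_{k^\star}$ and reduces the lemma to one scalar quadratic inequality; the self-reference that the paper handles by completing the square and inducting simply disappears. Your bookkeeping is also right: $S_k\geq S_0\geq u_0^2\geq 0$ justifies the square root and the degenerate case $k^\star=0$, nonnegativity of the $u_i$ and $\lambda_i$ justifies enlarging $\sum_{i=1}^{k^\star}\lambda_i$ to $\sum_{i=1}^{k}\lambda_i$, and monotonicity of $\{S_k\}$ justifies replacing $S_{k^\star}$ by $S_k$, so the final bound is expressed in step-$k$ quantities exactly as stated. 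What each approach buys: yours is shorter, avoids the induction and the fixed-point case analysis entirely, and makes the form of the bound (half the sum of the $\lambda_i$ plus a square root) transparent as the positive root of $x^2-\Lambda_k x-S_k=0$; the paper's version has the advantage of being a verbatim citation of the original lemma's proof, but proves nothing more. Both arguments yield the identical constant.
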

\begin{proof}
    We prove the result by induction. It is true for $k=0$ (by assumption). We assume it is true for $k-1$, and we denote by $v_{k-1}=\max\{u_1,\dots,u_{k-1}\}$. From the recursion, we deduce:
    \begin{equation}
        (u_k-\lambda_k/2)^2\leq S_k+\frac{\lambda_k^2}{4}+v_{k-1}\sum_{i=1}^{k-1}\lambda_i\,,
    \end{equation}
    leading to
    \begin{equation}
        u_k\leq\frac{\lambda_k}{2}+\left( S_k+\frac{\lambda_k^2}{4}+v_{k-1}{k-1}\sum_{i=1}^{k-1}\lambda_i\right)^{1/2}\,,
    \end{equation}
    and thus
    \begin{equation}
        u_k\leq\max \left\{v_{k-1},\frac{\lambda_k}{2}+\left( S_k+\frac{\lambda_k^2}{4}+v_{k-1}{k-1}\sum_{i=1}^{k-1}\lambda_i\right)^{1/2}\right\}\,.
    \end{equation}

    Let $v^\star_{k-1}\triangleq\frac{1}{2}\sum_{i=1}^k \lambda_i +\left(S_k+\left(\frac{1}{2}\sum_{i=1}^k \lambda_i\right)^2\right)^{1/2}$. Note that
    \begin{align*}
        &v_{k-1}=\frac{\lambda_k}{2}+\left( S_k+\frac{\lambda_k^2}{4}+v_{k-1}{k-1}\sum_{i=1}^{k-1}\lambda_i\right)^{1/2}\\
        &\Leftrightarrow v_{k-1}=v_{k-1}^\star\,.
    \end{align*}

    Since the two terms in the max are increasing functions of $v_{k-1}$, it follows that if $v_{k-1}\leq v_{k-1}^\star$, then $v_k\leq v_{k-1}^\star$.
    Also note that
    \begin{align*}
        &v_{k-1}\geq\frac{\lambda_k}{2}+\left( S_k+\frac{\lambda_k^2}{4}+v_{k-1}{k-1}\sum_{i=1}^{k-1}\lambda_i\right)^{1/2}\\
        &\Leftrightarrow v_{k-1}\geq v_{k-1}^\star\,.
    \end{align*}

    From which we deduce that if $v_{k-1}\geq v_{k-1}^\star$, then $v_k\leq v_{k-1}$, and the induction hypotheses ensure that the property is satisfied for $k$.
\end{proof}

In the convergence rate analysis of inexact ACVI-Algorithm~\ref{alg:inexact_acvi}, we need the following definition \citep{bertsekas2003convex}:
\begin{definition}[$\varepsilon$-subdifferential]\label{def_app:eps_subdfrt}
    Given a convex function $\psi:\,\R^n\rightarrow\R$ and a positive scalar $\varepsilon$, a vector $\va\in\R^n$ is called an $\varepsilon$-subgradient of $\psi$ at a point $\vx\in dom\psi$ if
    \begin{equation}\label{eq_app:eps_subgrd}\tag{$\varepsilon$-G}
        \psi(\vz)\geq\psi(\vx)+(\vz-\vx)^\intercal\va-\varepsilon,\,\,\forall \vz\in\R^n\,.
    \end{equation}

    The set of all $\varepsilon$-subgradients of a convex function $\psi$ at $\vx\in dom\psi$ is called the $\varepsilon$-subdifferential of $\psi$ at $\vx$, and is denoted by $\partial_\varepsilon \psi (\vx)$.
\end{definition}

%%%%%%%%%%%%%%%%%%%%%%%%%%%%%%%%%%%%%%%%%%%%%%%%%%%%%
\subsubsection{Intermediate results}
We first give some lemmas that will be used in the proof of Theorem \ref{thm:inexact_acvi}.

In the following proofs, we assume $\varepsilon_0=\sigma_0=0$. We need the following two lemmas to state a lemma analogous to Lemma~\ref{lm3.3} but for the inexact ACVI.
\begin{lm}\label{lm:eps_gg}
    In inexact ACVI-Algorithm~\ref{alg:inexact_acvi}, for each $k$, $\exists\,\, \vr_{k+1}\in \R^n$, $\norm{\vr_{k+1}}\leq \sqrt{\frac{2\varepsilon_{k+1}}{\beta}}$, s.t.
    \begin{equation*}
        \beta (\vx_{k+1}+\frac{1}{\beta}\vlmd_k-\vy_{k+1}-\vr_{k+1})\in\partial_{\varepsilon_{k+1}}g(\vy_{k+1})\,.
    \end{equation*}
\end{lm}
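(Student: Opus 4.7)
The plan is to anchor the analysis to the exact minimizer of the strongly convex $\vy$-subproblem and transport its first-order optimality condition to an $\varepsilon$-subgradient statement at the inexact iterate $\vy_{k+1}$.

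First, I would set $\vw \triangleq \vx_{k+1}+\tfrac{1}{\beta}\vlmd_k$ and $h(\vy)\triangleq g(\vy)+\tfrac{\beta}{2}\norm{\vy-\vw}^2$. Because $g$ is convex in $\vy$ (each $\wp_j(\cdot,\mu)$ is convex in its first argument and each $\varphi_i$ is convex), $h$ is $\beta$-strongly convex and admits a unique minimizer $\tilde{\vy}_{k+1}$. Strong convexity at the minimum combined with the $\varepsilon_{k+1}$-minimizer property gives $\tfrac{\beta}{2}\norm{\vy_{k+1}-\tilde{\vy}_{k+1}}^2\le h(\vy_{k+1})-h(\tilde{\vy}_{k+1})\le \varepsilon_{k+1}$, so the choice $\vr_{k+1}\triangleq \tilde{\vy}_{k+1}-\vy_{k+1}$ immediately satisfies the required bound $\norm{\vr_{k+1}}\le\sqrt{2\varepsilon_{k+1}/\beta}$.

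Next, the exact first-order optimality at $\tilde{\vy}_{k+1}$ reads $\va\triangleq \beta(\vw-\tilde{\vy}_{k+1})\in\partial g(\tilde{\vy}_{k+1})$, and substituting $\tilde{\vy}_{k+1}=\vy_{k+1}+\vr_{k+1}$ rewrites $\va=\beta(\vx_{k+1}+\tfrac{1}{\beta}\vlmd_k-\vy_{k+1}-\vr_{k+1})$, which is precisely the candidate $\varepsilon_{k+1}$-subgradient appearing in the claim.

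Finally, I would verify $\va\in\partial_{\varepsilon_{k+1}}g(\vy_{k+1})$ through Def.~\ref{def_app:eps_subdfrt}. The exact subgradient inequality at $\tilde{\vy}_{k+1}$ yields $g(\vz)\ge g(\tilde{\vy}_{k+1})+\langle\va,\vz-\tilde{\vy}_{k+1}\rangle$ for every $\vz\in\R^n$. Separately, unpacking $h(\vy_{k+1})-h(\tilde{\vy}_{k+1})\le\varepsilon_{k+1}$ and expanding the squared-norm difference via the polarization-type identity $\norm{\vp}^2-\norm{\vq}^2=2\langle\vp,\vp-\vq\rangle-\norm{\vp-\vq}^2$ with $\vp=\tilde{\vy}_{k+1}-\vw$ and $\vq=\vy_{k+1}-\vw$ gives the key inequality $g(\vy_{k+1})-g(\tilde{\vy}_{k+1})-\langle\va,\vy_{k+1}-\tilde{\vy}_{k+1}\rangle\le \varepsilon_{k+1}-\tfrac{\beta}{2}\norm{\vr_{k+1}}^2\le\varepsilon_{k+1}$. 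Adding the two inequalities, the $\tilde{\vy}_{k+1}$-terms cancel and we obtain $g(\vz)\ge g(\vy_{k+1})+\langle\va,\vz-\vy_{k+1}\rangle-\varepsilon_{k+1}$ for every $\vz$, i.e., $\va\in\partial_{\varepsilon_{k+1}}g(\vy_{k+1})$. The only delicate step is the polarization-style expansion in this last paragraph; it is mechanical, so I do not anticipate a serious obstacle.
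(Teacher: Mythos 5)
Your proof is correct, and it takes a genuinely different route from the paper's. The paper works entirely inside the $\varepsilon$-subdifferential calculus: it observes that $\vy_{k+1}$ being an $\varepsilon_{k+1}$-minimizer of $\psi_1+\psi_2$ (with $\psi_1=\tfrac{\beta}{2}\norm{\cdot-\vz}^2$, $\psi_2=g$, $\vz=\vx_{k+1}+\tfrac{1}{\beta}\vlmd_k$) means $\mathbf{0}\in\partial_{\varepsilon_{k+1}}(\psi_1+\psi_2)(\vy_{k+1})$, invokes the sum-rule inclusion $\partial_{\varepsilon}(\psi_1+\psi_2)\subseteq\partial_{\varepsilon}\psi_1+\partial_{\varepsilon}\psi_2$, and then uses the closed-form description of $\partial_{\varepsilon}\psi_1$ as $\{\beta\vx-\beta\vz+\beta\vr:\tfrac{\beta}{2}\norm{\vr}^2\le\varepsilon\}$ to read off $\vr_{k+1}$. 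You instead anchor everything to the exact minimizer $\tilde{\vy}_{k+1}$ of the $\beta$-strongly convex subproblem: quadratic growth at the minimum gives $\norm{\vy_{k+1}-\tilde{\vy}_{k+1}}\le\sqrt{2\varepsilon_{k+1}/\beta}$, the exact stationarity condition supplies the candidate vector, and you verify the $\varepsilon_{k+1}$-subgradient inequality at $\vy_{k+1}$ by hand (your polarization computation checks out, including the intermediate bound $\varepsilon_{k+1}-\tfrac{\beta}{2}\norm{\vr_{k+1}}^2$, which is in fact slightly sharper than needed). What your route buys is self-containedness and an explicit identification of $\vr_{k+1}$ as the gap to the exact solution, avoiding the $\varepsilon$-subdifferential sum rule, which the paper states without justification and which in general requires a qualification condition (harmless here since $\psi_1$ is finite everywhere, but still an external ingredient). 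What the paper's route buys is brevity once that calculus is granted. Both arguments produce a vector of the same form with the same norm bound, so all downstream lemmas are unaffected.
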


\begin{proof}[Proof of Lemma~\ref{lm:eps_gg}]
    We first recall some properties of $\varepsilon$-subdifferentials (see, eg.
    \citep{bertsekas2003convex}, Section 4.3 for more details). $\vx$ is an $\varepsilon$-minimizer (see Def.~\ref{def:subopt}) of a convex function $\psi$ if and only if $\mathbf{0}\in\partial_\varepsilon \psi(\vx)$. Let $\psi=\psi_1+\psi_2$, where both $\psi_1$ and $\psi_2$ are convex, we have $\partial_\epsilon\psi(\vx)\subset\partial_\epsilon\psi_1(\vx)+\partial_\epsilon\psi_2(\vx)$. If $\psi_1(\vx)=\frac{\beta}{2}\norm{\vx-\vz}^2$, then
    \begin{align*}
        \partial_\varepsilon \psi_1(\vx)=&\left\{\vy\in\R^n\Bigg|\frac{\beta}{2}\norm{\vx-\vz-\frac{\vy}{\beta}}^2\leq\varepsilon\right\}\\
        =&\left\{\vy\in\R^n, \vy=\beta \vx-\beta\vz +\beta \vr\bigg|\frac{\beta}{2}\norm{\vr}^2\leq\varepsilon\right\}\,.
    \end{align*}

    Let $\psi_2=g$ and $\vz=\vx_{k+1}+\frac{1}{\beta}\vlmd_k$, then $\vy_{k+1}$ is an $\varepsilon_{k+1}$-minimizer of $\psi_1+\psi_2$. Thus we have $\mathbf{0}\in\partial_{\varepsilon_{k+1}} \psi (\vy_{k+1})\subset\partial_{\varepsilon_{k+1}}\psi_1(\vy_{k+1})+\partial_{\varepsilon_{k+1}}\psi_2(\vy_{k+1})$. Hence, there is an $\vr_{k+1}$ such that
    \begin{equation*}
        \beta (\vx_{k+1}+\frac{1}{
        \beta}\vlmd_k-\vy_{k+1} - \vr_{k+1}) \in\partial_{\varepsilon_{k+1}} g(\vy_{k+1})\quad \text{with}\quad \norm{\vr_{k+1}}\leq \sqrt{\frac{2\varepsilon_{k+1}}{\beta}}\,.
    \end{equation*}
\end{proof}

\begin{lm}\label{lm:inexcat_x}
    In inexact ACVI-Algorithm \ref{alg:inexact_acvi}, for each $k$, $\exists\,\,\vq_{k+1}\in\R^n$, $\norm{\vq_{k+1}}\leq \sigma_{k+1}$, s.t.
    \begin{equation}\label{eq_app:noisy_x}
        \vx_{k+1}+\vq_{k+1}=\argmin{\vx}\left\{ f_k(\vx)+\frac{\beta}{2}\norm{\vx-\vy_k+\frac{1}{\beta}\vlmd_k}^2\right\}\,.
    \end{equation}
    where $\mathcal{L}_\beta$ is the augmented Lagrangian of problem \eqref{eq_app:new_points_mu}.
\end{lm}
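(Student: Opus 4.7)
\textbf{Proof plan for Lemma~\ref{lm:inexcat_x}.} The plan is to show that the equation that $\vx_{k+1}$ approximately solves in line~8 of Algorithm~\ref{alg:inexact_acvi} is precisely the (projected) first-order optimality condition for the minimization problem on the right-hand side, so that any $\sigma_{k+1}$-approximate zero of the map is automatically within $\sigma_{k+1}$ (in Euclidean norm) of the exact minimizer.

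First, I would write down the exact minimizer of the convex program
\[
    \min_{\vx}\Big\{F(\vx_{k+1})^\intercal \vx + \mathbbm{1}(\mC \vx = \vd) + \tfrac{\beta}{2}\norm{\vx - \vy_k + \tfrac{1}{\beta}\vlmd_k}^2\Big\}.
\]
Since $F(\vx_{k+1})$ is a fixed vector in this expression (it is the value at the current inexact iterate, which will be baked into $f_k$), this is a quadratic program with linear equality constraints. Using that $\mP_c = \mI - \mC^\intercal (\mC\mC^\intercal)^{-1}\mC$ is the orthogonal projector onto the null space of $\mC$ and $\vd_c$ as in~\eqref{eq:dc_equation}, the unique minimizer is
\[
\vx_\star \;=\; \mP_c\!\Big(\vy_k - \tfrac{1}{\beta}\big(F(\vx_{k+1}) + \vlmd_k\big)\Big) + \vd_c,
\]
exactly as in the exact-ACVI derivation in~\eqref{eq:x_analytic}. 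Equivalently, $\vx_\star$ is characterized by
\[
\vx_\star + \tfrac{1}{\beta}\mP_c F(\vx_{k+1}) - \mP_c \vy_k + \tfrac{1}{\beta}\mP_c \vlmd_k - \vd_c \;=\; \boldsymbol{0}.
\]

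Next, by the $\sigma_{k+1}$-approximate solution condition on line~8 of Algorithm~\ref{alg:inexact_acvi} (Def.~\ref{def:approx_x}), the iterate $\vx_{k+1}$ satisfies
\[
\norm{\,\vx_{k+1} + \tfrac{1}{\beta}\mP_c F(\vx_{k+1}) - \mP_c \vy_k + \tfrac{1}{\beta}\mP_c \vlmd_k - \vd_c\,} \;\le\; \sigma_{k+1}.
\]
Defining $\vq_{k+1}\triangleq \vx_\star - \vx_{k+1}$ and subtracting the two displayed equations above yields
\[
\vq_{k+1} \;=\; -\Big(\vx_{k+1} + \tfrac{1}{\beta}\mP_c F(\vx_{k+1}) - \mP_c \vy_k + \tfrac{1}{\beta}\mP_c \vlmd_k - \vd_c\Big),
\]
so $\norm{\vq_{k+1}}\le \sigma_{k+1}$ and $\vx_{k+1} + \vq_{k+1} = \vx_\star$, which is exactly the claim.

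There is essentially no hard step here: the entire content is the observation that the $\vx$-subproblem is a strongly convex quadratic whose unique minimizer can be written in closed form via $\mP_c$ and $\vd_c$, and that the residual in line~8 of Algorithm~\ref{alg:inexact_acvi} is (up to sign) the displacement from this exact minimizer. The only place one needs minor care is verifying that the closed-form minimizer is the stated $\vx_\star$; this is standard since $\mP_c$ is the projector onto $\ker \mC$ and $\vd_c$ is a particular solution of $\mC\vx = \vd$, so $\{\vx : \mC\vx=\vd\} = \vd_c + \ker \mC$ and minimization of the quadratic over this affine set reduces to unconstrained minimization after the shift by $\vd_c$.
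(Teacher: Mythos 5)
Your proposal is correct and follows essentially the same route as the paper's (much terser) proof: both identify the exact minimizer of the $\vx$-subproblem in closed form as $\mP_c\big(\vy_k - \tfrac{1}{\beta}(F(\vx_{k+1})+\vlmd_k)\big)+\vd_c$ and observe that the residual of the equation in line~8 of Algorithm~\ref{alg:inexact_acvi} is exactly $\vx_{k+1}-\vx_\star$, so the $\sigma_{k+1}$-approximate-solution condition of Def.~\ref{def:approx_x} directly bounds $\norm{\vq_{k+1}}$. Your write-up actually spells out the key identification (residual equals displacement from the minimizer, with $F$ frozen at the inexact iterate $\vx_{k+1}$) more explicitly than the paper does.
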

\begin{proof}[Proof of Lemma~\ref{lm:inexcat_x}]
    By the definition of $\vx_{k+1}$ (see line 8 of inexact ACVI-Algorithm~\ref{alg:inexact_acvi} and Def.~\ref{def:approx_x}) we have
    \begin{align*}
        \vx_{k+1}+\vq_{k+1}=&-\frac{1}{\beta} \mP_c F(\vx) + \mP_c \vy_k - \frac{1}{\beta}\mP_c\vlmd_k+ \vd_c\\
        =&\argmin{\vx} \ \mathcal{L}_\beta(\vx,\vy_k,\vlmd_k)\,,
    \end{align*}
    where $\mathcal{L}_\beta$ is the augmented Lagrangian of the problem, which is given in \ref{eq:cvi_aug_lag} (note that $\vw_k=\vx_{k+1}$). \eqref{eq_app:new_points_mu}.
    And from the above equation \eqref{eq_app:noisy_x} follows.
\end{proof}

Similar to Lemma~\ref{lm3.3}, and using Lemma~\ref{lm:eps_gg} and Lemma~\ref{lm:inexcat_x}, we give the following lemma for inexact ACVI--Algorithm~\ref{alg:inexact_acvi}.

\begin{lm}\label{lm3.3_i}
For the problems \eqref{212},~\eqref{eq_app:new_points_mu} and inexact ACVI-Algorithm \ref{alg:inexact_acvi}, we have
\begin{align}
    \boldsymbol{0} & \in \partial f_k(\vx_{k+1}  +\vq_{k+1})  +\vlmd_k+\beta (  \vx_{k+1}-\vy_{k}  )+\beta\vq_{k+1} \label{3.1_i} \tag{L\ref{lm3.3_i}-$1$}\\
    \boldsymbol{0} & \in \partial_{\varepsilon_{k+1}} g (  \vy_{k+1}  )  -\vlmd_k-\beta (  \vx_{k+1}-\vy_{k+1})+\beta \vr_{k+1}  , \label{3.2_i}\tag{L\ref{lm3.3_i}-$2$}\\
    \vlmd_{k+1}-\vlmd_k & =\beta  (\vx_{k+1}-\vy_{k+1} )  , \label{3.3_i}\tag{L\ref{lm3.3_i}-$3$}\\
    -\vlmd^\mu&\in\partial f(\vx^\mu),\label{3.4_i}\tag{L\ref{lm3.3_i}-$4$}\\
    -\vlmd^\mu_k&\in\partial f_k(  \vx_{k}^\mu  ) ,\label{3.4_new_i}\tag{L\ref{lm3.3_i}-$5$}\\
    \vlmd^\mu &= \nabla g (  \vy^\mu  ),\label{3.5_i}\tag{L\ref{lm3.3_i}-$6$} \\
    \vlmd^\mu_k &= \nabla g (  \vy^\mu_k  ),\label{3.5_new_i}\tag{L\ref{lm3.3_i}-$7$} \\
    \vx^\mu&=\vy^\mu,\label{3.6_i}\tag{L\ref{lm3.3_i}-$8$}\\
    \vx^\mu_k&=\vy^\mu_k,\label{3.6_new_i}\tag{L\ref{lm3.3_i}-$9$}
\end{align}
\end{lm}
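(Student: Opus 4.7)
The plan is to mirror the proof of Lemma~\ref{lm3.3} while replacing exact with approximate subproblem solutions, using the two preparatory lemmas (Lemma~\ref{lm:inexcat_x} and Lemma~\ref{lm:eps_gg}) in place of the standard first-order optimality conditions. The remaining items are KKT conditions for the exact convex problems \eqref{212} and \eqref{eq_app:new_points_mu} and are therefore identical to the corresponding parts of Lemma~\ref{lm3.3}.

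First I would establish \eqref{3.1_i}. By Lemma~\ref{lm:inexcat_x}, $\vx_{k+1}+\vq_{k+1}$ is an exact minimizer of $f_k(\vx)+\frac{\beta}{2}\|\vx-\vy_k+\tfrac{1}{\beta}\vlmd_k\|^2$. Writing the stationarity condition at $\vx_{k+1}+\vq_{k+1}$ yields
\[
\mathbf{0}\in\partial f_k(\vx_{k+1}+\vq_{k+1})+\vlmd_k+\beta\bigl(\vx_{k+1}+\vq_{k+1}-\vy_k\bigr),
\]
which is exactly \eqref{3.1_i} after grouping the $\beta\vq_{k+1}$ term separately. For \eqref{3.2_i}, Lemma~\ref{lm:eps_gg} directly provides the $\varepsilon_{k+1}$-subdifferential inclusion $\beta(\vx_{k+1}+\tfrac{1}{\beta}\vlmd_k-\vy_{k+1}-\vr_{k+1})\in\partial_{\varepsilon_{k+1}}g(\vy_{k+1})$, which rearranges to the stated inclusion. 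Equation \eqref{3.3_i} is read off directly from line~10 of Algorithm~\ref{alg:inexact_acvi}.

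Finally, the items \eqref{3.4_i}--\eqref{3.6_new_i} concern only the KKT points $(\vx^\mu,\vy^\mu,\vlmd^\mu)$ of \eqref{212} and $(\vx_k^\mu,\vy_k^\mu,\vlmd_k^\mu)$ of \eqref{eq_app:new_points_mu}; neither object depends on the inexact iterates. Existence of these multipliers is guaranteed by Slater's condition, which holds because $\mathrm{int}\,\mathcal{C}\neq\emptyset$. Writing the KKT systems of the two problems and using $\vx^\mu=\vy^\mu$, $\vx_k^\mu=\vy_k^\mu$ gives \eqref{3.4_i}--\eqref{3.6_new_i} verbatim as in Lemma~\ref{lm3.3}; since $g$ is differentiable, the subdifferential inclusions for $g$ specialize to equalities with $\nabla g$.

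No step is expected to be a serious obstacle: the lemma is essentially bookkeeping that packages the two preparatory results with the unchanged KKT conditions. The only subtle point is keeping track of which subdifferential (ordinary vs.\ $\varepsilon$-) appears in \eqref{3.2_i}; this is fully handled by the sum rule for $\varepsilon$-subdifferentials already used in the proof of Lemma~\ref{lm:eps_gg}. Everything else follows immediately from the definitions in Algorithm~\ref{alg:inexact_acvi}.
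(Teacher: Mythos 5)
Your proposal is correct and matches the paper's (implicit) argument exactly: the paper derives \eqref{3.1_i} and \eqref{3.2_i} from Lemma~\ref{lm:inexcat_x} and Lemma~\ref{lm:eps_gg}, reads \eqref{3.3_i} off the $\vlmd$-update, and obtains \eqref{3.4_i}--\eqref{3.6_new_i} verbatim from the KKT conditions of \eqref{212} and \eqref{eq_app:new_points_mu}, with $\partial g$ replaced by $\nabla g$ by differentiability. Nothing to add.
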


We define the following two maps (whose naming will be evident from the inclusions shown after):
 \begin{align*}
    \hat{\nabla}f_k ( \vx_{k+1}+\vq_{k+1} )  &\triangleq -\vlmd_k -\beta (  \vx_{k+1}-\vy_{k}  )-\beta \vq_{k+1} \,,  \label{3.7_i} \tag{noisy-$\hat{\nabla}f_k$} \qquad \text{and} \\
    \hat{\nabla}_{\varepsilon_{k+1}} g (  \vy_{k+1}  )  &\triangleq \ \ \vlmd_k +\beta(\vx_{k+1}-\vy_{k+1})-\beta \vr_{k+1} \label{3.8_i}  \tag{noisy-$\hat{\nabla} g $} \,. \\
\end{align*}

Then, from \eqref{3.1} and \eqref{3.2} it follows that:
\begin{equation}
    \hat{\nabla}f_k (  \vx_{k+1}+\vq_{k+1}  )  \in \partial f_k (  \vx_{k+1}+\vq_{k+1}  )  \text{\,\,and\,\,} \hat{\nabla}_{\varepsilon_{k+1}}g (  \vy_{k+1}  )  \in \partial_{\varepsilon_{k+1}} g (  \vy_{k+1}  ) \,. \label{3.9_i}
\end{equation}

The following lemma is analogous to Lemma~\ref{lm3.4} but refers to the noisy case.
\begin{lm}
For the iterates $\vx_{k+1}$, $\vy_{k+1}$, and $\vlmd_{k+1}$ of the inexact ACVI---Algorithm \ref{alg:inexact_acvi}---we have:
\begin{equation}
    \langle \hat{\nabla}_{\varepsilon_{k+1}}g (  \vy_{k+1} ) ,\vy_{k+1}-\vy \rangle =-\langle \vlmd_{k+1},\vy-\vy_{k+1} \rangle-\beta\langle\vr_{k+1},\vy_{k+1}-\vy\rangle,
    \label{3.10_i}\tag{L\ref{lm3.4_i}-$1$}
\end{equation}
and
\begin{equation}
    \begin{split}
        &\langle \hat{\nabla}f_k (  \vx_{k+1}+\vq_{k+1}  ) ,\vx_{k+1}-\vx \rangle +\langle \hat{\nabla}_{\varepsilon_{k+1}}g (  \vy_{k+1} ) ,\vy_{k+1}-\vy \rangle \\
        =&-\langle\vlmd_{k+1},\vx_{k+1}-\vy_{k+1}-\vx+\vy\rangle
         +\beta\langle-\vy_{k+1}+\vy_k,\vx_{k+1}-\vx\rangle \\
         &-\beta \langle\vq_{k+1},\vx_{k+1}-\vx\rangle-\beta\langle \vr_{k+1},\vy_{k+1}-\vy \rangle\,.
    \end{split} \label{3.11_i}\tag{L\ref{lm3.4_i}-$2$}
\end{equation}
\label{lm3.4_i}
\end{lm}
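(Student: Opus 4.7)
\textbf{Proof plan for Lemma~\ref{lm3.4_i}.} The plan is to mirror the proof of Lemma~\ref{lm3.4}, treating the noise terms $\vq_{k+1}$ and $\vr_{k+1}$ as additive perturbations that propagate through the same algebraic identities. Both equalities will follow by direct substitution from the definitions~\eqref{3.7_i},~\eqref{3.8_i} together with the dual-update relation~\eqref{3.3_i}, namely $\vlmd_{k+1}=\vlmd_k+\beta(\vx_{k+1}-\vy_{k+1})$.

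For the first identity, I would plug the definition of $\hat{\nabla}_{\varepsilon_{k+1}}g(\vy_{k+1})=\vlmd_k+\beta(\vx_{k+1}-\vy_{k+1})-\beta\vr_{k+1}$ into the inner product with $\vy_{k+1}-\vy$, and recognize the first two terms as exactly $\vlmd_{k+1}$ by~\eqref{3.3_i}. Rearranging signs yields $-\langle\vlmd_{k+1},\vy-\vy_{k+1}\rangle-\beta\langle\vr_{k+1},\vy_{k+1}-\vy\rangle$, which is the claimed expression.

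For the second identity, I would expand $\langle\hat{\nabla}f_k(\vx_{k+1}+\vq_{k+1}),\vx_{k+1}-\vx\rangle$ using the definition $\hat{\nabla}f_k(\vx_{k+1}+\vq_{k+1})=-\vlmd_k-\beta(\vx_{k+1}-\vy_k)-\beta\vq_{k+1}$. After splitting $\beta(\vx_{k+1}-\vy_k)=\beta(\vx_{k+1}-\vy_{k+1})+\beta(\vy_{k+1}-\vy_k)$ and absorbing $\vlmd_k+\beta(\vx_{k+1}-\vy_{k+1})$ into $\vlmd_{k+1}$, this yields $-\langle\vlmd_{k+1},\vx_{k+1}-\vx\rangle+\beta\langle-\vy_{k+1}+\vy_k,\vx_{k+1}-\vx\rangle-\beta\langle\vq_{k+1},\vx_{k+1}-\vx\rangle$. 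Adding this to the first identity groups the two $\vlmd_{k+1}$ inner products into $-\langle\vlmd_{k+1},\vx_{k+1}-\vy_{k+1}-\vx+\vy\rangle$, giving the claimed equality.

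There is no real obstacle here; the lemma is a purely algebraic identity. The only care needed is in tracking signs and in correctly substituting $\vlmd_{k+1}$ for $\vlmd_k+\beta(\vx_{k+1}-\vy_{k+1})$ at the right moments, so that the extra perturbation terms $-\beta\langle\vq_{k+1},\vx_{k+1}-\vx\rangle$ and $-\beta\langle\vr_{k+1},\vy_{k+1}-\vy\rangle$ appear cleanly as the only deviation from Lemma~\ref{lm3.4}. These leftover noise terms are exactly what will later need to be controlled via the summability condition $\sum_k k(\sqrt{\varepsilon_k}+\sigma_k)<\infty$ in Theorem~\ref{thm:inexact_acvi}, but at the level of this lemma they simply remain as explicit error terms.
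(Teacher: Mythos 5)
Your proposal is correct and follows essentially the same route as the paper: both proofs substitute the definitions \eqref{3.7_i} and \eqref{3.8_i}, use the dual update \eqref{3.3_i} to replace $\vlmd_k+\beta(\vx_{k+1}-\vy_{k+1})$ by $\vlmd_{k+1}$, and then add the two resulting identities. The only cosmetic difference is that the paper first writes the identities with $\beta\vq_{k+1}$ and $\beta\vr_{k+1}$ kept on the left-hand side and rearranges at the end, whereas you carry the perturbation terms along from the start.
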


\begin{proof}[Proof of Lemma~\ref{lm3.4_i}]
    From \eqref{3.3_i}, \eqref{3.7_i} and \eqref{3.8_i} we have:
    \begin{equation*}
        \begin{split}
            \langle \hat{\nabla}f_k (  \vx_{k+1} +\vq_{k+1} )+\beta\vq_{k+1}  ,\vx_{k+1}-\vx \rangle
            =&-\langle \vlmd_k+\beta(  \vx_{k+1}-\vy_k  )  ,\vx_{k+1}-\vx \rangle \\
            =&-\langle \vlmd_{k+1},\vx_{k+1}-\vx \rangle +\beta \langle -\vy_{k+1}+\vy_k,\vx_{k+1}-\vx \rangle,
        \end{split}
    \end{equation*}
and
\begin{equation*}
    \langle\hat{\nabla}_{\varepsilon_{k+1}}g (  \vy_{k+1} )+\beta \vr_{k+1}  ,\vy_{k+1}-\vy \rangle =-\langle \vlmd_{k+1},\vy-\vy_{k+1} \rangle\,.
\end{equation*}

Adding these together yields:
\begin{equation*}
    \begin{split}
        \langle \hat{\nabla}f_k (  \vx_{k+1}+\vq_{k+1}  ) & +\beta \vq_{k+1} ,\vx_{k+1}-\vx \rangle +\langle \hat{\nabla}_{\varepsilon_{k+1}}g (  \vy_{k+1} )+\beta \vr_{k+1} ,\vy_{k+1}-\vy \rangle \\
        =&-\langle\vlmd_{k+1},\vx_{k+1}-\vy_{k+1}-\vx+\vy\rangle \\
         &+\beta\langle-\vy_{k+1}+\vy_k,\vx_{k+1}-\vx\rangle\,.
    \end{split}
\end{equation*}

Rearranging the above two equations, we obtain \eqref{3.10_i} and \eqref{3.11_i}.
\end{proof}

%%%%%%%%%%%%%%%%%%%%%%%%%%%%%%%%%%%%%%%%%%%%%%%%%%%%%
\vspace{2em}
The following lemma is analogous to Lemma~\ref{lm3.5} but refers to the noisy case.
\begin{lm}\label{lm3.5_i}
For the $\vx_{k+1}$, $\vy_{k+1}$, and $\vlmd_{k+1}$ iterates of the inexact ACVI---Algorithm \ref{alg:inexact_acvi}---we have:
    \begin{equation*}
        \begin{split}
            &\langle \hat\nabla  f_k(\vx_{k+1}+\vq_{k+1}),
            \vx_{k+1} - \vx^\mu \rangle + \langle \hat{\nabla}_{\varepsilon_{k+1}} g( \vy_{k+1})
            ,\vy_{k+1}-\vy^\mu \rangle
            + \langle \vlmd^\mu,\vx_{k+1}-\vy_{k+1} \rangle \\
            &\leq  \frac{1}{2\beta}\lVert \vlmd_k-\vlmd^\mu \rVert ^2-\frac{1}{2\beta}\lVert \vlmd_{k+1}-\vlmd^\mu \rVert ^2
            + \frac{\beta}{2}\lVert \vy^\mu-\vy_k \rVert ^2-\frac{\beta}{2}\lVert \vy^\mu-\vy_{k+1}\rVert ^2 \\
            &\quad - \frac{1}{2\beta}\lVert \vlmd_{k+1}-\vlmd_k \rVert ^2-\frac{\beta}{2}\lVert \vy_k-\vy_{k+1} \rVert ^2\\
            &\quad -\beta\langle \vr_{k+1}-\vr_k, \vy_{k+1}-\vy_k\rangle-\beta\langle \vr_{k+1}, \vy_{k+1}-\vy^\mu\rangle+\varepsilon_k+\varepsilon_{k+1}-\beta\langle\vq_{k+1},\vx_{k+1}-\vx^\mu\rangle\, ,
        \end{split}
    \end{equation*}
    and
    \begin{equation*}
        \begin{split}  % some error in line 868
            &\langle \hat\nabla  f_k(\vx_{k+1}+\vq_{k+1}),
            \vx_{k+1} - \vx^\mu_k \rangle + \langle \hat{\nabla}_{\varepsilon_{k+1}} g( \vy_{k+1})
            ,\vy_{k+1}-\vy^\mu_k \rangle
            + \langle \vlmd^\mu_k,\vx_{k+1}-\vy_{k+1} \rangle \\
            &\leq  \frac{1}{2\beta}\lVert \vlmd_k-\vlmd^\mu_k \rVert ^2-\frac{1}{2\beta}\lVert \vlmd_{k+1}-\vlmd^\mu_k \rVert ^2
            + \frac{\beta}{2}\lVert \vy^\mu_k-\vy_k \rVert ^2-\frac{\beta}{2}\lVert \vy^\mu_k-\vy_{k+1}\rVert ^2 \\
            &\quad - \frac{1}{2\beta}\lVert \vlmd_{k+1}-\vlmd_k \rVert ^2-\frac{\beta}{2}\lVert \vy_k-\vy_{k+1} \rVert ^2\\
            &\quad -\beta\langle \vr_{k+1}-\vr_k, \vy_{k+1}-\vy_k\rangle-\beta\langle \vr_{k+1}, \vy_{k+1}-\vy^\mu_k\rangle+\varepsilon_k+\varepsilon_{k+1}-\beta\langle\vq_{k+1},\vx_{k+1}-\vx^\mu_k\rangle\, .
        \end{split}
    \end{equation*}
\end{lm}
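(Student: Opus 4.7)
\textbf{Proof proposal for Lemma~\ref{lm3.5_i}.} The plan is to mirror the proof of Lemma~\ref{lm3.5}, but substituting the noisy subdifferential identities from Lemma~\ref{lm3.3_i} and carefully tracking the extra terms generated by the approximation errors $\vq_{k+1}$ and $\vr_{k+1}$. First, I would apply \eqref{3.11_i} of Lemma~\ref{lm3.4_i} with $(\vx,\vy)=(\vx^\mu,\vy^\mu)$, using $\vx^\mu=\vy^\mu$ from \eqref{3.6_i}, to expand the LHS as
$$
-\langle \vlmd_{k+1}-\vlmd^\mu,\vx_{k+1}-\vy_{k+1}\rangle+\beta\langle-\vy_{k+1}+\vy_k,\vx_{k+1}-\vx^\mu\rangle-\beta\langle\vq_{k+1},\vx_{k+1}-\vx^\mu\rangle-\beta\langle\vr_{k+1},\vy_{k+1}-\vy^\mu\rangle.
$$
The two error-free terms match exactly the structure encountered in the exact case, while the last two terms already appear verbatim on the RHS of the target inequality, so they can be carried along unchanged.

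Next, using \eqref{3.3_i} to substitute $\vx_{k+1}-\vy_{k+1}=\tfrac{1}{\beta}(\vlmd_{k+1}-\vlmd_k)$, and writing $\vx_{k+1}-\vx^\mu=\tfrac{1}{\beta}(\vlmd_{k+1}-\vlmd_k)-(-\vy_{k+1}+\vy^\mu)$ (again via $\vx^\mu=\vy^\mu$), the first two terms reduce to a combination of $\langle \vlmd_{k+1}-\vlmd^\mu,\vlmd_{k+1}-\vlmd_k\rangle$, $\langle -\vy_{k+1}+\vy_k,-\vy_{k+1}+\vy^\mu\rangle$, and a residual cross term $\langle -\vy_{k+1}+\vy_k,\vlmd_{k+1}-\vlmd_k\rangle$. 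The three-point identity $\langle \vb-\va,\vb-\vc\rangle=\tfrac{1}{2}(\norm{\va-\vb}^2+\norm{\vb-\vc}^2-\norm{\va-\vc}^2)$ applied to the first two inner products generates exactly the six squared-norm terms on the RHS of the lemma.

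The main obstacle is controlling the leftover cross term $\langle -\vy_{k+1}+\vy_k,\vlmd_{k+1}-\vlmd_k\rangle$. In the exact proof (see \eqref{3.15}) this was shown to be $\le 0$ by invoking monotonicity of $\partial g$ on two consecutive iterates; here, since we only have $\hat\nabla_{\varepsilon_{k+1}}g(\vy_{k+1})\in\partial_{\varepsilon_{k+1}}g(\vy_{k+1})$ and $\hat\nabla_{\varepsilon_k}g(\vy_k)\in\partial_{\varepsilon_k}g(\vy_k)$, I would apply the $\varepsilon$-subgradient inequality \eqref{eq_app:eps_subgrd} in both directions and add, yielding
$$
\langle \vy_k-\vy_{k+1},\,\hat\nabla_{\varepsilon_{k+1}}g(\vy_{k+1})-\hat\nabla_{\varepsilon_k}g(\vy_k)\rangle \le \varepsilon_k+\varepsilon_{k+1}.
$$
Using the definition \eqref{3.8_i} to get $\hat\nabla_{\varepsilon_{k+1}}g(\vy_{k+1})-\hat\nabla_{\varepsilon_k}g(\vy_k)=(\vlmd_{k+1}-\vlmd_k)-\beta(\vr_{k+1}-\vr_k)$, this rearranges to
$$
\langle -\vy_{k+1}+\vy_k,\vlmd_{k+1}-\vlmd_k\rangle \le \varepsilon_k+\varepsilon_{k+1}-\beta\langle \vr_{k+1}-\vr_k,\vy_{k+1}-\vy_k\rangle,
$$
which is precisely the additional error budget appearing on the RHS of the claim. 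Plugging this back into the decomposition from the previous paragraph proves the first inequality. The second inequality is then obtained by running the identical argument with $(\vx^\mu,\vy^\mu,\vlmd^\mu)$ replaced by $(\vx_k^\mu,\vy_k^\mu,\vlmd_k^\mu)$ and using \eqref{3.6_new_i} in place of \eqref{3.6_i}; the $\varepsilon$-subgradient step is unchanged since it only involves consecutive iterates $\vy_k,\vy_{k+1}$ of Algorithm~\ref{alg:inexact_acvi}.
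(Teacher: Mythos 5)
Your proposal is correct and follows essentially the same route as the paper's proof: specialize \eqref{3.11_i} at $(\vx^\mu,\vy^\mu,\vlmd^\mu)$, use \eqref{3.3_i} and \eqref{3.6_i} to rewrite everything in terms of $\vlmd_{k+1}-\vlmd_k$ and $\vy$-differences, apply the three-point identity, and control the cross term $\langle -\vy_{k+1}+\vy_k,\vlmd_{k+1}-\vlmd_k\rangle$ via the approximate monotonicity of the $\varepsilon$-subdifferential, which produces exactly the $\varepsilon_k+\varepsilon_{k+1}-\beta\langle\vr_{k+1}-\vr_k,\vy_{k+1}-\vy_k\rangle$ budget. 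The only cosmetic difference is that you obtain the identity $\hat{\nabla}_{\varepsilon_{k+1}}g(\vy_{k+1})-\hat{\nabla}_{\varepsilon_k}g(\vy_k)=(\vlmd_{k+1}-\vlmd_k)-\beta(\vr_{k+1}-\vr_k)$ directly from \eqref{3.8_i}, whereas the paper derives the equivalent relation by evaluating \eqref{3.10_i} at two consecutive steps and adding.
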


\begin{proof}[Proof of Lemma~\ref{lm3.5_i}]
For the left-hand side of the first part of Lemma~\ref{lm3.5_i}:
$$
LHS=\langle \hat\nabla  f_k(\vx_{k+1}+\vq_{k+1}),
            \vx_{k+1} - \vx^\mu_k \rangle + \langle \hat{\nabla}_{\varepsilon_{k+1}} g( \vy_{k+1})
            ,\vy_{k+1}-\vy^\mu_k \rangle
            + \langle \vlmd^\mu_k,\vx_{k+1}-\vy_{k+1} \rangle\,,
$$
we let $ (  \vx,\vy,\vlmd ) = (\vx^\mu,\vy^\mu,\vlmd ^\mu)  $ in~\eqref{3.11_i}, and using the result of that lemma we get that:
\begin{align*}
    LHS=&-\langle\vlmd_{k+1},\vx_{k+1}-\vy_{k+1}-\vx^\mu+\vy^\mu\rangle
         +\beta\langle-\vy_{k+1}+\vy_k,\vx_{k+1}-\vx^\mu\rangle\\
         &- \beta \langle\vq_{k+1},\vx_{k+1}-\vx^\mu\rangle -\beta\langle \vr_{k+1},\vy_{k+1}-\vy^\mu \rangle
      +\langle \vlmd^\mu,\vx_{k+1}-\vy_{k+1}\rangle \,,
\end{align*}

and since $\vx^\mu = \vy^\mu$~\eqref{3.6}:
\begin{align*}
LHS &= - \langle \vlmd_{k+1},\vx_{k+1}-\vy_{k+1} \rangle
      + \beta\langle -\vy_{k+1}+\vy_k,\vx_{k+1}-\vx^\mu \rangle
      + \langle \vlmd^\mu,\vx_{k+1}-\vy_{k+1} \rangle \\ &\quad-\beta \langle\vq_{k+1},\vx_{k+1}-\vx^\mu\rangle
         -\beta\langle \vr_{k+1},\vy_{k+1}-\vy^\mu \rangle
        \\
    &= - \langle \vlmd_{k+1} - \vlmd^\mu , \vx_{k+1} - \vy_{k+1} \rangle
      + \beta\langle -\vy_{k+1}+\vy_k,\vx_{k+1}-\vx^\mu \rangle\\
     &\quad-\beta \langle\vq_{k+1},\vx_{k+1}-\vx^\mu\rangle
         -\beta\langle \vr_{k+1},\vy_{k+1}-\vy^\mu \rangle\,,
\end{align*}
where in the last equality, we combined the first and third terms together.
Using~\eqref{3.3_i} that $ \frac{1}{\beta} (\vlmd_{k+1}-\vlmd_k)  =\vx_{k+1}-\vy_{k+1} $ yields (for the second term above, we add and subtract $\vy_{k+1}$ in its second argument, and use $\vx^\mu=\vy^\mu$):
\begin{equation}
    \begin{split}
        LHS =& -\frac{1}{\beta} \langle \vlmd_{k+1}-\vlmd^\mu,\vlmd_{k+1}-\vlmd_k \rangle
       + \langle -\vy_{k+1}+\vy_k,\vlmd_{k+1}-\vlmd_k \rangle \\
       & -\beta \langle -\vy_{k+1}+\vy_k,-\vy_{k+1}+\vy^\mu\rangle
       -\beta \langle\vq_{k+1},\vx_{k+1}-\vx^\mu\rangle
         -\beta\langle \vr_{k+1},\vy_{k+1}-\vy^\mu \rangle\,. \label{3.12_i}
    \end{split}
\end{equation}

Using the $3$-point identity, that for any vectors $\va, \vb, \vc$ it holds $\langle \vb-\va, \vb - \vc \rangle = \frac{1}{2}(\norm{\va-\vb}^2 + \norm{\vb-\vc}^2 - \norm{\va-\vc}^2 )$,
for the first term above, we get that:
$$
\langle \vlmd_{k+1}-\vlmd^\mu,\vlmd_{k+1}-\vlmd_k \rangle = \frac{1}{2}\big(
\norm{\vlmd_k-\vlmd^\mu} + \norm{\vlmd_{k+1} - \vlmd_k} - \norm{\vlmd_{k+1} - \vlmd^\mu}
\big) \,,
$$
and similarly,
$$
\langle -\vy_{k+1}+\vy_k,-\vy_{k+1}+\vy^\mu\rangle = \frac{1}{2}\big(
\norm{-\vy_k+\vy^\mu} - \norm{-\vy_{k+1} + \vy^\mu} - \norm{-\vy_{k+1} + \vy_k}
\big) \,,
$$
and by plugging these into \eqref{3.12_i} we get:
\begin{align}
LHS  &= \frac{1}{2\beta} \lVert \vlmd_k-\vlmd^\mu \rVert ^2\nonumber
    -\frac{1}{2\beta}\lVert \vlmd_{k+1}-\vlmd^\mu \rVert ^2
    - \frac{1}{2\beta}\lVert \vlmd_{k+1}-\vlmd_k \rVert ^2 \nonumber\\
        &\quad +\frac{\beta}{2}\lVert -\vy_k+\vy^\mu \rVert ^2-\frac{\beta}{2}\lVert -\vy_{k+1}+\vy^\mu \rVert ^2
        -\frac{\beta}{2}\lVert -\vy_{k+1}+\vy_k \rVert ^2 \nonumber\\
        &\quad +\langle -\vy_{k+1}+\vy_k,\vlmd_{k+1}-\vlmd_k \rangle -\beta \langle\vq_{k+1},\vx_{k+1}-\vx^\mu\rangle
         -\beta\langle \vr_{k+1},\vy_{k+1}-\vy^\mu \rangle\,. \label{3.13_i}
\end{align}

On the other hand, \eqref{3.10_i} which states that
$\langle \hat{\nabla}_{\varepsilon_{k+1}}g (  \vy_{k+1} ) ,\vy_{k+1}-\vy \rangle +\langle \vlmd_{k+1},-\vy_{k+1} + \vy \rangle = -\beta\langle\vr_{k+1},\vy_{k+1}-\vy\rangle$, also asserts:
    \begin{equation}
        \langle \hat{\nabla}_{\varepsilon_{k}}g (  \vy_k  )  ,\vy_k-\vy \rangle +\langle \vlmd_k,-\vy_k+\vy \rangle =-\beta\langle\vr_{k},\vy_{k}-\vy\rangle \,.
        \label{3.14_i}
    \end{equation}

    Letting $\vy=\vy_k$ in \eqref{3.10_i}, and $\vy=\vy_{k+1}$ in \eqref{3.14_i}, and adding them together yields:
    \begin{align}\label{eq_app:itermniate_1_i}
        &\langle \hat{\nabla}_{\varepsilon_{k+1}}g (  \vy_{k+1}  )  -\hat{\nabla}_{\varepsilon_{k}}g (  \vy_k  ) ,\vy_{k+1}-\vy_k \rangle
        +\langle \vlmd_{k+1} -\vlmd_k,-\vy_{k+1}+\vy_k \rangle \\
        &=-\beta \langle\vr_{k+1}-\vr_{k},\vy_{k+1}-\vy_k\rangle\,.
    \end{align}

    By the definition of $\epsilon$-subdifferential~as per Def.\ref{def_app:eps_subdfrt} we have:
    \begin{align*}
        &\varepsilon_k+g(\vy_{k+1})\geq g(\vy_k)+\langle \hat{\nabla}_{\varepsilon_k}g(\vy_k), \vy_{k+1}-\vy_k\rangle\,,\quad and\\
        &\varepsilon_{k+1}+g(\vy_{k})\geq g(\vy_{k+1})+\langle \hat{\nabla}_{\varepsilon_{k+1}}g(\vy_{k+1}), \vy_{k}-\vy_{k+1}\rangle\,.\\
    \end{align*}

    Adding together the above two inequalities, we obtain:
    \begin{equation}\label{eq_app:aprx_monotone_g}
        \langle \hat{\nabla}_{\varepsilon_{k+1}}g(\vy_{k+1})-\hat{\nabla}_{\varepsilon_{k}}g(\vy_{k}),\vy_{k+1}-\vy_{k}\rangle\geq -\varepsilon_{k+1}-\varepsilon_k\,.
    \end{equation}

    Combining \eqref{eq_app:itermniate_1_i} and \eqref{eq_app:aprx_monotone_g}, we deduce:
    \begin{equation}
        \langle \vlmd_{k+1}-\vlmd_k,-\vy_{k+1}+\vy_k \rangle \leq  -\beta\langle \vr_{k+1}-\vr_k, \vy_{k+1}-\vy_k\rangle+\varepsilon_{k+1}+\varepsilon_k\,.
        \label{3.15_i}
    \end{equation}

    Lastly, plugging it into~\eqref{3.13_i} gives the first inequality of Lemma~\ref{lm3.5_i}.

The second inequality of Lemma~\ref{lm3.5_i} follows similarly.
\end{proof}

%%%%%%%%%%%%%%%%%%%%%%%%%%%%%%%%%%%%%%%%%%%%%%%%%%%%%
\vspace{2em}
\begin{lm}
For the $\vx_{k+1}$, $\vy_{k+1}$, and $\vlmd_{k+1}$ iterates of the inexact ACVI---Algorithm \ref{alg:inexact_acvi}---we have:
    \begin{align}
f_k (  \vx_{k+1}+\vq_{k+1}  )  & +g (  \vy_{k+1}  )   -f_k (  \vx^\mu  )  -g (  \vy^\mu  )
        +\langle \vlmd^\mu,\vx_{k+1}+\vq_{k+1}-\vy_{k+1}\rangle \nonumber\\
        &\leq \frac{1}{2\beta}\lVert \vlmd_k-\vlmd^\mu \rVert ^2-\frac{1}{2\beta}\lVert \vlmd_{k+1}-\vlmd^\mu \rVert ^2\nonumber\\
        &\quad +\frac{\beta}{2}\lVert -\vy_k+\vy^\mu \rVert ^2-\frac{\beta}{2}\lVert -\vy_{k+1}+\vy^\mu \rVert ^2\nonumber\\
        &\quad  -\frac{1}{2\beta}\lVert \vlmd_{k+1}-\vlmd_k \rVert ^2-\frac{\beta}{2}\lVert -\vy_{k+1}+\vy_k \rVert ^2\nonumber\\
        &\quad - \frac{1}{2\beta}\lVert \vlmd_{k+1}-\vlmd_k \rVert ^2-\frac{\beta}{2}\lVert \vy_k-\vy_{k+1} \rVert ^2\nonumber\\
        &\quad -\beta\langle \vr_{k+1}-\vr_k, \vy_{k+1}-\vy_k\rangle-\beta\langle \vr_{k+1}, \vy_{k+1}-\vy^\mu\rangle\nonumber\\
        &\quad+\varepsilon_k+2\varepsilon_{k+1}-\langle\vq_{k+1},\vlmd_k-\vlmd^\mu+\beta(\vx_{k+1}-\vy_k)+\beta(\vx_{k+1}-\vx^\mu)\rangle\,.\label{3.16_i}\tag{L\ref{lm3.6_i}}
    \end{align}
    \label{lm3.6_i}
\end{lm}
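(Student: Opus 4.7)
The plan is to mirror the proof of Lemma \ref{lm3.6} with two adjustments: replace the exact subgradient step for $g$ with the $\varepsilon$-subgradient inequality, and route the perturbations $\vq_{k+1}, \vr_{k+1}$ through Lemma \ref{lm3.5_i} in place of Lemma \ref{lm3.5}. Note that, unlike the exact case, the LHS of \eqref{3.16_i} already involves $f_k$ rather than $f$, so Proposition \ref{prop4} is not needed here.

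First I would invoke convexity of $f_k$ together with \eqref{3.9_i} to obtain
\[
f_k(\vx_{k+1}+\vq_{k+1}) - f_k(\vx^\mu) \leq \langle \hat{\nabla} f_k(\vx_{k+1}+\vq_{k+1}), \vx_{k+1}+\vq_{k+1}-\vx^\mu\rangle,
\]
and use Definition \ref{def_app:eps_subdfrt} at $\vy_{k+1}$ to write
\[
g(\vy_{k+1}) - g(\vy^\mu) \leq \langle \hat{\nabla}_{\varepsilon_{k+1}} g(\vy_{k+1}), \vy_{k+1}-\vy^\mu\rangle + \varepsilon_{k+1}.
\]
Adding these and inserting $\langle\vlmd^\mu, \vx_{k+1}+\vq_{k+1}-\vy_{k+1}\rangle$ on both sides places the LHS of \eqref{3.16_i} on the left. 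The key step is to peel off the $\vq_{k+1}$ pieces via $\vx_{k+1}+\vq_{k+1}-\vx^\mu = (\vx_{k+1}-\vx^\mu)+\vq_{k+1}$ and $\vx_{k+1}+\vq_{k+1}-\vy_{k+1} = (\vx_{k+1}-\vy_{k+1})+\vq_{k+1}$; the three resulting pieces built on $(\vx_{k+1}-\vx^\mu, \vy_{k+1}-\vy^\mu, \vx_{k+1}-\vy_{k+1})$ are precisely the LHS of the first inequality of Lemma \ref{lm3.5_i}, whose RHS delivers the telescoping $\norm{\cdot}^2$ differences together with the $\vr_{k+1}$ cross-terms that appear in \eqref{3.16_i}.

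It then remains to collect the leftover $\vq_{k+1}$ and $\varepsilon$ contributions. Substituting the definition \eqref{3.7_i} into the residual $\langle \hat{\nabla} f_k(\vx_{k+1}+\vq_{k+1}), \vq_{k+1}\rangle + \langle \vlmd^\mu, \vq_{k+1}\rangle$ produces $-\langle \vq_{k+1}, \vlmd_k - \vlmd^\mu + \beta(\vx_{k+1}-\vy_k)\rangle - \beta\norm{\vq_{k+1}}^2$, which combines with the $-\beta\langle\vq_{k+1},\vx_{k+1}-\vx^\mu\rangle$ already sitting in the RHS of Lemma \ref{lm3.5_i} to reproduce the target $-\langle\vq_{k+1}, \vlmd_k-\vlmd^\mu+\beta(\vx_{k+1}-\vy_k)+\beta(\vx_{k+1}-\vx^\mu)\rangle$; the quadratic $-\beta\norm{\vq_{k+1}}^2$ is nonpositive and is simply discarded. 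Likewise, the $\varepsilon$-subgradient step contributes one $\varepsilon_{k+1}$ while Lemma \ref{lm3.5_i} supplies $\varepsilon_k+\varepsilon_{k+1}$, totalling the claimed $\varepsilon_k+2\varepsilon_{k+1}$. The main obstacle is purely accountancy rather than analysis: Lemma \ref{lm3.5_i} already quietly absorbs the $\vr_{k+1}$ perturbations and one layer of $\varepsilon$-errors, so one must attach each $\vq_{k+1}$ to the correct reference point before invoking it and avoid double-counting $\varepsilon_{k+1}$.
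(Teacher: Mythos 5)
Your proposal is correct and follows essentially the same route as the paper's proof: convexity of $f_k$ plus the $\varepsilon$-subgradient inequality for $g$, then the first inequality of Lemma~\ref{lm3.5_i} applied to the pieces anchored at $(\vx_{k+1}-\vx^\mu,\vy_{k+1}-\vy^\mu,\vx_{k+1}-\vy_{k+1})$, with the residual $\vq_{k+1}$ terms resolved via \eqref{3.7_i} and the nonpositive $-\beta\norm{\vq_{k+1}}^2$ discarded. Your bookkeeping of the $\varepsilon$ contributions ($\varepsilon_{k+1}$ from the subgradient step plus $\varepsilon_k+\varepsilon_{k+1}$ from Lemma~\ref{lm3.5_i}) and your observation that Proposition~\ref{prop4} is not needed here both match the paper.
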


\begin{proof}[Proof of Lemma~\ref{lm3.6_i}.]
From the convexity of $f_k ( \vx  ) $ and $g ( \vy  ) $ and
 Eq.~\eqref{3.9_i} which asserts that $\hat{\nabla}f_k (  \vx_{k+1}+\vq_{k+1}  )  \in \partial f_k (  \vx_{k+1}+\vq_{k+1}  )$  and $\hat{\nabla}_{\varepsilon_{k+1}}g (  \vy_{k+1}  )  \in \partial_{\varepsilon_{k+1}} g (  \vy_{k+1}  )$, it follows for the LHS of Lemma~\ref{lm3.6_i} that:
    \begin{equation*}
        \begin{split}
%%%
f_k (  \vx_{k+1}+\vq_{k+1}  )  & +g (  \vy_{k+1}  )  -f_k (  \vx^\mu  )  -g (  \vy^\mu  )
            +\langle \vlmd^\mu,\vx_{k+1}+\vq_{k+1}-\vy_{k+1}\rangle\\
%%%
\leq & \langle \hat{\nabla}f_k (  \vx_{k+1}+\vq_{k+1}  )  ,\vx_{k+1}+\vq_{k+1}-\vx^\mu \rangle
            +\langle \hat{\nabla}_{\varepsilon_{k+1}}g (  \vy_{k+1}  )  ,\vy_{k+1}-\vy^\mu \rangle \\
            & +\varepsilon_{k+1}+\langle \vlmd^\mu,\vx_{k+1}+\vq_{k+1}-\vy_{k+1}\rangle\,.
        \end{split}
    \end{equation*}

Finally, by plugging in  the first part of Lemma~\ref{lm3.5_i} and using \eqref{3.7_i}, Lemma~\ref{lm3.6_i} follows, that is:
\begin{equation*}
        \begin{split}
f (  \vx_{k+1}+\vq_{k+1}  )  & +g (  \vy_{k+1}  ) -f (  \vx^\mu  )  -g (  \vy^\mu  )
            +\langle \vlmd^\mu,\vx_{k+1}+\vq_{k+1}-\vy_{k+1}\rangle\\
%%%
            &\leq \frac{1}{2\beta}\lVert \vlmd_k-\vlmd^\mu \rVert ^2-\frac{1}{2\beta}\lVert \vlmd_{k+1}-\vlmd^\mu\rVert ^2 +\frac{\beta}{2}\lVert -\vy_k+\vy^\mu \rVert ^2-\frac{\beta}{2}\lVert -\vy_{k+1}+\vy^\mu \rVert ^2\\
            &\quad -\frac{1}{2\beta}\lVert \vlmd_{k+1}-\vlmd_k \rVert ^2-\frac{\beta}{2}\lVert -\vy_{k+1}+\vy_k \rVert ^2 -\beta\langle \vr_{k+1}-\vr_k, \vy_{k+1}-\vy_k\rangle \nonumber \\
            &\quad-\beta\langle \vr_{k+1}, \vy_{k+1}-\vy^\mu\rangle\nonumber
        +\varepsilon_k+2\varepsilon_{k+1}\\
        &\quad-\beta\langle\vq_{k+1},\vx_{k+1}-\vx^\mu\rangle-\langle\vq_{k+1},\vlmd_k-\vlmd^\mu+\beta(\vx_{k+1}-\vy_k)+\beta\vq_{k+1}\rangle\\
        &\leq \frac{1}{2\beta}\lVert \vlmd_k-\vlmd^\mu \rVert ^2-\frac{1}{2\beta}\lVert \vlmd_{k+1}-\vlmd^\mu\rVert ^2 +\frac{\beta}{2}\lVert -\vy_k+\vy^\mu \rVert ^2-\frac{\beta}{2}\lVert -\vy_{k+1}+\vy^\mu \rVert ^2\\
            &\quad -\frac{1}{2\beta}\lVert \vlmd_{k+1}-\vlmd_k \rVert ^2-\frac{\beta}{2}\lVert -\vy_{k+1}+\vy_k \rVert ^2 -\beta\langle \vr_{k+1}-\vr_k, \vy_{k+1}-\vy_k\rangle \nonumber \\
            &\quad -\beta\langle \vr_{k+1}, \vy_{k+1}-\vy^\mu\rangle\nonumber
        +\varepsilon_k+2\varepsilon_{k+1}\\
        &\quad-\langle\vq_{k+1},\vlmd_k-\vlmd^\mu+\beta(\vx_{k+1}-\vy_k)+\beta(\vx_{k+1}-\vx^\mu)\rangle\,.
        \end{split}
    \end{equation*}
\end{proof}

%%%%%%%%%%%%%%%%%%%%%%%%%%%%%%%%%%%%%%%%%%%%%%%%%%%%%
The following theorem upper bounds the analogous quantity but for $f_k(\cdot)$  (instead of  $f$), and further asserts that the difference between the $\vx_{k+1}$ and $\vy_{k+1}$ iterates of inexact ACVI (Algorithm \ref{alg:inexact_acvi}) tends to $0$ asymptotically.

\begin{thm}[Asymptotic convergence of $(\vx_{k+1}-\vy_{k+1})$ of I-ACVI]\label{thm:3.1_i}
Assume that $\sum_{i=1}^{\infty}(\sigma_i+\sqrt{\varepsilon_i})<+\infty$, then
for the $\vx_{k+1}$, $\vy_{k+1}$, and $\vlmd_{k+1}$ iterates of the inexact ACVI---Algorithm \ref{alg:inexact_acvi}---we have:
    \begin{equation} \label{eq_app:upper_bound_i}  \tag{T\ref{thm:3.1_i}-$f_k$-UB}
    \begin{split}
    f_k (  \vx_{k+1}+\vq_{k+1}  )  & -f_k (  \vx^\mu_k )  +g (  \vy_{k+1}  )  -g (  \vy^\mu_k  ) \\
    \leq & \norm{\vlmd_{k+1}}\norm{\vx_{k+1}-\vy_{k+1}}+\beta\norm{\vy_{k+1}-\vy_k}\norm{\vx_{k+1}-\vx_k^\mu}\\
    &+\beta\sigma_{k+1}\norm{\vx_{k+1}-\vx^\mu_k}+\sqrt{2\varepsilon_{k+1}\beta}\norm{\vy_{k+1}-\vy^\mu_k}+\varepsilon_{k+1}\rightarrow 0\,,
    \end{split}
    \end{equation}
    and
    \begin{equation*}
        \vx_{k+1}-\vy_{k+1}\rightarrow\mathbf{0}\,,
        \qquad
        \text{as $k\rightarrow \infty$} \,.
    \end{equation*}
\end{thm}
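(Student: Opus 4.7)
The plan is to follow the structure of the exact proof of Theorem~\ref{thm:3.1}, but to carefully track the approximation errors $\{\vq_k\}$ and $\{\vr_k\}$ and invoke Lemma~\ref{lm:series} to absorb them. First, by the inexact analogue of \eqref{eq_app:lm3.1_2} in Lemma~\ref{lm3.1} applied to $f_k$ (using that $(\vx_k^\mu,\vy_k^\mu,\vlmd_k^\mu)$ is a KKT point of \eqref{eq_app:new_points_mu}) and a controlled perturbation by $\vq_{k+1}$, the left-hand side of Lemma~\ref{lm3.6_i} is non-negative up to a term of order $\sigma_{k+1}\|\vlmd^\mu\|$. Rearranging the resulting inequality yields a recursion of the form
\[
a_{k+1}^2 \le a_k^2 - b_k^2 + C(\sigma_{k+1}+\sqrt{\varepsilon_{k+1}}+\sqrt{\varepsilon_k})\, a_{k+1} + C'(\varepsilon_k+\varepsilon_{k+1}),
\]
where $a_k^2 \triangleq \tfrac{1}{2\beta}\|\vlmd_k-\vlmd^\mu\|^2+\tfrac{\beta}{2}\|\vy_k-\vy^\mu\|^2$ and $b_k^2 \triangleq \tfrac{1}{2\beta}\|\vlmd_{k+1}-\vlmd_k\|^2+\tfrac{\beta}{2}\|\vy_{k+1}-\vy_k\|^2$, with the error coefficients arising from $\beta\langle\vr_{k+1}-\vr_k,\vy_{k+1}-\vy_k\rangle$, $\beta\langle\vr_{k+1},\vy_{k+1}-\vy^\mu\rangle$, and $\langle\vq_{k+1},\cdot\rangle$, all handled by Cauchy--Schwarz together with the a priori bounds $\|\vq_{k+1}\|\le\sigma_{k+1}$ and $\|\vr_{k+1}\|\le\sqrt{2\varepsilon_{k+1}/\beta}$.

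Next, iterating this recursion gives $a_{k+1}^2 \le a_0^2 + \sum_{i=1}^{k+1}\lambda_i a_i + S$ with $\lambda_i = \mathcal{O}(\sigma_i+\sqrt{\varepsilon_i})$ and $S = \mathcal{O}\bigl(\sum_i(\varepsilon_i)\bigr)$, both finite under the summability assumption $\sum_i(\sigma_i+\sqrt{\varepsilon_i})<+\infty$. Applying Lemma~\ref{lm:series} then shows that $\{a_k\}$ is uniformly bounded, hence so are $\{\vlmd_k\}$ and $\{\vy_k\}$. Summing the recursion over $k$ and using the boundedness of $\{a_k\}$ together with the summability of $\lambda_k$ yields $\sum_k b_k^2 < +\infty$, so $\vlmd_{k+1}-\vlmd_k \to \mathbf{0}$ and $\vy_{k+1}-\vy_k \to \mathbf{0}$. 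Invoking \eqref{3.3_i} then gives $\vx_{k+1}-\vy_{k+1}\to \mathbf{0}$, and boundedness of $\{a_k\}$ together with compactness of $\mathcal{C}$ gives boundedness of $\vx_{k+1}-\vx_k^\mu$ and $\vy_{k+1}-\vy_k^\mu$.

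Finally, for the explicit upper bound~\eqref{eq_app:upper_bound_i}, I would apply convexity of $f_k$ and the $\varepsilon_{k+1}$-subgradient inequality for $g$ to get
\[
f_k(\vx_{k+1}+\vq_{k+1})-f_k(\vx_k^\mu)+g(\vy_{k+1})-g(\vy_k^\mu) \le \langle \hat\nabla f_k(\vx_{k+1}+\vq_{k+1}), \vx_{k+1}+\vq_{k+1}-\vx_k^\mu\rangle + \langle \hat\nabla_{\varepsilon_{k+1}} g(\vy_{k+1}),\vy_{k+1}-\vy_k^\mu\rangle + \varepsilon_{k+1},
\]
and then substitute~\eqref{3.11_i} from Lemma~\ref{lm3.4_i} with $(\vx,\vy)=(\vx_k^\mu,\vy_k^\mu)$; using $\vx_k^\mu=\vy_k^\mu$ and bounding the remaining terms by Cauchy--Schwarz gives exactly the four summands on the right-hand side of~\eqref{eq_app:upper_bound_i}. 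Convergence to $0$ follows from $\vx_{k+1}-\vy_{k+1}\to \mathbf{0}$, $\vy_{k+1}-\vy_k\to \mathbf{0}$, boundedness of the remaining norms, and $\sigma_{k+1},\varepsilon_{k+1}\to 0$ (a consequence of summability).

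The main obstacle is the careful bookkeeping of the cross terms $\beta\langle\vr_{k+1}-\vr_k,\vy_{k+1}-\vy_k\rangle$ and the inner products $\langle\vq_{k+1},\vlmd_k-\vlmd^\mu+\beta(\vx_{k+1}-\vy_k)+\beta(\vx_{k+1}-\vx^\mu)\rangle$ appearing in Lemma~\ref{lm3.6_i}: to apply Lemma~\ref{lm:series} one must split them as $\lambda_i a_i$ with $\lambda_i = \mathcal{O}(\sigma_i+\sqrt{\varepsilon_i})$, which in turn requires bounding $\|\vlmd_k\|$ and $\|\vx_{k+1}-\vy_k\|$ in terms of $a_k$---producing a circularity that is broken precisely by the form of Lemma~\ref{lm:series}.
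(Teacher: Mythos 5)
Your proposal follows essentially the same route as the paper's proof: establish non-negativity of the left-hand side of Lemma~\ref{lm3.6_i} via Lemma~\ref{lm3.1}, rearrange into a telescoping recursion whose error terms are linear in the distance to the fixed central-path point $(\vx^\mu,\vy^\mu,\vlmd^\mu)$ with coefficients of order $\sigma_k+\sqrt{\varepsilon_k}$, break the circularity with Lemma~\ref{lm:series} to get uniform boundedness of the iterates, sum to conclude $\vlmd_{k+1}-\vlmd_k\rightarrow\mathbf{0}$ and $\vy_{k+1}-\vy_k\rightarrow\mathbf{0}$ (hence $\vx_{k+1}-\vy_{k+1}\rightarrow\mathbf{0}$ via \eqref{3.3_i}), and finally obtain the explicit bound from convexity, the $\varepsilon$-subgradient inequality, \eqref{3.11_i} at $(\vx_k^\mu,\vy_k^\mu)$, and Cauchy--Schwarz. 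The only cosmetic difference is that you track the squared potential $a_k^2$ where the paper works with the linear quantity $u_k=\beta\norm{\vy_k-\vy^\mu}+\norm{\vlmd_k-\vlmd^\mu}$, and your opening citation should be to \eqref{eq_app:lm3.1_1} (the $f$-version at the fixed point $(\vx^\mu,\vy^\mu,\vlmd^\mu)$) rather than the $k$-dependent \eqref{eq_app:lm3.1_2}; neither affects the argument.
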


\begin{proof}[Proof of Lemma~\ref{thm:3.1_i}]
Recall from \eqref{eq_app:lm3.1_1} of Lemma \ref{lm3.1} that by setting $\vx\equiv \vx_{k+1}+\vq_{k+1}, \vy\equiv\vy_{k+1}$ it asserts that:
$$
f (\vx_{k+1}+\vq_{k+1}) - f(\vx^\mu) + g(\vy_{k+1}) - g(\vy^\mu) + \langle \vlmd ^\mu,\vx_{k+1}+\vq_{k+1}-\vy_{k+1} \rangle \geq 0 \,.
$$
Further, notice that the LHS of the above inequality overlaps with that of~\eqref{3.16_i}.
This implies that the RHS of~\eqref{3.16_i} has to be non-negative.
Hence, we have that:
    \begin{equation*}
        \begin{split}
        &\quad  \frac{1}{2\beta}\lVert \vlmd_{k+1}-\vlmd_k \rVert ^2+\frac{\beta}{2}\lVert -\vy_{k+1}+\vy_k \rVert ^2\\
            &\leq \frac{1}{2\beta}\lVert \vlmd_k-\vlmd^\mu \rVert ^2-\frac{1}{2\beta}\lVert \vlmd_{k+1}-\vlmd^\mu \rVert ^2\\
        &\quad +\frac{\beta}{2}\lVert -\vy_k+\vy^\mu \rVert ^2-\frac{\beta}{2}\lVert -\vy_{k+1}+\vy^\mu \rVert ^2\\
        &\quad -\beta\langle \vr_{k+1}-\vr_k, \vy_{k+1}-\vy_k\rangle-\beta\langle \vr_{k+1}, \vy_{k+1}-\vy^\mu\rangle\\
        &\quad+\varepsilon_k+2\varepsilon_{k+1}-\langle\vq_{k+1},\vlmd_k-\vlmd^\mu+\beta(\vx_{k+1}-\vy_k)+\beta(\vx_{k+1}-\vx^\mu)\rangle\,.
        \end{split}
    \end{equation*}

    Recall that $\norm{\vr_{k+1}}\leq\sqrt{\frac{2\varepsilon_{k+1}}{\beta}}$ and $\norm{\vq_{k+1}}\leq\sigma_{k+1}$ (see Lemma~\ref{lm:eps_gg} and Lemma~\ref{lm:inexcat_x}), by Cauchy-Schwarz inequality we have:
    \begin{equation}\label{3.19_i}
        \begin{split}
        &\quad  \frac{1}{2\beta}\lVert \vlmd_{k+1}-\vlmd_k \rVert ^2+\frac{\beta}{2}\lVert -\vy_{k+1}+\vy_k \rVert ^2\\
            &\leq \frac{1}{2\beta}\lVert \vlmd_k-\vlmd^\mu \rVert ^2-\frac{1}{2\beta}\lVert \vlmd_{k+1}-\vlmd^\mu \rVert ^2\\
        &\quad +\frac{\beta}{2}\lVert -\vy_k+\vy^\mu \rVert ^2-\frac{\beta}{2}\lVert -\vy_{k+1}+\vy^\mu \rVert ^2\\
        &\quad +\sqrt{2\beta}(\sqrt{\varepsilon_{k+1}}+\sqrt{\varepsilon_{k}})\norm{\vy_{k+1}-\vy_k}+\sqrt{2\beta\varepsilon_{k+1}}\norm{\vy_{k+1}-\vy^\mu}\\
        &\quad+\varepsilon_k+2\varepsilon_{k+1}+\sigma_{k+1}(\norm{\vlmd_k-\vlmd^\mu}+\beta\norm{\vx_{k+1}-\vy_k}+\beta\norm{\vx_{k+1}-\vx^\mu})\,.
        \end{split}
    \end{equation}
Summing over $k=0,\dots,\infty$, we have:
\begin{equation}\label{eq_app:summing}
    \begin{split}
        &\sum_{k=0}^{\infty} \Big(  \frac{1}{2\beta}\lVert \vlmd_{k+1}-\vlmd_k \rVert ^2+\frac{\beta}{2}\lVert -\vy_{k+1}+\vy_k \rVert ^2  \Big) \\
        \leq &\frac{1}{2\beta}\lVert \vlmd_0-\vlmd^\mu \rVert ^2+\frac{\beta}{2}\lVert -\vy_0+\vy^\mu \rVert ^2\\
        &+\sqrt{2\beta}\sum_{k=0}^{\infty}\big((\sqrt{\varepsilon_{k+1}}+\sqrt{\varepsilon_{k}})\norm{\vy_{k+1}-\vy_k}+\sqrt{\varepsilon_{k+1}}\norm{\vy_{k+1}-\vy^\mu}\big)+3\sum_{k=0}^{\infty}\varepsilon_{k}\\
        &+\sum_{k=0}^{\infty}\sigma_{k+1}\big(\norm{\vlmd_k-\vlmd^\mu}+\beta\norm{\vx_{k+1}-\vy_k}+\beta\norm{\vx_{k+1}-\vx^\mu}\big)\,.
    \end{split}
\end{equation}

Also notice that by simply reorganizing~\eqref{3.19_i} we have:
\begin{equation}\label{3.19_sub_i}
\begin{split}
    \frac{1}{2\beta}\lVert \vlmd_{k+1}-\vlmd^\mu \rVert ^2+
    & \frac{\beta}{2}\lVert -\vy_{k+1}+\vy^\mu \rVert ^2\\
    \leq &\frac{1}{2\beta}\lVert \vlmd_k-\vlmd^\mu \rVert ^2+\frac{\beta}{2}\lVert -\vy_k+\vy^\mu \rVert ^2-\frac{1}{2\beta}\lVert \vlmd_{k+1}-\vlmd_k \rVert ^2-\frac{\beta}{2}\lVert -\vy_{k+1}+\vy_k \rVert ^2\\
    &+\sqrt{2\beta}(\sqrt{\varepsilon_{k+1}}+\sqrt{\varepsilon_{k}})\norm{\vy_{k+1}-\vy_k}+\sqrt{2\beta\varepsilon_{k+1}}\norm{\vy_{k+1}-\vy^\mu}\\
    &+\varepsilon_k+2\varepsilon_{k+1}+\sigma_{k+1}(\norm{\vlmd_k-\vlmd^\mu}+\beta\norm{\vx_{k+1}-\vy_k}+\beta\norm{\vx_{k+1}-\vx^\mu})\\
    \leq &\frac{1}{2\beta}\lVert \vlmd_k-\vlmd^\mu \rVert ^2+\frac{\beta}{2}\lVert -\vy_k+\vy^\mu \rVert ^2\\
    &+\sqrt{2\beta}(\sqrt{\varepsilon_{k+1}}+\sqrt{\varepsilon_{k}})\norm{\vy_{k+1}-\vy_k}+\sqrt{2\beta\varepsilon_{k+1}}\norm{\vy_{k+1}-\vy^\mu}\\
    &+\varepsilon_k+2\varepsilon_{k+1}+\sigma_{k+1}(\norm{\vlmd_k-\vlmd^\mu}+\beta\norm{\vx_{k+1}-\vy_k}+\beta\norm{\vx_{k+1}-\vx^\mu})\\
    \leq &\frac{1}{2\beta}\lVert \vlmd_0-\vlmd^\mu \rVert ^2+\frac{\beta}{2}\lVert -\vy_0+\vy^\mu \rVert ^2\\
    &+\sqrt{2\beta}\sum_{i=0}^k(\sqrt{\varepsilon_{i+1}}+\sqrt{\varepsilon_{i}})\norm{\vy_{i+1}-\vy_i}+\sqrt{2\beta}\sum_{i=0}^k\sqrt{\varepsilon_{i+1}}\norm{\vy_{i+1}-\vy^\mu}\\
    &+\sum_{i=0}^k\varepsilon_i+2\sum_{i=0}^k\varepsilon_{i+1}\\
    &+\sum_{i=0}^k\sigma_{i+1}(\norm{\vlmd_i-\vlmd^\mu}+\beta\norm{\vx_{i+1}-\vy_i}+\beta\norm{\vx_{i+1}-\vx^\mu})\,,
\end{split}
\end{equation}
where the second inequality follows because the norm is non-negative.

From the above inequality, we deduce:
\begin{equation}\label{eq_app:intermediate_2_i}
    \begin{split}
        &\frac{1}{4\beta}\big(\norm{\vlmd_{k+1}-\vlmd^\mu}+\beta\norm{\vy_{k+1}-\vy^\mu}\big)^2\\
    \leq&\frac{1}{2\beta}\norm{\vlmd_{k+1}-\vlmd^\mu}^2+\frac{\beta}{2}\norm{\vy_{k+1}-\vy^\mu}^2\\
    \leq &\frac{1}{2\beta}\lVert \vlmd_0-\vlmd^\mu \rVert ^2+\frac{\beta}{2}\lVert -\vy_0+\vy^\mu \rVert ^2\\
    &+\sqrt{2\beta}\sum_{i=0}^k(\sqrt{\varepsilon_{i+1}}+\sqrt{\varepsilon_{i}})\norm{\vy_{i+1}-\vy_i}+\sqrt{2\beta}\sum_{i=0}^k\sqrt{\varepsilon_{i+1}}\norm{\vy_{i+1}-\vy^\mu}\\
    &+\sum_{i=0}^k\varepsilon_i+2\sum_{i=0}^k\varepsilon_{i+1}+\sum_{i=0}^k\sigma_{i+1}(\norm{\vlmd_i-\vlmd^\mu}+\beta\norm{\vx_{i+1}-\vy_i}+\beta\norm{\vx_{i+1}-\vx^\mu})\,,
    \end{split}
\end{equation}
where the first inequality is the Cauchy-Schwarz inequality.

Using \eqref{3.3_i} and \eqref{3.6_i}, we have:
\begin{equation}\label{eq_app:x_bound}
    \begin{split}
    \norm{\vx_{i+1}-\vx^\mu} &= \norm{\vy_{i+1}-\vy^\mu+\vx_{i+1}-\vy_{i+1}} \\
    &\leq\norm{\vy_{i+1}-\vy^\mu}+\frac{1}{\beta}\norm{\vlmd_{i+1}-\vlmd_i}\\
    &\leq \norm{\vy_{i+1}-\vy^\mu}+\frac{1}{\beta}\norm{\vlmd_{i+1}-\vlmd^\mu}+\frac{1}{\beta}\norm{\vlmd_{i}-\vlmd^\mu}\,,
\end{split}
\end{equation}

\begin{equation}\label{eq_app:xy_bound}
  \begin{split}
    \norm{\vx_{i+1}-\vy_i}& \leq\norm{\vx_{i+1}-\vx^\mu}+\norm{\vy_i-\vy^\mu} \\
    & \leq \norm{\vy_i-\vy^\mu}+\norm{\vy_{i+1}-\vy^\mu}+\frac{1}{\beta}\norm{\vlmd_{i+1}-\vlmd^\mu}+\frac{1}{\beta}\norm{\vlmd_{i}-\vlmd^\mu}\,,
  \end{split}
\end{equation}
\begin{equation}\label{eq_app:y_bound}
    \norm{\vy_{i+1}-\vy_i}\leq\norm{\vy_{i+1}-\vy^\mu}+\norm{\vy_{i}-\vy^\mu}\,.
\end{equation}

Plugging these into \eqref{eq_app:intermediate_2_i}, we obtain:
\begin{equation*}
    \begin{split}
    &\frac{1}{4\beta}\big(\norm{\vlmd_{k+1}-\vlmd^\mu}+\beta\norm{\vy_{k+1}-\vy^\mu}\big)^2\\
    %%%
    \leq&\frac{1}{2\beta}\norm{\vlmd_{0}-\vlmd^\mu}^2+\frac{\beta}{2}\norm{\vy_{0}-\vy^\mu}^2+\sqrt{2\beta}\sum_{i=0}^k(\sqrt{\varepsilon_{i+1}}+\sqrt{\varepsilon_{i}})\big(\norm{\vy_{i+1}-\vy^\mu}+\norm{\vy_{i}-\vy^\mu}\big)\\
    &+\sqrt{2\beta}\sum_{i=0}^k\sqrt{\varepsilon_{i+1}}\norm{\vy_{i+1}-\vy^\mu}+\sum_{i=0}^k\varepsilon_i+2\sum_{i=0}^k\varepsilon_{i+1}\\
    &+\sum_{i=0}^k\sigma_{i+1}\Bigg(\norm{\vlmd_i-\vlmd^\mu}+\beta\big(\norm{\vy_i-\vy^\mu}+\norm{\vy_{i+1}-\vy^\mu}+\frac{1}{\beta}\norm{\vlmd_{i+1}-\vlmd^\mu}\\
    &\qquad +\frac{1}{\beta}\norm{\vlmd_{i}-\vlmd^\mu}\big) +\beta\big(\norm{\vy_{i+1}-\vy^\mu}+\frac{1}{\beta}\norm{\vlmd_{i+1}-\vlmd^\mu}+\frac{1}{\beta}\norm{\vlmd_{i}-\vlmd^\mu}\big)\Bigg)\\
    %%%%%%
    \leq&\frac{1}{2\beta}\norm{\vlmd_{0}-\vlmd^\mu}^2+\frac{\beta}{2}\norm{\vy_{0}-\vy^\mu}^2+\sum_{i=0}^k\varepsilon_i+2\sum_{i=0}^k\varepsilon_{i+1}\\
    &+\sum_{i=0}^k\Big(\sqrt{2\beta}\big(\sqrt{\varepsilon_{i+1}}+\sqrt{\varepsilon_{i}}\big)+\beta\sigma_{i+1}\Big)\norm{\vy_i-\vy^\mu} +\sum_{i=0}^k3\sigma_{i+1}\norm{\vlmd_{i}-\vlmd^\mu}\\
    &+\sum_{i=0}^k\Big(\sqrt{2\beta}\big(2\sqrt{\varepsilon_{i+1}}+\sqrt{\varepsilon_{i}}\big)+2\beta\sigma_{i+1}\Big)\norm{\vy_{i+1}-\vy^\mu} +\sum_{i=0}^k2\sigma_{i+1}\norm{\vlmd_{i+1}-\vlmd^\mu}\\
    % &\\
    %%%%%%%%%
    =&\frac{1}{2\beta}\norm{\vlmd_{0}-\vlmd^\mu}^2+\frac{\beta}{2}\norm{\vy_{0}-\vy^\mu}^2+\sum_{i=0}^k\varepsilon_i +2\sum_{i=0}^k\varepsilon_{i+1} \\
    &+\big(\sqrt{2\beta}\sqrt{\varepsilon_{1}}+\beta\sigma_{1}\big)\norm{\vy_0-\vy^\mu}+3\sigma_{1}\norm{\vlmd_{0}-\vlmd^\mu}\\
    &+\sum_{i=1}^k\big(\sqrt{2\beta}\big(\sqrt{\varepsilon_{i+1}}+\sqrt{\varepsilon_{i}}\big)+\beta\sigma_{i+1}\big)\norm{\vy_i-\vy^\mu}\\
    &+\sum_{i=1}^{k+1}\big(\sqrt{2\beta}\big(2\sqrt{\varepsilon_{i}}+\sqrt{\varepsilon_{i-1}}\big)+2\beta\sigma_{i}\big)\norm{\vy_{i}-\vy^\mu}\\
    &+\sum_{i=1}^k3\sigma_{i+1}\norm{\vlmd_{i}-\vlmd^\mu}+\sum_{i=1}^{k+1}2\sigma_{i}\norm{\vlmd_{i}-\vlmd^\mu}\\
    %%%%%%%
    \leq&\frac{1}{2\beta}\norm{\vlmd_{0}-\vlmd^\mu}^2+\frac{\beta}{2}\norm{\vy_{0}-\vy^\mu}^2+\big(\sqrt{2\beta}\sqrt{\varepsilon_{1}}+\beta\sigma_{1}\big)\norm{\vy_0-\vy^\mu}
    +3\sigma_{1}\norm{\vlmd_{0}-\vlmd^\mu}+3\sum_{i=1}^{k+1}\varepsilon_i\\
    &+\sum_{i=1}^{k+1}\big(\sqrt{2\beta}\big(\sqrt{\varepsilon_{i+1}}+3\sqrt{\varepsilon_{i}}+\sqrt{\varepsilon_{i-1}}\big)+\beta(2\sigma_{i}+\sigma_{i+1})\big)\norm{\vy_{i}-\vy^\mu}\\
    &+\sum_{i=1}^{k+1}(2\sigma_{i}+3\sigma_{i+1})\norm{\vlmd_{i}-\vlmd^\mu}\\
    %%%%%%
    \leq&\frac{1}{2\beta}\norm{\vlmd_{0}-\vlmd^\mu}^2+\frac{\beta}{2}\norm{\vy_{0}-\vy^\mu}^2+\big(\sqrt{2\beta}\sqrt{\varepsilon_{1}}+\beta\sigma_{1}\big)\norm{\vy_0-\vy^\mu}+3\sigma_{1}\norm{\vlmd_{0}-\vlmd^\mu}+3\sum_{i=1}^{k+1}\varepsilon_i\\
    &+\sum_{i=1}^{k+1}\big(\sqrt{\frac{2}{\beta}}\big(\sqrt{\varepsilon_{i+1}}+3\sqrt{\varepsilon_{i}}+\sqrt{\varepsilon_{i-1}}\big)+(2\sigma_{i}+3\sigma_{i+1})\big)\big(\beta\norm{\vy_{i}-\vy^\mu}+\norm{\vlmd_{i}-\vlmd^\mu}\big)\,,
    \end{split}
\end{equation*}

From which we deduce:
\begin{align*}
    &\big(\norm{\vlmd_{k+1}-\vlmd^\mu}+\beta\norm{\vy_{k+1}-\vy^\mu}\big)^2\\
    \leq&2\norm{\vlmd_{0}-\vlmd^\mu}^2+2\beta^2\norm{\vy_{0}-\vy^\mu}^2+4\beta\big(\sqrt{2\beta}\sqrt{\varepsilon_{1}}+\beta\sigma_{1}\big)\norm{\vy_0-\vy^\mu} \\
    & +12\beta\sigma_{1}\norm{\vlmd_{0}-\vlmd^\mu}
    +12\beta\sum_{i=1}^{k+1}\varepsilon_i\\
    &+4\beta\sum_{i=1}^{k+1}\big(\sqrt{\frac{2}{\beta}}\big(\sqrt{\varepsilon_{i+1}}+3\sqrt{\varepsilon_{i}}+\sqrt{\varepsilon_{i-1}}\big)+(2\sigma_{i}+3\sigma_{i+1})\big)\big(\beta\norm{\vy_{i}-\vy^\mu}+\norm{\vlmd_{i}-\vlmd^\mu}\big)\,.
\end{align*}

Now we set $u_i\triangleq\beta\norm{\vy_{i}-\vy^\mu}+\norm{\vlmd_{i}-\vlmd^\mu}$, $\lambda_i\triangleq4\beta\big(\sqrt{\frac{2}{\beta}}\big(\sqrt{\varepsilon_{i+1}}+3\sqrt{\varepsilon_{i}}+\sqrt{\varepsilon_{i-1}}\big)+(2\sigma_{i}+3\sigma_{i+1})\big)$ and $S_{k+1}\triangleq2\norm{\vlmd_{0}-\vlmd^\mu}^2+2\beta^2\norm{\vy_{0}-\vy^\mu}^2+4\beta\big(\sqrt{2\beta}\sqrt{\varepsilon_{1}}+\beta\sigma_{1}\big)\norm{\vy_0-\vy^\mu}+12\beta\sigma_{1}\norm{\vlmd_{0}-\vlmd^\mu}+12\beta\sum_{i=1}^{k+1}\varepsilon_i$ and Lemma~\ref{lm:series} to get:
\begin{equation}\label{eq_app:upper_bound_s}
    u_{k+1}\leq \underbrace{\frac{1}{2}\sum_{i=1}^{k+1} \lambda_i +\left(S_{k+1}+\left(\frac{1}{2}\sum_{i=1}^{k+1} \lambda_i\right)^2\right)^{1/2}}_{A_{k+1}}\,,
\end{equation}

where we set the RHS of \eqref{eq_app:upper_bound_s} to be $A_{k+1}$.

Note that when $\sum_{i=1}^{\infty}(\sigma_i+\sqrt{\varepsilon_i})<+\infty$, we have $A^\mu\triangleq\underset{k\rightarrow+\infty}{\lim}A_k<+\infty$, and
\begin{align}
    &\norm{\vy_{k}-\vy^\mu}\leq \frac{1}{\beta}A^\mu\,,\label{eq_app:y_ymu_bound}\\
    &\norm{\vlmd_{k}-\vlmd^\mu}\leq A^\mu\,.\label{eq_app:lmd_lmdmu_bound}
\end{align}

Using Eq.~\eqref{eq_app:summing} we could further get:
\begin{equation}\label{eq_app:x_xum_bound}
    \norm{\vx_{k}-\vx^\mu}\leq \frac{3}{\beta}A^\mu\,.
\end{equation}

Combining \eqref{eq_app:x_bound},\eqref{eq_app:xy_bound} and \eqref{eq_app:y_bound} with \eqref{eq_app:summing} and using the above inequalities, we have:
\begin{align}
        &\sum_{k=0}^{\infty} \Big(  \frac{1}{2\beta}\lVert \vlmd_{k+1}-\vlmd_k \rVert ^2+\frac{\beta}{2}\lVert -\vy_{k+1}+\vy_k \rVert ^2  \Big) \nonumber\\
        \leq &\frac{1}{2\beta}\lVert \vlmd_0-\vlmd^\mu \rVert ^2+\frac{\beta}{2}\lVert -\vy_0+\vy^\mu \rVert ^2\nonumber\\
        &+\sqrt{2\beta}\sum_{k=0}^{\infty}\big((\sqrt{\varepsilon_{k+1}}+\sqrt{\varepsilon_{k}})\cdot \frac{2}{\beta}A^\mu+\sqrt{\varepsilon_{k+1}}\cdot \frac{1}{\beta}A^\mu\big)+3\sum_{k=0}^{\infty}\varepsilon_{k}\nonumber\\
        &+\sum_{k=0}^{\infty}\sigma_{k+1}\big(A^\mu+\beta\cdot\frac{4}{\beta}A^\mu+\beta\cdot\frac{3}{\beta}A^\mu\big)\nonumber\\
        \leq&\frac{1}{2\beta}\lVert \vlmd_0-\vlmd^\mu \rVert ^2+\frac{\beta}{2}\lVert -\vy_0+\vy^\mu \rVert ^2+5\sqrt{\frac{2}{\beta}}A^\mu\sum_{k=1}^{\infty}\sqrt{\varepsilon_{k}}+3\sum_{k=1}^{\infty}\varepsilon_k+8A^\mu\sum_{k=1}^{\infty}\sigma_{k}\,,\label{eq_app:impt_bd_i}
\end{align}
from which we can see that $\vlmd_{k+1}-\vlmd_{k}\rightarrow \mathbf{0}$ and $\vy_{k+1}-\vy_{k}\rightarrow \mathbf{0}$.

%%%%%%%%%%%%%%%%%%%%%%%%%%%%%%%%%%%%%%%%%%%%%%%%%%
Recall that:
\begin{equation*}
    \vlmd_{k+1}-\vlmd_k=\beta  (\vx_{k+1}-\vy_{k+1})  \,,
\end{equation*}
from which we deduce $\vx_{k+1}-\vy_{k+1}\rightarrow \boldsymbol{0}$.

Using the convexity of $f_k(\cdot)$ and $g(\cdot)$ for the LHS of Theorem~\ref{thm:3.1_i} we have:
\begin{align*}
   \text{LHS} &= f_k (  \vx_{k+1}+\vq_{k+1}  )   -f_k (  \vx^\mu_k )  +g (  \vy_{k+1}  )  -g (  \vy^\mu_k  )\\
    &\leq \langle \hat{\nabla}f_k (  \vx_{k+1}+\vq_{k+1}  ) ,\vx_{k+1}-\vx^\mu_k \rangle +\langle \hat{\nabla}_{\varepsilon_{k+1}}g (  \vy_{k+1} ) ,\vy_{k+1}-\vy^\mu_k \rangle+\varepsilon_{k+1}
\end{align*}

Using \eqref{3.11_i} with $\vx\equiv\vx_k^\mu, \vy\equiv\vy^\mu_k$ we have:
\begin{align*}
    \text{LHS} \leq&-\langle\vlmd_{k+1},\vx_{k+1}-\vy_{k+1} \underbrace{-\vx_k^\mu+\vy_k^\mu}_{= \mathbf{0}, \text{ due to \eqref{3.6_new}}}
    \rangle
    +\beta\langle-\vy_{k+1}+\vy_k,\vx_{k+1}-\vx_k^\mu \rangle \\
    &-\beta \langle\vq_{k+1},\vx_{k+1}-\vx^\mu_k\rangle-\beta\langle \vr_{k+1},\vy_{k+1}-\vy^\mu_k \rangle+\varepsilon_{k+1}\,.
\end{align*}
$$
\,.
$$

Hence, it follows that:
\begin{align*}
    f_k (  \vx_{k+1}+\vq_{k+1}  )  & -f_k (  \vx^\mu_k )  +g (  \vy_{k+1}  )  -g (  \vy^\mu_k  ) \\
    \leq &-\langle \vlmd_{k+1},\vx_{k+1}-\vy_{k+1}\rangle +\beta \langle -\vy_{k+1}+\vy_k,\vx_{k+1}-\vx^\mu_k\rangle\\
    &-\beta \langle\vq_{k+1},\vx_{k+1}-\vx^\mu_k\rangle-\beta\langle \vr_{k+1},\vy_{k+1}-\vy^\mu_k \rangle+\varepsilon_{k+1}\\
    \leq & \norm{\vlmd_{k+1}}\norm{\vx_{k+1}-\vy_{k+1}}+\beta\norm{\vy_{k+1}-\vy_k}\norm{\vx_{k+1}-\vx_k^\mu}\\
    &+\beta\sigma_{k+1}\norm{\vx_{k+1}-\vx^\mu_k}+\sqrt{2\varepsilon_{k+1}\beta}\norm{\vy_{k+1}-\vy^\mu_k}+\varepsilon_{k+1}\,,
\end{align*}
where the last inequality follows from Cauchy-Schwarz.

Recall that $\mathcal{C}$ is compact and $D$ is the diameter of $\mathcal{C}$:
\begin{equation*}
    D \triangleq \underset{\vx,\vy\in\mathcal{C}}{\sup}\norm{\vx-\vy} \,.
\end{equation*}
Combining with \eqref{eq_app:y_bound}, we have:
\begin{equation}\label{eq_app:bd_ymuk}
    \norm{\vy_{k+1}-\vy_{k}^\mu}=\norm{\vy_{k+1}-\vy^\mu}+\norm{\vy^\mu-\vy_{k+1}^\mu}\leq \frac{1}{\beta}A^\mu+D\,,
\end{equation}

which implies that $\lVert \vy_k-\vy_k^\mu \rVert$ are bounded for all $k$.
Similarly, using \eqref{eq_app:x_bound}, we deduce:
\begin{equation}\label{eq_app:bd_xmuk}
    \norm{\vx_{k+1}-\vx_{k}^\mu}=\norm{\vx_{k+1}-\vx^\mu}+\norm{\vx^\mu-\vx_{k+1}^\mu}\leq \frac{3}{\beta}A^\mu+D\,,
\end{equation}
which implies that $\vx_{k+1}-\vx^\mu_k$ is also bounded. Note that when $\sum_{i=1}^{\infty}(\sigma_i+\sqrt{\varepsilon_i})<+\infty$, we have $\underset{k\rightarrow\infty}{\lim}\sigma_k=\underset{k\rightarrow\infty}{\lim}\varepsilon_k=0$. Thus, we have \eqref{eq_app:upper_bound_i}.
\end{proof}

\begin{lm}\label{lm3.7_i}
Assume that $F$ is L-Lipschitz. For the $\vx_{k+1}$, $\vy_{k+1}$, and $\vlmd_{k+1}$ iterates of the ACVI---Algorithm \ref{alg:acvi}---we have:
    \begin{equation}\label{3.20_i} \tag{L\ref{lm3.7_i}}
        \begin{split}
            \frac{1}{2\beta} \lVert \vlmd_{k+1}-\vlmd_k \rVert ^2&+\frac{\beta}{2}\lVert -\vy_{k+1}+\vy_k \rVert ^2 \\
%%%%%%
            \leq& \frac{1}{2\beta}\lVert \vlmd_k-\vlmd_{k-1} \rVert ^2+\frac{\beta}{2}\lVert -\vy_k+\vy_{k-1} \rVert ^2\\
            &+(\sigma_{k+1}+\sigma_k)\Big(\beta\norm{\vy_k-\vy_{k-1}}+(2\beta+L)\norm{\vy_{k+1}-\vy_k} \\
            & \qquad \qquad \qquad  +\left(2+\frac{L}{\beta}\right)\norm{\vlmd_{k+1}-\vlmd_k}+\left(\frac{L}{\beta}+1\right)\norm{\vlmd_{k}-\vlmd_{k-1}}\Big)\\
            %%%%
            &+\sqrt{2\beta}(\sqrt{\varepsilon_k}+\sqrt{\varepsilon_{k+1}})\norm{\vy_{k+1}-\vy_k}+\varepsilon_k+\varepsilon_{k+1}\,.
        \end{split}
    \end{equation}
\end{lm}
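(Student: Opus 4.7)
}
The plan is to mirror the proof of Lemma~\ref{lm3.7} (the exact analogue) and carefully carry the three new error sources through the argument: (i) the $\sigma_k$-error $\vq_k$ from the $\vx$-subproblem (Lemma~\ref{lm:inexcat_x}), (ii) the $\varepsilon_k$-error $\vr_k$ appearing inside the $\varepsilon$-subdifferential (Lemma~\ref{lm:eps_gg}), and (iii) the loss of exact monotonicity of $\hat{\nabla}_{\varepsilon_k} g$. The Lipschitz hypothesis on $F$ enters precisely to absorb (i).

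First, I would write the identity \eqref{3.11_i} at iteration $k$ with $(\vx,\vy)=(\vx_k,\vy_k)$, and its shifted version at iteration $k-1$ with $(\vx,\vy)=(\vx_{k+1},\vy_{k+1})$, and add them. Using \eqref{3.3_i} to convert $\vx_{k+1}-\vy_{k+1}$ and $\vx_k-\vy_k$ into $\vlmd$-differences, and then using the 3-point identity on the resulting quadratic forms exactly as in Lemma~\ref{lm3.7}, the RHS reduces to
$$\tfrac{1}{2\beta}\bigl(\lVert\vlmd_k-\vlmd_{k-1}\rVert^2-\lVert\vlmd_{k+1}-\vlmd_k\rVert^2\bigr)+\tfrac{\beta}{2}\bigl(\lVert\vy_k-\vy_{k-1}\rVert^2-\lVert\vy_{k+1}-\vy_k\rVert^2\bigr)$$
minus non-positive squared cross terms, plus two new inexactness contributions $-\beta\langle \vq_{k+1}-\vq_k,\vx_{k+1}-\vx_k\rangle$ and $-\beta\langle \vr_{k+1}-\vr_k,\vy_{k+1}-\vy_k\rangle$ that arise from the last two terms of \eqref{3.11_i}.

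Second, I would lower-bound the LHS (obtained after summing) in two pieces. For the $f$-piece, convexity of $f_k$ and $f_{k-1}$ (Lemma~\ref{lm3.3_i}, \eqref{3.9_i}) together with $f_k(\vx)-f_{k-1}(\vx)=\langle F(\vx_{k+1})-F(\vx_k),\vx\rangle$ yields a term $\langle F(\vx_{k+1})-F(\vx_k),\vx_{k+1}-\vx_k\rangle\geq 0$ (monotonicity), modulo a cross contribution of the form $\langle F(\vx_{k+1})-F(\vx_k),\vq_{k+1}-\vq_k\rangle$ which the $L$-Lipschitz assumption bounds by $L(\sigma_{k+1}+\sigma_k)\lVert\vx_{k+1}-\vx_k\rVert$. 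For the $g$-piece, I would invoke \eqref{eq_app:aprx_monotone_g} (already derived inside the proof of Lemma~\ref{lm3.5_i}), which yields $\langle \hat{\nabla}_{\varepsilon_{k+1}}g(\vy_{k+1})-\hat{\nabla}_{\varepsilon_k}g(\vy_k),\vy_{k+1}-\vy_k\rangle\geq -\varepsilon_{k+1}-\varepsilon_k$.

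Finally, I would move all the negative/error quantities to the RHS and express $\lVert\vx_{k+1}-\vx_k\rVert$ in terms of the quantities appearing in \eqref{3.20_i}: by \eqref{3.3_i}, $\vx_{k+1}-\vx_k=(\vy_{k+1}-\vy_k)+\tfrac{1}{\beta}(\vlmd_{k+1}-\vlmd_k)-\tfrac{1}{\beta}(\vlmd_k-\vlmd_{k-1})$. Cauchy--Schwarz, together with $\lVert\vq_k\rVert\le\sigma_k$ and $\lVert\vr_k\rVert\le\sqrt{2\varepsilon_k/\beta}$, then converts each error into a sum of products of $(\sigma_{k+1}+\sigma_k)$ or $(\sqrt{\varepsilon_{k+1}}+\sqrt{\varepsilon_k})$ with the four norms appearing on the RHS of \eqref{3.20_i}. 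Collecting yields the stated bound.

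The main obstacle will be bookkeeping: there are several routes by which $\norm{\vx_{k+1}-\vx_k}$ can be decomposed (for example, via $\vy_{k+1}-\vy_k$ directly or via $\vy_k-\vy_{k-1}$ through the recursion on $\vlmd$), and the coefficients $(2\beta+L)$, $(2+L/\beta)$, $(L/\beta+1)$ and $\beta$ that appear in the statement suggest that the tight bound is obtained by combining the Lipschitz-induced error $L(\sigma_{k+1}+\sigma_k)\lVert\vx_{k+1}-\vx_k\rVert$ with the separate contribution $\beta(\sigma_{k+1}+\sigma_k)\lVert\vx_{k+1}-\vx_k\rVert$ from the $\vq$-inner product, and then carefully distributing the $\vlmd$- and $\vy$-differences so that $\beta\lVert\vy_k-\vy_{k-1}\rVert$ absorbs one share of the Lipschitz part. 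The rest is routine Cauchy--Schwarz, and the $\sqrt{2\beta}(\sqrt{\varepsilon_k}+\sqrt{\varepsilon_{k+1}})\lVert\vy_{k+1}-\vy_k\rVert+\varepsilon_k+\varepsilon_{k+1}$ tail is exactly what the $\vr$-term plus $\varepsilon$-monotonicity of $g$ contributes.
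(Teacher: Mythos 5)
Your plan follows the paper's proof essentially step for step: the same addition of the two instances of \eqref{3.11_i} evaluated at the swapped iterates, the same reduction via \eqref{3.3_i} and the three-point identity leaving the two inexactness inner products $-\beta\langle\vq_{k+1}-\vq_k,\vx_{k+1}-\vx_k\rangle$ and $-\beta\langle\vr_{k+1}-\vr_k,\vy_{k+1}-\vy_k\rangle$, the same $\varepsilon$-monotonicity bound \eqref{eq_app:aprx_monotone_g} for the $g$-piece, and the same convexity-plus-monotonicity-plus-Lipschitz treatment of the $f$-piece with $\norm{\vx_{k+1}-\vx_k}$ re-expressed through \eqref{3.3_i} exactly as in \eqref{eq_app:reorder_x}. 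The one imprecision is attributional: the $\beta\norm{\vy_k-\vy_{k-1}}$ term in the final bound comes not from the Lipschitz part but from the additional cross term $\langle\hat{\nabla}f_k(\vx_{k+1}+\vq_{k+1})-\hat{\nabla}f_{k-1}(\vx_k+\vq_k),\vq_k-\vq_{k+1}\rangle$ that appears because the subgradients live at the \emph{perturbed} points (so the added convexity inequalities pair against $\vx_{k+1}+\vq_{k+1}-\vx_k-\vq_k$ rather than $\vx_{k+1}-\vx_k$), which the paper handles via \eqref{eq_app:reorder}; this is precisely the bookkeeping you anticipated and does not change the route.
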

\begin{proof}[Proof of Lemma \ref{lm3.7_i}]

\eqref{3.11_i} gives:
\begin{equation}\label{3.21_i}
    \begin{split}
        &\langle \hat{\nabla}f_{k-1} (  \vx_{k}+\vq_{k}  ) ,\vx_{k}-\vx \rangle +\langle \hat{\nabla}_{\varepsilon_{k}}g (  \vy_{k} ) ,\vy_{k}-\vy \rangle \\
        =&-\langle\vlmd_{k},\vx_{k}-\vy_{k}-\vx+\vy\rangle
         +\beta\langle-\vy_{k}+\vy_{k-1},\vx_{k}-\vx\rangle-\beta \langle\vq_{k},\vx_{k}-\vx\rangle-\beta\langle \vr_{k},\vy_{k}-\vy \rangle\,.
    \end{split}
\end{equation}

    Letting $( \vx,\vy,\vlmd)=( \vx_k,\vy_k,\vlmd_k ) $ in \eqref{3.11_i} and $( \vx,\vy,\vlmd)=( \vx_{k+1},\vy_{k+1},\vlmd_{k+1} )$ in \eqref{3.21_i}, and adding them  together, and using \eqref{3.3_i}, we have
    \begin{align}
        \langle \hat{\nabla}f_k & \left( \vx_{k+1}+\vq_{k+1} \right) -\hat{\nabla}f_{k-1}\left( \vx_k+\vq_k \right),\vx_{k+1}-\vx_k \rangle +\langle \hat{\nabla}_{\varepsilon_{k+1}}g\left( \vy_{k+1} \right)-\hat{\nabla}_{\varepsilon_{k}}g\left( \vy_k \right) ,\vy_{k+1}-\vy_k \rangle\nonumber\\
=&-\langle\vlmd_{k+1}-\vlmd_k,\vx_{k+1}-\vy_{k+1}-\vx_k+\vy_k \rangle
+\beta\langle-\vy_{k+1}+\vy_k-\left(-\vy_k+\vy_{k-1}\right),\vx_{k+1}-\vx_k \rangle \nonumber\\
&-\beta \langle\vq_{k+1}-\vq_k,\vx_{k+1}-\vx_k\rangle-\beta\langle \vr_{k+1}-\vr_k,\vy_{k+1}-\vy_k \rangle\nonumber\\
=&-\frac{1}{\beta}\langle\vlmd_{k+1}-\vlmd_k,\vlmd_{k+1}-\vlmd_k-\left(\vlmd_k-\vlmd_{k-1}\right)\rangle\nonumber\\
&+\langle-\vy_{k+1}+\vy_k+\left(\vy_k-\vy_{k-1}\right),\vlmd_{k+1}-\vlmd_k+\beta\vy_{k+1}-\left(\vlmd_k-\vlmd_{k-1}+\beta\vy_k\right) \rangle\nonumber\\
&-\beta \langle\vq_{k+1}-\vq_k,\vx_{k+1}-\vx_k\rangle-\beta\langle \vr_{k+1}-\vr_k,\vy_{k+1}-\vy_k \rangle\\
=&\frac{1}{2\beta}\left[  \lVert \vlmd_k-\vlmd_{k-1} \rVert ^2 -\lVert \vlmd_{k+1}-\vlmd_k \rVert ^2 - \lVert \vlmd_{k+1}-\vlmd_k-\left( \vlmd_k-\vlmd_{k-1} \right) \rVert ^2 \right]\nonumber\\
&+\frac{\beta}{2} \left[  \lVert-\vy_k+\vy_{k-1}\rVert ^2  - \lVert-\vy_{k+1}+\vy_k  \rVert ^2 - \lVert-\vy_{k+1}+\vy_k-\left(-\vy_k+\vy_{k-1}\right)  \rVert ^2 \right]\nonumber\\
&+\langle-\vy_{k+1}+\vy_k-\left(-\vy_k+\vy_{k-1}\right),\vlmd_{k+1}-\vlmd_k-\left( \vlmd_k-\vlmd_{k-1} \right)\rangle\nonumber\\
&-\beta \langle\vq_{k+1}-\vq_k,\vx_{k+1}-\vx_k\rangle-\beta\langle \vr_{k+1}-\vr_k,\vy_{k+1}-\vy_k \rangle\nonumber\\
=&\frac{1}{2\beta}\left(  \lVert \vlmd_k-\vlmd_{k-1} \rVert ^2 -\lVert \vlmd_{k+1}-\vlmd_k \rVert ^2 \right)
+\frac{\beta}{2}\left(\lVert-\vy_k+\vy_{k-1}\rVert ^2  - \lVert-\vy_{k+1}+\vy_k  \rVert ^2  \right)\nonumber\\
&-\frac{1}{2\beta}  \lVert \vlmd_{k+1}-\vlmd_k -\left( \vlmd_k-\vlmd_{k-1} \right) \rVert ^2
-\frac{\beta}{2} \lVert -\vy_{k+1}+\vy_k -\left( -\vy_k+\vy_{k-1}\right)\rVert ^2\nonumber\\
&+\langle -\vy_{k+1}+\vy_k -\left( -\vy_k+\vy_{k-1}\right),\vlmd_{k+1}-\vlmd_k -\left( \vlmd_k-\vlmd_{k-1} \right) \rangle \nonumber\\
&-\beta \langle\vq_{k+1}-\vq_k,\vx_{k+1}-\vx_k\rangle-\beta\langle \vr_{k+1}-\vr_k,\vy_{k+1}-\vy_k \rangle\nonumber\\
\leq& \frac{1}{2\beta}\left(  \lVert \vlmd_k-\vlmd_{k-1} \rVert ^2 -\lVert \vlmd_{k+1}-\vlmd_k \rVert ^2 \right)
+\frac{\beta}{2}\left(\lVert-\vy_k+\vy_{k-1}\rVert ^2  - \lVert-\vy_{k+1}+\vy_k  \rVert ^2  \right)\nonumber\\
&+\beta (\sigma_{k+1}+\sigma_k)\norm{\vx_{k+1}-\vx_k}+\sqrt{2\beta}(\sqrt{\varepsilon_k}+\sqrt{\varepsilon_{k+1}})\norm{\vy_{k+1}-\vy_k}\,.\label{eq_app:main_leq}
    \end{align}

   Using the monotonicity of $f_k$ and $f_{k-1}$, we deduce:
   \begin{equation*}
       \langle \hat{\nabla}f_k\left( \vx_{k+1}+\vq_{k+1} \right), \vx_k+\vq_{k}-\left(\vx_{k+1}+\vq_{k+1}\right)\rangle +f_k\left( \vx_{k+1}+\vq_{k+1} \right)\leq f_k\left( \vx_{k}+\vq_{k} \right)\,,
   \end{equation*}
   \begin{equation*}
       \langle \hat{\nabla}f_{k-1}\left( \vx_{k}+\vq_{k} \right), \vx_{k+1}+\vq_{k+1}-\left(\vx_{k}+\vq_{k}\right)\rangle +f_{k-1}\left( \vx_{k}+\vq_{k} \right)\leq f_{k-1}\left( \vx_{k+1}+\vq_{k+1} \right)\,.
   \end{equation*}

   Adding together the above two inequalites and rearranging the terms, we have:
   \begin{align*}
       &\langle \hat{\nabla}f_k\left( \vx_{k+1}+\vq_{k+1} \right)-\hat{\nabla}f_{k-1}\left( \vx_{k}+\vq_{k} \right), \vx_k+\vq_{k}-\left(\vx_{k+1}+\vq_{k+1}\right)\rangle \\
       &+f_k\left( \vx_{k+1}+\vq_{k+1} \right)-f_{k-1}\left( \vx_{k+1}+\vq_{k+1} \right)+f_{k-1}\left( \vx_{k}+\vq_{k} \right)- f_k\left( \vx_{k}+\vq_{k} \right)\leq 0\,,
   \end{align*}
    which gives:
    \begin{align}
        &\langle \hat{\nabla}f_k\left( \vx_{k+1}+\vq_{k+1} \right) -\hat{\nabla}f_{k-1}\left( \vx_k+\vq_{k} \right),\vx_{k+1}-\vx_k \rangle\nonumber\\
        \geq&\langle \hat{\nabla}f_k\left( \vx_{k+1}+\vq_{k+1} \right)-\hat{\nabla}f_{k-1}\left( \vx_{k}+\vq_{k} \right), \vq_{k}-\vq_{k+1}\rangle \nonumber\\
       &+f_k\left( \vx_{k+1}+\vq_{k+1} \right)-f_{k-1}\left( \vx_{k+1}+\vq_{k+1} \right)+f_{k-1}\left( \vx_{k}+\vq_{k} \right)- f_k\left( \vx_{k}+\vq_{k} \right)\nonumber\\
       =&\langle \hat{\nabla}f_k\left( \vx_{k+1}+\vq_{k+1} \right)-\hat{\nabla}f_{k-1}\left( \vx_{k}+\vq_{k} \right), \vq_{k}-\vq_{k+1}\rangle \nonumber\\
       &+\langle F(\vx_{k+1})-F(\vx_k), \vx_{k+1}+\vq_{k+1}-\vx_{k}-\vq_{k}\rangle
       \nonumber\\
       \geq&\langle -\vlmd_k -\beta (  \vx_{k+1}-\vy_{k}  )-\beta \vq_{k+1}-\left(-\vlmd_{k-1} -\beta (  \vx_{k}-\vy_{k-1}  )-\beta \vq_{k}\right), \vq_{k}-\vq_{k+1}\rangle\nonumber
       \\
       &+\langle F(\vx_{k+1})-F(\vx_k), \vx_{k+1}+\vq_{k+1}-\vx_{k}-\vq_{k}\rangle\nonumber\\
       \geq&
       \langle\vlmd_{k-1}-\vlmd_k-\beta(\vx_{k+1}-\vy_{k})+\beta (\vx_{k}-\vy_{k-1}), \vq_{k}-\vq_{k+1}\rangle\nonumber \\
       &+\langle F(\vx_{k+1})-F(\vx_k), \vx_{k+1}+\vq_{k+1}-\vx_{k}-\vq_{k}\rangle\nonumber\\
       \geq&
       -(\sigma_{k+1}+\sigma_k)\left(\norm{\vlmd_{k-1}-\vlmd_k-\beta(\vx_{k+1}-\vy_{k})+\beta (\vx_{k}-\vy_{k-1})}+\norm{F(\vx_{k+1})-F(\vx_{k})}\right)\,,\label{eq_app:intermediate_3_i}
    \end{align}
    where the second inequality uses \eqref{3.7_i},
    the penultimate inequality uses the nonnegativity of $\langle\vq_k-\vq_{k+1},\vq_k-\vq_{k+1}\rangle$,
    and the last inequality follows from the monotonicity of $F$, the Cauchy-Schwarz inequality and the fact that $\norm{\vq_k}\leq\sigma_k$.

    Note that by \eqref{3.3_i} we have:
    \begin{align}
        \vlmd_{k-1}-\vlmd_k-\beta(\vx_{k+1}-\vy_{k})&+\beta (\vx_{k}-\vy_{k-1}) \nonumber\\
        =& \beta(\vy_k-\vx_k-(\vx_{k+1}-\vy_{k})+(\vx_{k}-\vy_{k-1}))\nonumber\\
        =&\beta(2\vy_k-\vx_{k+1}-\vy_{k-1})\nonumber\\
        =&\beta((\vy_k-\vy_{k-1})-(\vy_{k+1}-\vy_k)-(\vx_{k+1}-\vy_{k+1}))\nonumber\\
        =&\beta((\vy_k-\vy_{k-1})-(\vy_{k+1}-\vy_k))-(\vlmd_{k+1}-\vlmd_k)\,,\label{eq_app:reorder}
    \end{align}
\begin{equation}\label{eq_app:reorder_x}
    \vx_{k+1}-\vx_k=\vx_{k+1}-\vy_{k+1}+\vy_{k+1}-\vy_k+\vy_k-\vx_k=\frac{1}{\beta}(\vlmd_{k+1}-\vlmd_k)+\vy_{k+1}-\vy_k+\frac{1}{\beta}(\vlmd_{k}-\vlmd_{k-1})\,.
\end{equation}

    Using \eqref{eq_app:intermediate_3_i},\eqref{eq_app:reorder}, \eqref{eq_app:reorder_x} and the L-smoothness property of $F$, we get:
    \begin{align*}
        &\langle \hat{\nabla}f_k\left( \vx_{k+1}+\vq_{k+1} \right) -\hat{\nabla}f_{k-1}\left( \vx_k+\vq_{k} \right),\vx_{k+1}-\vx_k \rangle\\
        \geq&-(\sigma_{k+1}+\sigma_k)\left(\beta\norm{\vy_k-\vy_{k-1}}+\beta\norm{\vy_{k+1}-\vy_k}+\norm{\vlmd_{k+1}-\vlmd_k}+L\norm{\vx_{k+1}-\vx_k}\right)\\
        \geq&-(\sigma_{k+1}+\sigma_k)\Big(\beta\norm{\vy_k-\vy_{k-1}}
        +(\beta+L)\norm{\vy_{k+1}-\vy_k} \\
        &+\left(1+\frac{L}{\beta}\right)\norm{\vlmd_{k+1}-\vlmd_k}+\frac{L}{\beta}\norm{\vlmd_{k}-\vlmd_{k-1}}\Big)\,.
    \end{align*}

    Combining the above inequality with \eqref{eq_app:aprx_monotone_g} and \eqref{eq_app:main_leq}, and using \eqref{eq_app:y_ymu_bound} and \eqref{eq_app:x_xum_bound}, it follows that:
    \begin{equation*}
        \begin{split}
            &\frac{1}{2\beta} \lVert \vlmd_{k+1}-\vlmd_k \rVert ^2+\frac{\beta}{2}\lVert -\vy_{k+1}+\vy_k \rVert ^2 \\
            %%%%
            %%%%
            \leq& \frac{1}{2\beta}\lVert \vlmd_k-\vlmd_{k-1} \rVert ^2+\frac{\beta}{2}\lVert -\vy_k+\vy_{k-1} \rVert ^2+\beta (\sigma_{k+1}+\sigma_k)\norm{\vx_{k+1}-\vx_k}\\
            %%%
            &+\sqrt{2\beta}(\sqrt{\varepsilon_k}+\sqrt{\varepsilon_{k+1}})\norm{\vy_{k+1}-\vy_k}\\
            %%%%%
            &+(\sigma_{k+1}+\sigma_k)\Bigg(\beta\norm{\vy_k-\vy_{k-1}}+(\beta+L)\norm{\vy_{k+1}-\vy_k}+\left(1+\frac{L}{\beta}\right)\norm{\vlmd_{k+1}-\vlmd_k}\\
            %%%%
            &\qquad \qquad \qquad +\frac{L}{\beta}\norm{\vlmd_{k}-\vlmd_{k-1}} \Bigg)\\
            &+\varepsilon_k+\varepsilon_{k+1}\\
            %%%%%
            %%%%%
            \leq& \frac{1}{2\beta}\lVert \vlmd_k-\vlmd_{k-1} \rVert ^2+\frac{\beta}{2}\lVert -\vy_k+\vy_{k-1} \rVert ^2\\
            &+(\sigma_{k+1}+\sigma_k)\Bigg(\beta\norm{\vy_k-\vy_{k-1}}+(2\beta+L)\norm{\vy_{k+1}-\vy_k}+\left(2+\frac{L}{\beta}\right)\norm{\vlmd_{k+1}-\vlmd_k}\\
            & \qquad \qquad \qquad +\left(\frac{L}{\beta}+1\right)\norm{\vlmd_{k}-\vlmd_{k-1}}\Bigg)\\
            %%%
            &+\sqrt{2\beta}(\sqrt{\varepsilon_k}+\sqrt{\varepsilon_{k+1}})\norm{\vy_{k+1}-\vy_k}+\varepsilon_k+\varepsilon_{k+1}\,.
        \end{split}
    \end{equation*}

\end{proof}
%%%%%%%%%%%%%%%%%%%%%%%%%%%%%%%%%%%%%%%%%%%%%%%%%%%%%
\vspace{2em}
\begin{lm}\label{lm:series_converge}
    If $\lim_{K\rightarrow+\infty}\frac{1}{\sqrt{K}}\sum_{k=1}^{K+1}k(\sigma_k+\sqrt{\varepsilon_k})<+\infty$, then we have:
    \begin{align}
        \sum_{k=1}^\infty\sigma_k+\sqrt{\varepsilon_k}&<+\infty\,,\label{eq_app:series_sgm+eps}\\
        \sum_{k=1}^\infty k\varepsilon_k&<+\infty\,.\label{eq_app:series_k_eps}\\
        \sigma_K+\sqrt{\varepsilon_k}&\leq \mathcal{O}\left(\frac{1}{\sqrt{K}}\right)\,.\label{eq_app:eps_sgm}
    \end{align}
\end{lm}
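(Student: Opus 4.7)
The plan is to prove the three claims in the order $(3) \Rightarrow (1) \Rightarrow (2)$, using summation by parts (Abel summation) as the main analytic tool. Throughout, let $a_k := \sigma_k + \sqrt{\varepsilon_k} \ge 0$ and $S_K := \sum_{k=1}^{K+1} k\, a_k$. By hypothesis there exists $C>0$ such that $S_K \le C\sqrt{K+1}$ for all sufficiently large $K$.

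The claim (3) falls out of pure positivity: since every summand $k\, a_k$ is nonnegative, the single index $k=K$ is dominated by the whole partial sum, giving $K\, a_K \le S_K \le C\sqrt{K+1}$, and dividing by $K$ yields $a_K = \mathcal{O}(1/\sqrt{K})$. For (1), write $a_k = (k\, a_k)/k$ and apply Abel summation to the decomposition, which telescopes to
$$\sum_{k=1}^{N} a_k \;=\; \frac{S_N}{N} + \sum_{k=1}^{N-1} \frac{S_k}{k(k+1)}.$$
Substituting the bound $S_k \le C\sqrt{k}$, the leading term is $\mathcal{O}(1/\sqrt{N}) \to 0$ and the residual series is dominated by $C\sum k^{-3/2}$, which converges. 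Sending $N\to\infty$ proves (1).

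For (2), the naive route is to use (3) to write $k\varepsilon_k = \sqrt{\varepsilon_k}\cdot(k\sqrt{\varepsilon_k}) \le (C/\sqrt{k})\cdot(k\sqrt{\varepsilon_k})$, and then try another Abel summation on $\sum \sqrt{k}\sqrt{\varepsilon_k}$. This is the main obstacle: the resulting Abel argument only gives an $\mathcal{O}(\log N)$ estimate, which is not summable. To get genuine summability, one exploits the fact that in Algorithm~\ref{alg:inexact_acvi} the error sequence $\{\varepsilon_k\}$ is chosen by the user and can without loss of generality be taken nonincreasing (if not, replace $\varepsilon_k$ by $\bar\varepsilon_k := \sup_{j\ge k}\varepsilon_j$, which preserves the hypothesis). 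Under this monotonicity a dyadic block argument sharpens (3): for $k \in [K/2,K]$ monotonicity gives $\sqrt{\varepsilon_K}\cdot K^2/4 \le \sum_{k=K/2}^{K} k\sqrt{\varepsilon_k} \le C\sqrt{K}$, hence $\varepsilon_K = \mathcal{O}(1/K^3)$, so $k\varepsilon_k = \mathcal{O}(1/k^2)$ and $\sum_k k\varepsilon_k$ converges. Putting (1), (2), and (3) together completes the proof.
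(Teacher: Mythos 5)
Your proofs of \eqref{eq_app:eps_sgm} and \eqref{eq_app:series_sgm+eps} are correct: the single-term bound $K(\sigma_K+\sqrt{\varepsilon_K})\le S_K=\mathcal{O}(\sqrt{K})$ is exactly right, and the Abel-summation argument for \eqref{eq_app:series_sgm+eps} is cleaner and more direct than the paper's, which extracts the same conclusion from the Cauchy criterion applied to $T_K=\frac{1}{\sqrt K}\sum_{k\le K+1}k(\sigma_k+\sqrt{\varepsilon_k})$. The gap is in \eqref{eq_app:series_k_eps}: the step ``without loss of generality $\{\varepsilon_k\}$ is nonincreasing'' is not a WLOG, because replacing $\varepsilon_k$ by $\bar\varepsilon_k:=\sup_{j\ge k}\varepsilon_j$ does \emph{not} preserve the hypothesis. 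Take $\sigma_k\equiv 0$, $\varepsilon_k=\frac{1}{j2^j}$ when $k=2^j$ and $\varepsilon_k=0$ otherwise. Then $\sum_{k\le K}k\sqrt{\varepsilon_k}=\sum_{2^j\le K}2^{j/2}/\sqrt{j}=\mathcal{O}\big(\sqrt{K/\log K}\big)$, so $T_K\to 0$ and the hypothesis holds; but $\bar\varepsilon_k=\frac{1}{j2^j}$ on the entire block $(2^{j-1},2^j]$, so $\sum_{k\le K}k\sqrt{\bar\varepsilon_k}$ grows like $K^{3/2}/\sqrt{\log K}$ and the monotonized sequence violates the hypothesis badly. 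Your dyadic argument therefore has nothing to act on.

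This is not a trick you missed: the same example shows that \eqref{eq_app:series_k_eps} is \emph{false} under the stated hypothesis alone, since there $\sum_k k\varepsilon_k=\sum_j 2^j\cdot\frac{1}{j2^j}=\sum_j\frac{1}{j}=+\infty$. Your diagnosis that the direct Abel route only yields $\sum_{k\le N}k\varepsilon_k=\mathcal{O}(\log N)$ is accurate, and that logarithm is genuinely attained. The paper's own proof of \eqref{eq_app:series_k_eps} telescopes $\lim_K T_K-T_0=\sum_k(T_{k+1}-T_k)$ and then invokes the comparison test against $k\varepsilon_k$; this is invalid because the increments $T_{k+1}-T_k=\frac{(k+2)a_{k+2}}{\sqrt{k+1}}+\big(\frac{1}{\sqrt{k+1}}-\frac{1}{\sqrt{k}}\big)\sum_{i\le k+1}ia_i$ (with $a_i=\sigma_i+\sqrt{\varepsilon_i}$) are not nonnegative, so convergence of the telescoping series gives no absolute control over its terms. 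To make \eqref{eq_app:series_k_eps} true one must strengthen the hypothesis---e.g.\ assume monotonicity of $\{\varepsilon_k\}$ outright, in which case your dyadic block argument correctly gives $\varepsilon_K=\mathcal{O}(K^{-3})$ and hence summability of $k\varepsilon_k$, or simply assume $\sum_k k\varepsilon_k<\infty$ directly.
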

\begin{proof}
    Let $T_{K}\triangleq\lim_{K\rightarrow+\infty}\frac{1}{\sqrt{K}}\sum_{k=1}^{K+1}k(\sigma_k+\sqrt{\varepsilon_k})$. If $\lim_{K\rightarrow+\infty}T_K<+\infty$, then by Cauchy's convergence test, $\forall p\in\NN_+$, $T_{K+p}-T_{K}\rightarrow 0,\,\,K\rightarrow +\infty$.

    Note that
    \begin{align*}
        T_{K+p}-T_K=&\frac{1}{\sqrt{K+p}}\sum_{k=K+2}^{K+p+1}k(\sigma+\sqrt{\varepsilon_k})+\left(\frac{1}{\sqrt{K+p}}-\frac{1}{\sqrt{K}}\right)\sum_{k=1}^{K+1}k(\sigma+\sqrt{\varepsilon_k})\\
        =&\frac{1}{\sqrt{K+p}}\sum_{k=K+2}^{K+p+1}k(\sigma+\sqrt{\varepsilon_k})-\frac{p}{\sqrt{K+p}\sqrt{K}\left(\sqrt{K+p}+\sqrt{K}\right)}\sum_{k=1}^{K+1}k(\sigma+\sqrt{\varepsilon_k})\,,
    \end{align*}
    where  the second term
    \begin{equation}
      \begin{split}
        \frac{p}{\sqrt{K+p}\sqrt{K}\left(\sqrt{K+p}+\sqrt{K}\right)}&\sum_{k=1}^{K+1}k(\sigma+\sqrt{\varepsilon_k})\\
        &\leq \frac{1}{\sqrt{K}}\sum_{k=1}^{K+1}k(\sigma+\sqrt{\varepsilon_k})
        \rightarrow 0,\,\,K\rightarrow +\infty,\,\,\forall p\in\NN_+\,.
      \end{split}
    \end{equation}

    Thus for any $p\in\NN_+$, we have
    \begin{equation}
        \frac{1}{\sqrt{K+p}}\sum_{k=K+2}^{K+p+1}k(\sigma+\sqrt{\varepsilon_k})\rightarrow 0,\,\,K\rightarrow +\infty\,.
    \end{equation}

    From which we deduce that for any $p\in\NN_+$,
    \begin{equation}
    \begin{split}
        \sum_{K+2}^{K+p+1}(\sigma+\sqrt{\varepsilon_k})
        &\leq\frac{\sqrt{K+p}}{K+2}\cdot \frac{K+2}{\sqrt{K+p}}\sum_{K+2}^{K+p+1}(\sigma+\sqrt{\varepsilon_k}) \\
        &\leq \frac{\sqrt{K+p}}{K+2}\cdot \frac{1}{\sqrt{K+p}}\sum_{K+2}^{K+p+1}k(\sigma+\sqrt{\varepsilon_k})\rightarrow 0\,,\,\,\forall K\rightarrow +\infty\,.
    \end{split}
    \end{equation}

    Again by Cauchy's convergence test, we have
    $$\sum_{k=1}^\infty\sigma_k+\sqrt{\varepsilon_k}<+\infty\,,$$
    which is \eqref{eq_app:series_sgm+eps}.

    Note that $\lim_{K\rightarrow\infty}T_K=T_0+\sum_{k=0}^\infty T_{k+1}-T_k$. And
    $$T_{k+1}-T_k=\mathcal{O}\left(\sqrt{K}(\sigma_k+\sqrt{\varepsilon_k})\right)\geq \mathcal{O}(k\varepsilon_k)\,.$$

    Thus by the comparison test, we have
    $$\sum_{k=1}^\infty k\varepsilon_k<+\infty\,,$$
    $$\sigma_K+\sqrt{\varepsilon_k}\leq \mathcal{O}\left(\frac{1}{\sqrt{K}}\right)\,,$$
    which gives \eqref{eq_app:series_k_eps},~\eqref{eq_app:eps_sgm}.
\end{proof}
\begin{lm}\label{thm:3.2_i}
    Assume that $F$ is monotone on $\mathcal{C}_=$, and $\lim_{K\rightarrow+\infty}\frac{1}{\sqrt{K}}\sum_{k=1}^{K+1}k(\sigma_k+\sqrt{\varepsilon_k})<+\infty$, then for the inexact ACVI---Alg.~\ref{alg:inexact_acvi}, we have:
    \begin{equation}\label{eq_app:upper_bound_rate_i}\tag{L\ref{thm:3.2_i}-$1$}
        \begin{split}
            &f_K\left( \vx_{K+1}+\vq_{K+1} \right) +g\left( \vy_{K+1} \right) -f_K\left( \vx^\mu_K \right) -g\left( \vy^\mu_K \right)\\
            \leq& \left(\norm{\vlmd^\mu}+4A+\beta D\right)\frac{E^\mu}{\beta \sqrt{K}}+(3A^\mu+\beta D)\sigma_{k+1}+\sqrt{2\beta}\left(\frac{A^\mu}{\beta}+D\right)\sqrt{\varepsilon_{k+1}}+\varepsilon_{k+1} \,,
        \end{split}
    \end{equation}
\begin{equation}\label{eqapp:dxy_i}\tag{L\ref{thm:3.2_i}-$2$}
 \text{and} \qquad \lVert\vx_{K+1}-\vy_{K+1}\rVert \leq \frac{E^\mu}{\beta\sqrt{K}}\,,
\end{equation}
where $A^\mu$ is defined in Theorem~\ref{thm:3.1_i}.

\end{lm}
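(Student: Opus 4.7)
\textbf{Proof proposal for Lemma~\ref{thm:3.2_i}.} The plan is to mirror the structure of the exact proof (Lemma~\ref{thm:3.2}) while accounting for the additive perturbations introduced by the approximate subproblem solutions. The key potential to control is
\[
\Phi_k \triangleq \frac{1}{2\beta}\lVert \vlmd_{k+1}-\vlmd_k\rVert^2 + \frac{\beta}{2}\lVert \vy_{k+1}-\vy_k\rVert^2,
\]
and the workhorses will be the inexact near-monotonicity Lemma~\ref{lm3.7_i}, the aggregate bound \eqref{eq_app:impt_bd_i}, the uniform iterate bounds \eqref{eq_app:y_ymu_bound}--\eqref{eq_app:x_xum_bound} and \eqref{eq_app:bd_ymuk}--\eqref{eq_app:bd_xmuk} from Theorem~\ref{thm:3.1_i}, and the rate control of Lemma~\ref{lm:series_converge}.

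First I would sum inequality \eqref{3.20_i} over $k = 0, \dots, K$. The clean term $\Phi_k$ telescopes, but the RHS now carries additional error summands of the form $(\sigma_{k+1}+\sigma_k)\cdot(\text{linear in }\lVert\vy_{k+1}-\vy_k\rVert,\lVert\vlmd_{k+1}-\vlmd_k\rVert)$ and $\sqrt{2\beta}(\sqrt{\varepsilon_k}+\sqrt{\varepsilon_{k+1}})\lVert\vy_{k+1}-\vy_k\rVert + \varepsilon_k+\varepsilon_{k+1}$. Using \eqref{eq_app:y_ymu_bound}, \eqref{eq_app:lmd_lmdmu_bound}, \eqref{eq_app:x_xum_bound} these linear factors are uniformly bounded in terms of $A^\mu$ and $D$, so the cumulative correction is bounded by a constant multiple of $\sum_k (\sigma_k+\sqrt{\varepsilon_k}) + \sum_k \varepsilon_k$, which is finite by \eqref{eq_app:series_sgm+eps}--\eqref{eq_app:series_k_eps} of Lemma~\ref{lm:series_converge}. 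Combined with \eqref{eq_app:impt_bd_i} this yields the averaged bound
\[
\Phi_K \leq \frac{1}{K+1}\sum_{k=0}^K \Phi_k \leq \frac{(E^\mu)^2}{2\beta(K+1)},
\]
where $E^\mu$ absorbs $\Delta^\mu$ together with the finite error-series contributions. Translating via \eqref{3.3_i} gives $\lVert \vx_{K+1}-\vy_{K+1}\rVert = \beta^{-1}\lVert\vlmd_{K+1}-\vlmd_K\rVert \leq E^\mu/(\beta\sqrt{K})$, which is \eqref{eqapp:dxy_i}, and analogously $\lVert \vy_{K+1}-\vy_K\rVert \leq E^\mu/(\beta\sqrt{K})$.

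With these two $\mathcal{O}(1/\sqrt{K})$ step-size estimates in hand, I would plug into the inexact objective inequality \eqref{eq_app:upper_bound_i} from Theorem~\ref{thm:3.1_i}. Its four non-trivial RHS factors $\lVert\vlmd_{K+1}\rVert$, $\lVert\vx_{K+1}-\vx_K^\mu\rVert$, and $\lVert\vy_{K+1}-\vy_K^\mu\rVert$ admit uniform bounds $\lVert\vlmd^\mu\rVert + 4A^\mu$ (via \eqref{eq_app:lmd_lmdmu_bound} and~\eqref{eq_app:impt_bd_i}), $\tfrac{3}{\beta}A^\mu + D$ from \eqref{eq_app:bd_xmuk}, and $\tfrac{1}{\beta}A^\mu + D$ from \eqref{eq_app:bd_ymuk}. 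Multiplying the $\mathcal{O}(1/\sqrt{K})$ factors by these constants reproduces the leading term $(\lVert\vlmd^\mu\rVert + 4A^\mu + \beta D)E^\mu/(\beta\sqrt{K})$, while the remaining $\sigma_{K+1}$, $\sqrt{\varepsilon_{K+1}}$, and $\varepsilon_{K+1}$ contributions pick up exactly the coefficients $3A^\mu + \beta D$, $\sqrt{2\beta}(A^\mu/\beta + D)$, and $1$ stated in \eqref{eq_app:upper_bound_rate_i}.

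The main obstacle is bookkeeping: the inexact non-increment in Lemma~\ref{lm3.7_i} is no longer a pure decrease, so one must verify carefully that the telescoping argument is compatible with the cross terms $(\sigma_{k+1}+\sigma_k)\lVert\vlmd_{k+1}-\vlmd_k\rVert$ and analogous $\vy$-terms. The route around this is to invoke Young's inequality to absorb a fraction of these cross terms into $\Phi_k$, leaving the remainder proportional to $(\sigma_{k+1}+\sigma_k)^2 + (\sqrt{\varepsilon_k}+\sqrt{\varepsilon_{k+1}})^2$, which is summable by \eqref{eq_app:series_sgm+eps}. The subtlety of choosing $E^\mu$ is then simply packaging: it is the constant such that the squared increment sum is bounded by $(E^\mu)^2/(2\beta)$; its finiteness follows from \eqref{eq_app:impt_bd_i} together with the summability statements of Lemma~\ref{lm:series_converge}. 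Once these bookkeeping steps are completed, the rest is a direct adaptation of the corresponding passages in the exact proof.
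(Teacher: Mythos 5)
Your overall skeleton matches the paper's: sum \eqref{3.19_i} to bound the aggregate $\sum_{k}\Phi_k$ of your potential $\Phi_k\triangleq\frac{1}{2\beta}\lVert\vlmd_{k+1}-\vlmd_k\rVert^2+\frac{\beta}{2}\lVert\vy_{k+1}-\vy_k\rVert^2$, use the near-non-increment of Lemma~\ref{lm3.7_i} to transfer that aggregate bound to the last increment, and then feed the resulting $\mathcal{O}(1/\sqrt{K})$ estimates together with the uniform bounds \eqref{eq_app:y_ymu_bound}--\eqref{eq_app:x_xum_bound} and \eqref{eq_app:bd_ymuk}--\eqref{eq_app:bd_xmuk} into \eqref{eq_app:upper_bound_i}; that last step is exactly what the paper does, and your constant bookkeeping there is correct. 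The gap is in the middle step. The inequality $\Phi_K\leq\frac{1}{K+1}\sum_{k=0}^K\Phi_k$ that you assert does not follow from telescoping \eqref{3.20_i}: in the exact case it follows from genuine monotonicity of $\Phi_k$, while in the inexact case one only has $\Phi_K\leq\Phi_k+\sum_{i=k}^{K}\delta_{i+1}$, so averaging over $k$ produces the \emph{weighted} correction $\frac{1}{K+1}\sum_{k}k\,\delta_{k+1}$ (from the double sum $\sum_{k=0}^{K-1}\sum_{j=k+1}^{K}\delta_{j+1}$), not the unweighted series $\sum_k(\sigma_k+\sqrt{\varepsilon_k})+\sum_k\varepsilon_k$ you invoke. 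This weighted sum is precisely where the factor $k$ in the hypothesis $\lim_{K}\frac{1}{\sqrt{K}}\sum_{k}k(\sigma_k+\sqrt{\varepsilon_k})<+\infty$ is consumed, and your argument never engages with it.

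Moreover, $\delta_{k+1}$ contains cross terms such as $(\sigma_{k+1}+\sigma_k)\lVert\vlmd_{k+1}-\vlmd_k\rVert$ that are linear in the very quantity $u_k'=\lVert\vlmd_{k+1}-\vlmd_k\rVert+\beta\lVert\vy_{k+1}-\vy_k\rVert$ being bounded, so the averaged inequality is a circular recursion of the form $u_{K+1}'^2\leq S_K+\sum_k\lambda_k u_k'$. The paper resolves this circularity by keeping the cross terms linear and invoking Lemma~\ref{lm:series}, which yields $u_{K+1}'\leq E^\mu/\sqrt{K}$ with $E^\mu$ finite exactly because $\frac{1}{\sqrt{K}}\sum_k k c_k$ is bounded under the hypothesis (together with $\sum_k k\varepsilon_k<\infty$ from Lemma~\ref{lm:series_converge}). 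Your proposed alternative---Young's inequality to absorb a fraction of the cross terms into $\Phi_k$, leaving a remainder proportional to $(\sigma_{k+1}+\sigma_k)^2+(\sqrt{\varepsilon_k}+\sqrt{\varepsilon_{k+1}})^2$---does not close the argument at the stated rate: after the averaging step the remainder reappears with weight $k$, and either the absorbed quadratic piece spoils the telescoping (compounding multiplicative factors) or the choice of the Young parameter needed to control $\frac{1}{K}\sum_k k\lambda_k^2$ degrades the final bound to $u_{K+1}'=\mathcal{O}(K^{-1/4})$ unless one assumes the strictly stronger condition $\sum_k k(\sigma_k+\sqrt{\varepsilon_k})<\infty$. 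You should replace that step with the linear treatment of the cross terms and an appeal to Lemma~\ref{lm:series}.
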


\begin{proof}[Proof of Lemma~\ref{thm:3.2_i}.]
    First, we define:
    $\Delta^\mu \triangleq \frac{1}{\beta}\lVert \vlmd_0-\vlmd^\mu \rVert ^2+\beta \lVert \vy_0-\vy^\mu \rVert ^2$.

    Summing \eqref{3.19_i} over $k=0,1,\dots,K$, we have:
        \begin{align}
        &\quad  \sum_{i=0}^K\left(\frac{1}{2\beta}\lVert \vlmd_{k+1}-\vlmd_k \rVert ^2+\frac{\beta}{2}\lVert -\vy_{k+1}+\vy_k \rVert ^2\right)\nonumber\\
            &\leq \frac{1}{2\beta}\lVert \vlmd_0-\vlmd^\mu \rVert ^2+\frac{\beta}{2}\lVert -\vy_0+\vy^\mu \rVert ^2 \nonumber\\
        &\quad +\sum_{i=0}^K\sqrt{2\beta}(\sqrt{\varepsilon_{k+1}}+\sqrt{\varepsilon_{k}})\norm{\vy_{k+1}-\vy_k}+\sum_{i=0}^K\sqrt{2\beta\varepsilon_{k+1}}\norm{\vy_{k+1}-\vy^\mu}\nonumber\\
        &\quad+\sum_{k=0}^K\varepsilon_k+2\sum_{k=0}^K\varepsilon_{k+1}+\sum_{k=0}^K\sigma_{k+1}(\norm{\vlmd_k-\vlmd^\mu}+\beta\norm{\vx_{k+1}-\vy_k}+\beta\norm{\vx_{k+1}-\vx^\mu})\nonumber\\
        &\leq \frac{1}{2\beta}\lVert \vlmd_0-\vlmd^\mu \rVert ^2+\frac{\beta}{2}\lVert -\vy_0+\vy^\mu \rVert ^2 \nonumber\\
        &\quad+2\sum_{k=0}^K\sqrt{\frac{2}{\beta}}A^\mu(\sqrt{\varepsilon_{k+1}}+\sqrt{\varepsilon_{k}})+\sum_{k=0}^K\sqrt{\frac{2}{\beta}}A^\mu\sqrt{\varepsilon_{k+1}}\nonumber\\
        &\quad
        +\sum_{k=0}^K\varepsilon_k+2\sum_{k=0}^K\varepsilon_{k+1}+\sum_{k=0}^K\sigma_{k+1}\left(A+\beta\cdot\frac{4}{\beta}A^\mu+\beta\cdot\frac{3}{\beta}A^\mu\right)\nonumber\\
        &\leq\Delta^\mu+5\sqrt{\frac{2}{\beta}}A^\mu\sum_{k=1}^{K+1}\sqrt{\varepsilon_{k}}+8A^\mu\sum_{k=1}^{K+1}\sigma_i+3\sum_{k=1}^{K+1}\varepsilon_k\,,\label{eq_app:sum_bnd_i}
        \end{align}

    where the penultimate inequality follows from \eqref{eq_app:xy_bound},~\eqref{eq_app:y_ymu_bound},~\eqref{eq_app:lmd_lmdmu_bound} and \eqref{eq_app:x_xum_bound}, and $A^\mu$ is defined in Theorem~\ref{thm:3.1_i}.

    Recall that Lemma~\ref{lm3.7_i} gives:
    \begin{equation*}
        \begin{split}
            &\frac{1}{2\beta} \lVert \vlmd_{k+1}-\vlmd_k \rVert ^2+\frac{\beta}{2}\lVert -\vy_{k+1}+\vy_k \rVert ^2 \\
%%%%%%
            \leq& \frac{1}{2\beta}\lVert \vlmd_k-\vlmd_{k-1} \rVert ^2+\frac{\beta}{2}\lVert -\vy_k+\vy_{k-1} \rVert ^2\\
            &+(\sigma_{k+1}+\sigma_k)\Bigg(\beta\norm{\vy_k-\vy_{k-1}}+(2\beta+L)\norm{\vy_{k+1}-\vy_k}+\left(2+\frac{L}{\beta}\right)\norm{\vlmd_{k+1}-\vlmd_k} \\
            & \qquad \qquad \qquad +\left(\frac{L}{\beta}+1\right)\norm{\vlmd_{k}-\vlmd_{k-1}}\Bigg)\\
            %%%
            &+\sqrt{2\beta}(\sqrt{\varepsilon_k}+\sqrt{\varepsilon_{k+1}})\norm{\vy_{k+1}-\vy_k}+\varepsilon_k+\varepsilon_{k+1}\,.
        \end{split}
    \end{equation*}

    Let:
    \begin{equation}\label{eq_app:delta}\tag{$\delta$}
    \begin{split}
        \delta_{k+1}\triangleq&(\sigma_{k+1}+\sigma_k)\Bigg(\beta\norm{\vy_k-\vy_{k-1}}+(2\beta+L)\norm{\vy_{k+1}-\vy_k}+\left(2+\frac{L}{\beta}\right)\norm{\vlmd_{k+1}-\vlmd_k}\\
        %%%%
        &\qquad \qquad \qquad +\left(\frac{L}{\beta}+1\right)\norm{\vlmd_{k}-\vlmd_{k-1}}\Bigg)\\
        %%%%%
        &+\sqrt{2\beta}(\sqrt{\varepsilon_k}+\sqrt{\varepsilon_{k+1}})\norm{\vy_{k+1}-\vy_k}+\varepsilon_k+\varepsilon_{k+1}\,.
    \end{split}
    \end{equation}
     Then the above inequality could be rewritten as:
    \begin{equation*}
        \begin{split}
            &\frac{1}{2\beta} \lVert \vlmd_{k+1}-\vlmd_k \rVert ^2+\frac{\beta}{2}\lVert -\vy_{k+1}+\vy_k \rVert ^2 \\
%%%%%%
            \leq&\frac{1}{2\beta}\lVert \vlmd_k-\vlmd_{k-1} \rVert ^2+\frac{\beta}{2}\lVert -\vy_k+\vy_{k-1} \rVert ^2+\delta_{k+1}\,,
        \end{split}
    \end{equation*}
    which gives:
    \begin{align}
        &\frac{1}{2\beta} \lVert \vlmd_{K+1}-\vlmd_K \rVert ^2+\frac{\beta}{2}\lVert -\vy_{K+1}+\vy_K \rVert ^2 \nonumber\\
%%%%%%
        \leq&\frac{1}{2\beta}\lVert \vlmd_k-\vlmd_{k-1} \rVert ^2+\frac{\beta}{2}\lVert -\vy_k+\vy_{k-1} \rVert ^2+\sum_{i=k}^K\delta_{i+1}\,.\label{eq_app:kK_est}
    \end{align}
    Combining \eqref{eq_app:kK_est} with \eqref{eq_app:sum_bnd_i}, we obtain:
    \begin{align}
            &K\left(\frac{1}{2\beta} \lVert \vlmd_{K+1}-\vlmd_K \rVert ^2+\frac{\beta}{2}\lVert -\vy_{K+1}+\vy_K \rVert ^2\right)\nonumber\\
            \leq&\sum_{i=0}^K\left(\frac{1}{2\beta}\lVert \vlmd_{k+1}-\vlmd_k \rVert ^2+\frac{\beta}{2}\lVert -\vy_{k+1}+\vy_k \rVert ^2\right)+\sum_{i=0}^{K-1}\sum_{j=k+1}^K\delta_{j+1}\nonumber\\
            \leq&\Delta^\mu+5\sqrt{\frac{2}{\beta}}A\sum_{k=1}^{K+1}\sqrt{\varepsilon_{k}}+8A\sum_{k=1}^{K+1}\sigma_i+3\sum_{k=1}^{K+1}\varepsilon_k+\sum_{k=1}^{K}k\delta_{k+1}\,.\label{eq_app:1/K_bd}
    \end{align}

    We define:
    \begin{equation}\label{eq_app:a}\tag{$a$}
        a_{k+1}\triangleq(\sigma_{k+1}+\sigma_k)\left(1+\frac{L}{\beta}\right)\,,
    \end{equation}
    \begin{equation}\label{eq_app:b}\tag{$b$}
        b_{k+1}\triangleq(\sigma_{k+1}+\sigma_k)\left(2+\frac{L}{\beta}\right)+\sqrt{\frac{2}{\beta}}(\sqrt{\varepsilon_{k+1}}+\sqrt{\varepsilon_{k}})\,,
    \end{equation}
    \begin{equation}\label{eq_app:u'}\tag{$u'$}
        u_{k+1}'\triangleq\norm{\vlmd_{k+1}-\vlmd_k}+\beta\norm{\vy_{k+1}-\vy_k}\,.
    \end{equation}

    Note that:
    \begin{align*}
        \delta_{k+1}\leq&\varepsilon_k+\varepsilon_{k+1}+\underbrace{(\sigma_{k+1}+\sigma_k)\left(1+\frac{L}{\beta}\right)}_{a_{k+1}}\underbrace{\left(\norm{\vlmd_{k}-\vlmd_{k+1}}+\beta\norm{\vy_{k}-\vy_{k-1}}\right)}_{u_{k}'}\\
        &+\underbrace{\left((\sigma_{k+1}+\sigma_k)\left(2+\frac{L}{\beta}\right)+\sqrt{\frac{2}{\beta}}(\sqrt{\varepsilon_{k+1}}+\sqrt{\varepsilon_{k}})\right)}_{b_{k+1}}\underbrace{\left(\norm{\vlmd_{k+1}-\vlmd_k}+\beta\norm{\vy_{k+1}-\vy_k}\right)}_{u_{k+1}'}\,,
    \end{align*}
    from which we deduce that:
    \begin{align}
        \sum_{k=1}^{K}k\delta_{k+1}\leq&\sum_{k=1}^{K}(k a_{k+1}u_k'+(k+1)b_{k+1}u_{k+1}')+\sum_{k=1}^{K}k(\varepsilon_k+\varepsilon_{k+1})\\
        \leq&\sum_{k=1}^{K+1}(a_{k+1}+b_k)k u_k'+2\sum_{k=1}^{K+1}k\varepsilon_k\\
        =&\sum_{k=1}^{K+1}\underbrace{\big((\sigma_{k+1}+\sigma_k)\left(1+\frac{L}{\beta}\right)+(\sigma_{k-1}+\sigma_k)\left(2+\frac{L}{\beta}\right)+\sqrt{\frac{2}{\beta}}(\sqrt{\varepsilon_{k-1}}+\sqrt{\varepsilon_{k}})\big)}_{c_k}k u_k' \nonumber\\
        & \quad+2\sum_{k=1}^{K+1}k\varepsilon_k\,,\label{eq_app:estimate}
    \end{align}
    where we define
    \begin{equation}\label{eq_app:c}\tag{$c$}
        c_k\triangleq\big((\sigma_{k+1}+\sigma_k)\left(1+\frac{L}{\beta}\right)+(\sigma_{k-1}+\sigma_k)\left(2+\frac{L}{\beta}\right)+\sqrt{\frac{2}{\beta}}(\sqrt{\varepsilon_{k-1}}+\sqrt{\varepsilon_{k}})\big)\,.
    \end{equation}

    Note that by Cauchy-Schwarz inequality, we have:
    \begin{equation*}
    \begin{split}
        \frac{K}{4\beta}u_{k+1}'^2 &=\frac{K}{4\beta}\left(\norm{\vlmd_{k+1}-\vlmd_k}+\beta\norm{\vy_{k+1}-\vy_k}\right)^2\\
        %%%
        &\leq \frac{K}{2\beta}\left(\lVert \vlmd_{K+1}-\vlmd_K \rVert ^2+\beta^2\lVert -\vy_{K+1}+\vy_K \rVert ^2\right)\,.
    \end{split}
    \end{equation*}

    Combining this inequality with \eqref{eq_app:1/K_bd},~\eqref{eq_app:estimate}, and letting:
    \begin{equation}\label{eq_app:B}\tag{B}
        B_{k+1}\triangleq\Delta^\mu+5\sqrt{\frac{2}{\beta}}A^\mu\sum_{k=1}^{K+1}\sqrt{\varepsilon_{k}}+8A^\mu\sum_{k=1}^{K+1}\sigma_i+3\sum_{k=1}^{K+1}\varepsilon_k\,,
    \end{equation}
    gives:
    \begin{equation*}
        u_{k+1}'^2\leq \frac{4\beta}{K}\left(B_{k+1}+2\sum_{k=1}^{K+1}k\varepsilon_k\right)+\frac{4\beta}{K}\sum_{k=1}^{K+1}kc_k u_k'\,.
    \end{equation*}

    Using Lemma~\ref{lm:series}, we obtain:
    \begin{equation}\label{eq_app:star}
        u_{k+1}'\leq \frac{1}{\sqrt{K}}\underbrace{\left(\frac{2\beta}{\sqrt{K}}\sum_{k=1}^{K+1}kc_k+\left(4\beta\left(B_{k+1}+2\sum_{k=1}^{K+1}k\varepsilon_k\right)+\left(\frac{2\beta}{\sqrt{K}\sum_{k=1}^{K+1}kc_k}\right)^2\right)^{\frac{1}{2}}\right)}_{E_{k+1}}\,.
    \end{equation}

   %%%%%%%%%%%%%%%%%%%%%%%%%%%%%%%%%

    Using the assumption that $\lim_{K\rightarrow+\infty}\frac{1}{\sqrt{K}}\sum_{k=1}^{K+1}k(\sigma_k+\sqrt{\varepsilon_k})<+\infty$ and \eqref{eq_app:series_sgm+eps} in Lamma~\ref{lm:series_converge}, we have $B_{k+1}$ is bounded; using \eqref{eq_app:series_k_eps}, we know that $E_{k+1}$ in the RHS of \eqref{eq_app:star} is bounded.

    Let $E^\mu=\lim_{k\rightarrow\infty}E_k$, then by \eqref{eq_app:star} we have
    \begin{align}
        \beta \lVert \vx_{K+1}-\vy_{K+1}\rVert=\lVert \vlmd_{K+1}-\vlmd_K \rVert \leq&\frac{E^\mu}{\sqrt{K}}\,,\label{eq_app:dist_xy_lmd}\\
        \lVert-\vy_{K+1}+\vy_K \rVert \leq&\frac{E^\mu}{\beta\sqrt{K}}\,.\label{eq_app:dist_y}
    \end{align}

    On the other hand, \eqref{eq_app:y_ymu_bound},~\eqref{eq_app:lmd_lmdmu_bound} and \eqref{eq_app:x_xum_bound} gives:
    \begin{align*}
        \norm{\vx_k-\vx^\mu_k}\leq& \norm{\vx_k-\vx^\mu}+\norm{\vx^\mu-\vx^\mu_k}\leq \frac{3}{\beta}A^\mu+D\,,\\
        \norm{\vy_k-\vy^\mu_k}\leq& \norm{\vy_k-\vy^\mu}+\norm{\vy^\mu-\vy^\mu_k}\leq \frac{1}{\beta}A^\mu+D\,,\\
        \norm{\vlmd_{k+1}}\leq& \norm{\vlmd_{k+1}-\vlmd^\mu}+\norm{\vlmd^\mu}\leq A^\mu+\norm{\vlmd^\mu}\,.
    \end{align*}

    Plugging these into \eqref{eq_app:upper_bound_i} yields \eqref{eq_app:upper_bound_rate_i}.

\end{proof}

%%%%%%%%%%%%%%%%%%%%%%%%%%%%%%%%%%%%%%%%%%%%%%%
\subsubsection{Analogous intermediate results for the extended log barrier}\label{subsubsec:barrier_eq}

Recall from \S~\ref{sec:iacvi_conv} that we defined the following barrier functions,
    \begin{equation}\tag{$\wp_1$}\label{eq:standard_barrier_app}
        \wp_1(\vz, \mu) \triangleq - \mu\ \log (-\vz)
    \end{equation}
    \begin{equation}\tag{$\wp_2$}\label{eq:extended_barrier_app}
    \wp_2(\vz, \mu) \triangleq
    \begin{cases}
         - \mu \log (-\vz) \,, & \vz \leq - e^{-\frac{c}{\mu}}\\
         \mu  e^{\frac{c}{\mu}} \vz  + \mu + c \,, & \text{otherwise}
    \end{cases}
    \end{equation}
where $c$ in \eqref{eq:extended_barrier} is a fixed constant.
For convenience, we also define herein:
\begin{equation}\label{eq:tilde_g}\tag{$\Tilde{g}^{(t)}$}
    \Tilde{g}^{(t)}(\vy)\triangleq \sum_{i=1}^m \wp_2 \big( \varphi_i(\vy),\mu_t \big) \,.
\end{equation}

In the previous subsections, we focused on the standard barrier function used for IP methods~\eqref{eq:standard_barrier}.
In this subsection, we first show that when we use barrier map \eqref{eq:extended_barrier} in Alg.~\ref{alg:inexact_acvi}, where constant $c$ in \eqref{eq:extended_barrier} is properly chosen, $\vy^{(t)}_{k+1}$---the solution to the minimization problem in line~9 of Alg.~\ref{alg:inexact_acvi} is the same when we use standard barrier map \eqref{eq:standard_barrier}. Thus, all the above results hold if we substitute $g^{(t)}$ by $\Tilde{g}^{(t)}$.

\begin{prop}[Equivalent solutions of the $\vy$-subproblems with~\ref{eq:standard_barrier} and with~\ref{eq:extended_barrier}]\label{prop:eq_alg}
    For any fixed $t\in\{0,\dots,T-1\}$ and $k\in\{0,\dots,K-1\}$,
    let $\tau_i\triangleq \min_{\vy} \sum_{j=1,j \ne i}^m \wp_2 \big( \varphi_{j}(\vy),\mu_t \big) +\frac{\beta}{2}\norm{\vy-\vx_{k+1}^{(t)}-\frac{1}{\beta}\vlmd_k^{(t)}}^2$,
    $\tau\geq -\min_{1\leq i\leq m}\{\tau_i\}$, and  $c_k^t\triangleq\psi_k^t(\vy_{k}^{(t)})+\tau$.
    Further, define: \begin{equation}\label{eq_app:standard_subp_y}\tag{$\psi$}
    \psi_k^t(\vy)\triangleq \sum\limits_{i=1}^m{\wp_1  \big(\varphi_i(\vy), \mu_t\big) } + \frac{\beta}{2} \norm{\vy-\vx_{k+1}^{(t)} - \frac{1}{\beta}\vlmd_k^{(t)}}^2\,,
    \end{equation}
    and
    \begin{equation}\label{eq_app:extended_subp_y}\tag{$\Tilde{\psi}$}
        \Tilde{\psi}_k^t(\vy)\triangleq \sum\limits_{i=1}^m{\wp_2  \big(\varphi_i(\vy), \mu_t\big) } + \frac{\beta}{2} \norm{\vy-\vx_{k+1}^{(t)} - \frac{1}{\beta} \vlmd_k^{(t)}}^2\,,
    \end{equation}
    where we let $c=c_k^t$ in \eqref{eq:extended_barrier}.
    Then, it holds that:
    \begin{equation}\label{eq:eq_y_subprob}
        \vy_{k+1}^{(t)}=\argmin{\vy}\,\Tilde{\psi}_k^t(\vy)=\argmin{\vy}\,\psi_k^t(\vy;c_k^t)\,.
    \end{equation}
\end{prop}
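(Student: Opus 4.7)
The plan is to show that the (unique) minimizer of $\tilde{\psi}_k^t$ automatically lies in the region where $\wp_2(\cdot,\mu_t)$ coincides with $\wp_1(\cdot,\mu_t)$, and then to use a pointwise domination $\wp_2 \le \wp_1$ to transfer optimality from $\tilde{\psi}_k^t$ to $\psi_k^t$. As a preliminary I would record that $\wp_2(\vz,\mu_t) \le \wp_1(\vz,\mu_t)$ for every $\vz<0$: the two functions agree on $\vz \le -e^{-c/\mu_t}$ by construction, and on $\vz \in (-e^{-c/\mu_t},0)$ the linear piece of $\wp_2$ is exactly the tangent line to $\wp_1$ at $\vz = -e^{-c/\mu_t}$ (the values $c$ and the derivatives $\mu_t e^{c/\mu_t}$ match), so strict convexity of $\wp_1$ places $\wp_1$ above its tangent and yields $\wp_2 \le \wp_1$. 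Summing coordinatewise, $\tilde{\psi}_k^t(\vy) \le \psi_k^t(\vy)$ on the strict feasibility region. Since the warm-started iterate $\vy_k^{(t)}$ is strictly feasible (it is the output of the previous inner step, which the barrier keeps in $\mathcal{C}_<$), we get $\tilde{\psi}_k^t(\vy_k^{(t)}) \le \psi_k^t(\vy_k^{(t)}) = c_k^t - \tau$. The quadratic penalty in each objective makes both $\psi_k^t$ and $\tilde{\psi}_k^t$ strongly convex, so their minimizers are unique; write $\vy^\star$ for the minimizer of $\tilde{\psi}_k^t$.

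The core step is to show by contradiction that $\varphi_i(\vy^\star) \le -e^{-c_k^t/\mu_t}$ for every $i$. Suppose instead that $\varphi_{i_0}(\vy^\star) > -e^{-c_k^t/\mu_t}$ for some index $i_0$. Because the linear piece of $\wp_2$ is strictly increasing in $\vz$ and equals $c_k^t$ at $\vz = -e^{-c_k^t/\mu_t}$, we have $\wp_2(\varphi_{i_0}(\vy^\star),\mu_t) > c_k^t$. Splitting the $i_0$-term out of $\tilde{\psi}_k^t(\vy^\star)$ and lower-bounding the remaining sum-plus-quadratic by its infimum in $\vy$, which by definition equals $\tau_{i_0}$, gives
\begin{equation*}
\tilde{\psi}_k^t(\vy^\star) > c_k^t + \tau_{i_0} \ge c_k^t - \tau = \psi_k^t(\vy_k^{(t)}) \ge \tilde{\psi}_k^t(\vy_k^{(t)}),
\end{equation*}
where we used $\tau \ge -\tau_{i_0}$ (from $\tau \ge -\min_i \tau_i$) and the pointwise domination of the preliminary step. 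This contradicts the optimality of $\vy^\star$.

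Given that $\varphi_i(\vy^\star) \le -e^{-c_k^t/\mu_t}$ for every $i$, the two barriers agree at $\vy^\star$, so $\tilde{\psi}_k^t(\vy^\star) = \psi_k^t(\vy^\star)$. Combined with $\tilde{\psi}_k^t \le \psi_k^t$ pointwise and the optimality of $\vy^\star$ for $\tilde{\psi}_k^t$, this gives $\psi_k^t(\vy^\star) = \tilde{\psi}_k^t(\vy^\star) \le \tilde{\psi}_k^t(\vy) \le \psi_k^t(\vy)$ for every strictly feasible $\vy$, hence $\vy^\star$ also minimizes $\psi_k^t$; uniqueness then yields $\argmin{\vy}\psi_k^t(\vy) = \vy^\star = \argmin{\vy}\tilde{\psi}_k^t(\vy)$, which is exactly~\eqref{eq:eq_y_subprob}. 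The only delicate point is the calibration of $c_k^t$: the choice $c_k^t = \psi_k^t(\vy_k^{(t)}) + \tau$ with $\tau \ge -\min_i \tau_i$ is precisely what makes the two bounds on $\tilde{\psi}_k^t(\vy^\star)$ collide in the contradiction step above.
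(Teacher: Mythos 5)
Your proposal is correct and follows essentially the same route as the paper's proof: establish the pointwise domination $\wp_2 \le \wp_1$ (hence $\Tilde{\psi}_k^t \le \psi_k^t$), and use the calibration $c_k^t = \psi_k^t(\vy_k^{(t)}) + \tau$ with $\tau \ge -\min_i \tau_i$ to show via the chain $\Tilde{\psi}_k^t(\vy) > c_k^t + \tau_{i_0} \ge \psi_k^t(\vy_k^{(t)}) \ge \Tilde{\psi}_k^t(\vy_k^{(t)})$ that the minimizer of $\Tilde{\psi}_k^t$ must lie in the region where the two barriers coincide. Your explicit tangent-line justification of $\wp_2 \le \wp_1$ is a detail the paper states without proof, but the argument is the same.
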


\begin{proof}[Proof of Prop.~\ref{prop:eq_alg}: Equivalent solutions of the $\vy$-subproblems with~\ref{eq:standard_barrier} and with~\ref{eq:extended_barrier}]
    When $c=c_k^t$ in \eqref{eq:extended_barrier}, $\forall \vy\in \R^n$, if $\exists i\in [m]$, s.t. $\varphi_i(\vy)> -e^{-c_k^t/\mu}$,
    then we have
    \begin{equation}\label{eq:large_margin}
        \wp_2(\varphi_i(\vy),\mu_t)> c_k^t=\psi_k^t(\vy_{k}^{(t)})+\tau\,.
    \end{equation}
    Note that
    \begin{equation}\label{eq:compare_B}
        \wp_2(x,\mu_t)\leq \wp_1(x,\mu_t), \,\,\forall x\,,
    \end{equation}
    thus, we have:
    \begin{equation}\label{eq:compare_psi}
        \Tilde{\psi}_k^t(\vy')\leq \psi_k^t(\vy'), \,\,\forall \vy'\,.
    \end{equation}

    Let $\Tilde{\vy}_{k+1}^{(t)}=\argmin{\vy}\,\Tilde{\psi}_k^t(\vy)$. If $\tau\geq -\min_{1\leq i\leq m}\{\tau_i\}$, then \eqref{eq:large_margin} and \eqref{eq:compare_psi} give:
    \begin{align}
        \Tilde{\psi}_k^t(\vy)=&\sum_{i=1}^m \wp_2(\varphi_i(\vy);c_k^t)+\frac{\beta}{2}\norm{\vy-\vx_{k+1}^{(t)}-\frac{1}{\beta}\vlmd_k^{(t)}}^2 \nonumber\\
        > & \psi_k^t(\vy_{k}^{(t)})+\tau + \tau_i\geq \psi_k^t(\vy_{k}^{(t)}) \geq \Tilde{\psi}_k^t(\vy_{k}^{(t)})\geq \Tilde{\psi}_k^t(\Tilde{\vy}_{k+1}^{(t)})\label{eq:out_larger}\,,
    \end{align}
    which indicates $\Tilde{\vy}_{k+1}^{(t)}$, the minimum of $\Tilde{\psi}_k^t(\vy_{k}^{(t)})$, must be in the set $\mathcal{S}\triangleq \{\vx| \varphi(\vx)\leq -e^{-c_k^t/\mu}\ve\}$. Note that $\Tilde{\psi}_k^t(\cdot) \equiv \psi_k^t(\cdot)$ on $\mathcal{S}$. Therefore,  $\Tilde{\vy}_{k+1}^{(t)}=\vy_{k+1}^{(t)}$.
\end{proof}

The next Proposition shows the smoothness of the objective in line~9 of Alg.~\ref{alg:inexact_acvi} when we use the extended log barrier term \eqref{eq:extended_barrier}.

\begin{prop}[Smoothness of \eqref{eq:extended_barrier}]\label{prop:smooth_y_obj}
    Suppose for all $i\in [m]$, we have $\norm{\nabla \varphi_i(\vy)}\leq M_i$, $\forall \vy\in \R^n$, and $\varphi_i$ is $L_i$-smooth in $\R^n$, then $\Tilde{\psi}_k^t(\cdot)$ is $\Tilde{L}_{c_k^t}^{\mu_t}$-smooth, where $\Tilde{L}_{c_k^t}^{\mu_t}=\sum_{i=1}^m \left(\mu_t e^{c_k^t/\mu_t}L_i+\mu_t e^{2c_k^t/\mu_t}M_i\right)+\beta\,.$
\end{prop}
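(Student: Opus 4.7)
The approach is to decompose $\Tilde{\psi}_k^t(\vy) = \sum_{i=1}^m \wp_2(\varphi_i(\vy), \mu_t) + q(\vy)$ with $q(\vy) \triangleq \frac{\beta}{2}\norm{\vy - \vx_{k+1}^{(t)} - \frac{1}{\beta}\vlmd_k^{(t)}}^2$, which is manifestly $\beta$-smooth, and then exploit that smoothness constants are additive for finite sums. It then suffices to show that each composition $\vy \mapsto \wp_2(\varphi_i(\vy), \mu_t)$ has gradient with Lipschitz constant at most $\mu_t e^{c_k^t/\mu_t} L_i + \mu_t e^{2c_k^t/\mu_t} M_i$.

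First I would analyze the scalar map $z \mapsto \wp_2(z, \mu_t)$ piecewise. The two branches in \ref{eq:extended_barrier_app} match at the junction $z = -e^{-c_k^t/\mu_t}$ in both value and first derivative (both one-sided derivatives equal $\mu_t e^{c_k^t/\mu_t}$), so $\wp_2(\cdot, \mu_t)$ is $C^1$ on $\R$. The derivative equals $-\mu_t/z$ on the logarithmic side and the constant $\mu_t e^{c_k^t/\mu_t}$ on the linear side, giving the uniform bound $|\wp_2'(z, \mu_t)| \leq \mu_t e^{c_k^t/\mu_t}$. Differentiating once more on each branch yields $\mu_t/z^2 \leq \mu_t e^{2c_k^t/\mu_t}$ on $\{z \leq -e^{-c_k^t/\mu_t}\}$ and $0$ on the complement, so $\wp_2'(\cdot, \mu_t)$ is $\mu_t e^{2c_k^t/\mu_t}$-Lipschitz on $\R$. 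The chain rule then gives $\nabla_\vy\!\left[\wp_2(\varphi_i(\vy), \mu_t)\right] = \wp_2'(\varphi_i(\vy), \mu_t)\,\nabla\varphi_i(\vy)$, and the standard add-and-subtract decomposition splits the Lipschitz constant into (i) the variation of $\nabla\varphi_i$, scaled by the uniform bound on the prefactor, contributing $\mu_t e^{c_k^t/\mu_t} L_i$ via $L_i$-smoothness of $\varphi_i$, and (ii) the variation of the prefactor, controlled by $\mu_t e^{2c_k^t/\mu_t}$-Lipschitzness of $\wp_2'$ composed with $\varphi_i$ (itself Lipschitz by the gradient bound) and scaled by $\norm{\nabla\varphi_i} \leq M_i$, contributing the second term. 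Summing over $i$ and adding $\beta$ from $q$ gives $\Tilde{L}_{c_k^t}^{\mu_t}$.

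The main obstacle is that $\wp_2(\cdot, \mu_t)$ is only $C^1$ and not $C^2$ at the gluing point $z = -e^{-c_k^t/\mu_t}$, so one cannot simply bound a classical Hessian and invoke mean-value-theorem style estimates. The plan circumvents this by working directly with Lipschitz continuity of $\wp_2'$, which the piecewise analysis establishes globally on $\R$ since $\wp_2'$ is continuous at the junction and its one-sided derivatives are uniformly bounded by $\mu_t e^{2c_k^t/\mu_t}$; this is the weakest smoothness property compatible with the composition-and-sum argument.
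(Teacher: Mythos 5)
Your proof is correct and follows essentially the same route as the paper's: the identical add-and-subtract decomposition of $\wp_2'(\varphi_i(\vy),\mu_t)\nabla\varphi_i(\vy)-\wp_2'(\varphi_i(\vx),\mu_t)\nabla\varphi_i(\vx)$, the same bounds $0\le\wp_2'(\cdot,\mu_t)\le\mu_t e^{c_k^t/\mu_t}$ and $\mu_t e^{2c_k^t/\mu_t}$-Lipschitzness of $\wp_2'(\cdot,\mu_t)$, and the additive $\beta$ from the quadratic term; your explicit treatment of the $C^1$-but-not-$C^2$ gluing point at $z=-e^{-c_k^t/\mu_t}$ is a welcome refinement that the paper glosses over by writing $\wp_2''$ directly. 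Note only that both your argument and the paper's, traced literally, produce $\mu_t e^{2c_k^t/\mu_t}M_i^2$ rather than $\mu_t e^{2c_k^t/\mu_t}M_i$ for the second contribution, since the variation of the prefactor already absorbs one factor of $M_i$ through $|\varphi_i(\vy)-\varphi_i(\vx)|\le M_i\norm{\vy-\vx}$ before being scaled by $\norm{\nabla\varphi_i(\vx)}\le M_i$.
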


\begin{proof}[Proof of Prop.~\ref{prop:smooth_y_obj}: Smoothness of \eqref{eq:extended_barrier}]
    Note that for any $x,c\in\R$ and $\mu>0$, we have $0\leq \wp_2'(x,\mu) \leq \mu e^{c/\mu}$ and $0\leq \wp_2''(x,\mu)\leq\mu e^{2c/\mu}$. Thus, we have:
    \begin{align*}
        || \nabla\Tilde{\psi}_k^t(\vy) & -\nabla\Tilde{\psi}_k^t(\vx)||\\
        =&\norm{\wp_2'(\varphi_i(\vy),\mu)\nabla \varphi_i(\vy)-\wp_2'(\varphi_i(\vx),\mu)\nabla \varphi_i(\vx)}\\
        =&\norm{\wp_2'(\varphi_i(\vy),\mu)(\nabla \varphi_i(\vy)-\nabla \varphi_i(\vx))+(\wp_2'(\varphi_i(\vy),\mu)-\wp_2'(\varphi_i(\vx),\mu))\nabla \varphi_i(\vx)}\\
        \leq& \mu e^{c/\mu}\norm{\nabla \varphi_i(\vy)-\nabla \varphi_i(\vx)}+M_i|\wp_2'(\varphi_i(\vy),\mu)-\wp_2'(\varphi_i(\vx),\mu)|\\
        \leq& \left(\mu e^{c/\mu}L_i+\mu e^{2c/\mu}M_i\right)\norm{\vy-\vx}\,,\,\,\forall \vx,\vy\in\R^n\,,
    \end{align*}
    from which we can easily see that the proposition is true.
\end{proof}

\begin{rmk}\label{rmk:tau_and_c}
We note the following remarks regarding the above result.

\begin{itemize}
    \item[\textit{(i)}] When $t$ is large, $\tau_i$ defined in Prop.~\ref{prop:eq_alg} $>0$ or is very close to 0. Therefore, to let $\tau$ satisfy $\tau\geq -\min_{1\leq i\leq m}\{\tau_i\}$, it suffices to set it to a small positive number.
    \item[\textit{(ii)}] $\psi_k^t(\vy_{k}^{(t)})$ is bounded and thus $c_k^t$ is bounded. Suppose $c_k^t$ is upper bounded by $c^\star$, i.e., $c_k^t\leq c^\star,\,\,\forall k,t$. Then $\forall t\in \{0,\dots, T-1\}$, $\Tilde{\psi}_k^t(\cdot)$ is $\Tilde{L}_{c^\star}^{\mu_t}$-smooth, and $\beta$-strongly convex and the subproblem in line 9 could be solved by commonly used first-order methods such as gradient descent at a linear rate.
    \item[\textit{(iii)}] Alternatively, instead of updating $c_k^t$ for each $k,t$ as it is suggested in line 9 of Alg.~\ref{alg:inexact_acvi}, for any $t$, we could fix $c_k^t$ to be $\psi_k^t(\vy')+\tau$, where $\vy'$ is an arbitrary interior point of $\mathcal{C}_\leq$; see Appendix~\ref{app:implementation} for detailed implementation.
\end{itemize}
\end{rmk}

%%%%%%%%%%%%%%%%%%%%%%%%%%%%%%%%%%%%%%%%%%%%%%%
\subsubsection{Proving Theorem~\ref{thm:inexact_acvi}}

We are now ready to prove Theorem~\ref{thm:inexact_acvi}. Here we give a nonasymptotic convergence rate of Algorithm~\ref{alg:inexact_acvi}.

\begin{thm}[Restatement of Theorem~\ref{thm:inexact_acvi}]\label{thm_app:rate_i}
Given an continuous operator $F\colon \mathcal{X}\to \R^n$, assume that:
\begin{itemize}
    \item[\textit{(i)}]  F is monotone on $\mathcal{C}_=$, as per Def.~\ref{def:monotone};
    \item[\textit{(ii)}]  F is L-Lipschitz on $\mathcal{X}$;
    \item[\textit{(iii)}] $F$ is either strictly monotone on $\mathcal{C}$ or one of $\varphi_i$ is strictly convex.
\end{itemize}
For any fixed $K\in\NN_+$, let $(\vx_K^{(t)}, \vy_K^{(t)}, \vlmd_K^{(t)})$ denote the last iterate of Algorithm~\ref{alg:inexact_acvi}.
Let $\wp$ be \ref{eq:standard_barrier} or \ref{eq:extended_barrier} with $c=c_k^t$ (see Prop.~\ref{prop:eq_alg} for the definition of $c_k^t$).
Run with sufficiently small $\mu_{-1}$.
Further, suppose:
$$\lim_{K\rightarrow+\infty}\frac{1}{\sqrt{K}}\sum_{k=1}^{K+1}k(\sigma_k+\sqrt{\varepsilon_k})<+\infty\,.$$

We define:
$$\lambda_i\triangleq4\beta\big(\sqrt{\frac{2}{\beta}}\big(\sqrt{\varepsilon_{i+1}}+3\sqrt{\varepsilon_{i}}+
\sqrt{\varepsilon_{i-1}}\big)+(2\sigma_{i}+3\sigma_{i+1})\big)\,,$$

\begin{align*}
    S_{k+1}^\star\triangleq & \ 2\norm{\vlmd_{0}-\vlmd^\star}^2+2\beta^2\norm{\vy_{0}-\vy^\star}^2+4\beta\big(\sqrt{2\beta}\sqrt{\varepsilon_{1}}+\beta\sigma_{1}\big)\norm{\vy_0-\vy^\star} \\
    & + 12\beta\sigma_{1}\norm{\vlmd_{0}-\vlmd^\star}+12\beta\sum_{i=1}^{k+1}\varepsilon_i\,,
\end{align*}

$$A_{k+1}^\star\triangleq\frac{1}{2}\sum_{i=1}^{k+1} \lambda_i +\left(S_{k+1}^\star+\left(\frac{1}{2}\sum_{i=1}^{k+1} \lambda_i\right)^2\right)^{1/2}\,,$$

and

$$A\triangleq\underset{k\rightarrow+\infty}{\lim}A_k^\star<+\infty\,.$$

We define
    $$
        c_k\triangleq\big((\sigma_{k+1}+\sigma_k)\left(1+\frac{L}{\beta}\right)+(\sigma_{k-1}+\sigma_k)\left(2+\frac{L}{\beta}\right)+\sqrt{\frac{2}{\beta}}(\sqrt{\varepsilon_{k-1}}+\sqrt{\varepsilon_{k}})\big)\,,
    $$

    $$\Delta \triangleq \frac{1}{\beta}\lVert \vlmd_0-\vlmd^\star \rVert ^2+\beta \lVert \vy_0-\vy^\star \rVert ^2\,,$$

    $$B_{k+1}^\star\triangleq\Delta+5\sqrt{\frac{2}{\beta}}A\sum_{k=1}^{K+1}\sqrt{\varepsilon_{k}}+8A\sum_{k=1}^{K+1}\sigma_i+3\sum_{k=1}^{K+1}\varepsilon_k\,,$$

    $$E_{k+1}^\star\triangleq\frac{2\beta}{\sqrt{K}}\sum_{k=1}^{K+1}kc_k+\left(4\beta\left(B_{k+1}^\star+2\sum_{k=1}^{K+1}k\varepsilon_k\right)+\left(\frac{2\beta}{\sqrt{K}\sum_{k=1}^{K+1}kc_k}\right)^2\right)^{\frac{1}{2}}\,,$$

    and
    $$E=\lim_{k\rightarrow\infty}E_k^\star\,.$$

Then, we have
\begin{equation*}
\begin{split}
    \mathcal{G}(\vx_{K+1},\mathcal{C})\leq& \left(2\norm{\vlmd^\star}+5A+\beta D+1+M\right)\frac{E}{\beta \sqrt{K}}+(4A+\beta D+M)\sigma_{k+1}\\
    & +\sqrt{2\beta}\left(\frac{2A}{\beta}+D\right)\sqrt{\varepsilon_{k+1}}+\varepsilon_{k+1}\\
    =& \mathcal{O}\left(\frac{1}{\sqrt{K}}\right)\,.
\end{split}
\end{equation*}
and
\begin{equation*}
    \norm{\vx_{K+1}-\vy_{K+1}}\leq \frac{2E}{\beta\sqrt{K}},
\end{equation*}
where $\Delta \triangleq \frac{1}{\beta}\lVert \vlmd_0-\vlmd^\star \rVert ^2+\beta \lVert \vy_0-\vy^\star \rVert ^2$ and $D \triangleq \underset{\vx,\vy\in\mathcal{C}}{\sup}\norm{\vx-\vy}$, and $M\triangleq\underset{\vx\in\mathcal{C}}{\sup}\norm{F(\vx)}$.
\end{thm}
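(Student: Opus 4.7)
\textbf{Proof plan for Theorem~\ref{thm:inexact_acvi}.} The plan is to mirror the structure of the exact ACVI proof but replace every closed-form optimality condition with an inclusion that carries a perturbation. First, I will encode the inexactness: by Def.~\ref{def:approx_x} the $\vx$-subproblem's $\sigma_{k+1}$-approximate solution can be written as $\vx_{k+1}+\vq_{k+1}$ with $\|\vq_{k+1}\|\le\sigma_{k+1}$ (Lemma~\ref{lm:inexcat_x}), while by the $\varepsilon$-subdifferential calculus (Def.~\ref{def_app:eps_subdfrt}) the $\varepsilon_{k+1}$-minimizer of the $\vy$-subproblem produces a vector $\vr_{k+1}$ with $\|\vr_{k+1}\|\le\sqrt{2\varepsilon_{k+1}/\beta}$ such that $\beta(\vx_{k+1}+\tfrac1\beta\vlmd_k-\vy_{k+1}-\vr_{k+1})\in\partial_{\varepsilon_{k+1}}g(\vy_{k+1})$ (Lemma~\ref{lm:eps_gg}). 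For the extended barrier $\wp_2$ I would appeal to Prop.~\ref{prop:eq_alg} to show the two $\vy$-subproblems share a minimizer, so no new analysis is needed there.

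Next I would repeat the four-step identity chain from the exact proof but decorated with $\vq,\vr,\varepsilon$ terms, obtaining noisy analogues of Lemmas~\ref{lm3.3}--\ref{lm3.6}. The crucial recursive inequality will take the form
\begin{equation*}
\tfrac{1}{2\beta}\|\vlmd_{k+1}-\vlmd^\mu\|^2+\tfrac{\beta}{2}\|\vy_{k+1}-\vy^\mu\|^2\le\tfrac{1}{2\beta}\|\vlmd_k-\vlmd^\mu\|^2+\tfrac{\beta}{2}\|\vy_k-\vy^\mu\|^2+R_k,
\end{equation*}
where $R_k$ aggregates the cross terms $\sqrt{\varepsilon}\cdot\|\vy_{\cdot}-\vy^\mu\|$, $\sigma\cdot\|\vlmd_{\cdot}-\vlmd^\mu\|$, $\sigma\cdot\|\vx_{\cdot}-\vx^\mu\|$, and lower-order $\varepsilon$ contributions. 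Bounding $\|\vx_{k+1}-\vx^\mu\|$ by $\|\vy_{k+1}-\vy^\mu\|$ plus dual-gap terms via $\vlmd_{k+1}-\vlmd_k=\beta(\vx_{k+1}-\vy_{k+1})$ (Lemma~\ref{lm3.3_i}), I obtain a recursion in the scalar $u_k:=\|\vlmd_k-\vlmd^\mu\|+\beta\|\vy_k-\vy^\mu\|$ of the form $u_{k+1}^2\le S_{k+1}+\sum_{i=1}^{k+1}\lambda_i u_i$. Applying Schmidt-Roulet-Bach's Lemma (Lemma~\ref{lm:series}) and using that $\sum_i(\sqrt{\varepsilon_i}+\sigma_i)<\infty$ (which follows from the hypothesis via Lemma~\ref{lm:series_converge}) yields a uniform bound $u_k\le A$ for all $k$, hence uniform boundedness of $\vy_k,\vx_k,\vlmd_k$ relative to the central-path point.

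The step from this uniform bound to the $\mathcal{O}(1/\sqrt K)$ rate on $\|\vx_{K+1}-\vy_{K+1}\|$ is the main obstacle and is what needs the $L$-Lipschitz assumption. Following the exact-case strategy (Lemma~\ref{lm3.7}) I would establish a non-increment lemma for $\tfrac{1}{2\beta}\|\vlmd_{k+1}-\vlmd_k\|^2+\tfrac{\beta}{2}\|\vy_{k+1}-\vy_k\|^2$ up to an additive perturbation $\delta_{k+1}$ linear in $(\sigma_k+\sigma_{k+1})(\|\vlmd_{\cdot}-\vlmd_{\cdot-1}\|+\beta\|\vy_{\cdot}-\vy_{\cdot-1}\|)+(\sqrt{\varepsilon_k}+\sqrt{\varepsilon_{k+1}})\|\vy_{k+1}-\vy_k\|+\varepsilon_k+\varepsilon_{k+1}$. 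The cross term arising from $\langle \hat\nabla f_k(\vx_{k+1}+\vq_{k+1})-\hat\nabla f_{k-1}(\vx_k+\vq_k),\vx_{k+1}-\vx_k\rangle$ needs $F$'s Lipschitz constant to absorb the $\vq$-induced gap, which is exactly where the $L$ enters. Summing this non-increment inequality, using the reformulation $u_{k+1}'=\|\vlmd_{k+1}-\vlmd_k\|+\beta\|\vy_{k+1}-\vy_k\|$, and invoking Lemma~\ref{lm:series} a second time against the hypothesis $\tfrac{1}{\sqrt K}\sum_{k=1}^{K+1}k(\sigma_k+\sqrt{\varepsilon_k})<\infty$ gives $u_{K+1}'\le E/\sqrt K$, which is~\eqref{eqapp:dxy_i}.

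Finally, plugging the bounds $u_k\le A$ and $u_{K+1}'\le E/\sqrt K$ back into the noisy analogue of Theorem~\ref{thm:3.1_i} yields $f_K(\vx_{K+1}+\vq_{K+1})-f_K(\vx_K^\mu)+g(\vy_{K+1})-g(\vy_K^\mu)=\mathcal{O}(1/\sqrt K)$. Exactly as in the proof of Theorem~\ref{thm_app:rate}, I then exploit the observation $\mathcal{G}(\vx_{K+1},\mathcal{C})=\langle F(\vx_{K+1}),\vx_{K+1}-\vx_K^\star\rangle$ (from~\eqref{eq_app:explicit_gap}) and the continuity of the central path (Cor.~2.11 in Chu~1998) to send $\vx_K^\mu\to\vx_K^\star$, $g(\vy_{K+1})-g(\vy_K^\mu)\to 0$, and $\Delta^\mu\to\Delta$. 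Choosing $\mu_{-1}$ small enough absorbs the residual $\mu$-dependent errors into the leading constant, splits $\vq_{K+1}$ off using $\|F(\vx_{K+1})\|\le M$, and delivers the stated rate, completing the proof.
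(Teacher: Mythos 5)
Your plan reproduces the paper's own proof essentially step for step: the perturbation encodings via $\vq_{k+1}$ and $\vr_{k+1}$ (Lemmas~\ref{lm:inexcat_x} and~\ref{lm:eps_gg}), the noisy analogues of the exact-case lemmas, the two applications of Lemma~\ref{lm:series} (first to bound $\|\vlmd_k-\vlmd^\mu\|+\beta\|\vy_k-\vy^\mu\|$ uniformly by $A^\mu$, then to extract the $E^\mu/\sqrt{K}$ decay of $\|\vlmd_{K+1}-\vlmd_K\|+\beta\|\vy_{K+1}-\vy_K\|$ from the perturbed non-increment recursion of Lemma~\ref{lm3.7_i}, which is indeed where the $L$-Lipschitz assumption enters), and the final passage to the gap function via \eqref{eq_app:explicit_gap} and the central-path limit. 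This is the same route the paper takes, and the plan is correct.
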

\begin{proof}[Proof of Theorem~\ref{thm_app:rate_i}]
    Note that
    \begin{align*}
        \eqref{eq_app:new_points_limit}
    &\Leftrightarrow
    \underset{\vx\in\mathcal{C}}{\min}\langle F(\vx_{k+1}), \vx\rangle\\
    &\Leftrightarrow
    \underset{\vx\in\mathcal{C}}{\max} \langle F(\vx_{k+1}), \vx_{k+1}-\vx\rangle\\
    &\Leftrightarrow\mathcal{G}(\vx_{k+1}, \mathcal{C})\,,
    \end{align*}
from which we deduce
\begin{equation}\label{eq_app:explicit_gap_i}
    \mathcal{G}(\vx_{k+1}, \mathcal{C})=\langle F(\vx_{k+1}), \vx_{k+1}-\vx_k^\star\rangle\,, \forall k\,.
\end{equation}

For any $K\in\NN$, by \citep{chu1998continuity} we know that:
\begin{align}
    &\vx^\mu_K\rightarrow\vx^\star_K\,,\nonumber\\
    &g(\vy_{K+1})-g(\vy_K^\mu)\rightarrow 0\,,\nonumber\\
    &\Delta^\mu\rightarrow\frac{1}{\beta}\lVert \vlmd_0-\vlmd^\star \rVert ^2+\beta \lVert \vy_0-\vy^\star \rVert ^2=\Delta\,,\label{eq_app:Delta_i}\\
    &A^\mu\rightarrow A\,,\label{eq_app:A_star}\\
    &E^\mu\rightarrow E\,.\label{eq_app:E_star}
\end{align}

Thus, there exists $\mu_{-1}>0,\,s.t.\forall 0<\mu<\mu_{-1}$,
\begin{align*}
    \norm{\vx^\mu_K-\vx^\star_K} &\leq \frac{E^\mu}{\beta\sqrt{K}}\,,\\
    |g(\vy_{K+1})-g(\vy_K^\mu)|&\leq \frac{E^\mu}{\beta\sqrt{K}}\,.
\end{align*}

Combining with Lemma~\ref{thm:3.2_i}, we have that:
\begin{equation*}
    \begin{split}
        \langle F(\vx_{K+1}),  \vx_{K+1} & -\vx^\mu_K\rangle\\
        =&\langle F(\vx_{K+1}), \vx_{K+1}+\vq_{K+1}-\vx^\mu_K\rangle-\langle F(\vx_{K+1}), \vq_{K+1}\rangle\\
        =&f_K\left( \vx_{K+1}+\vq_{K+1} \right)-f_K\left( \vx^\mu_K \right)-\langle F(\vx_{K+1}), \vq_{K+1}\rangle\\
        %%%%%
        \leq& \left(\norm{\vlmd^\mu}+4A^\mu+\beta D\right)\frac{E^\mu}{\beta \sqrt{K}}+(3A^\mu+\beta D)\sigma_{k+1}\\
        &+\sqrt{2\beta}\left(\frac{A^\mu}{\beta}+D\right)\sqrt{\varepsilon_{k+1}}+\varepsilon_{k+1}\\
        &+\norm{\vq_{K+1}}\norm{F(\vx_{K+1})}+g\left( \vy^\mu_K \right)-g\left( \vy_{K+1} \right)\\
        %%%%
        \leq&
        \left(\norm{\vlmd^\mu}+4A^\mu+\beta D+1\right)\frac{E^\mu}{\beta \sqrt{K}}+(3A^\mu+\beta D+M)\sigma_{k+1}\\
        &+\sqrt{2\beta}\left(\frac{A^\mu}{\beta}+D\right)\sqrt{\varepsilon_{k+1}}+\varepsilon_{k+1}\,.
    \end{split}
\end{equation*}

Using the above inequality, we have
\begin{align*}
    \mathcal{G}(\vx_{K+1},\mathcal{C})=&\langle F(\vx_{K+1}), \vx_{K+1}-\vx^\star_K\rangle\\
    =&\langle F(\vx_{K+1}), \vx_{K+1}-\vx^\mu_K\rangle+\langle F(\vx_{K+1}),\vx^\mu_K-\vx^\star_K\rangle\\
    \leq &\langle F(\vx_{K+1}), \vx_{K+1}-\vx^\mu_K\rangle+\norm{F(\vx_{K+1})}\norm{\vx^\mu_K-\vx^\star_K}\\
    \leq & \left(\norm{\vlmd^\mu}+4A^\mu+\beta D+1+M\right)\frac{E^\mu}{\beta \sqrt{K}}+(3A^\mu+\beta D+M)\sigma_{k+1} \\
    & +\sqrt{2\beta}\left(\frac{A^\mu}{\beta}+D\right)\sqrt{\varepsilon_{k+1}}+\varepsilon_{k+1}\,.
\end{align*}

Moreover, by \eqref{eq_app:Delta_i}, we can choose small enough $\mu_{-1}$ so that
\begin{align} \label{eq_app:na_rate_i}
    \mathcal{G}(\vx_{K+1},\mathcal{C}) \leq & \left(2\norm{\vlmd^\star}+5A+\beta D+1+M\right)\frac{E}{\beta \sqrt{K}}+(4A+\beta D+M)\sigma_{k+1} \\
    & +\sqrt{2\beta}\left(\frac{2A}{\beta}+D\right)\sqrt{\varepsilon_{k+1}}+\varepsilon_{k+1}\,.
\end{align}
and
\begin{equation}\label{eq_app:feasible_i}
    \norm{\vx_{K+1}-\vy_{K+1}}\leq \frac{2E}{\beta\sqrt{K}},
\end{equation}
where \eqref{eq_app:feasible_i} uses \eqref{eqapp:dxy_i} in Lemma~\ref{thm:3.2_i}.
By \eqref{eq_app:1/K_bd}, \eqref{eq_app:na_rate_i}, \eqref{eq_app:feasible_i} and Prop.~\ref{prop:eq_alg}, we draw the conclusion.
\end{proof}

\subsection{Discussion on Theorems~\ref{thm:exact_acvi} and~\ref{thm:inexact_acvi} and Practical Implications}\label{app:statements_discussion}

We adopt the same way of stating our theorems ~\ref{thm:exact_acvi} and~\ref{thm:inexact_acvi} as in the main part of \citep{yang2023acvi} (see remark $2$ therein) for clarity, easier comparison with one-loop algorithms, and because these are without loss of generality provided that $K$, $T$ are selected appropriately, as~\citet{yang2023acvi} showed.
In particular, we require knowing a \textit{sufficiently small} $\mu_{-1}$ which depends on the selected $K$. Note that we cannot prove a faster rate than $\mathcal{O}(1/\sqrt{K})$ for the inner loop for our algorithm; so even if we further adjust $\mu_{-1}$, the rate would still be $\mathcal{O}(1/\sqrt{K})$.
Given the statements in our paper, the same convergence rate of $\mathcal{O} (\frac{1}{\sqrt{K}})$ is implied for cases when we do not know a sufficiently small $\mu_{-1}$ by the argument in App. B.4 of~\citep{yang2023acvi}. For completeness, herein, we focus on clarifying why this is the case.

\begin{rmk}
Notice that only when we require a \textit{sufficiently small $\mu_{-1}$} (as we do in our statements) can we use any $T, K \in N_+$. For versions of the theorems that do not require a sufficiently small $\mu_{-1}$, $K, T$ must be appropriately selected.
\end{rmk}

We obtain explicitly how $\mu_{-1}$ depends on the given $K$ by re-writing an equivalent re-formulation of App. B.4 in~\citep[][Remark $5$]{yang2023acvi}.
In particular, for any fixed $\mu_{-1}>0$, $K\in N_+$ and any $ T \geq \mathcal{O}(\log K)$, for Algorithms~\ref{alg:inexact_acvi} and~\ref{alg:acvi} we have $\mathcal{G}(x_K^{(T)}, C)= \mathcal{O}(1/\sqrt{K})$.

As an example, since $\mu_{-1}$ could be an arbitrary positive number, without loss of generality, we could let $\mu_{-1}=1$; then the above implies that when $\mu_{T}=\mathcal{O}(\delta^{\log K})$, we have $\mathcal{G}(x_K^{(T)}, C)=\mathcal{O}(1/\sqrt{K})$. This implies that for our Theorems~\ref{thm:exact_acvi} and~\ref{thm:inexact_acvi}, setting $\mu_{-1} = \mathcal{O}(\delta^{\log K})$ is enough.

\noindent\textbf{Interpretation.} Here, we provide an intuitive explanation for the above statement. For any $T\in N$, the inner loop of ACVI is solving~\eqref{eq:cvi_ip_eq_form2},  where $\mu=\mu_T$. Note that ~\eqref{eq:cvi_ip_eq_form2} is a modified problem of the original VI problem, approaching the original problem when $\mu\rightarrow 0$. Thus, when $\mu_{-1}$ is large, larger $T$ is needed in order to let (KKT-2) be a good enough approximation of the original problem.

\noindent\textbf{Practical implications.} Suppose you need $\epsilon$-accurate solution. Then $K$ is selected to satisfy $K\geq \frac{1}{\epsilon^2}$, and then $T \geq \log (K)=2\log (1/\epsilon)$. Notice that the overall complexity to reach $\epsilon$ precision is still $\mathcal{O}(1/\epsilon^2)$ up to a $\log$ factor.

\subsection{Algorithms for solving the subproblems in Alg.~\ref{alg:inexact_acvi}}\label{sec:subproblems_algos}

As in~\citep{schmidt2011convergence}, Theorem~\ref{thm:inexact_acvi} provides sufficient conditions on the errors so that the order of the rate is maintained. In other words, one can think of running a single step of a gradient-based method for the sub-problems. Thus, the inner loop has a complexity of the order of one (or a constant number of) gradient computations. Below, we discuss the algorithms that satisfy the assumptions of Theorem~\ref{thm:inexact_acvi} so as the shown convergence rate holds.

\textbf{Choosing the $\mathcal{A}_x$ method.} Let $G(\vx)\triangleq \vx+\frac{1}{\beta}\mP_c F(\vx)-\mP_c \vy_k^{(t)}+\frac{1}{\beta}\mP_c \vlmd_k^{(t)}-\vd_c$, then from the proof of Thm.~1 in \citet{yang2023acvi} we know that $G$ is strongly monotone on $\mathcal{C}_=$.
Moreover, when $F$ is $L$-Lipschitz continuous, $G$ is also Lipschitz continuous.
Many common VI methods have a linear rate on the $\vx$ subproblem, thus satisfying the condition we give~\citep{tseng1995,gidel2019vi,mokhtari2019unified}.
Hence, for $\mathcal{A}_x$, we could use the first-order methods for VIs listed in App.~\ref{sec:methods} to find the unique solution of the VI problem defined by the tuple $(\mathcal{C}_=, G)$, at a linear convergence rate.
To solve a VI defined by $(\mathcal{C}_=, G)$, we need to compute the projection $\Pi_{\mathcal{C}_=}$, which is straightforward by noticing  that $\Pi_{\mathcal{C}_=}(\vx)=\mP_c\vx+\vd_c$, $\forall \vx$.

\textbf{Choosing the $\mathcal{A}_y$ method.}
If using $\wp$ to be \ref{eq:standard_barrier}, the objective of the $\vy$ subproblem is strongly convex but non-smooth.
Thus, to our knowledge, there is no known method to achieve a linear rate for general constraints without additional assumptions. However, one could satisfy the condition we give by using methods for $\mathcal{A}_y$ to exploit the constraint structure further. For example, if the constraints are linear, it is straightforward to derive the update rule for the $\vy$ subproblem, which satisfies the conditions of the theorem.

On the other hand, as discussed in Remark~\ref{rmk:tau_and_c}, when choosing $\wp$ to be \ref{eq:extended_barrier}, the objective of the subproblem in line~9 of Alg.~\ref{alg:inexact_acvi} is smooth and strongly convex and thus could be solved by commonly used unconstrained first-order solvers such as gradient descent at a linear rate.

\textbf{Discussion.}
The above facts allude to the advantages of ACVI, as the sub-problems are ``easier'' than the original problem.
To summarize,
\textit{(i)} if $F$ is monotone, the $\vx$ subproblem is strongly monotone; and
\textit{(ii)} the $\vy$ subproblem is regular minimization which is significantly less challenging to solve in practice, and also it is strongly convex.

\clearpage
%%%%%%%%%%%%%%%%%%%%%%%%%%%%%%%%%%%%%%%%%%%%%%%
\subsection{Proof of Theorem~\ref{thm:log_free_rate}: Convergence of P-ACVI}
\label{app:proof_thm_log_free_rate}
\subsubsection{Setting and notations}\label{app:setting_log_free}
We define the following maps from $\R^n$ to $\R^n$:

 \begin{align}
     f(\vx) &\triangleq F(\vx^\star)^\intercal\vx+\mathbbm{1}(\mC\vx=\vd)\,, \tag{$f_k$}\label{eq_app:f_k}\\
    f_k(\vx)  &\triangleq F(\vx_{k+1})^\intercal\vx+\mathbbm{1}(\mC\vx=\vd)\,, \quad \text{and} \tag{$f$}\label{eq_app:f} \\
    g(\vy) &\triangleq \mathbbm{1}(\varphi(\vy)\leq\mathbf{0})\,, \tag{$g$}\label{eq_app:g}
\end{align}
where $\vx^\star$ is a solution of \eqref{eq:cvi_ip_eq-form1}.
Let $\vy^\star=\vx^\star$. Then $(\vx^\star, \vy^\star)$ is an optimal solution of \eqref{212}. Let us denote with $(\vx^\star_k, \vy^\star_k,\vlmd^\star_k)$ the KKT point of  \eqref{eq_app:new_points_mu}. Note that in this case, the problem \eqref{eq_app:new_points_mu} is equivalent to \eqref{eq_app:new_points_limit}.

\subsubsection{Intermediate results}
In P-ACVI-Algorithm~\ref{alg:log_free_acvi}, by the definition of $\vy_{k+1}$ (line 7 of Algorithm~\ref{alg:log_free_acvi}), $\vy_k^\star$ and $\vy^\star$ we immediately know that
\begin{equation}\label{eq_app:g=0}
    g(\vy_{k+1})=g(\vy_k^\star)=g(\vy^\star)=0.
\end{equation}

The intermediate results for the proofs of Theorem~\ref{thm:exact_acvi} still hold in this case only with a little modification, and the proofs of them are very close to the previous ones. To avoid redundancy, we omit these proofs.

\begin{prop}[Relation between $f_k$ and $f$]
    If $F$ is monotone, then $\forall k\in \NN$, we have that:
    \begin{equation*}
        f_k (  \vx_{k+1}  )  -f_k (  \vx^\star  )  \geq f (  \vx_{k+1}  )  -f (  \vx^\star  ).
    \end{equation*}
    \label{prop4_lf}
\end{prop}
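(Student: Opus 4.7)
The plan is to show that the claimed inequality reduces, after cancellation, exactly to the definition of monotonicity of $F$ evaluated at the two points $\vx_{k+1}$ and $\vx^\star$. This mirrors the proof of Proposition~\ref{prop4} from the log-barrier case, with only a small extra justification needed because here $g(\vy) = \mathbbm{1}(\varphi(\vy)\leq \mathbf{0})$ and $f,f_k$ involve the indicator $\mathbbm{1}(\mC\vx=\vd)$, which take values in $\{0,+\infty\}$.

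First, I would note that both $\vx_{k+1}$ and $\vx^\star$ satisfy $\mC\vx = \vd$: for $\vx^\star$ this holds by feasibility (it is a solution of \eqref{eq:cvi_ip_eq-form1}, hence lies in $\mathcal{C}\subseteq \mathcal{C}_=$), and for $\vx_{k+1}$ this follows from line~6 of Algorithm~\ref{alg:log_free_acvi}, where the update is constructed via the projector $\mP_c$ and shift $\vd_c$, so the resulting iterate belongs to $\mathcal{C}_=$ (this is the same argument as in the ACVI derivation; see Theorem~\ref{thm:solutions_w_equation} and the discussion around \eqref{eq:x_analytic}). Hence $\mathbbm{1}(\mC\vx_{k+1}=\vd) = \mathbbm{1}(\mC\vx^\star=\vd) = 0$, and all four indicator terms in the LHS--RHS difference of the claimed inequality evaluate to $0$.

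Second, I would directly compute the difference:
\begin{equation*}
\bigl(f_k(\vx_{k+1}) - f_k(\vx^\star)\bigr) - \bigl(f(\vx_{k+1}) - f(\vx^\star)\bigr)
= \langle F(\vx_{k+1}) - F(\vx^\star),\ \vx_{k+1} - \vx^\star \rangle,
\end{equation*}
where the indicator terms have cancelled thanks to the previous step. Applying Definition~\ref{def:monotone} (monotonicity of $F$ on $\mathcal{C}_=$, which contains both points) makes the right-hand side nonnegative, yielding the claim.

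The only potentially subtle point — and hence the one I would highlight most carefully — is verifying that the indicators really vanish on both sides; otherwise the algebra $\infty - \infty$ would be meaningless. Everything else is a one-line consequence of monotonicity, so this is the structurally easy analogue of Proposition~\ref{prop4}.
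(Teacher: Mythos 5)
Your proof is correct and follows essentially the same route as the paper, which (for the analogous Proposition~\ref{prop4}) simply observes that the difference of differences equals $\langle F(\vx_{k+1})-F(\vx^\star),\vx_{k+1}-\vx^\star\rangle$ and invokes monotonicity; the paper omits the proof of this P-ACVI variant entirely as a routine modification. Your extra step verifying that both points lie in $\mathcal{C}_=$ so the indicator terms vanish (avoiding an $\infty-\infty$ cancellation) is a welcome piece of care that the paper leaves implicit, but it does not change the substance of the argument.
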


\begin{lm}
For all $\vx$ and $\vy$, we have
\begin{equation} \label{eq_app:lm3.1_1_lf} \tag{L\ref{lm3.1_lf}-$f$}
    f (  \vx  )  +g (  \vy  )  -f (  \vx^\star )  -g (  \vy^\star  )  +\langle \vlmd ^\star,\vx-\vy \rangle \geq 0,
\end{equation}
and:
\begin{equation}\label{eq_app:lm3.1_2_lf} \tag{L\ref{lm3.1_lf}-$f_k$}
    f_k (  \vx  )  +g (  \vy  )  -f_k (  \vx^\star_k )  -g (  \vy^\star_k  )  +\langle \vlmd ^\star_k,\vx-\vy \rangle \geq 0 \,.
\end{equation}
\label{lm3.1_lf}
\end{lm}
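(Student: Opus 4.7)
The approach is to mirror the short proof of Lemma~\ref{lm3.1} verbatim, the only differences being that here $g(\vy)=\mathbbm{1}(\varphi(\vy)\le \mathbf{0})$ is a proper closed convex (extended-real-valued) indicator rather than the smooth log-barrier, and the relevant KKT points are those of problem~\eqref{212} and problem~\eqref{eq_app:new_points_mu} under the definitions introduced in App.~\ref{app:setting_log_free}. Both of these problems are convex, and $\textit{int}\,\mathcal{C}\neq\emptyset$ gives Slater's condition, so standard convex duality supplies KKT multipliers $\vlmd^\star$ and $\vlmd^\star_k$ and the associated saddle-point property.

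First I would write down the Lagrangian associated with~\eqref{212} under the new definitions,
$$L(\vx,\vy,\vlmd) = f(\vx)+g(\vy)+\langle\vlmd,\vx-\vy\rangle,$$
and invoke the saddle-point characterization of the KKT point $(\vx^\star,\vy^\star,\vlmd^\star)$:
$$L(\vx^\star,\vy^\star,\vlmd)\le L(\vx^\star,\vy^\star,\vlmd^\star)\le L(\vx,\vy,\vlmd^\star),\quad \forall(\vx,\vy,\vlmd).$$
Expanding the right-hand inequality and using $\vx^\star=\vy^\star$ (so that $\langle\vlmd^\star,\vx^\star-\vy^\star\rangle=0$) rearranges immediately into~\eqref{eq_app:lm3.1_1_lf}. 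The second inequality~\eqref{eq_app:lm3.1_2_lf} follows from precisely the same argument applied to problem~\eqref{eq_app:new_points_mu} with objective $f_k(\vx)+g(\vy)$, whose KKT triple is $(\vx^\star_k,\vy^\star_k,\vlmd^\star_k)$ and satisfies $\vx^\star_k=\vy^\star_k$.

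The only thing to be careful about is that $g$ is now extended-real-valued, so one must note that the saddle-point inequality is meaningful in this setting: points $\vy$ with $\varphi(\vy)\not\le\mathbf{0}$ give $g(\vy)=+\infty$ and make the claimed inequality trivial, while for feasible $\vy$ the usual finite-valued convex-analysis argument applies. This is the only minor obstacle, and it is routine; no additional estimates or constructions are needed beyond the standard KKT saddle-point argument already used for Lemma~\ref{lm3.1}.
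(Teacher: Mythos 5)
Your proposal is correct and follows essentially the same route as the paper: the paper proves Lemma~\ref{lm3.1} via the saddle-point property $L(\vx^\mu,\vy^\mu,\vlmd)\le L(\vx^\mu,\vy^\mu,\vlmd^\mu)\le L(\vx,\vy,\vlmd^\mu)$ of the Lagrangian $L(\vx,\vy,\vlmd)=f(\vx)+g(\vy)+\langle\vlmd,\vx-\vy\rangle$ together with $\vx^\mu=\vy^\mu$, and explicitly states that the P-ACVI analogues such as Lemma~\ref{lm3.1_lf} are obtained by the same argument with $g$ replaced by the indicator $\mathbbm{1}(\varphi(\vy)\leq\mathbf{0})$. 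Your additional remark that the inequality is trivial when $g(\vy)=+\infty$ and reduces to the standard finite-valued argument otherwise is exactly the right (and only) point of care.
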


The following lemma lists some simple but useful facts that we will use in the following proofs.

\begin{lm}\label{lm3.3_lf}
For the problems \eqref{212},~\eqref{eq_app:new_points_mu} and Algorithm \ref{alg:log_free_acvi}, we have
\begin{align}
    \boldsymbol{0} & \in \partial f_k(\vx_{k+1}  )  +\vlmd_k+\beta (  \vx_{k+1}-\vy_{k}  )  \label{3.1_lf} \tag{L\ref{lm3.3_lf}-$1$}\\
    \boldsymbol{0} & \in \partial g (  \vy_{k+1}  )  -\vlmd_k-\beta (  \vx_{k+1}-\vy_{k+1})  , \label{3.2_lf}\tag{L\ref{lm3.3_lf}-$2$}\\
    \vlmd_{k+1}-\vlmd_k & =\beta  (\vx_{k+1}-\vy_{k+1} )  , \label{3.3_lf}\tag{L\ref{lm3.3_lf}-$3$}\\
    -\vlmd^\star&\in\partial f(\vx^\star),\label{3.4_lf}\tag{L\ref{lm3.3_lf}-$4$}\\
    -\vlmd^\star_k&\in\partial f_k(  \vx_{k}^\star  ) ,\label{3.4_new_lf}\tag{L\ref{lm3.3_lf}-$5$}\\
    \vlmd^\star &\in \partial g (  \vy^\star  ),\label{3.5_lf} \tag{L\ref{lm3.3_lf}-$6$}\\
    \vlmd^\star_k &\in \partial g (  \vy^\star_k  ),\label{3.5_new_lf} \tag{L\ref{lm3.3_lf}-$7$}\\
    \vx^\star&=\vy^\star,\label{3.6_lf}\tag{L\ref{lm3.3_lf}-$8$}\\
    \vx^\star_k&=\vy^\star_k\,,\label{3.6_new_lf} \tag{L\ref{lm3.3_lf}-$9$}
\end{align}
\end{lm}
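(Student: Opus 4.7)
The plan is to derive each of the nine facts directly from (i) the update rules of Algorithm~\ref{alg:log_free_acvi} and (ii) the first-order KKT conditions of the two reference problems \eqref{212} and \eqref{eq_app:new_points_mu}. No nontrivial analysis is required once each relevant augmented Lagrangian is written down; the entire lemma collects routine subdifferential optimality statements that will be invoked repeatedly in the subsequent convergence analysis.

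For the algorithmic identities \eqref{3.1_lf}--\eqref{3.3_lf}, I would first recall from the derivation of ACVI in App.~\ref{app:acvi} that line 6 of Algorithm~\ref{alg:log_free_acvi} is equivalent to $\vx_{k+1} = \argmin{\vx}\,\mathcal{L}_\beta(\vx, \vy_k, \vlmd_k)$, where $\mathcal{L}_\beta(\vx,\vy,\vlmd) = f_k(\vx) + g(\vy) + \langle\vlmd, \vx-\vy\rangle + \frac{\beta}{2}\lVert\vx-\vy\rVert^2$ is the augmented Lagrangian of \eqref{eq_app:new_points_mu} with $g$ now taken as the indicator of $\mathcal{C}_\leq$; writing $0 \in \partial_\vx \mathcal{L}_\beta(\vx_{k+1}, \vy_k, \vlmd_k)$ yields \eqref{3.1_lf}. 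Next, line 7 sets $\vy_{k+1} = \Pi_\leq(\vx_{k+1} + \frac{1}{\beta}\vlmd_k)$, which coincides with $\argmin{\vy}\,g(\vy) + \frac{\beta}{2}\lVert\vy - \vx_{k+1} - \frac{1}{\beta}\vlmd_k\rVert^2$ because $g$ is the indicator of the inequality set; the corresponding first-order inclusion rearranges to \eqref{3.2_lf}. Relation \eqref{3.3_lf} is a verbatim restatement of line 8 of the algorithm.

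For the KKT-type inclusions \eqref{3.4_lf}--\eqref{3.6_new_lf}, I would write the Lagrangian of \eqref{212} as $L(\vx,\vy,\vlmd) = f(\vx) + g(\vy) + \langle\vlmd, \vx - \vy\rangle$ and the analogous one of \eqref{eq_app:new_points_mu} with $f_k$ in place of $f$. By the definitions recalled in App.~\ref{app:setting_log_free}, $(\vx^\star,\vy^\star,\vlmd^\star)$ and $(\vx^\star_k,\vy^\star_k,\vlmd^\star_k)$ are KKT points of these two problems respectively, so $0 \in \partial_\vx L$ and $0 \in \partial_\vy L$ at those points immediately give \eqref{3.4_lf}, \eqref{3.4_new_lf}, \eqref{3.5_lf}, and \eqref{3.5_new_lf}. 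The primal-feasibility constraint $\vx = \vy$ built into both \eqref{212} and \eqref{eq_app:new_points_mu} delivers \eqref{3.6_lf} and \eqref{3.6_new_lf}.

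There is no real obstacle here; every line of the lemma is a direct consequence of an optimality condition. The only subtlety worth flagging is that $g = \mathbbm{1}(\varphi(\cdot)\leq\mathbf{0})$ is nonsmooth (an indicator function), so throughout we must interpret $\partial g$ as the convex subdifferential rather than a gradient. This is precisely the reason the statement uses the set-inclusion symbol $\in$ in \eqref{3.2_lf}, \eqref{3.5_lf}, \eqref{3.5_new_lf}, in contrast to the corresponding identities in Lemma~\ref{lm3.3}, where the smooth log-barrier permitted the use of $\nabla g$; as noted in the remark after Lemma~\ref{lm3.3}, this uniform use of $\partial g$ is exactly what makes the same structure apply in both the interior-point and the projection-based settings.
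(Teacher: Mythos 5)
Your proof is correct and follows essentially the same route as the paper: the paper omits an explicit proof of this lemma, noting it is a minor modification of Lemma~\ref{lm3.3}, whose proof likewise reads \eqref{3.1_lf}--\eqref{3.2_lf} off the first-order optimality conditions of the augmented Lagrangian, takes \eqref{3.3_lf} verbatim from the dual update, and obtains \eqref{3.4_lf}--\eqref{3.6_new_lf} from the KKT conditions of the two reference problems. Your added observation that the projection step coincides with the $\argmin{\vy}$ of the indicator plus the quadratic term is exactly the "little modification" the paper alludes to, and your remark on using $\partial g$ rather than $\nabla g$ matches the paper's own remark following Lemma~\ref{lm3.3}.
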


Like in App.\ref{app:intermediate_results}, we also define $\hat{\nabla}f_k (  \vx_{k+1}  ) $ and $\hat{\nabla} g (  \vy_{k+1}  )$ by \eqref{3.7} and \eqref{3.8}, resp.

Then, from \eqref{3.1_lf} and \eqref{3.2_lf} it follows that:
\begin{equation}
    \hat{\nabla}f_k (  \vx_{k+1}  )  \in \partial f_k (  \vx_{k+1}  )  \text{\,\,and\,\,} \hat{\nabla}g (  \vy_{k+1}  )  \in \partial g (  \vy_{k+1}  ) \,. \label{3.9_lf}
\end{equation}

\begin{lm}
For the iterates $\vx_{k+1}$, $\vy_{k+1}$, and $\vlmd_{k+1}$ of the P-ACVI---Algorithm \ref{alg:log_free_acvi}---we have:
\begin{equation}
    \langle \hat{\nabla}g (  \vy_{k+1} ) ,\vy_{k+1}-\vy \rangle =-\langle \vlmd_{k+1},\vy-\vy_{k+1} \rangle,
    \label{3.10_lf}
\end{equation}
and
\begin{equation}
    \begin{split}
        \langle \hat{\nabla}f_k (  \vx_{k+1}  )  ,\vx_{k+1}-\vx \rangle +\langle \hat{\nabla}g (  \vy_{k+1}  )  ,\vy_{k+1}-\vy \rangle
        =&-\langle\vlmd_{k+1},\vx_{k+1}-\vy_{k+1}-\vx+\vy\rangle \\
         &+\beta\langle-\vy_{k+1}+\vy_k,\vx_{k+1}-\vx\rangle.
    \end{split} \label{3.11_lf}
\end{equation}
\label{lm3.4_lf}
\end{lm}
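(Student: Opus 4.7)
The plan is to prove Lemma~\ref{lm3.4_lf} by direct substitution, exactly mirroring the proof of Lemma~\ref{lm3.4}. The identities are purely algebraic consequences of the three structural facts established in Lemma~\ref{lm3.3_lf}: the definition of $\hat{\nabla}f_k(\vx_{k+1})$ as $-\vlmd_k-\beta(\vx_{k+1}-\vy_k)$, the definition of $\hat{\nabla}g(\vy_{k+1})$ as $\vlmd_k+\beta(\vx_{k+1}-\vy_{k+1})$, and the dual update $\vlmd_{k+1}=\vlmd_k+\beta(\vx_{k+1}-\vy_{k+1})$. The central observation is that $\hat{\nabla}g(\vy_{k+1})=\vlmd_{k+1}$, which makes the first identity immediate.

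First I will establish \eqref{3.10_lf}. Substituting the definition of $\hat{\nabla}g(\vy_{k+1})$ and then rewriting $\vlmd_k+\beta(\vx_{k+1}-\vy_{k+1})=\vlmd_{k+1}$ via \eqref{3.3_lf}, we get $\langle\hat{\nabla}g(\vy_{k+1}),\vy_{k+1}-\vy\rangle=\langle\vlmd_{k+1},\vy_{k+1}-\vy\rangle=-\langle\vlmd_{k+1},\vy-\vy_{k+1}\rangle$, which is exactly the claim.

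Next, for \eqref{3.11_lf}, I will rewrite $\hat{\nabla}f_k(\vx_{k+1})=-\vlmd_k-\beta(\vx_{k+1}-\vy_k)$ using \eqref{3.3_lf} to express $\vlmd_k=\vlmd_{k+1}-\beta(\vx_{k+1}-\vy_{k+1})$, obtaining
\begin{equation*}
\hat{\nabla}f_k(\vx_{k+1})=-\vlmd_{k+1}+\beta(\vx_{k+1}-\vy_{k+1})-\beta(\vx_{k+1}-\vy_k)=-\vlmd_{k+1}+\beta(-\vy_{k+1}+\vy_k).
\end{equation*}
Taking the inner product with $\vx_{k+1}-\vx$ yields
\begin{equation*}
\langle\hat{\nabla}f_k(\vx_{k+1}),\vx_{k+1}-\vx\rangle=-\langle\vlmd_{k+1},\vx_{k+1}-\vx\rangle+\beta\langle-\vy_{k+1}+\vy_k,\vx_{k+1}-\vx\rangle.
\end{equation*}
Adding this to \eqref{3.10_lf} and combining the two $\vlmd_{k+1}$-terms into $-\langle\vlmd_{k+1},\vx_{k+1}-\vy_{k+1}-\vx+\vy\rangle$ gives the second identity.

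There is no real obstacle here: the entire argument is substitution plus one application of the dual-update identity \eqref{3.3_lf}, and it does not invoke monotonicity, convexity of the constraints, or any property of the projection $\Pi_\leq$. In particular, the proof carries over verbatim from Lemma~\ref{lm3.4} because the only facts it relies on—namely \eqref{3.1_lf}, \eqref{3.2_lf}, \eqref{3.3_lf}—are identical in form to their exact-ACVI counterparts. This is precisely why the lemma can be restated for P-ACVI without modification: the $\vy$-update via projection still yields $\vlmd_{k+1}\in\partial g(\vy_{k+1})$ (with $g$ the indicator of $\mathcal{C}_\leq$), so the algebraic skeleton is unchanged.
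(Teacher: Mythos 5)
Your proof is correct and follows exactly the route the paper uses for the exact-ACVI analogue (Lemma~\ref{lm3.4}), whose proof the paper explicitly reuses for the P-ACVI case: the first identity is immediate from $\hat{\nabla}g(\vy_{k+1})=\vlmd_{k+1}$, and the second follows by rewriting $\hat{\nabla}f_k(\vx_{k+1})$ via the dual update \eqref{3.3_lf} and adding the two inner products. Your closing observation that the argument is purely algebraic and independent of the projection structure is also precisely why the paper omits the proof in the P-ACVI setting.
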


\begin{lm}\label{lm3.5_lf}
For the $\vx_{k+1}$, $\vy_{k+1}$, and $\vlmd_{k+1}$ iterates of the P-ACVI---Algorithm \ref{alg:log_free_acvi}---we have:
    \begin{equation*}
        \begin{split}
            &\langle \hat\nabla  f_k(\vx_{k+1}),
            \vx_{k+1} - \vx^\star \rangle + \langle \hat{\nabla} g( \vy_{k+1})
            ,\vy_{k+1}-\vy^\star \rangle
            + \langle \vlmd^\star,\vx_{k+1}-\vy_{k+1} \rangle \\
            &\leq  \frac{1}{2\beta}\lVert \vlmd_k-\vlmd^\star \rVert ^2-\frac{1}{2\beta}\lVert \vlmd_{k+1}-\vlmd^\star \rVert ^2
            + \frac{\beta}{2}\lVert \vy^\star-\vy_k \rVert ^2-\frac{\beta}{2}\lVert \vy^\star-\vy_{k+1}\rVert ^2 \\
            &\quad - \frac{1}{2\beta}\lVert \vlmd_{k+1}-\vlmd_k \rVert ^2-\frac{\beta}{2}\lVert \vy_k-\vy_{k+1} \rVert ^2\, ,
        \end{split}
    \end{equation*}
    and
    \begin{equation*}
        \begin{split}
            &\langle \hat\nabla  f_k(\vx_{k+1}),
            \vx_{k+1} - \vx^\star_k \rangle + \langle \hat{\nabla} g( \vy_{k+1})
            ,\vy_{k+1}-\vy^\star_k \rangle
            + \langle \vlmd^\star_k,\vx_{k+1}-\vy_{k+1} \rangle \\
            &\leq  \frac{1}{2\beta}\lVert \vlmd_k-\vlmd^\star_k \rVert ^2-\frac{1}{2\beta}\lVert \vlmd_{k+1}-\vlmd^\star_k \rVert ^2
            + \frac{\beta}{2}\lVert \vy^\star_k-\vy_k \rVert ^2-\frac{\beta}{2}\lVert \vy^\star_k-\vy_{k+1}\rVert ^2 \\
            &\quad - \frac{1}{2\beta}\lVert \vlmd_{k+1}-\vlmd_k \rVert ^2-\frac{\beta}{2}\lVert \vy_k-\vy_{k+1} \rVert ^2\, .
        \end{split}
    \end{equation*}
\end{lm}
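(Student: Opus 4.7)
My plan is to mirror, line for line, the proof of Lemma~\ref{lm3.5}, with the central-path point $(\vx^\mu,\vy^\mu,\vlmd^\mu)$ replaced by the KKT point of the problem on which P-ACVI is actually run. Because P-ACVI drops the log barrier, the natural reference for the first inequality is $(\vx^\star,\vy^\star,\vlmd^\star)$, the KKT point of \eqref{212}, and for the second it is $(\vx_k^\star,\vy_k^\star,\vlmd_k^\star)$, the KKT point of \eqref{eq_app:new_points_mu}. Both reference points are available under the current assumptions (see App.~\ref{app:setting_log_free}) and both satisfy the primal equality $\vx^\star=\vy^\star$ (via~\eqref{3.6_lf}) or $\vx_k^\star=\vy_k^\star$ (via~\eqref{3.6_new_lf}). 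The two inequalities have identical proofs, so I would prove the first and note the second is obtained by renaming.

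Starting from the left-hand side, I would invoke~\eqref{3.11_lf} from Lemma~\ref{lm3.4_lf} with $(\vx,\vy,\vlmd)=(\vx^\star,\vy^\star,\vlmd^\star)$, then use $\vx^\star=\vy^\star$ to collapse the $\vlmd_{k+1}$ pairing and combine it with $\langle\vlmd^\star,\vx_{k+1}-\vy_{k+1}\rangle$ into the single term $-\langle\vlmd_{k+1}-\vlmd^\star,\vx_{k+1}-\vy_{k+1}\rangle$. Substituting the dual-update identity~\eqref{3.3_lf}, $\beta(\vx_{k+1}-\vy_{k+1})=\vlmd_{k+1}-\vlmd_k$, and splitting $\vx_{k+1}-\vx^\star=(\vx_{k+1}-\vy_{k+1})+(\vy_{k+1}-\vy^\star)$ in the residual term, the LHS rewrites as
\begin{equation*}
-\tfrac{1}{\beta}\langle\vlmd_{k+1}-\vlmd^\star,\vlmd_{k+1}-\vlmd_k\rangle-\beta\langle -\vy_{k+1}+\vy_k,-\vy_{k+1}+\vy^\star\rangle+\langle -\vy_{k+1}+\vy_k,\vlmd_{k+1}-\vlmd_k\rangle.
\end{equation*}
The Euclidean three-point identity $\langle b-a,b-c\rangle=\tfrac{1}{2}(\|a-b\|^2+\|b-c\|^2-\|a-c\|^2)$ applied to each of the first two inner products yields exactly the telescoping pairs $\tfrac{1}{2\beta}\|\vlmd_k-\vlmd^\star\|^2-\tfrac{1}{2\beta}\|\vlmd_{k+1}-\vlmd^\star\|^2$ and $\tfrac{\beta}{2}\|\vy^\star-\vy_k\|^2-\tfrac{\beta}{2}\|\vy^\star-\vy_{k+1}\|^2$, together with the desired penalty terms $-\tfrac{1}{2\beta}\|\vlmd_{k+1}-\vlmd_k\|^2$ and $-\tfrac{\beta}{2}\|\vy_{k+1}-\vy_k\|^2$.

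The only remaining step, and the only one where the P-ACVI substitution could conceivably fail, is showing that the residual cross-term $\langle -\vy_{k+1}+\vy_k,\vlmd_{k+1}-\vlmd_k\rangle$ is nonpositive. I would apply~\eqref{3.10_lf} once at index $k$ with $\vy=\vy_k$ and once at index $k-1$ with $\vy=\vy_{k+1}$, and add the two resulting equalities to obtain $\langle \hat\nabla g(\vy_{k+1})-\hat\nabla g(\vy_k),\vy_{k+1}-\vy_k\rangle+\langle\vlmd_{k+1}-\vlmd_k,-\vy_{k+1}+\vy_k\rangle=0$. The first inner product is nonnegative by monotonicity of $\partial g$, so the cross-term is $\le 0$. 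This is the main point to verify carefully: in P-ACVI the map $g$ is the indicator $\mathbbm{1}(\varphi(\vy)\le\mathbf{0})$ of a convex set rather than a log barrier, but it is still convex, so $\partial g$ is monotone and monotonicity of $\partial g$ evaluated at the P-ACVI iterates, which lie in $\mathcal{C}_\le$ by construction of the projection step, goes through unchanged. Dropping the nonpositive cross-term and collecting all the telescoping terms yields the first inequality of the lemma; the second inequality is derived by repeating the entire argument with $(\vx^\star,\vy^\star,\vlmd^\star)$ replaced by $(\vx_k^\star,\vy_k^\star,\vlmd_k^\star)$ and using~\eqref{3.6_new_lf} in place of~\eqref{3.6_lf} in the collapsing step.
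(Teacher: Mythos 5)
Your proposal is correct and follows essentially the same route as the paper: the paper omits an explicit proof of Lemma~\ref{lm3.5_lf}, stating only that the intermediate results carry over from the exact-ACVI analysis with minor modification, and your line-by-line adaptation of the proof of Lemma~\ref{lm3.5} (substitute the reference point, invoke~\eqref{3.11_lf}, collapse via $\vx^\star=\vy^\star$, apply~\eqref{3.3_lf} and the three-point identity, and kill the cross-term via monotonicity of $\partial g$) is exactly that modification. You also correctly isolate the one point requiring care---that $g$ is now the indicator of $\mathcal{C}_\leq$ rather than the log barrier---and rightly observe that convexity of the indicator, hence monotonicity of its subdifferential at the projected iterates, is all the argument needs, which is precisely why the paper phrased Lemma~\ref{lm3.3} in terms of $\partial g$ rather than $\nabla g$.
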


%%%%%%%%%%%%%%%%%%%%%%%%%%%%%%%%%%%%%%%%%%%%%%%%%%%%%
\vspace{2em}
\begin{lm}
For the $\vx_{k+1}$, $\vy_{k+1}$, and $\vlmd_{k+1}$ iterates of the P-ACVI---Algorithm \ref{alg:log_free_acvi}---we have:
    \begin{align}
f (  \vx_{k+1}  )  +g (  \vy_{k+1}  )   -f (  \vx^\star  ) & -g (  \vy^\star  )
        +\langle \vlmd^\star,\vx_{k+1}-\vy_{k+1}\rangle \nonumber\\
        &\leq f_k (  \vx_{k+1}  )  +g (  \vy_{k+1}  )  -f_k (  \vx^\star  )  -g (  \vy^\star  )
        +\langle \vlmd^\star,\vx_{k+1}-\vy_{k+1}\rangle  \nonumber\\
        &\leq \frac{1}{2\beta}\lVert \vlmd_k-\vlmd^\star \rVert ^2-\frac{1}{2\beta}\lVert \vlmd_{k+1}-\vlmd^\star \rVert ^2\nonumber\\
        &\quad +\frac{\beta}{2}\lVert -\vy_k+\vy^\star \rVert ^2-\frac{\beta}{2}\lVert -\vy_{k+1}+\vy^\star \rVert ^2\nonumber\\
        &\quad  -\frac{1}{2\beta}\lVert \vlmd_{k+1}-\vlmd_k \rVert ^2-\frac{\beta}{2}\lVert -\vy_{k+1}+\vy_k \rVert ^2\label{3.16_lf} \tag{L\ref{lm3.6_lf}}
    \end{align}
    \label{lm3.6_lf}
\end{lm}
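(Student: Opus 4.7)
The plan is to mirror the proof of Lemma~\ref{lm3.6} for the logarithmic-barrier setting, using the P-ACVI-adapted intermediate results already established in App.~\ref{app:proof_thm_log_free_rate}. The two inequalities decompose into (i) a monotonicity-based replacement of $f$ by $f_k$ and (ii) a convexity-plus-energy-identity step via Lemma~\ref{lm3.5_lf}.

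For the first inequality, observe that $g(\vy_{k+1}) - g(\vy^\star)$ and $\langle \vlmd^\star, \vx_{k+1}-\vy_{k+1}\rangle$ appear identically on both sides and cancel, so the claim reduces to $f(\vx_{k+1}) - f(\vx^\star) \le f_k(\vx_{k+1}) - f_k(\vx^\star)$. This is exactly Proposition~\ref{prop4_lf}, whose proof (as in Proposition~\ref{prop4}) is a one-line manipulation: the difference equals $\langle F(\vx_{k+1}) - F(\vx^\star), \vx_{k+1}-\vx^\star\rangle$, which is nonnegative by monotonicity of $F$ on $\mathcal{C}_=$ (using $\vx^\star, \vx_{k+1} \in \mathcal{C}_=$, which for $\vx_{k+1}$ follows from the definition of $\mP_c, \vd_c$ in Algorithm~\ref{alg:log_free_acvi}).

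For the second inequality, I would first apply convexity of $f_k$ and $g$ together with the subgradient inclusions from \eqref{3.9_lf}, $\hat{\nabla} f_k(\vx_{k+1}) \in \partial f_k(\vx_{k+1})$ and $\hat{\nabla} g(\vy_{k+1}) \in \partial g(\vy_{k+1})$:
\begin{align*}
f_k(\vx_{k+1}) - f_k(\vx^\star) &\le \langle \hat{\nabla} f_k(\vx_{k+1}), \vx_{k+1} - \vx^\star\rangle, \\
g(\vy_{k+1}) - g(\vy^\star) &\le \langle \hat{\nabla} g(\vy_{k+1}), \vy_{k+1} - \vy^\star\rangle.
\end{align*}
Adding these and adding the term $\langle \vlmd^\star, \vx_{k+1} - \vy_{k+1}\rangle$ to both sides produces precisely the left-hand side of the first inequality in Lemma~\ref{lm3.5_lf} (with $\vx^\mu, \vy^\mu, \vlmd^\mu$ replaced by $\vx^\star, \vy^\star, \vlmd^\star$, which is the natural specialization when the central path disappears as in P-ACVI). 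Invoking that inequality yields the claimed energy-difference upper bound and completes the argument.

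The only subtlety is that here $g$ is the indicator $\mathbbm{1}(\varphi(\vy) \le \mathbf{0})$ rather than the smooth log-barrier. But this is harmless: by construction $\vy_{k+1} = \Pi_\le(\vx_{k+1} + \tfrac{1}{\beta}\vlmd_k) \in \mathcal{C}_\le$ and $\vy^\star \in \mathcal{C}_\le$, so $g(\vy_{k+1}) = g(\vy^\star) = 0$ by \eqref{eq_app:g=0}, the subgradient inclusion \eqref{3.9_lf} is the standard normal-cone inclusion for the projection step, and the subgradient inequality above collapses to $0 \le \langle \hat{\nabla} g(\vy_{k+1}), \vy_{k+1} - \vy^\star\rangle$. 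Since Lemma~\ref{lm3.5_lf} for the projection variant is itself proved by the same $3$-point-identity calculation as Lemma~\ref{lm3.5} (only the $g$ in line \eqref{3.15} needs to be monotone as a subdifferential, which holds for any convex function including an indicator), no genuine difficulty arises. The entire chain is thus essentially bookkeeping; the principal check is that every inequality previously obtained from differentiability of $g$ still goes through with $\partial g$ interpreted as the normal cone of $\mathcal{C}_\le$.
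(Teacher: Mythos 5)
Your proposal is correct and follows essentially the same route as the paper: the paper omits the explicit proof of this lemma (stating it is a minor modification of Lemma~\ref{lm3.6}), and that omitted argument is exactly your chain — Proposition~\ref{prop4_lf} for the first inequality, then convexity with the inclusions \eqref{3.9_lf} followed by the first part of Lemma~\ref{lm3.5_lf} for the second. Your added check that the subdifferential of the indicator $g$ (the normal cone of $\mathcal{C}_\le$) is still monotone, so the step corresponding to \eqref{3.15} survives, is precisely the point the paper flags in the remark following Lemma~\ref{lm3.3}.
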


The following theorem upper bounds the analogous quantity but for $f_k(\cdot)$  (instead of  $f$), and further asserts that the difference between the $\vx_{k+1}$ and $\vy_{k+1}$ iterates of P-ACVI (Algorithm \ref{alg:log_free_acvi}) tends to $0$ asymptotically.

\begin{thm}[Asymptotic convergence of $(\vx_{k+1}-\vy_{k+1})$ of P-ACVI]\label{thm:3.1_lf}
For the $\vx_{k+1}$, $\vy_{k+1}$, and $\vlmd_{k+1}$ iterates of the P-ACVI---Algorithm \ref{alg:log_free_acvi}---we have:
    \begin{equation} \label{eq_app:upper_bound_lf}  \tag{T\ref{thm:3.1_lf}-$f_k$-UB}
    \begin{split}
    f_k (  \vx_{k+1}  )   -f_k (  \vx^\star_k )
         \leq \norm{\vlmd_{k+1}}\norm{\vx_{k+1}-\vy_{k+1}}+\beta\norm{\vy_{k+1}-\vy_k}\norm{\vx_{k+1}-\vx_k^\star}\rightarrow0\,,
    \end{split}
    \end{equation}
    and
    \begin{equation*}
        \vx_{k+1}-\vy_{k+1}\rightarrow\mathbf{0}\,,
        \qquad
        \text{as $k\rightarrow \infty$} \,.
    \end{equation*}
\end{thm}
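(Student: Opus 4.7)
\textbf{Proof plan for Theorem~\ref{thm:3.1_lf}.} The plan is to follow the same skeleton as the proof of Theorem~\ref{thm:3.1}, but to work directly with the ``true'' solution point $(\vx^\star,\vy^\star,\vlmd^\star)$ and with $(\vx_k^\star,\vy_k^\star,\vlmd_k^\star)$, since without the log barrier no central path appears. The starting observation is that \eqref{eq_app:lm3.1_1_lf} applied at $(\vx,\vy)=(\vx_{k+1},\vy_{k+1})$ says the LHS of \eqref{3.16_lf} is nonnegative, so the entire RHS of \eqref{3.16_lf} must be nonnegative. Rearranging yields the key recursion
\begin{equation*}
\tfrac{1}{2\beta}\|\vlmd_{k+1}-\vlmd_k\|^2+\tfrac{\beta}{2}\|\vy_{k+1}-\vy_k\|^2
\le \tfrac{1}{2\beta}\|\vlmd_k-\vlmd^\star\|^2-\tfrac{1}{2\beta}\|\vlmd_{k+1}-\vlmd^\star\|^2
+\tfrac{\beta}{2}\|\vy_k-\vy^\star\|^2-\tfrac{\beta}{2}\|\vy_{k+1}-\vy^\star\|^2,
\end{equation*}
which is the analogue of \eqref{3.19}.

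Summing this recursion telescopically over $k=0,\dots,\infty$ gives that $\sum_k(\tfrac{1}{2\beta}\|\vlmd_{k+1}-\vlmd_k\|^2+\tfrac{\beta}{2}\|\vy_{k+1}-\vy_k\|^2)$ is finite, so $\vlmd_{k+1}-\vlmd_k\to\mathbf{0}$ and $\vy_{k+1}-\vy_k\to\mathbf{0}$. Since \eqref{3.3_lf} gives $\vlmd_{k+1}-\vlmd_k=\beta(\vx_{k+1}-\vy_{k+1})$, this immediately delivers the second claim $\vx_{k+1}-\vy_{k+1}\to\mathbf{0}$. Reading the same recursion as a monotonicity statement also shows that $\|\vlmd_k-\vlmd^\star\|$ and $\|\vy_k-\vy^\star\|$ are bounded; together with the compactness of $\mathcal{C}$ (diameter $D$) and the fact that $\vy_{k+1},\vy_k^\star\in\mathcal{C}_\leq$, this will supply all the uniform bounds needed later.

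For the nonasymptotic upper bound in~\eqref{eq_app:upper_bound_lf}, I would use the convexity of $f_k$ combined with \eqref{3.9_lf} and the key simplification that $g(\vy_{k+1})=g(\vy_k^\star)=0$ by~\eqref{eq_app:g=0}, to obtain
\begin{equation*}
f_k(\vx_{k+1})-f_k(\vx_k^\star)
\le \langle \hat{\nabla}f_k(\vx_{k+1}),\vx_{k+1}-\vx_k^\star\rangle+\langle\hat{\nabla}g(\vy_{k+1}),\vy_{k+1}-\vy_k^\star\rangle.
\end{equation*}
Then I apply \eqref{3.11_lf} with $(\vx,\vy)=(\vx_k^\star,\vy_k^\star)$; because $\vx_k^\star=\vy_k^\star$ by~\eqref{3.6_new_lf}, the term $\langle\vlmd_{k+1},-\vx_k^\star+\vy_k^\star\rangle$ vanishes, and Cauchy--Schwarz on the two surviving terms produces exactly $\|\vlmd_{k+1}\|\,\|\vx_{k+1}-\vy_{k+1}\|+\beta\|\vy_{k+1}-\vy_k\|\,\|\vx_{k+1}-\vx_k^\star\|$.

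It remains to argue that this upper bound tends to $0$. From the first paragraph, $\|\vlmd_k-\vlmd^\star\|$ is bounded, hence so is $\|\vlmd_{k+1}\|$. For $\|\vx_{k+1}-\vx_k^\star\|$, I decompose $\vx_{k+1}-\vx_k^\star=(\vx_{k+1}-\vy_{k+1})+(\vy_{k+1}-\vy_k^\star)$: the first piece is bounded by $\tfrac{1}{\beta}\|\vlmd_{k+1}-\vlmd_k\|$ via \eqref{3.3_lf}, and the second is bounded by $D$ since $\vy_{k+1},\vy_k^\star\in\mathcal{C}_\leq$ and the projection step keeps iterates feasible. Combined with $\|\vx_{k+1}-\vy_{k+1}\|\to 0$ and $\|\vy_{k+1}-\vy_k\|\to 0$ from above, both products go to zero. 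The main subtlety—the only real obstacle relative to the Theorem~\ref{thm:3.1} template—is confirming that $\vy_k^\star$ really lies in $\mathcal{C}_\leq$ (which follows from it being a KKT point of~\eqref{eq_app:new_points_limit} with the indicator $\mathbbm{1}(\varphi(\vy)\le\mathbf{0})$), and noting that the absence of the central-path argument means we never need the strict-monotonicity/strict-convexity hypothesis that was required for Theorem~\ref{thm:exact_acvi}, consistent with Remark following Theorem~\ref{thm:log_free_rate}.
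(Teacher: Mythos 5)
Your proposal is correct and follows essentially the same route the paper intends (the paper omits this proof, noting it is the proof of Theorem~\ref{thm:3.1} with the central-path points $(\vx^\mu,\vy^\mu,\vlmd^\mu)$, $(\vx_k^\mu,\vy_k^\mu,\vlmd_k^\mu)$ replaced by $(\vx^\star,\vy^\star,\vlmd^\star)$, $(\vx_k^\star,\vy_k^\star,\vlmd_k^\star)$ and the $g$-terms killed by \eqref{eq_app:g=0}); every step you list is the exact analogue. One small correction: $D$ is the diameter of $\mathcal{C}$, not of $\mathcal{C}_\leq$ (which need not be bounded), so instead of claiming $\norm{\vy_{k+1}-\vy_k^\star}\leq D$ directly, bound $\norm{\vy_{k+1}-\vy_k^\star}\leq\norm{\vy_{k+1}-\vy^\star}+\norm{\vy^\star-\vy_k^\star}\leq\norm{\vy_{k+1}-\vy^\star}+D$, using the boundedness of $\norm{\vy_{k+1}-\vy^\star}$ you already established, exactly as in the proof of Theorem~\ref{thm:3.1}.
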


%%%%%%%%%%%%%%%%%%%%%%%%%%%%%%%%%%%%%%%%%%%%%%%%%%%%%
\vspace{2em}
\begin{lm}\label{lm3.7_lf}
For the $\vx_{k+1}$, $\vy_{k+1}$, and $\vlmd_{k+1}$ iterates of the P-ACVI---Algorithm \ref{alg:log_free_acvi}---we have:
    \begin{equation}\label{3.20_lf} \tag{L\ref{lm3.7_lf}}
        \begin{split}
            \frac{1}{2\beta} \lVert \vlmd_{k+1}-\vlmd_k \rVert ^2+\frac{\beta}{2}\lVert -\vy_{k+1}+\vy_k \rVert ^2
%%%%%%
\leq \frac{1}{2\beta}\lVert \vlmd_k-\vlmd_{k-1} \rVert ^2+\frac{\beta}{2}\lVert -\vy_k+\vy_{k-1} \rVert ^2.
        \end{split}
    \end{equation}
\end{lm}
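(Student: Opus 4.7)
The plan is to transport the proof of Lemma~\ref{lm3.7} almost verbatim to the P-ACVI setting. The only change between the two algorithms (at the level of the recursion used there) is the meaning of $g$: in the exact ACVI case $g^{(t)}(\vy) = -\mu_t \sum_i \log(-\varphi_i(\vy))$ is a smooth convex barrier, whereas in the P-ACVI case $g(\vy) = \mathbbm{1}(\varphi(\vy)\le \mathbf{0})$ is the indicator of the convex set $\mathcal{C}_\le$. Crucially, the original proof of Lemma~\ref{lm3.7} uses only two properties of $g$: that $\hat\nabla g(\vy_{k+1})\in\partial g(\vy_{k+1})$ (which is recorded here in \eqref{3.9_lf}), and that $\partial g$ is a monotone multifunction. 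Both continue to hold since $g$ is still proper, closed, and convex (its subdifferential at $\vy$ is the normal cone $N_{\mathcal{C}_\le}(\vy)$, which is monotone).

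Concretely, the sequence of steps I would perform is the following. First, invoke \eqref{3.11_lf} from Lemma~\ref{lm3.4_lf} at two consecutive indices: one identity with $(\vx_{k+1},\vy_{k+1},\vlmd_{k+1})$ on the ``output'' side and generic $(\vx,\vy)$ on the other, and the shifted one obtained by replacing the subscript $k+1$ by $k$ (and $k$ by $k-1$ in the terms that come from the $\vx$-update rule). Second, specialize the free variables by taking $(\vx,\vy,\vlmd)=(\vx_k,\vy_k,\vlmd_k)$ in the $(k+1)$-st identity and $(\vx,\vy,\vlmd)=(\vx_{k+1},\vy_{k+1},\vlmd_{k+1})$ in the $k$-th identity, then add the two. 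Using \eqref{3.3_lf} to rewrite $\vx_{k+1}-\vy_{k+1}=\frac{1}{\beta}(\vlmd_{k+1}-\vlmd_k)$ (and similarly at the previous step), every $\vx$ difference on the right-hand side collapses into a combination of $\vlmd$ differences and $\vy$ differences. Third, apply the $3$-point identity $\langle \vb-\va,\vb-\vc\rangle=\tfrac12(\|\va-\vb\|^2+\|\vb-\vc\|^2-\|\va-\vc\|^2)$ to each resulting inner product to surface the desired telescoping pair $\frac{1}{2\beta}\|\vlmd_{k+1}-\vlmd_k\|^2+\frac{\beta}{2}\|\vy_{k+1}-\vy_k\|^2$ on the left and $\frac{1}{2\beta}\|\vlmd_k-\vlmd_{k-1}\|^2+\frac{\beta}{2}\|\vy_k-\vy_{k-1}\|^2$ on the right, plus two quadratic ``remainder'' terms $-\frac{1}{2\beta}\|(\vlmd_{k+1}-\vlmd_k)-(\vlmd_k-\vlmd_{k-1})\|^2 -\frac{\beta}{2}\|(-\vy_{k+1}+\vy_k)-(-\vy_k+\vy_{k-1})\|^2$, and a cross term $\langle -\vy_{k+1}+\vy_k-(-\vy_k+\vy_{k-1}),\vlmd_{k+1}-\vlmd_k-(\vlmd_k-\vlmd_{k-1})\rangle$. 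The elementary bound $2\langle\va,\vb\rangle\le\frac{1}{\beta}\|\vb\|^2+\beta\|\va\|^2$ absorbs the cross term into the two negative quadratics, yielding the telescoping bound up to additive terms coming from the subgradient differences.

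Fourth, I would show those subgradient-difference terms are non-negative, so that dropping them can only strengthen the inequality. By convexity of $f_k$ and $f_{k-1}$ combined with \eqref{3.9_lf}, $\langle \hat\nabla f_k(\vx_{k+1})-\hat\nabla f_{k-1}(\vx_k),\vx_{k+1}-\vx_k\rangle\ge \big(f_k(\vx_{k+1})-f_k(\vx_k)\big)+\big(f_{k-1}(\vx_k)-f_{k-1}(\vx_{k+1})\big)=\langle F(\vx_{k+1})-F(\vx_k),\vx_{k+1}-\vx_k\rangle\ge 0$, where the final inequality is monotonicity of $F$. Similarly, monotonicity of $\partial g$ gives $\langle \hat\nabla g(\vy_{k+1})-\hat\nabla g(\vy_k),\vy_{k+1}-\vy_k\rangle\ge 0$. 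Combining everything produces exactly \eqref{3.20_lf}.

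The only part requiring a bit of care is verifying that $\partial g$ is monotone when $g$ is the indicator of $\mathcal{C}_\le$: this is standard because $\mathcal{C}_\le$ is convex (each $\varphi_i$ is convex by assumption), so $g$ is a proper closed convex function and its subdifferential is monotone as an operator on $\R^n$. This is precisely the place where we do \emph{not} need strict convexity of any $\varphi_i$ (used elsewhere to guarantee existence of the central path), consistent with the weaker assumptions highlighted in the remark following Theorem~\ref{thm:log_free_rate}. Apart from this minor verification, no step of the argument requires new ideas beyond those in the proof of Lemma~\ref{lm3.7}.
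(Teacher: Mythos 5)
Your proposal is correct and is essentially the proof the paper intends: the paper omits the argument for Lemma~\ref{lm3.7_lf}, stating only that it is ``very close'' to that of Lemma~\ref{lm3.7}, and your transported argument is exactly that proof, with the right observation (anticipated in the remark after Lemma~\ref{lm3.3}) that the only properties of $g$ used are the subgradient inclusion $\hat\nabla g(\vy_{k+1})\in\partial g(\vy_{k+1})$ and monotonicity of $\partial g$, both of which persist when $g$ becomes the indicator of the convex set $\mathcal{C}_\leq$ and $\partial g$ its normal cone.
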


%%%%%%%%%%%%%%%%%%%%%%%%%%%%%%%%%%%%%%%%%%%%%%%%%%%%%
\vspace{2em}
\begin{lm}\label{thm:3.2_lf}
    If $F$ is monotone on $\mathcal{C}_=$, then for Algorithm \ref{alg:log_free_acvi}, we have:
    \begin{equation}\label{eq_app:upper_bound_rate}\tag{L\ref{thm:3.2_lf}-$1$}
        \begin{split}
            &f_K\left( \vx_{K+1} \right)
            -f_K\left( \vx^\star_K \right)
            \leq \frac{\Delta}{K+1}+\left( 2\sqrt{\Delta}+\frac{1}{\sqrt{\beta}}\norm{\vlmd^\star}+\sqrt{\beta}D\right)\sqrt{\frac{\Delta}{K+1}} \,,
        \end{split}
    \end{equation}
\begin{equation}\label{eqapp:dxy_lf}\tag{L\ref{thm:3.2_lf}-$2$}
 \text{and} \qquad \lVert\vx_{K+1}-\vy_{K+1}\rVert \leq \sqrt{\frac{\Delta}{\beta \left( K+1 \right)}},
\end{equation}
where $\Delta \triangleq \frac{1}{\beta}\lVert \vlmd_0-\vlmd^\star \rVert ^2+\beta \lVert \vy_0-\vy^\star \rVert ^2$.
\end{lm}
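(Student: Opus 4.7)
The plan is to mirror the structure of the proof of Lemma~\ref{thm:3.2}, exploiting the simplification afforded by the fact that, in the projection-based algorithm, $g$ is the indicator $\mathbbm{1}(\varphi(\vy)\le\mathbf{0})$, so that by \eqref{eq_app:g=0} we have $g(\vy_{k+1})=g(\vy^\star)=g(\vy^\star_K)=0$ identically. This is what makes the constants clean (no need to bound $g(\vy_{K+1})-g(\vy^\mu_K)$ as in the logarithmic case) and lets us work directly with the fixed reference point $\vx^\star$, avoiding the central path $\vx^\mu$ entirely. Most of the algebra already lives in Lemmas~\ref{lm3.5_lf}, \ref{lm3.6_lf}, \ref{lm3.7_lf} and Theorem~\ref{thm:3.1_lf}.

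First, I would combine Lemma~\ref{lm3.1_lf}, setting $(\vx,\vy)=(\vx_{k+1},\vy_{k+1})$, with Lemma~\ref{lm3.6_lf}. The left-hand side of \eqref{3.16_lf} is $\ge 0$ by Lemma~\ref{lm3.1_lf} and \eqref{eq_app:g=0}, so the right-hand side of \eqref{3.16_lf} is $\ge 0$ as well. Rearranging yields
\begin{equation*}
\tfrac{1}{2\beta}\|\vlmd_{k+1}-\vlmd_k\|^2+\tfrac{\beta}{2}\|\vy_k-\vy_{k+1}\|^2
\le \tfrac{1}{2\beta}\|\vlmd_k-\vlmd^\star\|^2-\tfrac{1}{2\beta}\|\vlmd_{k+1}-\vlmd^\star\|^2+\tfrac{\beta}{2}\|\vy_k-\vy^\star\|^2-\tfrac{\beta}{2}\|\vy_{k+1}-\vy^\star\|^2.
\end{equation*}
Summing this from $k=0$ to $k=K$ produces a telescoping bound with initial-condition constant $\Delta=\tfrac{1}{\beta}\|\vlmd_0-\vlmd^\star\|^2+\beta\|\vy_0-\vy^\star\|^2$, and retaining the nonneg tail gives both $\tfrac{1}{2\beta}\|\vlmd_{K+1}-\vlmd^\star\|^2+\tfrac{\beta}{2}\|\vy_{K+1}-\vy^\star\|^2\le \tfrac{1}{2}\Delta$ and $\sum_{k=0}^{K}(\tfrac{1}{2\beta}\|\vlmd_{k+1}-\vlmd_k\|^2+\tfrac{\beta}{2}\|\vy_{k+1}-\vy_k\|^2)\le \tfrac{1}{2}\Delta$.

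Next I would apply Lemma~\ref{lm3.7_lf}, which states that the sequence $a_k\triangleq\tfrac{1}{2\beta}\|\vlmd_{k+1}-\vlmd_k\|^2+\tfrac{\beta}{2}\|\vy_{k+1}-\vy_k\|^2$ is non-increasing. Combined with the above summation bound, this yields $a_K\le \tfrac{1}{K+1}\cdot\tfrac{1}{2}\Delta$, from which, using \eqref{3.3_lf} $\beta(\vx_{K+1}-\vy_{K+1})=\vlmd_{K+1}-\vlmd_K$, I obtain \eqref{eqapp:dxy_lf}: $\|\vx_{K+1}-\vy_{K+1}\|\le \sqrt{\Delta/(\beta(K+1))}$, together with $\|\vy_{K+1}-\vy_K\|\le \sqrt{\Delta/(\beta(K+1))}$. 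From $\tfrac{1}{2\beta}\|\vlmd_{K+1}-\vlmd^\star\|^2\le \tfrac12\Delta$ I also get $\|\vlmd_{K+1}-\vlmd^\star\|\le\sqrt{\beta\Delta}$ and $\|\vy_{K+1}-\vy^\star\|\le\sqrt{\Delta/\beta}$, and hence $\|\vlmd_{K+1}\|\le \|\vlmd^\star\|+\sqrt{\beta\Delta}$.

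Finally, I would feed these bounds into \eqref{eq_app:upper_bound_lf} of Theorem~\ref{thm:3.1_lf}, which in the P-ACVI setting reads $f_K(\vx_{K+1})-f_K(\vx^\star_K)\le \|\vlmd_{K+1}\|\|\vx_{K+1}-\vy_{K+1}\|+\beta\|\vy_{K+1}-\vy_K\|\|\vx_{K+1}-\vx_K^\star\|$. For the last factor I would use the triangle inequality together with the diameter $D=\sup_{\vx,\vy\in\mathcal{C}}\|\vx-\vy\|$: $\|\vx_{K+1}-\vx_K^\star\|\le \tfrac{1}{\beta}\|\vlmd_{K+1}-\vlmd_K\|+\|\vy_{K+1}-\vy^\star\|+\|\vy^\star-\vy^\star_K\|\le \sqrt{\Delta/(\beta(K+1))}+\sqrt{\Delta/\beta}+D$. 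Plugging all the estimates into \eqref{eq_app:upper_bound_lf} and simplifying yields \eqref{eq_app:upper_bound_rate}. The only delicate point is keeping track of which terms contribute the leading $1/\sqrt{K+1}$ factor versus which are absorbed as higher-order in $1/(K+1)$; this is purely bookkeeping, since $g$ drops out, there is no residual $g(\vy_{K+1})-g(\vy^\star_K)$ term to control (unlike the $\mu\to 0$ argument needed in Theorem~\ref{thm_app:rate}), and this is precisely why P-ACVI needs strictly weaker assumptions than exact ACVI.
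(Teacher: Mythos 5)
Your proposal is correct and follows essentially the same route the paper takes: it is the proof of Lemma~\ref{thm:3.2} transplanted to the projection setting, with the central-path point $(\vx^\mu,\vy^\mu,\vlmd^\mu)$ replaced by $(\vx^\star,\vy^\star,\vlmd^\star)$ and the $g$-terms vanishing because $g$ is the indicator and all relevant points are feasible, which is exactly the simplification the paper invokes when it says the intermediate results ``hold with a little modification.'' The telescoping of the per-step inequality, the monotonicity of the increments from Lemma~\ref{lm3.7_lf}, and the final substitution into \eqref{eq_app:upper_bound_lf} with the diameter bound on $\|\vy^\star-\vy^\star_K\|$ all match the paper's argument and reproduce the stated constants.
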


\subsubsection{Proving Theorem.~\ref{thm:log_free_rate}}
We are now ready to prove Theorem~\ref{thm:log_free_rate}. Here we give a nonasymptotic convergence rate of P-ACVI-Algorithm~\ref{alg:log_free_acvi}.

\begin{thm}[Restatement of Theorem~\ref{thm:log_free_rate}]\label{thm_app:na_log_free_rate}
Given an continuous operator $F\colon \mathcal{X}\to \R^n$, assume $F$ is monotone on $\mathcal{C}_=$, as per Def.~\ref{def:monotone}.
Let $(\vx_K, \vy_K, \vlmd_K)$ denote the last iterate of Algorithm~\ref{alg:acvi}.
Then $\forall K \in \NN_+$, we have
\begin{equation}\label{eq_app:na-log_free_rate}\tag{na-lf-Rate}
    \mathcal{G}(\vx_{K},\mathcal{C})\leq \frac{\Delta}{K}+\left( 2\sqrt{\Delta}+\frac{1}{\sqrt{\beta}}\norm{\vlmd^\star}+\sqrt{\beta}D\right)\sqrt{\frac{\Delta}{K}}
\end{equation}
and
\begin{equation}\label{eq_app:fes_lf}
    \norm{\vx^K-\vy^K}\leq \sqrt{\frac{\Delta}{\beta K}}\,,
\end{equation}
where $\Delta \triangleq \frac{1}{\beta}\lVert \vlmd_0-\vlmd^\star \rVert ^2+\beta \lVert \vy_0-\vy^\star \rVert ^2$ and $D \triangleq \underset{\vx,\vy\in\mathcal{C}}{\sup}\norm{\vx-\vy}$, and $M\triangleq\underset{\vx\in\mathcal{C}}{\sup}\norm{F(\vx)}$.
\end{thm}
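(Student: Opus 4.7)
The plan is to mirror the proof of Theorem~\ref{thm:exact_acvi}, but without invoking the central-path machinery, which is precisely what allows us to drop the strict monotonicity / strict convexity assumption. I would begin by reformulating the $\vy$-subproblem as in App.~\ref{app:setting_log_free}: define $f(\vx)\triangleq F(\vx^\star)^\intercal\vx+\mathbbm{1}(\mC\vx=\vd)$, $f_k(\vx)\triangleq F(\vx_{k+1})^\intercal\vx+\mathbbm{1}(\mC\vx=\vd)$, and replace the log-barrier $g^{(t)}$ with the hard indicator $g(\vy)\triangleq\mathbbm{1}(\varphi(\vy)\le\mathbf{0})$. The critical observation is that line~$7$ of Algorithm~\ref{alg:log_free_acvi} projects onto $\mathcal{C}_\le$, so $\vy_{k+1}$ is always feasible; together with feasibility of the KKT points of \eqref{eq_app:new_points_mu} and \eqref{212}, this yields $g(\vy_{k+1})=g(\vy_k^\star)=g(\vy^\star)=0$ for every $k$. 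Hence the troublesome $g$-terms that previously required the existence of a smooth central path now vanish identically, and one never has to send $\mu\to0$.

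Next, I would transplant the chain of lemmas from the exact-ACVI proof, replacing the superscript $\mu$ by $\star$ throughout. Proposition~\ref{prop4_lf} (relation between $f_k$ and $f$) uses only monotonicity of $F$; Lemmas~\ref{lm3.1_lf}--\ref{lm3.4_lf} are algebraic consequences of the update rules and subdifferential characterizations in Lemma~\ref{lm3.3_lf}; Lemma~\ref{lm3.5_lf} is the standard three-point telescoping identity; Lemma~\ref{lm3.6_lf} combines convexity of $f_k,g$ with Proposition~\ref{prop4_lf} to upper bound $f(\vx_{k+1})+g(\vy_{k+1})-f(\vx^\star)-g(\vy^\star)+\langle\vlmd^\star,\vx_{k+1}-\vy_{k+1}\rangle$. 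Theorem~\ref{thm:3.1_lf} follows by combining Lemmas~\ref{lm3.1_lf} and~\ref{lm3.6_lf} to show non-negativity of the RHS bound and thereby telescoping $\tfrac{1}{2\beta}\|\vlmd_{k+1}-\vlmd_k\|^2+\tfrac{\beta}{2}\|\vy_{k+1}-\vy_k\|^2$. Lemma~\ref{lm3.7_lf} (monotone non-increment of that quantity) is the same three-point calculation as in the exact case.

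The final piece, Lemma~\ref{thm:3.2_lf}, averages \eqref{3.19} and combines it with Lemma~\ref{lm3.7_lf} to produce the $\mathcal{O}(1/\sqrt{K})$ bounds on $\|\vx_{K+1}-\vy_{K+1}\|$ and on $f_K(\vx_{K+1})-f_K(\vx_K^\star)$. The pivotal step—the one that removes the need to pass through $\mu\to0$—is the explicit identity
\[
\mathcal{G}(\vx_{K+1},\mathcal{C})=\langle F(\vx_{K+1}),\vx_{K+1}-\vx_K^\star\rangle = f_K(\vx_{K+1})-f_K(\vx_K^\star),
\]
which holds because $\vx_K^\star$ is the minimizer of $\langle F(\vx_{K+1}),\cdot\rangle$ over $\mathcal{C}$ (problem~\eqref{eq_app:new_points_limit}) and $g(\vy_{k+1})=g(\vy_K^\star)=0$. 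Plugging Lemma~\ref{thm:3.2_lf} into this identity yields the gap bound directly, and \eqref{eqapp:dxy_lf} yields the feasibility bound.

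The main obstacle, in my view, is not any single estimate but justifying that the full reference-point machinery survives without the central-path regularization. Concretely, one must ensure that a KKT point $(\vx_k^\star,\vy_k^\star,\vlmd_k^\star)$ of \eqref{eq_app:new_points_limit} exists for every $k$ with $\vx_k^\star=\vy_k^\star$; here Slater's condition (guaranteed by $\textit{int}\,\mathcal{C}\neq\emptyset$) suffices, because \eqref{eq_app:new_points_limit} is a convex program in $\vx$ with a linear objective, so KKT existence needs no strict convexity or strict monotonicity. This is exactly the step where the original Theorem~\ref{thm:exact_acvi} needed assumption~$(ii)$ to ensure a smooth central path, and where removing the log-barrier in P-ACVI lets us dispense with that assumption, as the remark after Theorem~\ref{thm:log_free_rate} notes.
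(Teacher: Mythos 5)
Your proposal is correct and follows essentially the same route as the paper: the paper's proof likewise replaces the log-barrier by the indicator $g(\vy)=\mathbbm{1}(\varphi(\vy)\le\mathbf{0})$, observes that feasibility of the projected iterates forces $g(\vy_{k+1})=g(\vy_k^\star)=g(\vy^\star)=0$, transplants the chain of lemmas from the exact-ACVI analysis with $\mu$ replaced by $\star$ (Prop.~\ref{prop4_lf} through Lemma~\ref{thm:3.2_lf}), and concludes via the identity $\mathcal{G}(\vx_{K+1},\mathcal{C})=f_K(\vx_{K+1})-f_K(\vx_K^\star)$. Your justification for dropping assumption \textit{(ii)} — that KKT points of \eqref{eq_app:new_points_limit} exist by Slater's condition alone once the central path is no longer needed — matches the remark following Theorem~\ref{thm:log_free_rate}.
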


\begin{proof}[Proof of Theorem~\ref{thm:log_free_rate}]
    Note that
    \begin{align*}
        \eqref{eq_app:new_points_limit}
    &\Leftrightarrow
    \underset{\vx\in\mathcal{C}}{\min}\langle F(\vx_{k+1}), \vx\rangle\\
    &\Leftrightarrow
    \underset{\vx\in\mathcal{C}}{\max} \langle F(\vx_{k+1}), \vx_{k+1}-\vx\rangle\\
    &\Leftrightarrow\mathcal{G}(\vx_{k+1}, \mathcal{C})\,,
    \end{align*}
from which we deduce
\begin{equation}\label{eq_app:explicit_gap_lf}
    \mathcal{G}(\vx_{k+1}, \mathcal{C})=\langle F(\vx_{k+1}), \vx_{k+1}-\vx_k^\star\rangle=f_K\left( \vx_{K+1} \right)  -f_K\left( \vx^\star_K \right)\,, \forall k\,.
\end{equation}

Combining with Lemma~\ref{thm:3.2_lf}, we obtain
\eqref{eq_app:na-log_free_rate} and \eqref{eq_app:fes_lf}.

\end{proof}

%%%%%%%%%%%%%%%%%%%%%%%%%%%%%%%%%%%%%%%%%%%%%%%%%%%%%
\clearpage
\section{Implementation Details}\label{app:implementation}
%%%%%%%%%%%%%%%%%%%%%%%%%%%%%%%%%%%%%%%%%%%%%%%%%%%%%
In this section, we provide the details on the implementation of the results presented in \S~\ref{sec:experiments} in the main part, as well as those of the additional results presented in App.~\ref{app:experiments}. In addition, we provide the source code through the following link: \url{https://github.com/Chavdarova/I-ACVI}.

\subsection{Implementation details for the \ref{eq:2d-bg} game}

Recall that we defined the $2$D bilinear game as:
\begin{align}\label{eq-app:2d-bg}\tag{2D-BG}
    &\min_{x_1 \in \bigtriangleup}\ \max_{x_2 \in \bigtriangleup} \ x_1 x_2 \, \quad \text{where } \bigtriangleup {=} \{ x \in \R | -0.4 \leq x \leq 2.4 \}.
\end{align}

To avoid confusion in the notation, in the remainder of this section, we rename the players in \eqref{eq:2d-bg} as $p_1$ and $p_2$:

$$\min_{p_1 \in \bigtriangleup} \max_{p_2 \in \bigtriangleup} p_1 p_2 \quad \text{where } \bigtriangleup {=} \{ p \in \R | -0.4 \leq p \leq 2.4 \}$$

In the following, we list the I-ACVI and P-ACVI implementations.

\paragraph{I-ACVI.}
For I-ACVI (Algorithm~\ref{alg:inexact_acvi}), we use the following Python code and the PyTorch library~\citep{pytorch}. We set $\beta=0.5$, $\mu=3$, $K=20$, $\ell=20$, $\delta=0.5$ and use a learning rate of $0.1$. The following implementation uses the standard log-barrier \eqref{eq:standard_barrier}.

\begin{lstlisting}[language=Python,breaklines=true,showstringspaces=false,caption={{Implementation of the I-ACVI algorithm (using \eqref{eq:standard_barrier}) on the 2D constrained bilinear game.}}]
import torch
lr = 0.1    # learning rate
beta = 0.5  # ACVI beta parameter
mu = 3      # ACVI mu parameter
K = 20      # ACVI K parameter
l = 20      # I-ACVI l parameter
delta = 0.5 # ACVI delta parameter: exponential decay of mu

p1_x = torch.nn.Parameter(torch.tensor(2.0))
p1_y = torch.nn.Parameter(torch.tensor(2.0))
p1_l = torch.nn.Parameter(torch.tensor(0.0))

p2_x = torch.nn.Parameter(torch.tensor(2.0))
p2_y = torch.nn.Parameter(torch.tensor(2.0))
p2_l = torch.nn.Parameter(torch.tensor(0.0))

while mu > 0.0001:

  for itr in range(K):

    for _ in range(l): # solve x problem (line 8 of algorithm)
      loss_p1 = 1/beta * p1_x * p2_x + 0.5 * (p1_x - p1_y + p1_l/beta).pow(2)
      p1_x.grad = None
      loss_p1.backward()
      with torch.no_grad():
        p1_x -= lr * p1_x.grad

      loss_p2 = -1/beta * p1_x * p2_x + 0.5 * (p2_x - p2_y + p2_l/beta).pow(2)
      p2_x.grad = None
      loss_p2.backward()
      with torch.no_grad():
        p2_x -= lr * p2_x.grad

    for _ in range(l): # solve y problem (line 9 of algorithm)
      phi_1 = p1_y + 0.4 # -0.4 < p1_y # define all the inequality constraints
      phi_2 = 2.4 - p1_y # p1_y < 2.4
      phi_3 = p2_y + 0.4 # -0.4 < p2_y
      phi_4 = 2.4 - p2_y # p1_y < 2.4
      log_term = -mu * (phi_1.log() + phi_2.log() + phi_3.log() + phi_4.log())
      loss = log_term + beta/2 * (p1_y - p1_x - p1_l/beta).pow(2)
                      + beta/2 * (p2_y - p2_x - p2_l/beta).pow(2)
      p1_y.grad, p2_y.grad = None, None
      loss.backward()
      with torch.no_grad():
        p1_y -= lr * p1_y.grad
        p2_y -= lr * p2_y.grad

    # update the lambdas (line 10 of algorithm)
    with torch.no_grad():
      p1_l += beta * (p1_x - p1_y)
      p2_l += beta * (p2_x - p2_y)

  mu *= delta # decay mu
\end{lstlisting}

For completeness, we provide the source code below when using the \eqref{eq:extended_barrier} barrier map instead of \eqref{eq:standard_barrier}.

\begin{lstlisting}[language=Python,breaklines=true,showstringspaces=false,caption={{Implementation of the I-ACVI algorithm (using \eqref{eq:extended_barrier}) on the 2D constrained bilinear game.}}]
import torch
lr = 0.1    # learning rate
beta = 0.5  # ACVI beta parameter
mu = 3      # ACVI mu parameter
K = 20      # ACVI K parameter
l = 20      # I-ACVI l parameter
delta = 0.5 # ACVI delta parameter: exponential decay of mu
c = torch.tensor([1.0]) # c parameter of the extended barrier

p1_x = torch.nn.Parameter(torch.tensor(2.0))
p1_y = torch.nn.Parameter(torch.tensor(2.0))
p1_l = torch.nn.Parameter(torch.tensor(0.0))

p2_x = torch.nn.Parameter(torch.tensor(2.0))
p2_y = torch.nn.Parameter(torch.tensor(2.0))
p2_l = torch.nn.Parameter(torch.tensor(0.0))

while mu > 0.0001:

  for itr in range(K):

    for _ in range(l): # solve x problem (line 8 of algorithm)
      loss_p1 = 1/beta * p1_x * p2_x + 0.5 * (p1_x - p1_y + p1_l/beta).pow(2)
      p1_x.grad = None
      loss_p1.backward()
      with torch.no_grad():
        p1_x -= lr * p1_x.grad

      loss_p2 = -1/beta * p1_x * p2_x + 0.5 * (p2_x - p2_y + p2_l/beta).pow(2)
      p2_x.grad = None
      loss_p2.backward()
      with torch.no_grad():
        p2_x -= lr * p2_x.grad

    for _ in range(l): # solve y problem (line 9 of algorithm)
      phi_1 = p1_y + 0.4 # -0.4 < p1_y # define all the inequality constraints
      phi_2 = 2.4 - p1_y # p1_y < 2.4
      phi_3 = p2_y + 0.4 # -0.4 < p2_y
      phi_4 = 2.4 - p2_y # p1_y < 2.4
      log_terms = [phi_1, phi_2, phi_3, phi_4]
      clip_condition = [-phi <= -torch.exp(-c/mu) for phi in log_terms]
      new_log_terms = [-mu*torch.log(phi) if condition else
                       -mu*torch.exp(c/mu)*phi+mu+c for
                       phi, condition in zip(log_terms, clip_condition)]
      loss = sum(new_log_terms) + beta/2 * (p1_y - p1_x - p1_l/beta).pow(2)
                                + beta/2 * (p2_y - p2_x - p2_l/beta).pow(2)

      p1_y.grad, p2_y.grad = None, None
      loss.backward()
      with torch.no_grad():
        p1_y -= lr * p1_y.grad
        p2_y -= lr * p2_y.grad

    # update the lambdas (line 10 of algorithm)
    with torch.no_grad():
      p1_l += beta * (p1_x - p1_y)
      p2_l += beta * (p2_x - p2_y)

  mu *= delta # decay mu
\end{lstlisting}

\paragraph{PI-ACVI.}
For PI-ACVI, we use the following Python code implementing Algorithm~\ref{alg:log_free_acvi} using the Pytorch library. We set $\beta=0.5$, $K=20$, $\ell=20$, and use a learning rate of $0.1$.

\begin{lstlisting}[language=Python,breaklines=true,showstringspaces=false,caption={{Implementation of the PI-ACVI algorithm on the 2D constrained bilinear game.}}]
import torch

lr = 0.1   # learning rate
beta = 0.5 # ACVI beta parameter
K = 20     # ACVI K parameter
l = 20     # I-ACVI l parameter

p1_x = torch.nn.Parameter(torch.tensor(2.0))
p1_y = torch.nn.Parameter(torch.tensor(2.0))
p1_l = torch.nn.Parameter(torch.tensor(0.0))

p2_x = torch.nn.Parameter(torch.tensor(2.0))
p2_y = torch.nn.Parameter(torch.tensor(2.0))
p2_l = torch.nn.Parameter(torch.tensor(0.0))

for itr in range(K):

  # solve x problem (line 6 of algorithm)
  for _ in range(l):
    loss_p1 = 1/beta * p1_x * p2_x + 0.5 * (p1_x - p1_y + p1_l/beta).pow(2)
    p1_x.grad = None
    loss_p1.backward()
    with torch.no_grad():
      p1_x -= lr * p1_x.grad

    loss_p2 = -1/beta * p1_x * p2_x + 0.5 * (p2_x - p2_y + p2_l/beta).pow(2)
    p2_x.grad = None
    loss_p2.backward()
    with torch.no_grad():
      p2_x -= lr * p2_x.grad

  # solve y problem using projection (line 7 of algorithm)
  with torch.no_grad():
    p1_y.data = p1_x + p1_l/beta
    p1_y.data = p1_y.clip(-0.4, 2.4)

    p2_y.data = p2_x + p2_l/beta
    p2_y.data = p2_y.clip(-0.4, 2.4)

  # update the lambdas (line 8 of algorithm)
  with torch.no_grad():
    p1_l += beta * (p1_x - p1_y)
    p2_l += beta * (p2_x - p2_y)
\end{lstlisting}

\subsection{Implementation details for the \ref{eq:high_dim_bg} game}

\paragraph{Solution and relative error.} The solution of \eqref{eq:high_dim_bg} is  $\vx^\star=\frac{1}{500}\ve$, with $\ve\in \R^{1000}$.
As a metric of the experiments on this problem, we use the relative error: $\varepsilon_r(\vx_k)=\frac{\lVert \vx_k-\vx^\star\rVert}{\lVert\vx^\star\rVert}$.

\paragraph{Experiments of Fig.\ref{fig:acvi-time-hdb}.a showing CPU time to reach a fixed relative error.} The target relative error is $0.02$. We set the step size of GDA, EG, and OGDA  to $0.3$ and use $k=5$ and $\alpha=0.5$ for LA-GDA.
For I-ACVI, we set $\beta=0.5$, $\mu_{-1}=10^{-6}$, $\delta=0.8$, $\vlmd_0=\boldsymbol{0}$, $K=10$, $\ell=10$ and the step size is $0.05$.

\paragraph{Experiments of Fig.\ref{fig:acvi-time-hdb}.b showing the number of iterations to reach a fixed relative error.} Hyperparameters are the same as for Fig.\ref{fig:acvi-time-hdb}.a. We vary the rotation ``strength'' $(1-\eta)$, with $\eta \in (0,1)$.

\paragraph{Experiments of Fig.\ref{fig:acvi-time-hdb}.c showing the impact of $K_0$.} For this experiment, we depict, for various pairs $(K_0, K_+)$, how many iterations are required to reach a relative error smaller than $10^{-4}$. We set $\beta=0.5$, $\mu=1e-6$, $\delta=0.8$, $T=5000$ and $0.05$ as learning rate. We experiment with $K_0\in \{ 5,10,20,30,40,50,60,70,80,90,100,110,120,130 \}$ and $K_+ \in \{ 1,5,10,20,30,40,50,60,70 \}$.

The following Python code snippet shows an implementation of I-ACVI (Algorithm~\ref{alg:inexact_acvi}) on the \eqref{eq:high_dim_bg} game:

\begin{lstlisting}[language=Python,breaklines=true,showstringspaces=false,caption={{Implementation of the I-ACVI algorithm on the HBG game.}}]
import numpy as np
import time

eps = .02  # target relative error
dim = 500  # dim(x1) == dim(x2) == dim
x_opt = np.ones((2*dim,1))/dim # solution

# I-ACVI parameters
beta, mu, delta, K, l, T, lr = 0.5, 1e-6, .8, 10, 10, 100, 0.05

# Building HBG matrices
eta = 0.05
A1 = np.concatenate((eta*np.identity(dim), (1-eta)*np.identity(dim)), axis=1)
A2 = np.concatenate((-(1-eta)*np.identity(dim), eta*np.identity(dim)), axis=1)
A  = np.concatenate((A1, A2), axis=0)

# Build projection matrix Pc
temp1 = np.concatenate((np.ones((1,dim)), np.zeros((1,dim))), axis=1)
temp2 = np.concatenate((np.zeros((1,dim)), np.ones((1,dim))), axis=1)
C = np.concatenate((temp1,temp2), axis=0)
d = np.ones((2,1))
temp = np.linalg.inv(np.dot(C, C.T))
temp = np.dot(C.T, temp)
dc = np.dot(temp, d)
Pc = np.identity(2*dim) - np.dot(temp,C)

# Initialize players
init = np.random.rand(2*dim, 1)
init[:dim] = init[:dim] / np.sum(init[:dim]) # ensuring it is part of the simplex
init[dim:] = init[dim:] / np.sum(init[dim:])
z_x = np.copy(init)
z_y = np.copy(init)
z_lmd = np.zeros(init.shape)

finished, cnt, t0 = False, 0, time.time()

for _ in range(T):
  mu *= delta
  for _ in range(K):
    cnt += 1
    # Solve approximately the X problem (line 8 of algorithm)
    for _ in range(l):
      g = z_x + 1/beta * np.dot(Pc, np.dot(A,z_x)) - np.dot(Pc, z_y) + 1/beta * np.dot(Pc, z_lmd) - dc
      z_x -= lr * g

    if np.linalg.norm(z_x-x_opt)/np.linalg.norm(x_opt) <= eps:
      finished = True
      print(f"Reached a relative error of {eps} after {cnt} iterations in {time.time()-t0:.2f} sec.")
      break

    # Solve approximately the Y problem (line 9 of algorithm)
    for _ in range(l):
      assert all(z_y > 0) # ensuring the log terms are positive
      g = - mu * 1/z_y + beta*(z_y - z_x - z_lmd/beta)
      z_y -= lr * g

    # Update lambdas (line 10 of algorithm)
    z_lmd += beta*(z_x-z_y)

  if finished:
    break
\end{lstlisting}

\subsection{Implementation details for the \ref{eq:c-gan} game}

For the experiments on the MNIST dataset, we use the source code of~\citet{chavdarova2021lamm} for the baselines, and we build on it to implement PI-ACVI (Algorithm~\ref{alg:log_free_acvi}).
For completeness, we provide an overview of the implementation.

\begin{table}[!htb]\centering
\begin{minipage}[b]{0.49\hsize}\centering
\resizebox{1.05\textwidth}{!}{
\begin{tabular}{@{}c@{}}\toprule
\textbf{Generator}\\\toprule
\textit{Input:} $\vz \in \mathds{R}^{128} \sim \mathcal{N}(0, I) $ \\  \hdashline
transposed conv. (ker: $3{\times}3$, $128 \rightarrow 512$; stride: $1$) \\
 Batch Normalization \\
 ReLU  \\
 transposed conv. (ker: $4{\times}4$, $512 \rightarrow 256$, stride: $2$)\\
 Batch Normalization \\
 ReLU  \\
 transposed conv. (ker: $4{\times}4$, $256 \rightarrow 128$, stride: $2$) \\
 Batch Normalization \\
 ReLU  \\
 transposed conv. (ker: $4{\times}4$, $128 \rightarrow 1$, stride: $2$, pad: 1) \\
 $Tanh(\cdot)$\\
\bottomrule
\end{tabular}}
\end{minipage}
\hfill
\begin{minipage}[b]{0.49\hsize}\centering
\resizebox{.9\textwidth}{!}{
\begin{tabular}{@{}c@{}}\toprule
\textbf{Discriminator}\\\toprule
\textit{Input:} $\vx \in \mathds{R}^{1{\times}28{\times}28} $ \\  \hdashline
 conv. (ker: $4{\times}4$, $1 \rightarrow 64$; stride: $2$; pad:1) \\
 LeakyReLU (negative slope: $0.2$) \\
 conv. (ker: $4{\times}4$, $64 \rightarrow 128$; stride: $2$; pad:1) \\
 Batch Normalization \\
 LeakyReLU (negative slope: $0.2$) \\
 conv. (ker: $4{\times}4$, $128 \rightarrow 256$; stride: $2$; pad:1) \\
 Batch Normalization \\
 LeakyReLU (negative slope: $0.2$) \\
 conv. (ker: $3{\times}3$, $256 \rightarrow 1$; stride: $1$) \\
 $Sigmoid(\cdot)$ \\
\bottomrule
\end{tabular}}
\end{minipage}
\vspace{.5em}
\caption{DCGAN architectures~\citep{radford2016unsupervised} used for experiments on \textbf{MNIST}.
With ``conv.'' we denote a convolutional layer and ``transposed conv'' a transposed convolution layer~\citep{radford2016unsupervised}.
We use \textit{ker} and \textit{pad} to denote \textit{kernel} and \textit{padding} for the (transposed) convolution layers, respectively.
With $h{\times}w$, we denote the kernel size.
With $ c_{in} \rightarrow y_{out}$ we denote the number of channels of the input and output, for (transposed) convolution layers.
The models use Batch Normalization~\citep{bnorm} layers.
}\label{tab:mnist_arch}
\end{table}

\noindent\textbf{Models.}
We used the DCGAN architectures~\citep{radford2016unsupervised}, listed in Table~\ref{tab:mnist_arch}, and the parameters of the models are initialized using PyTorch default initialization.
For experiments on this dataset, we used the \textit{non-saturating} GAN loss as proposed in~\citep{goodfellow2014generative}:
\begin{align}
& \LL_D =   \underset{\tilde{\vx}_d \sim p_d}{\E} \log \big(D(\tilde{\vx}_d)\big)
+
\underset{\tilde{\vz}\sim p_z }{\E}   \log\Big(1- D\big(G(\tilde{\vz})\big)\Big) \label{eq:vanilla_loss_d}\tag{L-D}\\
& \LL_G =  \underset{\tilde{\vz} \sim p_z}{\E}   \log \Big(D\big(G(\tilde{\vz})\big)\Big), \label{eq:nonsat_loss_g}\tag{L-G}
\end{align}
where $G(\cdot), D(\cdot)$ denote the generator and discriminator, resp., and $p_d$ and $p_z$ denote the data and the latent distributions (the latter predefined as normal distribution).

\noindent\textbf{Details on the PI-ACVI implementation.}
When implementing PI-ACVI on MNIST, we set $\beta=0.5$, and $K=5000$, we use $\ell_+=20$ and $\ell_0 \in \{100, 500 \}$. We consider only inequality constraints (and there are no equality constraints), therefore, the matrices $\mP_c$ and $\vd_c$ are identity and zero, respectively. As inequality constraints, we use $100$ randomly generated linear inequalities for the Generator and $100$ for the Discriminator.

\noindent\textbf{Projection details.} Suppose the linear inequality constraints for the Generator are $\mA\vtheta\leq \vb$, where $\vtheta\in \R^n$ is the vector of all parameters of the Generator, $\mA=(\va_1^\intercal,\dots,\va_{100}^\intercal)^\intercal\in \R^{100\times n}$, $\vb=(b_1,\dots,b_{100})\in\R^{100}$. We use the \textit{greedy projection algorithm} described in \citep{beck2017first}. A greedy projection algorithm is essentially a projected gradient method; it is easy to implement in high-dimension problems and has a convergence rate of $O(1/\sqrt{K})$. See Chapter 8.2.3 in~\citep{beck2017first} for more details.
Since the dimension $n$ is very large, at each step of the projection, one could only project $\vtheta$ to one hyperplane $\va_i^\intercal\vx=b_i$ for some $i\in \mathcal{I}(\vtheta)$, where
\begin{equation*}
    \mathcal{I}(\vtheta)\triangleq \{j|\va_j^\intercal\vtheta>b_j\}.
\end{equation*}

 For every $j\in\{1,2,\dots,100\}$, let
 \begin{equation*}
     \mathcal{S}_j\triangleq\{\vx|\va_j^\intercal\vx\leq b_j\}.
 \end{equation*}

  The greedy projection method chooses $i$ so that $i\in\arg\max\{dist(\vtheta,\mathcal{S}_i)\}$. Note that as long as $\vtheta$ is not in the constraint set $C_\leq=\{\vx|\mA\vx\leq\vb\}$, $i$ would be in $\mathcal{I}(\vtheta)$. Algorithm \ref{alg:projection} gives the details of the greedy projection method we use for the baseline, written for the Generator only for simplicity; the same projection method is used for the Discriminator as well.

\begin{algorithm}
    \caption{Greedy projection method for the baseline.}
    \label{alg:projection}
    \begin{algorithmic}[1]
        \STATE \textbf{Input:}  $\vtheta\in\R^n$, $\mA=(\va_1^\intercal,\dots,\va_{100}^\intercal)^\intercal\in \R^{100\times n}$, $\vb=(b_1,\dots,b_{100})\in\R^{100}$, $\varepsilon>0$
        \WHILE{True}
           \STATE $\mathcal{I}(\vtheta)\triangleq \{j|\va_j^\intercal\vtheta>b_j\}$
           \IF{$\mathcal{I}(\vtheta)=\emptyset$ or $\underset{j\in\mathcal{I}(\vtheta)}{\max}\frac{|\va_j^\intercal\vtheta-b_j|}{\norm{\va_j}}<\varepsilon$}
                \STATE break
           \ENDIF

           \STATE choose $i\in\underset{j\in\mathcal{I}(\vtheta)}{\arg\max}\frac{|\va_j^\intercal\vtheta-b_j|}{\norm{\va_j}}$

           \STATE $\vtheta\leftarrow\vtheta-\frac{|\va_i^\intercal\vtheta-b_i|}{\norm{\va_i}^2}\va_i$
        \ENDWHILE

        \STATE \textbf{Return:} $\vtheta$
    \end{algorithmic}
\end{algorithm}

\noindent\textbf{Metrics.}
We describe the metrics for the MNIST experiments.
We use the two standard GAN metrics, Inception Score (IS,~\citealp{salimans2016improved}) and Fr\'echet Inception Distance (FID,~\citealp{heusel_gans_2017}).
Both FID and IS rely on a pre-trained classifier and take a finite set of $\tilde m$ samples from the generator to compute these.
Since \textbf{MNIST} has greyscale images, we used a classifier trained on this dataset and used  $\tilde{m}=5000$.

\noindent\textbf{Metrics: IS.}
Given a sample from the generator $\tilde{\vx}_g\sim p_g$---where $p_g$ denotes the data distribution of the generator---IS uses the softmax output of the pre-trained network $p(\tilde{\vy}|\tilde{\vx}_g)$ which represents the probability that $\tilde{\vx}_g$ is of class $c_i, i \in 1 \dots C$, i.e., $p(\tilde{\vy}|\tilde{\vx}_g) \in [0,1]^C$.
It then computes the marginal class distribution $p(\tilde{\vy})=\int_{\tilde\vx} p(\tilde{\vy}|\tilde{\vx}_g)p_g(\tilde{\vx}_g)$.
IS measures the Kullback--Leibler divergence $\mathbb{D}_{KL}$ between the predicted conditional label distribution $p(\tilde{\vy}|\tilde{\vx}_g)$  and the marginal class distribution $p(\tilde{\vy})$.
More precisely, it is computed as follows:
\begin{align}\tag{IS}
IS(G) = \exp \Big( \underset{ \tilde{\vx}_g \sim p_g}{\E} \Big[ \mathbb{D}_{KL}\big( p( \tilde{\vy} | \tilde{\vx}_g) || p(\tilde{\vy}) \big)  \Big] \Big)
= \exp \Big(
\frac{1}{\tilde{m}} \sum_{i=1}^{\tilde{m}} \sum_{c=1}^{C} p(y_c|\tilde{\vx}_i) \log{\frac{p(y_c|\tilde{\vx}_i)}{p(y_c)}}
\Big).
\end{align}

It aims at estimating
\begin{enumerate*}[series = tobecont, label=(\roman*)]
\item if the samples look realistic i.e., $p(\tilde{\vy}|\tilde{\vx}_g)$ should have low entropy, and
\item if the samples are diverse (from different ImageNet classes), i.e., $p(\tilde{\vy})$ should have high entropy.
\end{enumerate*}
As these are combined using the Kullback--Leibler divergence, the higher the score is, the better the performance.

\noindent\textbf{Metrics: FID.}
Contrary to IS, FID compares the synthetic samples $\tilde{\vx}_g \sim p_g$ with those of the training dataset $ \tilde{\vx}_d \sim p_d$ in a feature space. The samples are embedded using the first several layers of a pretrained classifier.
It assumes  $p_g$ and $p_d$ are multivariate normal distributions and estimates the means $\vm_g$ and $\vm_d$ and covariances $\mC_g$ and $\mC_d$, respectively, for  $p_g$ and $p_d$ in that feature space. Finally, FID is computed as:
\begin{align} \tag{FID}
\mathbb{D}_{\text{FID}}(p_d, p_g) \approx \mathscr{D}_2 \big(
(\vm_d, \mC_d), (\vm_g, \mC_g )
\big) =
\|\vm_d - \vm_g\|_2^2 + Tr\big(\mC_d + \mC_g - 2(\mC_d \mC_g)^{\frac{1}{2}}\big),
\end{align}
where $\mathscr{D}_2$ denotes the Fr\'echet Distance.
Note that as this metric is a distance, the lower it is, the better the performance.

\noindent\textbf{Hardware.}
We used the Colab platform (\url{https://colab.research.google.com/}) and \textit{Nvidia T4} GPUs.

%%%%%%%%%%%%%%%%%%%%%%%%%%%%%%%%%%%%%%%%%%%%%%%%%%%%%
\clearpage
\section{Additional Experiments and Analyses}\label{app:experiments}
%%%%%%%%%%%%%%%%%%%%%%%%%%%%%%%%%%%%%%%%%%%%%%%%%%%%%

In this section, we provide complementary experiments associated with the three games introduced in the main paper: \eqref{eq:2d-bg}, \eqref{eq:high_dim_bg}, and \eqref{eq:c-gan}. We also provide an additional study of the robustness of I-ACVI to bad conditioning by introducing a version of the \eqref{eq:high_dim_bg} game, see \S~\ref{app:conditioning} for more details.

\subsection{Additional results for I-ACVI on the \ref{eq:2d-bg} game}

For completeness, in Fig.~\ref{fig:acvi-c-bilin-x}  we show the trajectories for the $\vx$ iterates---complementary to the $\vy$-iterates' trajectories depicted in Fig.~\ref{fig:acvi-c-bilin-y} of the main part.

\begin{figure*}[!htbp]
  \centering
  \begin{subfigure}[\scriptsize ACVI]{
  \includegraphics[width=0.22\linewidth,clip]{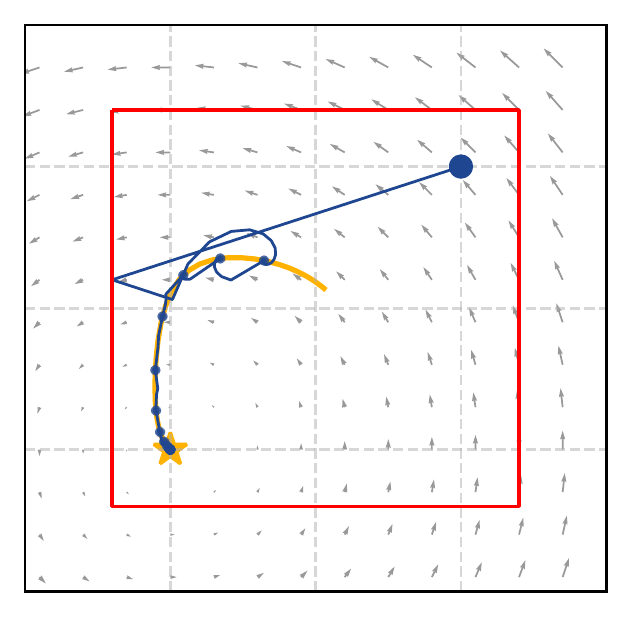}
  }
  \end{subfigure}
  \begin{subfigure}[\scriptsize I-ACVI $K=20,\ell=2$]{
  \includegraphics[width=0.22\linewidth,trim={0cm .0cm 0cm .0cm},clip]{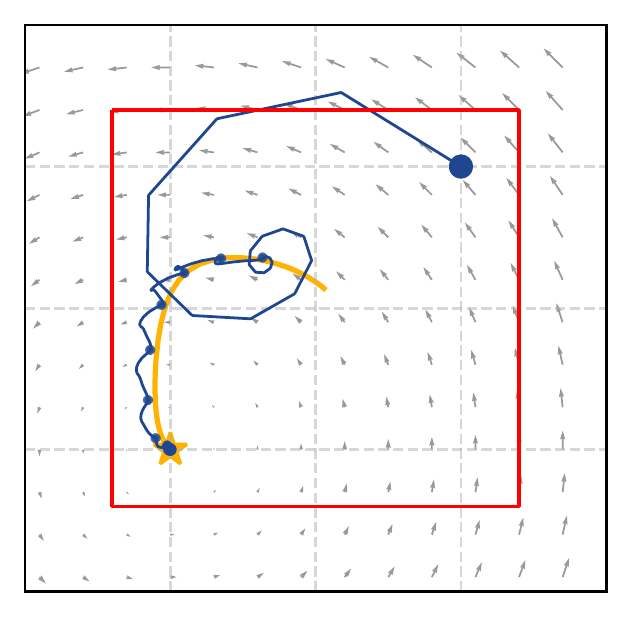}
  }
  \end{subfigure}
  \begin{subfigure}[\scriptsize I-ACVI $K=10,\ell=2$]{
  \includegraphics[width=0.22\linewidth,clip]{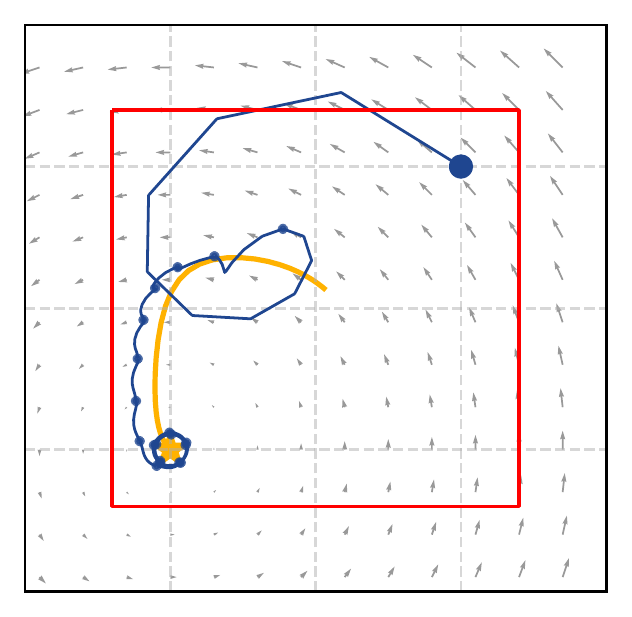}
  }
  \end{subfigure}
  \begin{subfigure}[\scriptsize I-ACVI $K=5,\ell=2$]{
  \includegraphics[width=0.22\linewidth,clip]{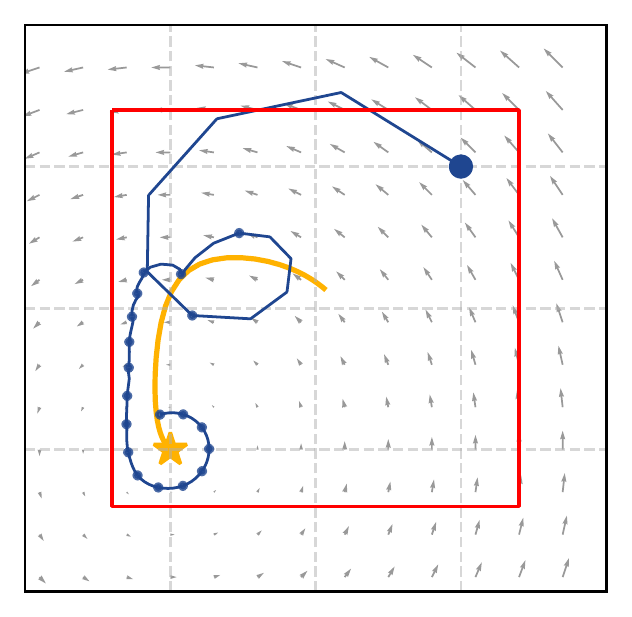}
  }
  \end{subfigure}
  \vspace{-0.5em}
 \caption{ \textbf{Complementary illustrations to those in Fig.~\ref{fig:acvi-c-bilin-y} of the main part: depicting here the trajectories of the $\vx$ iterates.}
 We compare the convergence of  ACVI and I-ACVI with different parameters on the~\eqref{eq:2d-bg} problem while also depicting the central path (shown in yellow).
 Each subsequent bullet on the trajectory depicts the (exact or approximate) solution at the end of the inner loop (when $k \equiv K-1$).
 The Nash equilibrium (NE) of the game is represented by a yellow star, and the constraint set is the interior of the red square.
 }
 \label{fig:acvi-c-bilin-x}
\end{figure*}

\paragraph{Comparison between \eqref{eq:standard_barrier} and \eqref{eq:extended_barrier}.} In Fig.~\ref{fig:c-bilin-barrier-comp-y} and Fig.~\ref{fig:c-bilin-barrier-comp-x} we show the trajectories of respectively the $\vy$ and $\vx$ iterates as we increase the learning rate. Increasing the learning rate increases the chance of crossing the standard log barrier, which makes the \eqref{eq:standard_barrier} undefined for such input, as the $\log$ function is not defined on the entire space.
In contrast, the newly proposed barrier \eqref{eq:extended_barrier} is defined everywhere; thus, the $\vy$ iterates crossing the boundary of the constrained set does not make the \eqref{eq:extended_barrier} unstable and allows for convergence to the solution.

\begin{figure*}[!htbp]
  \centering
  \begin{subfigure}{
  \includegraphics[width=0.22\linewidth,clip]{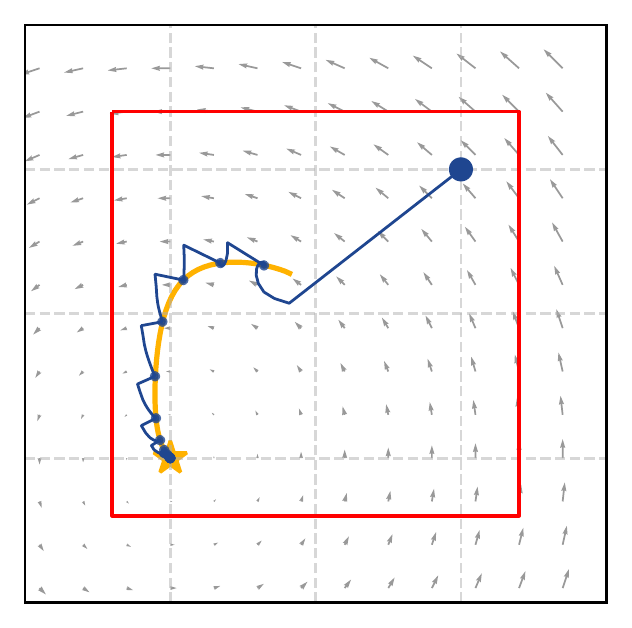}
  }
  \end{subfigure}
  \begin{subfigure}{
  \includegraphics[width=0.22\linewidth,trim={0cm .0cm 0cm .0cm},clip]{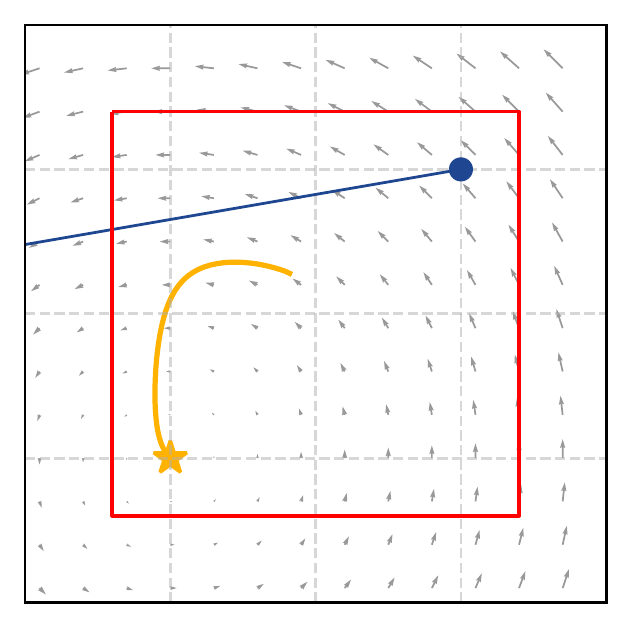}
  }
  \end{subfigure}
  \begin{subfigure}{
  \includegraphics[width=0.22\linewidth,clip]{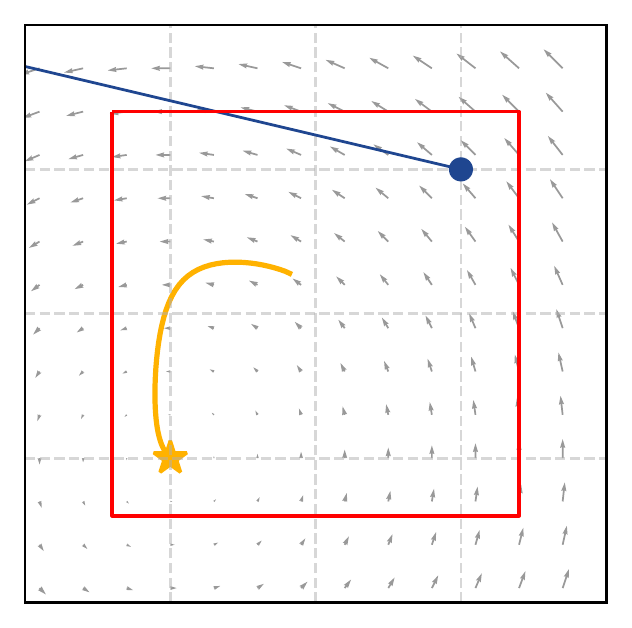}
  }
  \end{subfigure}
  \begin{subfigure}{
  \includegraphics[width=0.22\linewidth,clip]{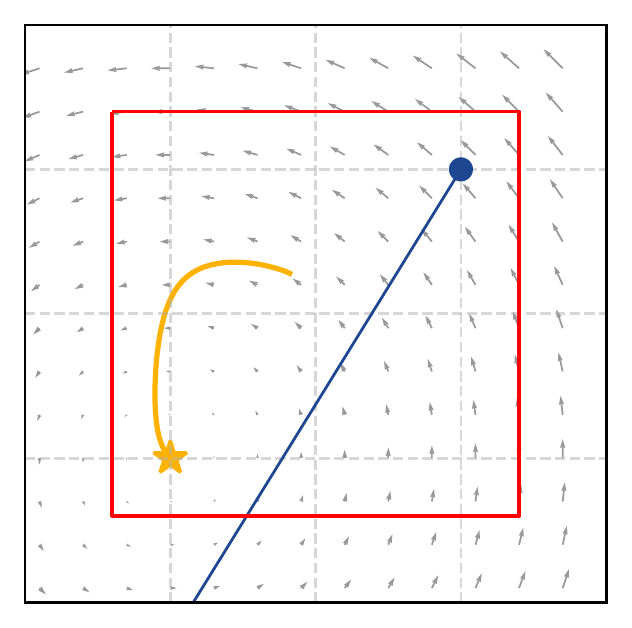}
  }
  \end{subfigure} \\
  \vspace{-1em}
  \centering
  \begin{subfigure}[$lr=0.3$]{
  \includegraphics[width=0.22\linewidth,clip]{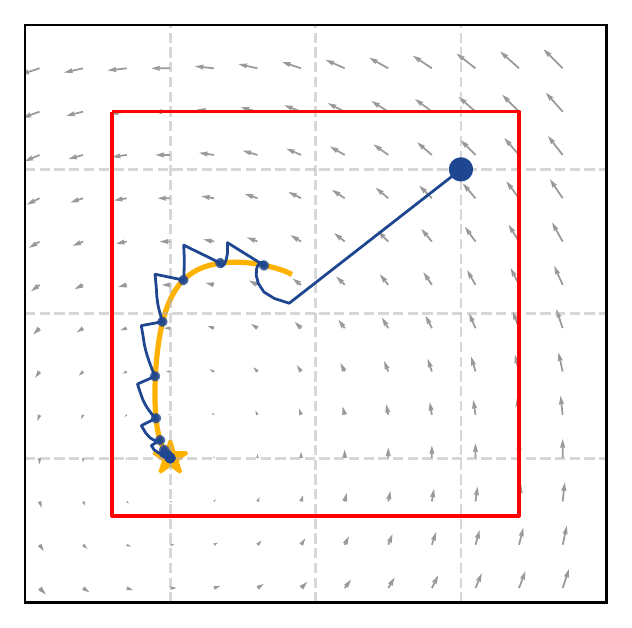}
  }
  \end{subfigure}
  \begin{subfigure}[$lr=0.4$]{
  \includegraphics[width=0.22\linewidth,trim={0cm .0cm 0cm .0cm},clip]{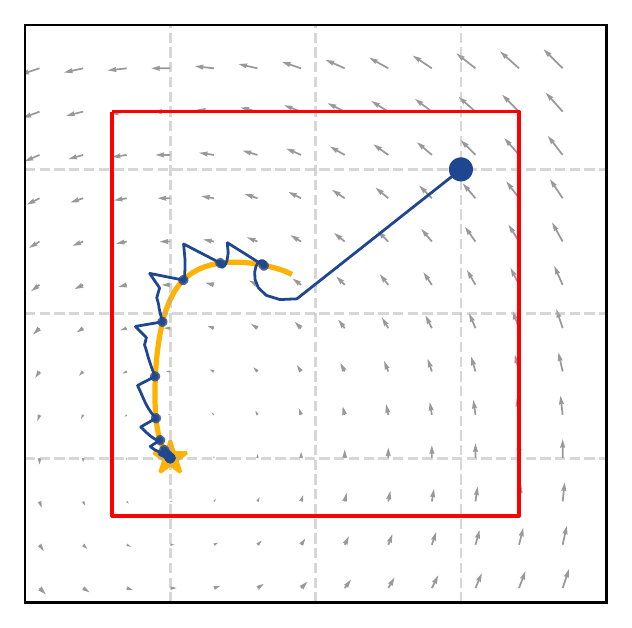}
  }
  \end{subfigure}
  \begin{subfigure}[$lr=0.5$]{
  \includegraphics[width=0.22\linewidth,clip]{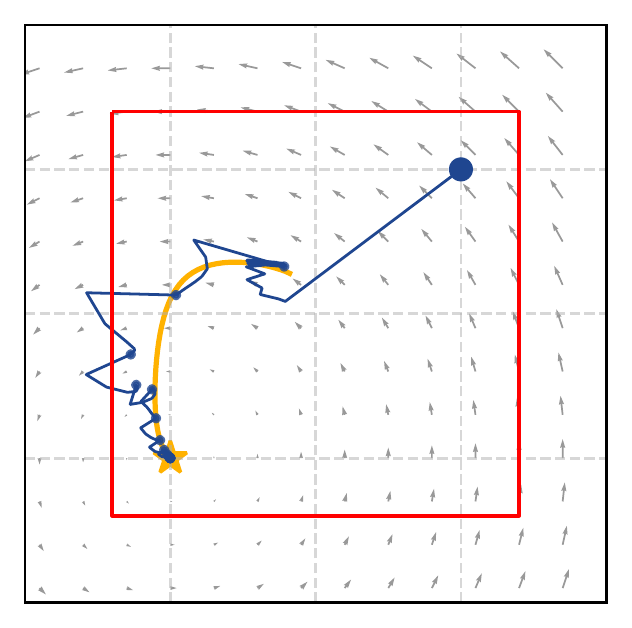}
  }
  \end{subfigure}
  \begin{subfigure}[$lr=0.6$]{
  \includegraphics[width=0.22\linewidth,clip]{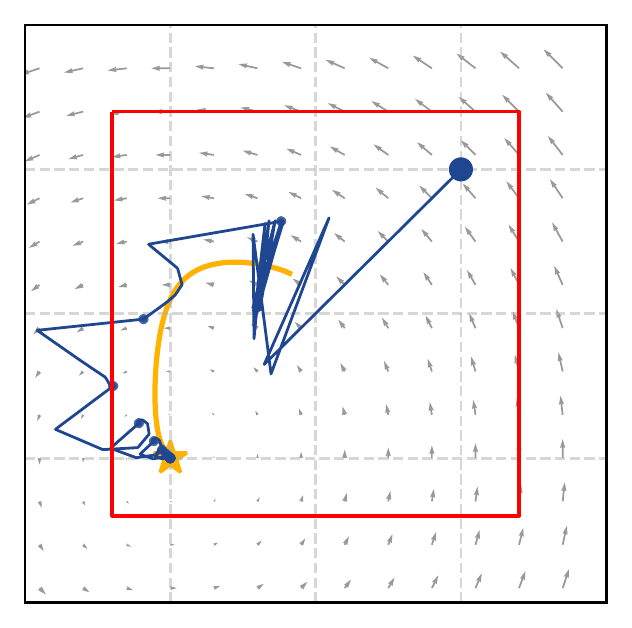}
  }
  \end{subfigure}
 \caption{\textbf{I-ACVI trjectories for the $\vy$ iterates for different choices of learning rates $lr$.}
 \textbf{Top row:} Trajectories for the I-ACVI implementation using the standard barrier  function \eqref{eq:standard_barrier}. As the learning rate increases, the $\vy$ iterates cross the log barrier, breaking the convergence.
 \textbf{Bottom row:} Trajectories for the I-ACVI implementation using the new smooth barrier function defined over the entire domain \eqref{eq:extended_barrier}. The extended barrier function we proposed is defined everywhere; thus, even if the iterates cross the standard barrier, the method converges, allowing for the use of larger step sizes. We can reduce the constant $c$ to improve the stability; we used $c=\{10,1,0.2,0\}$ and $\gamma=\{0.3,0.4,0.5,0.6\}$ for the learning rate.}
 \label{fig:c-bilin-barrier-comp-y}
\end{figure*}

\begin{figure*}[!htbp]
  \centering
  \begin{subfigure}{
  \includegraphics[width=0.22\linewidth,clip]{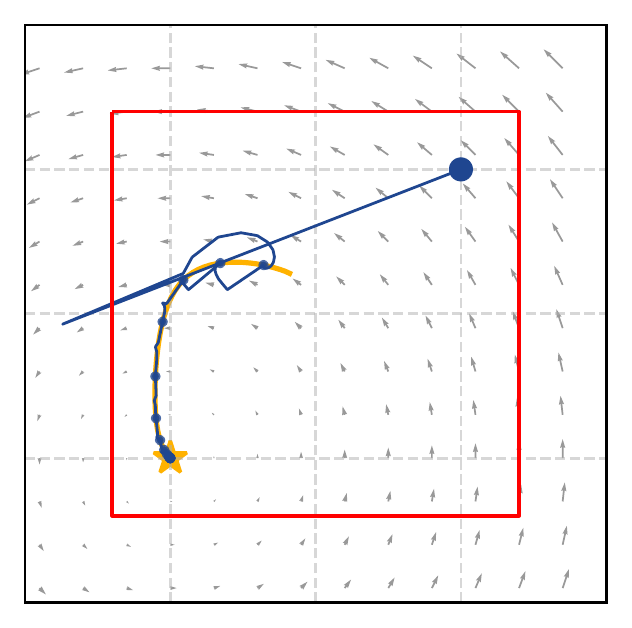}
  }
  \end{subfigure}
  \begin{subfigure}{
  \includegraphics[width=0.22\linewidth,trim={0cm .0cm 0cm .0cm},clip]{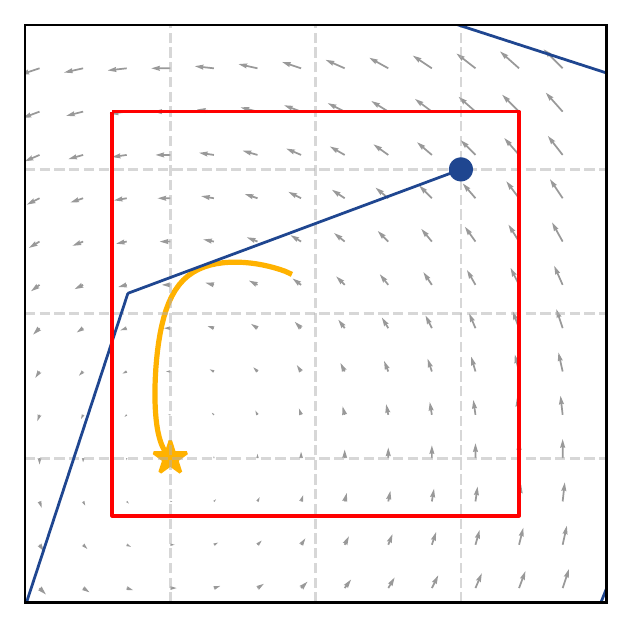}
  }
  \end{subfigure}
  \begin{subfigure}{
  \includegraphics[width=0.22\linewidth,clip]{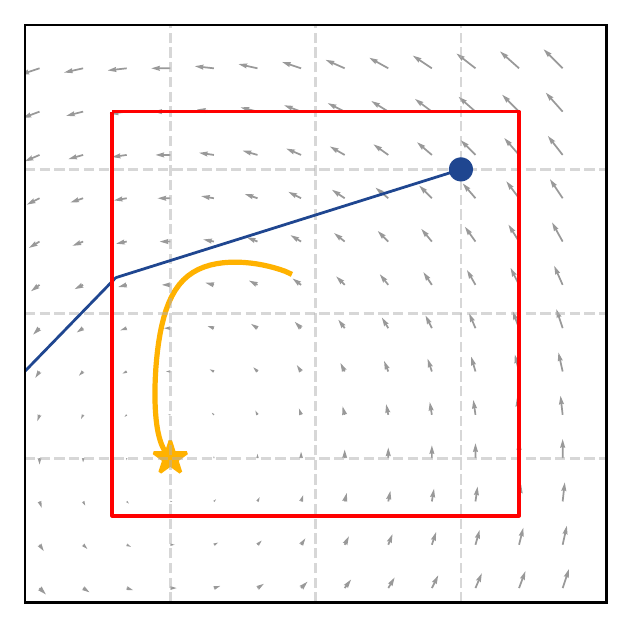}
  }
  \end{subfigure}
  \begin{subfigure}{
  \includegraphics[width=0.22\linewidth,clip]{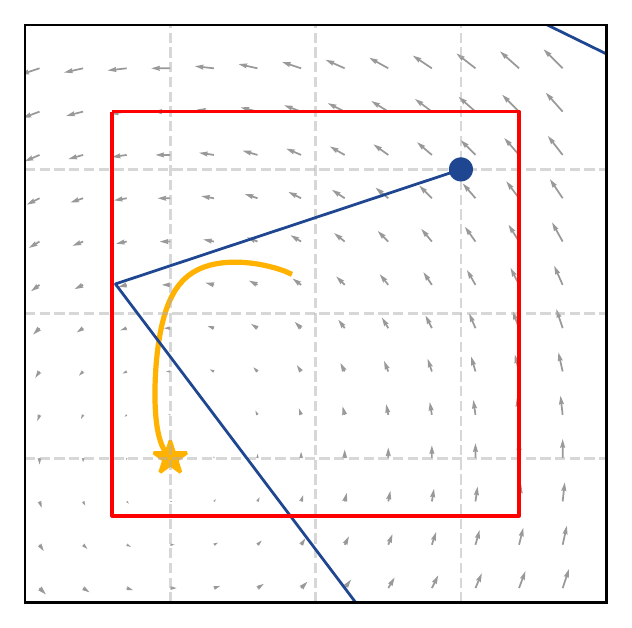}
  }
  \end{subfigure} \\
  \vspace{-1em}
  \centering
  \begin{subfigure}[$lr=0.3$]{
  \includegraphics[width=0.22\linewidth,clip]{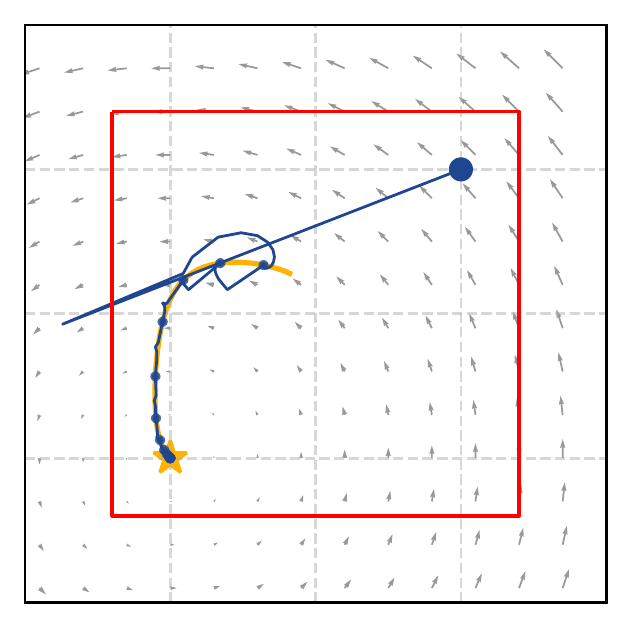}
  }
  \end{subfigure}
  \begin{subfigure}[$lr=0.4$]{
  \includegraphics[width=0.22\linewidth,trim={0cm .0cm 0cm .0cm},clip]{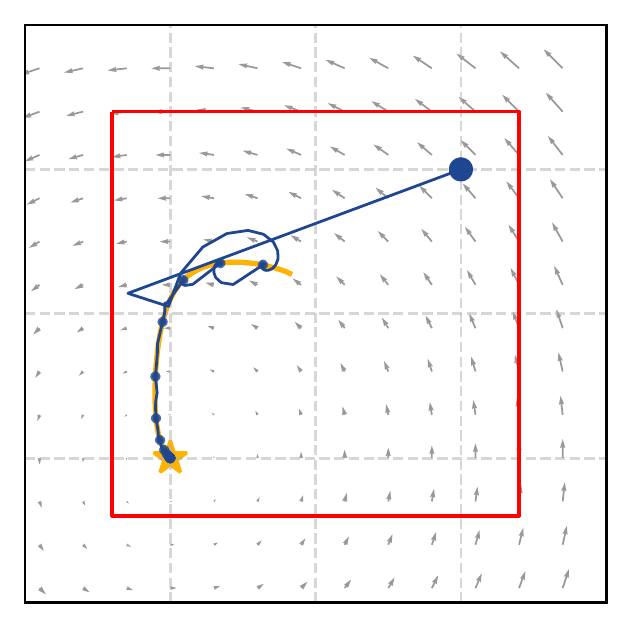}
  }
  \end{subfigure}
  \begin{subfigure}[$lr=0.5$]{
  \includegraphics[width=0.22\linewidth,clip]{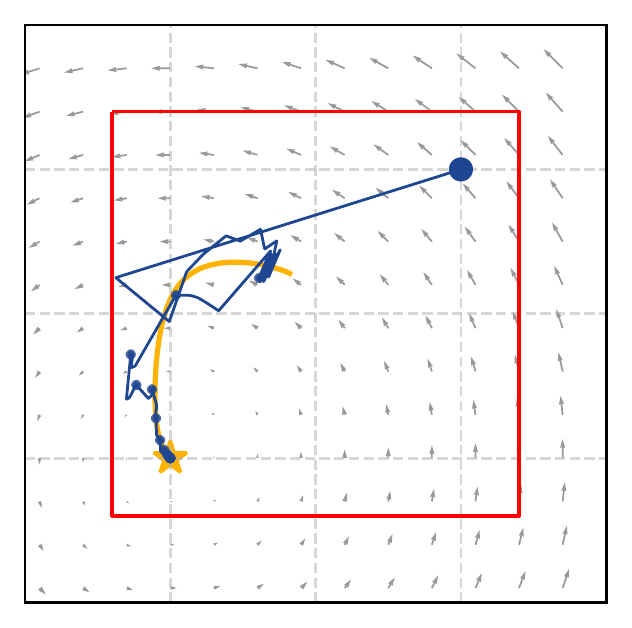}
  }
  \end{subfigure}
  \begin{subfigure}[$lr=0.6$]{
  \includegraphics[width=0.22\linewidth,clip]{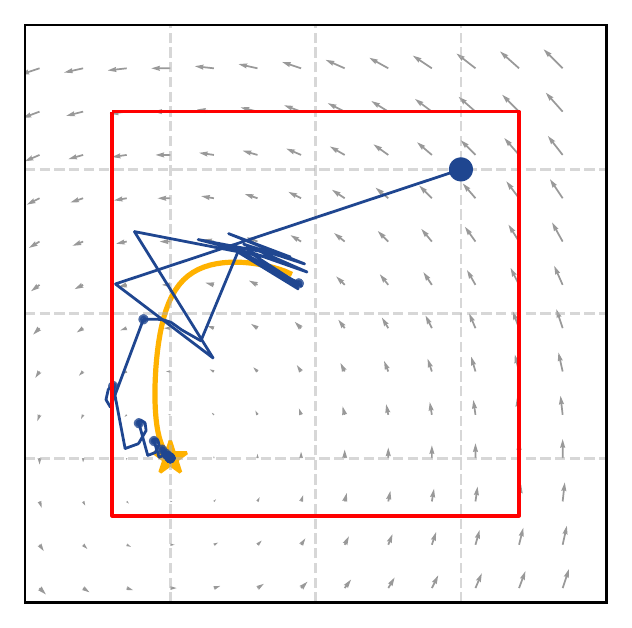}
  }
  \end{subfigure}
 \caption{Complementary to Fig.~\ref{fig:c-bilin-barrier-comp-y}, \textbf{I-ACVI trjectories for the $\vx$ iterates for different choices of learning rates $lr$.}
 \textbf{Top row:} Trajectories for the I-ACVI implementation using the standard barrier  function \eqref{eq:standard_barrier}. As the learning rate increases, the $\vy$ iterates (see Fig.~\ref{fig:c-bilin-barrier-comp-y}) are crossing the log barrier, which breaks the optimization.
 \textbf{Bottom row:} Trajectories for the I-ACVI implementation using the new smooth barrier function defined over the entire domain \eqref{eq:extended_barrier}. The iterates are allowed to cross the standard log barrier, which allows the $\vy$ iterates to recover from large steps. We can reduce the constant $c$ to improve the stability, we used $c=\{10,1,0.2,0\}$, and $\gamma=\{0.3,0.4,0.5,0.6\}$ for the learning rate.}
 \label{fig:c-bilin-barrier-comp-x}
\end{figure*}

\subsection{Additional results for PI-ACVI on the \ref{eq:2d-bg} game}

In this section, we provide complementary visualization to Fig.~\ref{fig:acvi-logfree-2d} in the main paper. We (i) compare with other methods in  Fig.~\ref{fig:c-bilin-comp},\ref{fig:c-bilin-mirror} and (ii) show PI-ACVI trajectories for various hyperparameters in Fig.~\ref{fig:c-bilin-lgacvi}.

\paragraph{PI-ACVI vs. baselines.} In Fig.~\ref{fig:c-bilin-comp} and \ref{fig:c-bilin-mirror}, we can observe the behavior of projected gradient descent ascent, projected extragradient, projected lookahead, projected proximal point, mirror descent, and mirror prox on the simple 2D constrained bilinear game \eqref{eq:2d-bg}, we use the same learning rate of $0.2$ for all methods except for mirror prox which is using a learning rate of $0.4$. In Fig.~\ref{fig:c-bilin-lgacvi} we show trajectories for PI-ACVI for $\ell \in \{ 1,4,10,100 \}$, $\beta=0.5$, $K=150$ and a learning rate $0.2$.

\begin{figure*}[!htbp]
  \centering
  \begin{subfigure}[P-GDA]{
  \includegraphics[width=0.22\linewidth,clip]{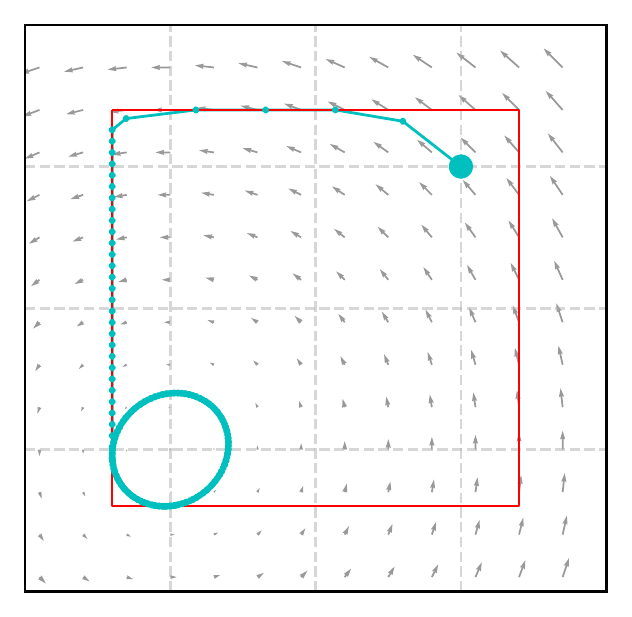}
  }
  \end{subfigure}
  \begin{subfigure}[P-EG]{
  \includegraphics[width=0.22\linewidth,trim={0cm .0cm 0cm .0cm},clip]{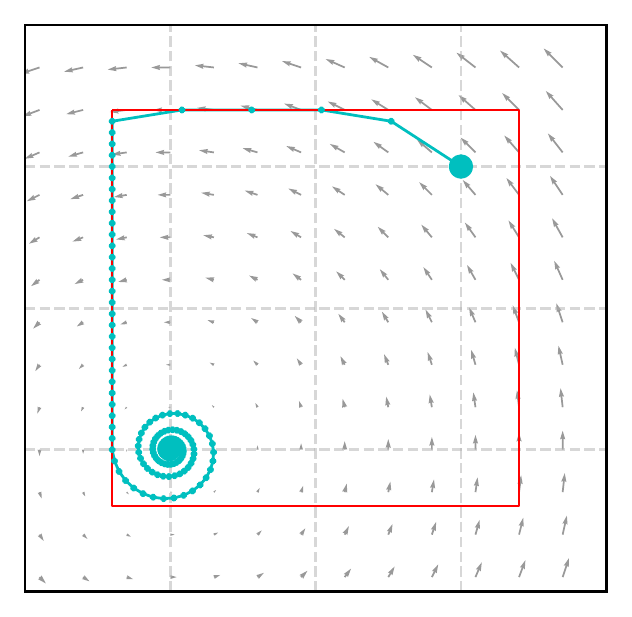}
  }
  \end{subfigure}
  \begin{subfigure}[P-LA]{
  \includegraphics[width=0.22\linewidth,clip]{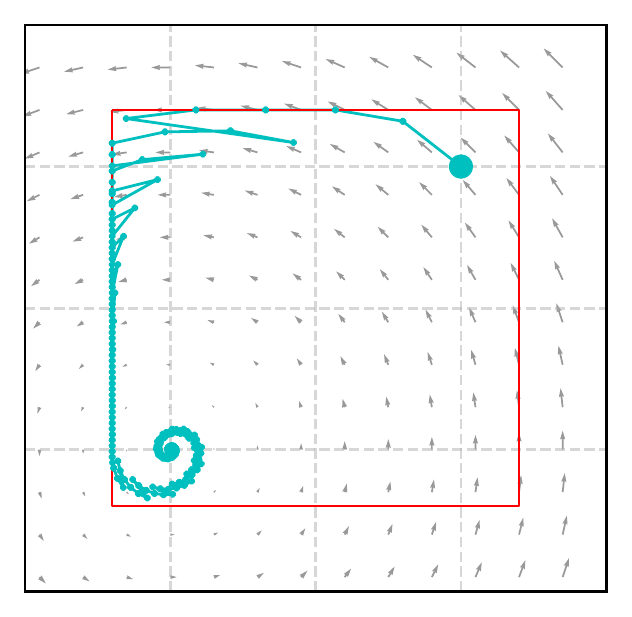}
  }
  \end{subfigure}
  \begin{subfigure}[P-PP]{
  \includegraphics[width=0.22\linewidth,clip]{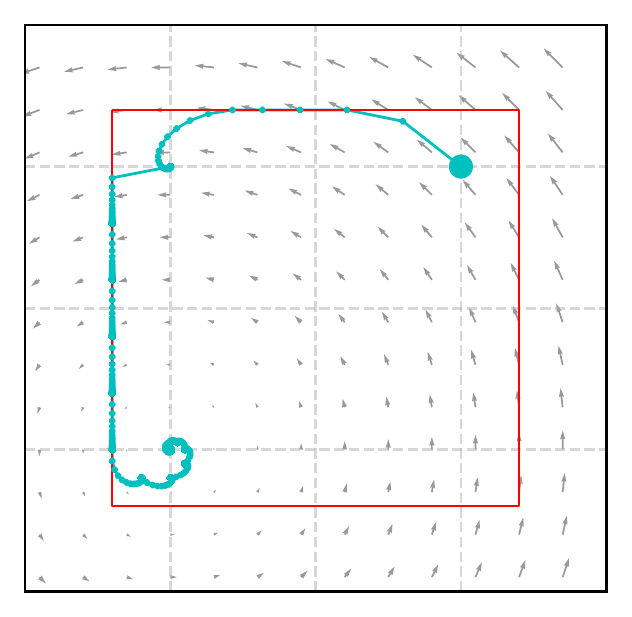}
  }
  \end{subfigure}
  \vspace{-0.5em}
 \caption{\textbf{Comparison of Projected Gradient Descent Ascent (P-GDA), extragradient (P-EG) \citep{korpelevich1976extragradient}, Lookahead  (P-LA) \citep{chavdarova2021lamm} and Proximal-Point (P-PP)} on the \eqref{eq:2d-bg} game. For P-PP, we solve the inner proximal problem through multiple steps of GDA and use warm-start (the last PP solution is used as a starting point of the next proximal problem). All those methods progress slowly when hitting the constraint. Those trajectories can be contrasted with PI-ACVI in Fig.~\ref{fig:c-bilin-lgacvi}.
 }\label{fig:c-bilin-comp}
\end{figure*}

\begin{figure*}[!htbp]
  \centering
  \begin{subfigure}[Mirror-Descent]{
  \includegraphics[width=0.35\linewidth,clip]{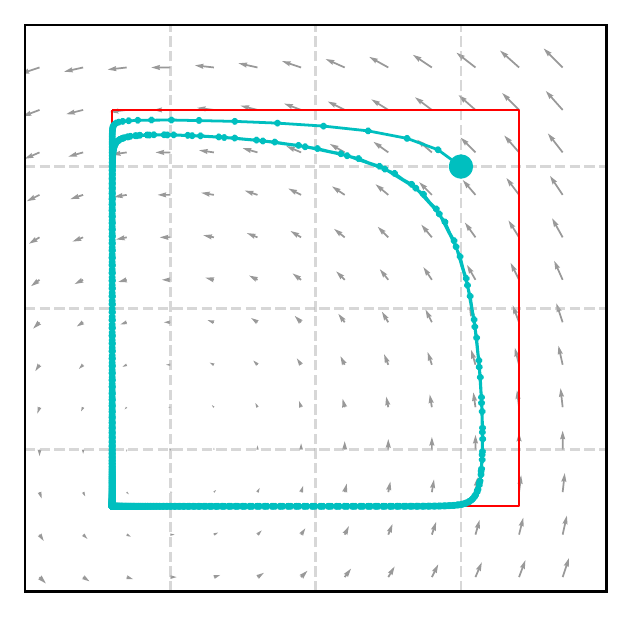}
  }
  \end{subfigure}
  \begin{subfigure}[Mirror-Prox]{
  \includegraphics[width=0.35\linewidth,trim={0cm .0cm 0cm .0cm},clip]{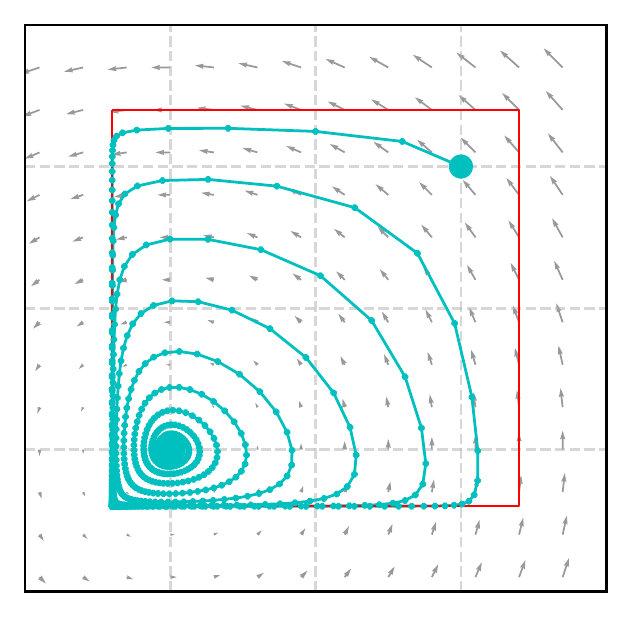}
  }
  \end{subfigure}
  \vspace{-0.5em}
 \caption{\textbf{Comparison of Mirror-Descent \eqref{eq:mirror-descent} and Mirror-Prox \eqref{eq:mirror-prox} on the \eqref{eq:2d-bg} game.} Mirror-descent cycles around the solution without converging. Mirror-prox is converging to the solution. Both methods have been implemented using simultaneous updates and with a Bregman divergence $D_\Psi(x,y)$ with $\Psi(x)=-\frac{x+0.4}{2.8} \log(\frac{x+0.4}{2.8}) - (1-\frac{x+0.4}{2.8})\log(1-\frac{x+0.4}{2.8})$.
 }\label{fig:c-bilin-mirror}
\end{figure*}

\begin{figure*}[!htbp]
  \centering
  \begin{subfigure}{
  \includegraphics[width=0.22\linewidth,clip]{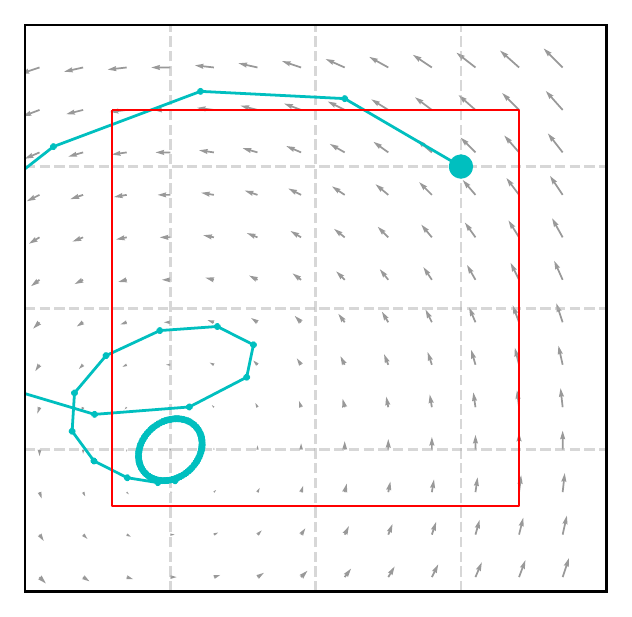}
  }
  \end{subfigure}
  \begin{subfigure}{
  \includegraphics[width=0.22\linewidth,trim={0cm .0cm 0cm .0cm},clip]{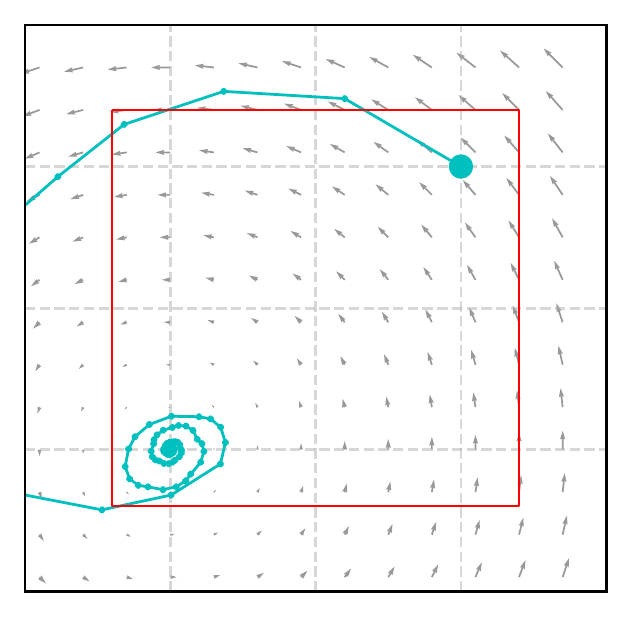}
  }
  \end{subfigure}
  \begin{subfigure}{
  \includegraphics[width=0.22\linewidth,clip]{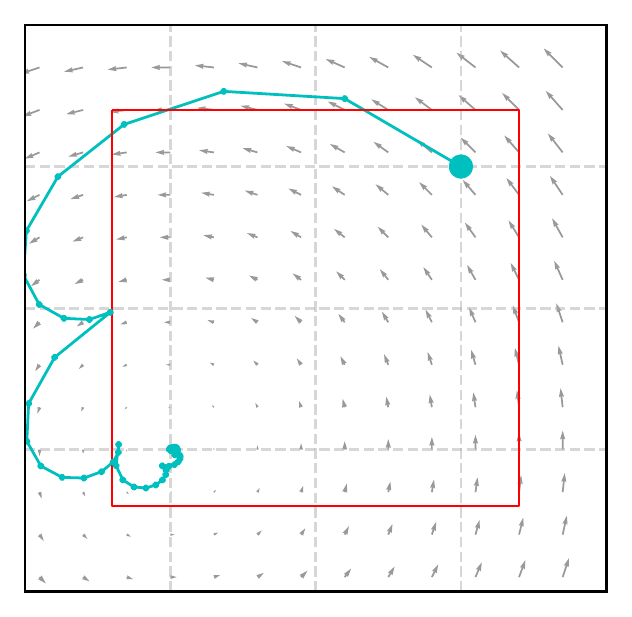}
  }
  \end{subfigure}
  \begin{subfigure}{
  \includegraphics[width=0.22\linewidth,clip]{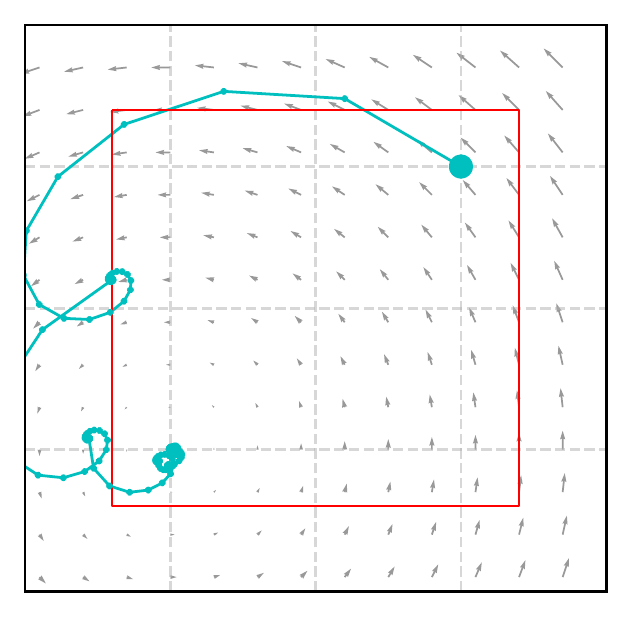}
  }
  \end{subfigure} \\
  \vspace{-1em}
  \centering
  \begin{subfigure}[$\ell=1$]{
  \includegraphics[width=0.22\linewidth,clip]{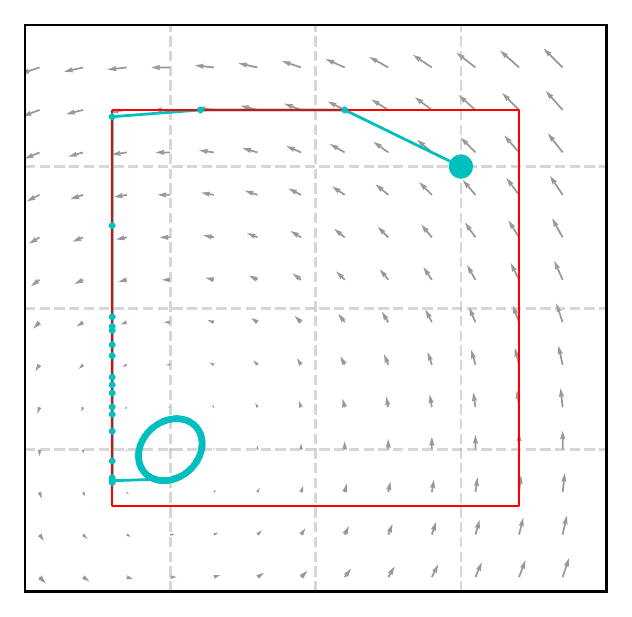}
  }
  \end{subfigure}
  \begin{subfigure}[$\ell=4$]{
  \includegraphics[width=0.22\linewidth,trim={0cm .0cm 0cm .0cm},clip]{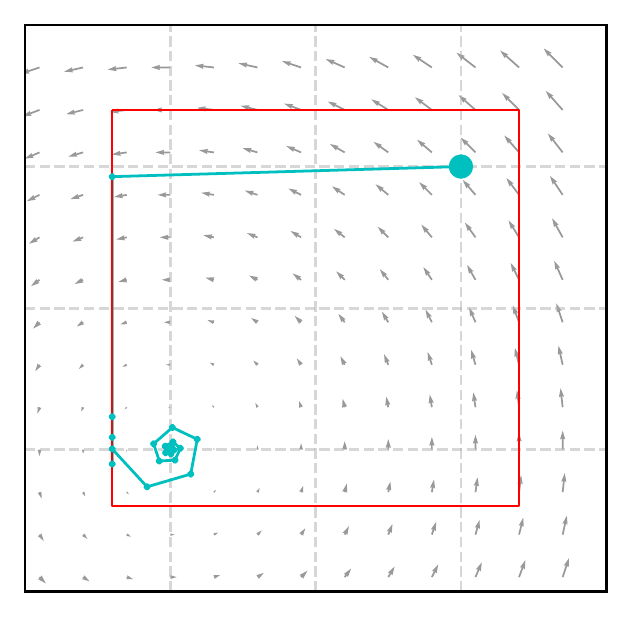}
  }
  \end{subfigure}
  \begin{subfigure}[$\ell=10$]{
  \includegraphics[width=0.22\linewidth,clip]{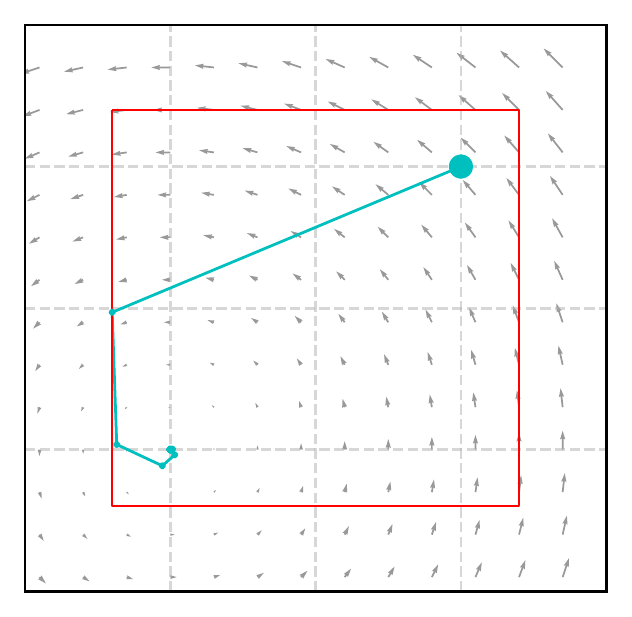}
  }
  \end{subfigure}
  \begin{subfigure}[$\ell=100$]{
  \includegraphics[width=0.22\linewidth,clip]{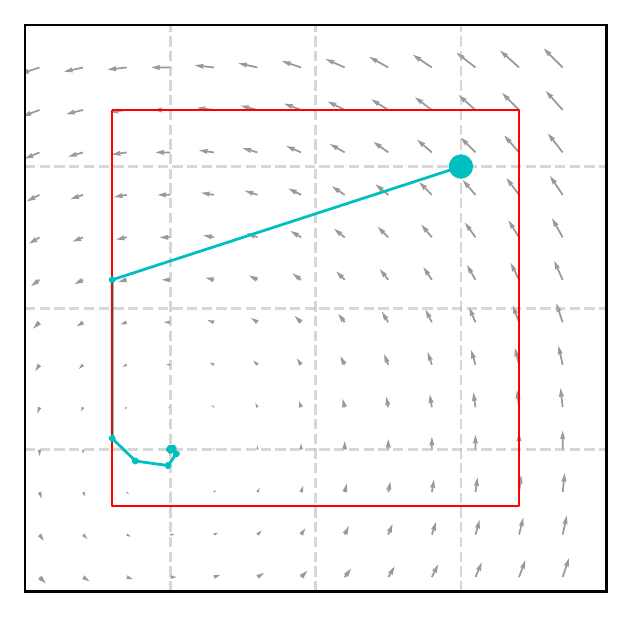}
  }
  \end{subfigure}
 \caption{\textbf{PI-ACVI (Algorithm~\ref{alg:log_free_acvi}) for different choices of $\ell$.}
 \textbf{Top row:} Trajectories for the $\vx$ iterates.
 \textbf{Bottom row:} Trajectories for the $\vy$ iterates. For $\ell=1$, the trajectory for the $\vy$ iterates is similar to the one of P-GDA (see Fig.~\ref{fig:c-bilin-comp}), as we increase $\ell$ we observe how relatively few iterations are required for convergence compared to baselines.}
 \label{fig:c-bilin-lgacvi}
\end{figure*}

\subsection{Additional results on the \ref{eq:high_dim_bg} game}

In this section, we (i) provide complementary experiments to Fig.~\ref{fig:acvi-time-hdb} from the main paper, as well as (ii) analyze the robustness of I-ACVI against bad conditioning.

\paragraph{CPU time to reach a given relative error.} In Fig.~\ref{fig:acvi-time-hdb-app} we extend the x-axis of Fig.~\ref{fig:acvi-time-hdb}.a from the main paper for I-ACVI. Unlike baselines, I-ACVI remains fast even when the target relative error is very small.  This is due to the fact that I-ACVI uses cheaper approximate steps for lines $8$ and $9$ of Algorithm~\ref{alg:inexact_acvi}.

\paragraph{Impact of $K_0$.} In Fig.~\ref{fig:warmup-hdb-time} we show, for each $(K_0, K_+)$ the CPU time required to reach a relative error of $10^{-4}$. Those times are highly correlated with the number of iterations shown in Fig.~\ref{fig:acvi-time-hdb}.c of the main paper.

\paragraph{Comparison with mirror-descent and mirror-prox.} In Fig.~\ref{fig:hbg-iter-mirror} extend the experiments of Fig.~\ref{fig:acvi-time-hdb}.b of the main paper to include the mirror-descent \eqref{eq:mirror-descent} and mirror-prox \eqref{eq:mirror-prox} methods described in App.~\ref{sec:methods}.

\begin{figure}
    \centering
    \includegraphics[width=0.5\linewidth,clip]{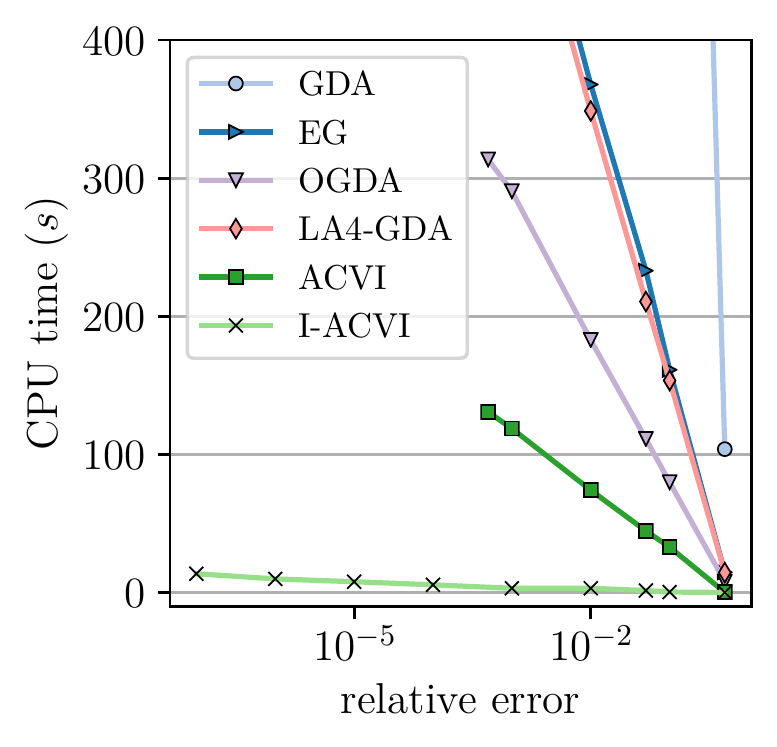}
    \caption{\textbf{Comparison between I-ACVI and other baselines used in \S~\ref{sec:experiments}} of the main part. CPU time (in seconds; y-axis) to reach a given relative error (x-axis); while the rotational intensity is fixed to $\eta = 0.05$ in \eqref{eq:high_dim_bg} for all methods. I-ACVI is much faster to converge than other methods, including ACVI.}
    \label{fig:acvi-time-hdb-app}
\end{figure}

\begin{figure}
    \centering
    \includegraphics{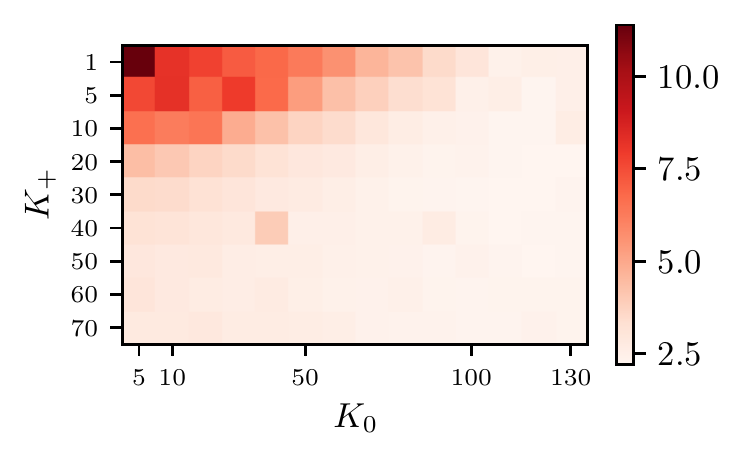}
    \caption{\textbf{Impact of $K_0$:} joint impact of the number of inner-loop iterations $K_0$ at $t=0$, and different choices of inner-loop iterations for $K_+$ at any $t>0$, on the CPU-time needed to reach a fixed relative error of $10^{-4}$. A large enough $K_0$ can compensate for a small $K_+$.}
    \label{fig:warmup-hdb-time}
\end{figure}

\begin{figure}
    \centering
    \includegraphics[width=0.65\linewidth,clip]{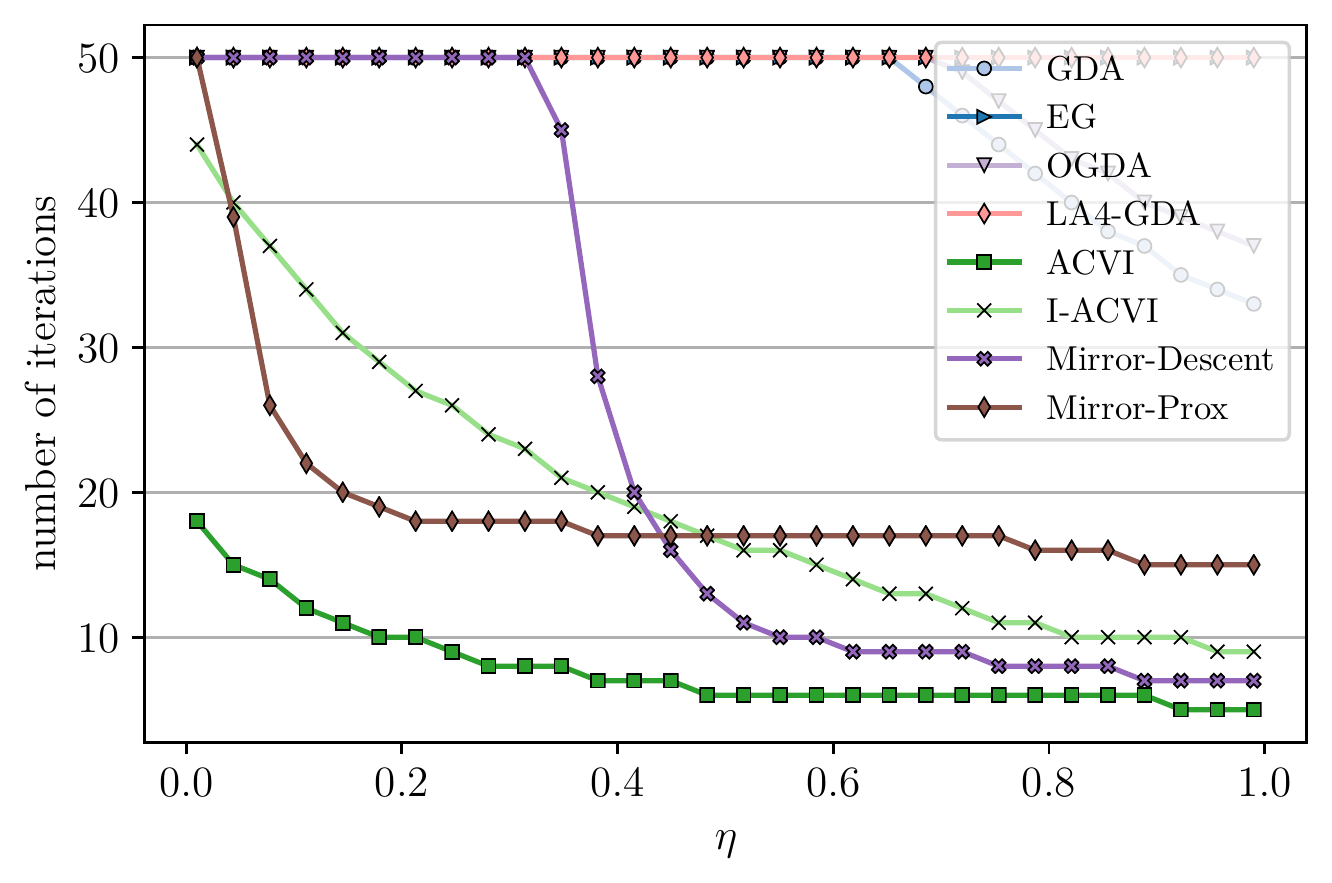}
    \caption{Number of iterations to reach a relative error of $0.02$ for varying values of the rotational intensity $\eta$ ($x$-axis). We fixed the maximum number of iterations to $50$. For mirror-descent and mirror-prox, we used the KL-divergence as $D_\Psi(x,y)$ and used large step sizes of respectively $\gamma=500$ and $\gamma=280$. When the rotational intensity is strong (small $\eta$), mirror-descent fails to converge within the $50$ iterations budget. However, when $\eta$ is large, mirror descent converges much faster than GDA, EG, OGDA, and LA4-GDA. Mirror-prox is better than mirror-descent at handling strong rotational intensities but is slowed down when the game is mostly potential. In comparison, ACVI converges after a small number of steps regardless of $\eta$.}
    \label{fig:hbg-iter-mirror}
\end{figure}

\paragraph{Impact of conditioning.}\label{app:conditioning} We modify the \eqref{eq:high_dim_bg} game to study the impact of conditioning. Hence, we propose the following version:
\begin{align}\tag{HBG-v2}\label{eq:high_dim_bg_v2}
% \hspace{-.1em}
    &\underset{\vx_1\in \bigtriangleup}{\min}\underset{\vx_2\in \bigtriangleup}{\max} \vx_1^\intercal \mD \vx_2  \,, \\
    %\text{ where }
    \quad
    \bigtriangleup {=} \{
    \vx_i \in \R^{500}|\vx_i \geq \boldsymbol{0}, &\text{ and },\ve^\intercal\vx_i=1
    \} , \text{ and } \mD = \text{diag}(\alpha_1, \ldots, \alpha_{500}) \,. \nonumber
    %\vspace*{-.2em}
\end{align}
The solution of this game depends on the $\{\alpha_i\}_{i=1}^{500}$:
$$\vx_1^\star = \vx_2^\star = \frac{1}{\sum_{i=1}^{500} 1/\alpha_i} \begin{pmatrix}
1/\alpha_1   \\
1/\alpha_2  \\
\vdots  \\
1/\alpha_{500}
\end{pmatrix}$$
We define the conditioning $\kappa$ as the ratio between the largest and smallest $\alpha_i$: $\kappa \triangleq \frac{\alpha_{\min}}{\alpha_{\max}}$. In our experiments we select $\alpha_i$ linearly interpolated between $1$ and $\alpha_{\max}$ (e.g. using the \texttt{np.linspace(1,a\_max,500)} NumPy function). We set $\alpha_{\min}=1$ and vary $\alpha_{\max} \in \{  1,2,3,4,5,6,7,8,9,10\}$. We compare projected extragradient (P-EG) with I-ACVI. For P-EG, we obtained better results when using lower learning rates $\gamma$ for larger $\alpha_{\max}$: $\gamma=0.3 \times 0.9^{\alpha_{\max}}$. For I-ACVI we set $\beta=0.5$, $\mu=10^{-5}$, $\delta=0.5$, $\gamma=0.003$, $K=100$ and $T=200$. We vary $\ell$ depending on $\alpha_{\max}$: $\ell=20$ for $\alpha_{\max} \in \{ 1,2,3\}$, $\ell=50$ for $\alpha_{\max} \in \{ 4,5,6\}$, and $\ell=100$ for $\alpha_{\max} \in \{ 7,8,9,10\}$. We compare the CPU times required to reach a relative error of $0.02$ in Fig.~\ref{fig:conditioning}. We observe that I-ACVI is more robust to bad conditioning than P-EG. As $\kappa \rightarrow 0$, P-EG fails to converge in an appropriate time despite reducing the learning rate. For I-ACVI, keeping the same learning rate and only increasing $\ell$ is enough to compensate for smaller $\kappa$ values. One can speculate that I-ACVI is more robust thanks to (i) the $\vy$-problem (line 9 in Algorithm~\ref{alg:inexact_acvi}) not depending on $F(x)$, hence being relatively robust to the problem itself, and (ii) the $\vx$-problem (line 8 in Algorithm~\ref{alg:inexact_acvi}) being ``regularized'' by $\vy_k$ and $\vlmd_k$.

\begin{figure}
    \centering
    \includegraphics[width=0.35\linewidth,clip]{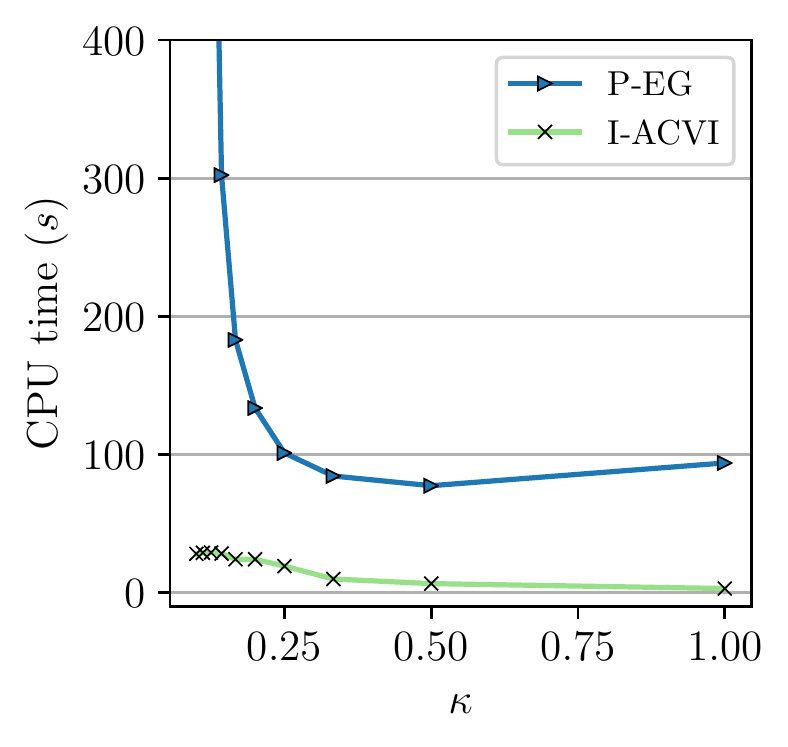}
    \caption{\textbf{Experiment on conditioning:} CPU time to reach a relative error of $0.02$ on the \eqref{eq:high_dim_bg_v2} game, for different conditioning values $\kappa$. While P-EG struggles to converge when the conditioning is bad (small $\kappa$), I-ACVI, on the other hand, can cope relatively well. }
    \label{fig:conditioning}
\end{figure}

\subsection{Additional results on the \ref{eq:c-gan} game}

This section shows complementary results to our constrained GAN MNIST experiments. In Fig.~\ref{fig:mnist-l0-comp}, we further show the impact of $\ell_0$ on the convergence speed by training different PI-ACVI models with $\ell_0 \in \{20,50,100,200,400,600,800,1000\}$, all other hyperparameters being equal --- setting $\ell_+=10$. We compare in Fig.~\ref{fig:mnist-l0-comp-gda} the obtained curves for $\ell_0=400$ with projected-GDA (P-GDA), and verify that --- similarly to Fig.~\ref{fig:mnist_logfree-fid} of the main paper for which $\ell_+=20$ --- PI-ACVI is here as well outperforming significantly P-GDA. This shows that PI-ACVI is relatively unaffected by $\ell_+$ as opposed to $\ell_0$.

\begin{figure*}[!htbp]
  \centering
  \begin{subfigure}[FID on \eqref{eq:c-gan}]{
  \includegraphics[width=0.49\linewidth,clip]{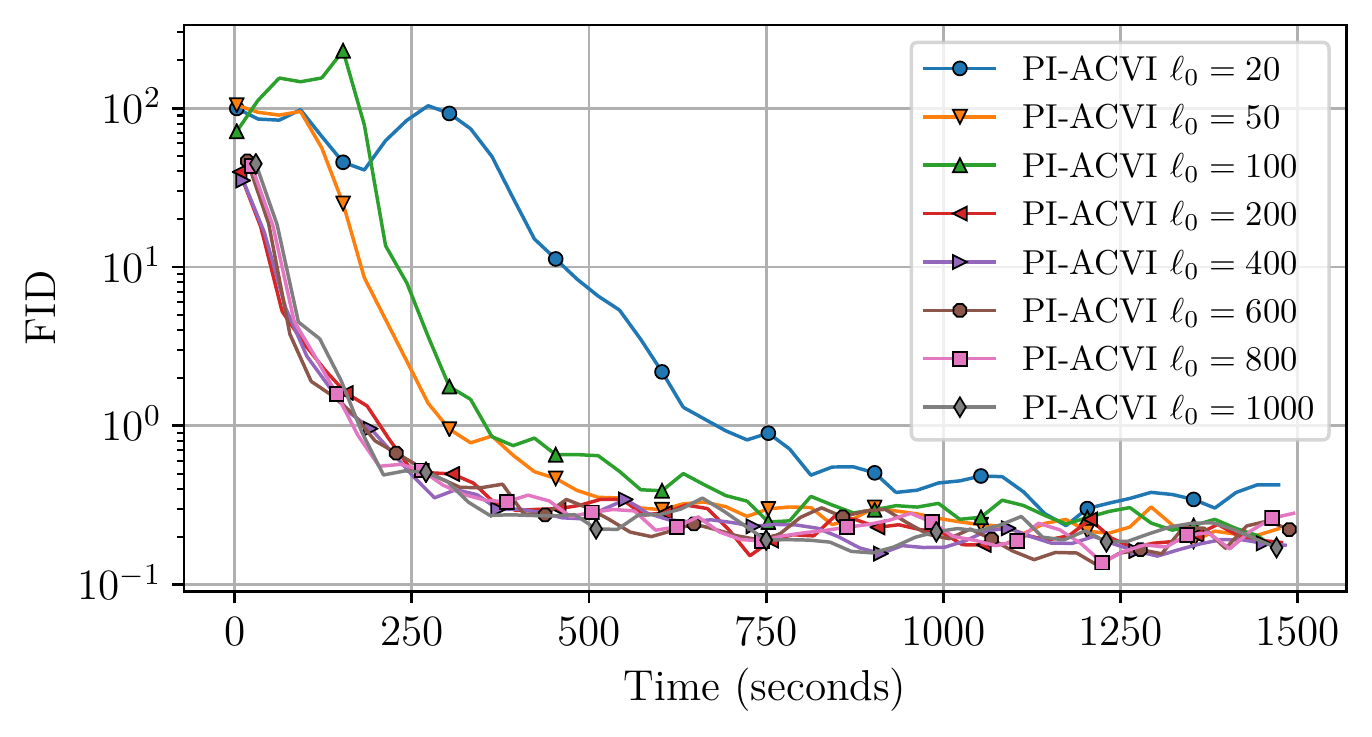}
  }
  \end{subfigure}
  \begin{subfigure}[IS on \eqref{eq:c-gan}]{
  \includegraphics[width=0.46\linewidth,trim={0cm .0cm 0cm .0cm},clip]{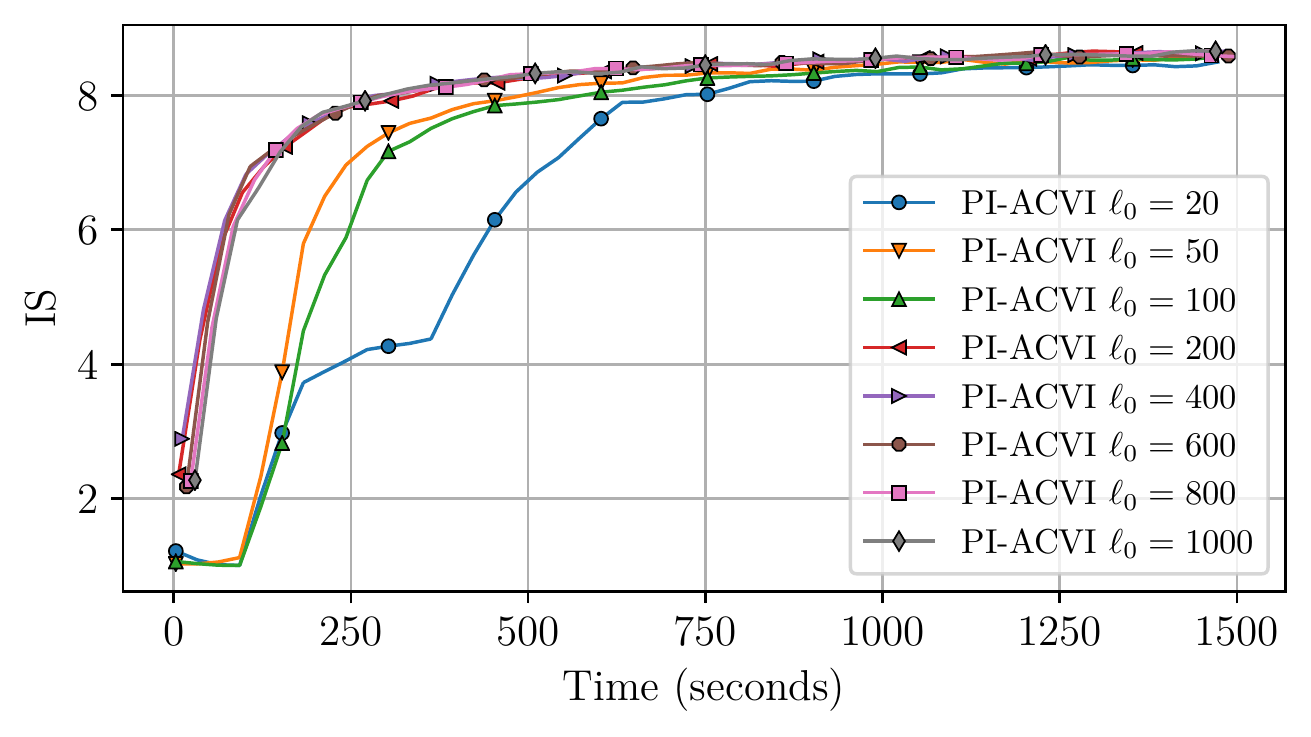}
  }
  \end{subfigure}
 \caption{\textbf{Effect of $\ell_0$ on FID and IS:} On the MNIST datasets, comparison of various runs of PI-ACVI for different $\ell_0$. All other hyperparameters are equal: $\ell_+=10$, $\beta=0.5$, see \S~\ref{app:implementation} for more details. \textbf{(a) and (b):} we observe the importance of $\ell_0$, despite $\ell_+=10$ being relatively small we still converge fast to a solution --- in terms of both FID ($\downarrow$) and IS ($\uparrow$) --- given $\ell_0$ large enough. All curves are obtained by averaging over two seeds.}
 \label{fig:mnist-l0-comp}
\end{figure*}

\begin{figure*}[!htbp]
  \centering
  \begin{subfigure}[FID on \eqref{eq:c-gan}]{
  \includegraphics[width=0.364\linewidth,clip]{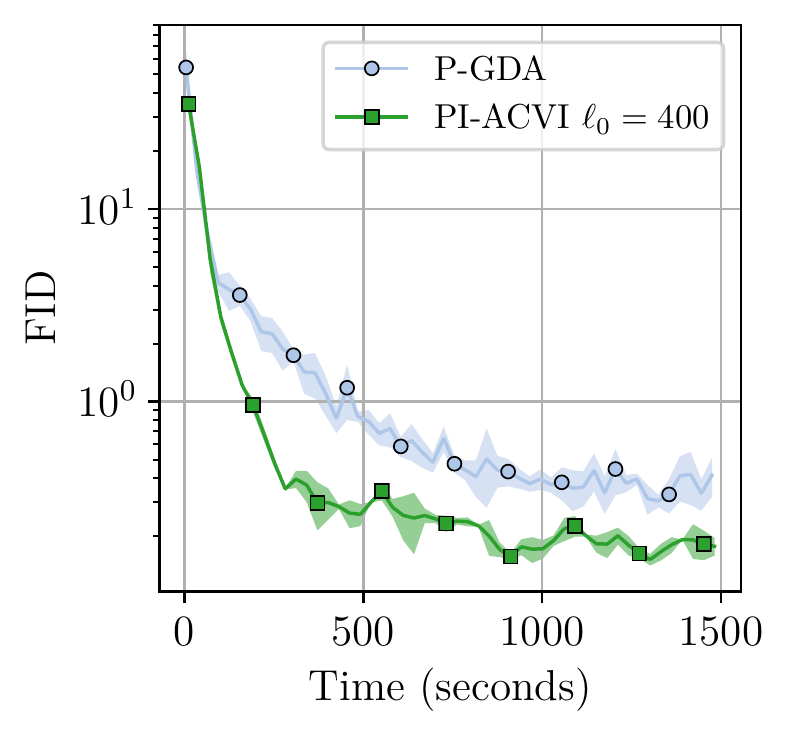}
  }
  \end{subfigure}
  \begin{subfigure}[IS on \eqref{eq:c-gan}]{
  \includegraphics[width=0.35\linewidth,trim={0cm .0cm 0cm .0cm},clip]{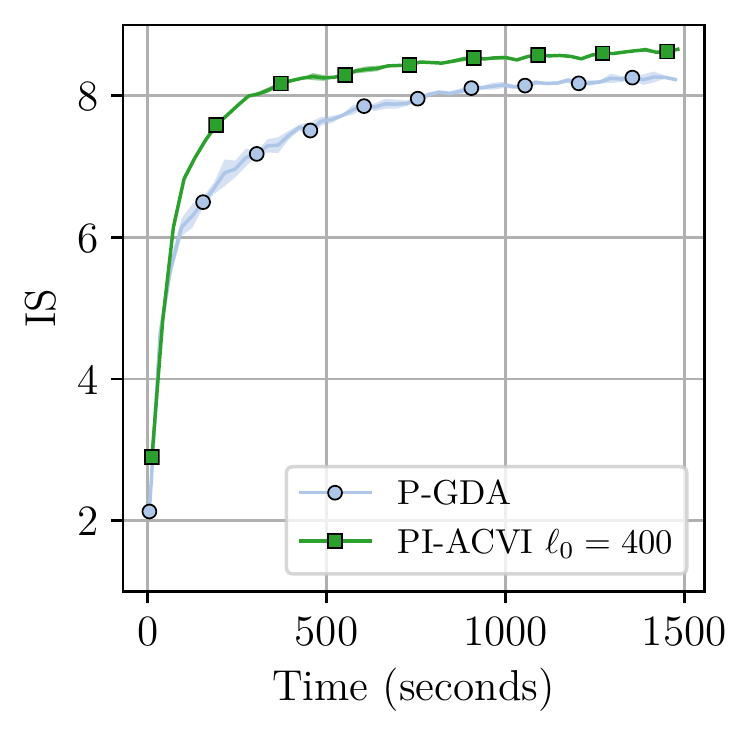}
  }
  \end{subfigure}
 \caption{\textbf{PI-ACVI vs. P-GDA on \eqref{eq:c-gan} MNIST:} On the MNIST datasets, comparison of P-GDA and PI-ACVI. For PI-ACVI, we set $\ell_0=400$ and $\ell_+=10$. \textbf{(a) and (b):} in both FID ($\downarrow$) and IS ($\uparrow$), PI-ACVI converges faster than P-GDA. The difference with Fig.~\ref{fig:mnist_logfree-fid} from the main paper is that we use $\ell_+=10$ instead of $\ell_+=20$. This shows that PI-ACVI is relatively robust to different values of $\ell_+$.}
 \label{fig:mnist-l0-comp-gda}
\end{figure*}

\end{document}